\newcommand{\E}{\mathbb{E}}
\newcommand{\V}{\mathbb{V}}
\newcommand{\Sec}{\textbf{\S}}
\newcommand{\argmax}[1]{\underset{#1}{\operatorname{argmax}} \;}
\newcommand{\argmin}[1]{\underset{#1}{\operatorname{argmin}} \;}
\newcommand{\mnorm}[2][{\Sigma_j}]{\|#2\|^2_{\Sigma_j}}
\newcommand{\minnprod}[3][{\Sigma_j}]{{\langle #2,#3 \rangle}_{\Sigma_j}}
\def\layersep{3.5cm}
\def\layersepp{1.75cm}
\theoremstyle{plain}
\newtheorem{prop}{Proposition}[chapter]
\newtheorem{lem}{Lemma}[chapter]
\newtheorem{cor}{Corollary}[chapter]
\newtheorem{thm}{Theorem}[chapter]
\def\blfootnote{\xdef\@thefnmark{}\@footnotetext}
\begin{document}
\frontmatter

\thispagestyle{empty}
	\begin{titlepage}
	\begin{center}
		\rule[-.3\baselineskip]{0pt}{0.5in}
		\begin{spacing}{2} 
			\large{
				NOISE BENEFITS 
				\\IN 
				\\EXPECTATION-MAXIMIZATION
				\\ALGORITHMS\\[0.35in]
			}
		\end{spacing}
		by\\[0.25in]
		\large{Osonde Adekorede Osoba}\\[0.45in]
		
		\rule{5in}{1pt}\\[0.45in]
		
		A Dissertation Presented to the\\
		FACULTY OF THE USC GRADUATE SCHOOL\\
		UNIVERSITY OF SOUTHERN CALIFORNIA\\
		In Partial Fulfillment of the\\
		Requirements for the Degree\\
		DOCTOR OF PHILOSOPHY\\
		(ELECTRICAL ENGINEERING)\\[0.2in]		
		August 2013\\				
	\end{center}
	\vfill
	Copyright 2013 \hfill Osonde Adekorede Osoba
	\end{titlepage}
	
\chapter*{Acknowledgements}
\addcontentsline{toc}{chapter}{Acknowledgements}


\doublespacing
I am deeply grateful to my advisor Professor Bart Kosko for his patience, faith, and guidance throughout the development of this work. I am indebted to him for fostering much of my academic and personal growth over the past few years.

I thank the other committee members Professor Antonio Ortega and Professor James Moore II. Their insightful feedback helped shape this dissertation.

I had the privilege of collaborating closely with Kartik Audhkhasi and Professor Sanya Mitaim. I am grateful for our many late-night math sessions.

Finally I would like to thank my friends and family for supporting me through my graduate career. In particular I would like to thank my friend and teacher Michael Davis and the United Capoeira Association community in Los Angeles.

\tableofcontents
\setcounter{tocdepth}{1}
\clearpage
\listoftables
\addcontentsline{toc}{chapter}{List of Tables}
\clearpage
\listoffigures
\addcontentsline{toc}{chapter}{List of Figures}

\chapter*{Preface} 
\addcontentsline{toc}{chapter}{Preface}

This dissertation's main finding is that noise can speed up the convergence of the Expectation-Maximization (EM) algorithm. The Noisy Expectation Maximization (NEM) theorem states a sufficient condition under which controlled noise injection causes the EM algorithm to converge faster on average. NEM algorithms modify the basic EM scheme to take advantage of the NEM theorem's average speed-up guarantee. The dissertation also proves that neural, fuzzy, and other function approximators can uniformly approximate the more general posterior probabilities in Bayesian inference. The dissertation derives theorems and properties about the use of approximators in general Bayesian inference models.

The dissertation consists of 10 chapters. Chapter~\ref{ch:overt} is a short preview of all the main theorems and results in this dissertation. Chapter~\ref{ch:EM} gives a broad review of the EM algorithm. It also presents EM variations and examples of EM algorithms for different data models. This review lays the groundwork for discussing NEM and its convergence properties.

Chapter~\ref{ch:NEM} introduces Noisy Expectation Maximization. This chapter discusses the intuition behind the NEM theorem. Then it derives the NEM theorem which gives a condition under which noise injection improves the average convergence speed of the EM algorithm. Examples of NEM algorithms show this average noise benefit. This chapter includes a full derivation and demonstration of NEM for the popular Gaussian mixture model. The chapter ends with a corollary to the NEM theorem which states that noise-enhanced EM estimation is more beneficial for sparse data sets than for large data sets.

Chapters~\ref{ch:NEM-CNBT},~\ref{ch:NEM-HMM}, and~\ref{ch:NEM-BP} apply NEM to three important EM models: mixture models for clustering, hidden Markov models, and feedfoward artificial neural networks.  The training algorithms for these three models are standard and heavily-used: $k$-means clustering, the Baum-Welch algorithm, and backpropagation training respectively. All three algorithms are EM algorithms. These chapters present proofs showing that these algorithms are EM algorithms. The EM-subsumption proof for backpropagation is new. The subsumption proofs for $k$-means and Baum-Welch are not new. These subsumptions imply that the NEM  theorem applies to these algorithms. Chapters~\ref{ch:NEM-CNBT},~\ref{ch:NEM-HMM}, and~\ref{ch:NEM-BP} derive the NEM conditions for each model and show that NEM improves the speed of these popular training algorithms.

Chapter~\ref{ch:Bayes} reviews Bayesian statistics in detail. The chapter highlights the difference between the Bayesian approach to statistics and the frequentist approach to statistics (which underlies maximum likelihood techniques like EM). It also shows that the Bayesian approach subsumes the frequentist approach. This subsumption implies that subsequent Bayesian inference results also apply to EM models.

Chapters~\ref{ch:BAT} and~\ref{ch:ExBAT} give analyze the effects of model-function approximation in Bayesian inference. These chapters present the Bayesian Approximation theorem and its extension, the Extended Bayesian Approximation theorem. These theorems guarantee that uniform approximators for Bayesian model functions produce uniform approximators for the posterior pdf via Bayes theorem. Simulations with uniform fuzzy function approximators show sample posterior pdf approximations that validate the theorems. The use of fuzzy function approximators has the effect of subsuming most closed functional form Bayesian inference models via the recent Watkins Representation Theorem.

Chapter~\ref{ch:finale} discusses ongoing and future research that extends these results.

\chapter*{Abstract}
\addcontentsline{toc}{chapter}{Abstract}

\doublespacing
This dissertation shows that careful injection of noise into sample data can substantially speed up Expectation-Maximization algorithms. Expectation-Maximization algorithms are a class of iterative algorithms for extracting maximum likelihood estimates from corrupted or incomplete data. The convergence speed-up is an example of a noise benefit or ``stochastic resonance'' in statistical signal processing. The dissertation presents derivations of sufficient conditions for such noise-benefits and demonstrates the speed-up in some ubiquitous signal-processing algorithms. These algorithms include parameter estimation for mixture models, the $k$-means clustering algorithm, the Baum-Welch algorithm for training hidden Markov models, and backpropagation for training feedforward artificial neural networks. This dissertation also analyses the effects of data and model corruption on the more general Bayesian inference estimation framework. The main finding is a theorem guaranteeing that uniform approximators for Bayesian model functions produce uniform approximators for the posterior pdf via Bayes theorem. This result also applies to hierarchical and multidimensional Bayesian models.

\mainmatter

\onehalfspacing

\newtheorem*{thm*}{Theorem}
\newtheorem*{cor*}{Corollary}
\captionsetup{list=no}

\chapter{Preview of Dissertation Results}\label{ch:overt}

The main aim of this dissertation is to demonstrate that noise injection can improve the average speed of Expectation-Maximization (EM) algorithms. The EM discussion in Chapter~\ref{ch:EM} gives an idea of the power and generality of the EM algorithm schema. But EM algorithms have a key weakness: they converge slowly especially on high-dimensional incomplete data. Noise injection can address this problem. The Noisy Expectation Maximization (NEM) theorem (Theorem~\ref{thm:NEM}) in Chapter~\ref{ch:NEM} describes a condition under which injected noise causes faster EM convergence on average. This general condition reduces to a simpler condition (Corollary \ref{cor:GMM-NEM}) for Gaussian mixture models (GMMs). The GMM noise benefit leads to EM speed-ups in clustering algorithms and in the training of hidden Markov models. The general NEM noise benefit also applies to the backpropagation algorithm for training feedforward neural network. This noise benefit relies on the fact that backpropagation is indeed a type of EM algorithm (Theorem~\ref{thm:bp_gem_equiv}).

The secondary aim of this dissertation is to show that uniform function approximators can expand the set of model functions (likelihood functions, prior pdfs, and hyperprior pdfs) available for Bayesian inference. Bayesian statisticians often limit themselves to a small set of closed-form model functions either for ease of analysis or because they have no robust method for approximating arbitrary model functions. This dissertation shows a simple robust method for uniform model function approximation in Chapters~\ref{ch:BAT} and~\ref{ch:ExBAT}. Theorem~\ref{thm:BAT} and Theorem~\ref{thm:ExBAT} guarantee that uniform approximators for model functions lead to uniform approximators for posterior pdfs.

\section{Noisy Expectation-Maximization}
The Noisy Expectation Maximization (NEM) theorem (Theorem~\ref{thm:NEM}) is the major result in this dissertation. 
\begin{thm*}{\bf{[Noisy Expectation Maximization (NEM)]:}}\\
	An EM iteration noise benefit occurs on average if
	\begin{equation}
		\E_{Y,Z,N|\theta_*} \left[ \ln\left( \frac{f(Y+N,Z|\theta_k)}{f(Y,Z|\theta_k)} \right) \right] \geq 0\;.
	\end{equation}
\end{thm*}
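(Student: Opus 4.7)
The plan is to measure each EM iterate's distance to the true parameter $\theta_*$ by a complete-data Kullback--Leibler divergence, then write down its noisy analog and compare the two. First I would introduce the nonnegative quantity
\begin{equation}
c_k(\theta_*) \;=\; \E_{Y,Z|\theta_*}\!\left[\,\ln \frac{f(Y,Z|\theta_*)}{f(Y,Z|\theta_k)}\,\right],
\end{equation}
which is the relative entropy between the complete-data densities at the optimum and at the current iterate. By Gibbs' inequality $c_k \geq 0$ with equality iff $\theta_k = \theta_*$ (under identifiability), and $c_k$ serves as a Lyapunov-style progress measure for EM: the standard monotone ascent of the observed log-likelihood is equivalent to $c_k$ decreasing monotonically in $k$.

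Next I would define the noise-injected analog
\begin{equation}
c_k^N(\theta_*) \;=\; \E_{Y,Z,N|\theta_*}\!\left[\,\ln \frac{f(Y,Z|\theta_*)}{f(Y+N,Z|\theta_k)}\,\right],
\end{equation}
formed by replacing $Y$ with $Y+N$ in the current-parameter density while keeping the true-parameter density as the reference. I would then formalize the phrase ``an EM iteration noise benefit occurs on average'' as the inequality $c_k^N(\theta_*) \leq c_k(\theta_*)$: averaged over the injected noise, the perturbed surrogate leaves no larger a residual gap to the true parameter than the unperturbed one does, so the noisy iteration has at least as much room to climb.

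The algebraic core is then a single subtraction. The shared term $\E[\ln f(Y,Z|\theta_*)]$ cancels between $c_k$ and $c_k^N$, giving
\begin{equation}
c_k(\theta_*) \;-\; c_k^N(\theta_*) \;=\; \E_{Y,Z,N|\theta_*}\!\left[\,\ln \frac{f(Y+N,Z|\theta_k)}{f(Y,Z|\theta_k)}\,\right],
\end{equation}
so the noise-benefit inequality $c_k^N \leq c_k$ is equivalent to the right-hand side being nonnegative, which is exactly the stated NEM condition.

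The main obstacle will be making the bridge from ``smaller complete-data divergence'' to ``actual faster ascent of the observed log-likelihood'' airtight. This requires the classical EM identity $\ln f(Y|\theta) = Q(\theta|\theta_k) - H(\theta|\theta_k)$ to recast one-step progress as an expected complete-data log-likelihood difference, then averaging under the true-parameter law so that $c_k$ dominates the expected observed-likelihood gap. Ancillary care is needed to specify the joint law of $(Y,Z,N)$ (typically $N$ independent of $(Y,Z)$), to ensure that all three densities share a common dominating measure, and to verify that the log-ratio is well defined on the support of the perturbed density so that the expectation in the hypothesis exists.
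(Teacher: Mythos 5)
Your proposal is correct and follows essentially the same route as the paper: the paper defines the same two relative-entropy quantities $c_k = D\left(f(y,z|\theta_*)\Vert f(y,z|\theta_k)\right)$ and $c_k(N) = D\left(f(y,z|\theta_*)\Vert f(y+N,z|\theta_k)\right)$, identifies them with the expected $Q$-difference gaps $\E_{Y}[Q(\theta_*|\theta_*) - Q(\theta_k|\theta_*)]$ and $\E_{Y}[Q(\theta_*|\theta_*) - Q_N(\theta_k|\theta_*)]$, and then performs exactly your subtraction so that the shared $\ln f(y,z|\theta_*)$ term cancels and the noise-benefit inequality $c_k \geq \E_N[c_k(N)]$ becomes equivalent to the stated positivity condition (invoking Fubini, justified by the finite-differential-entropy assumption, to reorder the integrals). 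The bridge you flag as the main obstacle is handled in the paper by taking the $Q$-difference inequality itself as the definition of the iteration noise benefit, so no further passage to the observed log-likelihood is needed inside the proof.
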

The theorem gives a sufficient condition under which adding noise $N$ to the observed data $Y$ leads to an increase in the average convergence speed of the EM algorithm. This is the first description of a noise benefit for EM algorithms. It relies on the insight that noise can sometimes perturb the likelihood function favorably. Thus noise injection can lead to better iterative estimates for parameters. This sufficient condition is general and applies to \emph{any} EM data model. 

\begin{figure}[ht!]
	\centerline{ \includegraphics[width=0.75\textwidth]{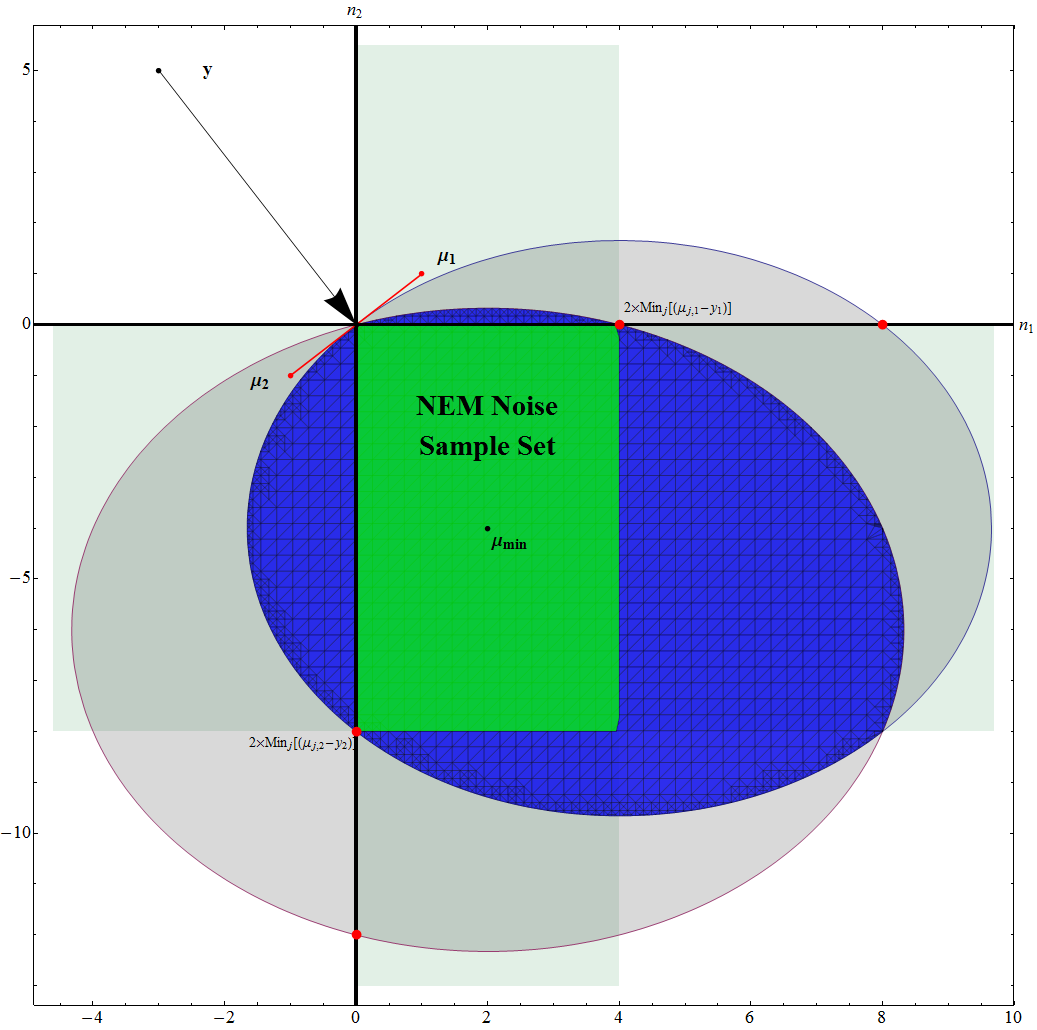} }
	\caption{
		Geometry of NEM noise for a GMM. Noise samples in the blue overlapping region satisfy the NEM sufficient condition and lead to faster EM convergence. Noise samples in the green box satisfy a simpler quadratic NEM sufficient condition and also lead to faster EM convergence. Sampling from the green box is easier. This geometry is for a sample $\mathbf{y}$ of a $2$-D GMM with sub-populations centered at $\boldsymbol{\mu}_1$ and $\boldsymbol{\mu}_2$. \Sec\ref{subsec:NEM-Geom} and \Sec\ref{subsec:JGMM-NEM} discuss these geometries in more detail.
	}
\label{fg:Geom-intro}
\end{figure}

The first major corollary (Corollary \ref{cor:GMM-NEM}) applies this sufficient condition to EM algorithms on the Gaussian mixture model. This results in a simple quadratic noise screening condition for the average noise benefit.
\begin{cor*}{\bf{[NEM Condition for GMMs (in $1$-D)]:}}\\
	The NEM sufficient condition holds for a GMM if the additive noise samples $n$ satisfy the following algebraic condition
	\begin{align}
		n^2 &\leq 2n\left(\mu_j-y\right) \textrm{ for all GMM sub-populations} \quad j \;.
	\end{align}
\end{cor*}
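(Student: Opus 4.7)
The plan is to reduce the general NEM expectation inequality to the stated pointwise algebraic condition by exploiting the explicit Gaussian form of the complete-data density together with the one-hot structure of the latent class indicator $Z$. First I would write the 1-D GMM complete-data density in its standard form
\begin{equation*}
f(y,z \mid \theta) \;=\; \prod_j \bigl[\alpha_j \, \phi(y;\mu_j,\sigma_j^2)\bigr]^{z_j},
\end{equation*}
where $z=(z_1,\ldots,z_J)$ is a one-hot latent indicator and $\phi(\cdot;\mu_j,\sigma_j^2)$ is the univariate Gaussian density.

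Next I would compute the log-ratio that appears inside the NEM expectation. Since the mixing weights $\alpha_j$ do not depend on $y$, taking the logarithm collapses the product to a sum of latent-weighted Gaussian log-ratios:
\begin{equation*}
\ln \frac{f(y+n,z\mid\theta_k)}{f(y,z\mid\theta_k)} \;=\; \sum_{j} z_j \,\bigl[\ln\phi(y+n;\mu_j,\sigma_j^2) - \ln\phi(y;\mu_j,\sigma_j^2)\bigr].
\end{equation*}
Expanding the squared terms $(y+n-\mu_j)^2 = (y-\mu_j)^2 + 2n(y-\mu_j) + n^2$ and simplifying would give the quadratic identity
\begin{equation*}
\ln\phi(y+n;\mu_j,\sigma_j^2) - \ln\phi(y;\mu_j,\sigma_j^2) \;=\; \frac{2n(\mu_j - y) - n^2}{2\sigma_j^2}.
\end{equation*}
The stated condition $n^2 \leq 2n(\mu_j - y)$ is exactly the condition that makes each Gaussian log-ratio on the right-hand side non-negative.

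The final step is to lift this pointwise sign information up through the NEM expectation. Because $Z$ is one-hot, for any realization of $(Y,Z,N)$ only one summand in the $j$-sum is nonzero; the assumed algebraic condition, required to hold uniformly over all sub-populations $j$, guarantees that the surviving summand is non-negative regardless of which latent component $Z$ selects. Therefore the integrand of the NEM expectation is non-negative almost surely, which preserves the sign when we take the expectation over $Y$, $Z$, and $N$, verifying the hypothesis of Theorem~\ref{thm:NEM}.

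The main obstacle is less a technical difficulty and more a matter of quantifier discipline: one must argue \emph{why} the quadratic condition is required for every $j$ rather than just for the realized component. The EM algorithm has no access to the true latent label, so any usable sufficient condition must be robust to all possible values of $Z$ on the support of the posterior; this is precisely what forces the ``for all sub-populations'' clause in the corollary, and is the only step where a careless reader could be tempted to weaken the hypothesis.
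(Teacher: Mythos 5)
Your proof is correct and follows essentially the same route as the paper: the paper first establishes a pointwise dominance corollary ($f(y+n,z|\theta)\geq f(y,z|\theta)$ a.e.\ implies the NEM expectation condition), then reduces dominance for the mixture to the per-component inequality $f(y+n|j,\theta)\geq f(y|j,\theta)$ for all $j$, and finally obtains the quadratic condition by comparing the Gaussian exponents exactly as you do. Your one-hot/log-ratio presentation merely fuses these steps into a single computation, and your closing remark about why the condition must hold for every $j$ matches the paper's intersection of the per-component constraints.
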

This quadratic condition defines the geometry (Figure~\ref{fg:Geom-intro}) of the set of noise samples that can speed up the EM algorithm.

\FloatBarrier

Noise injection subject to the NEM condition leads to better EM estimates on average at each iteration and faster EM convergence. Combining NEM noise injection with a noise decay per iteration leads to much faster overall EM convergence. We refer to the combination of NEM noise injection and noise cooling as the \emph{NEM algorithm}~(\Sec\ref{sec:NEM-Alg}). A comparison of the evolution of EM and NEM algorithms on a sample estimation problem shows that the NEM algorithm reaches the stationary point of the likelihood function in $30\%$ fewer steps than the EM algorithm (see Figure~\ref{fg:intro-demo}).
\begin{figure}[ht!]
	\centerline{\bf{Log-likelihood Comparison of EM and Noise-enhanced EM}}
	\centerline{\includegraphics[width=0.95\textwidth]{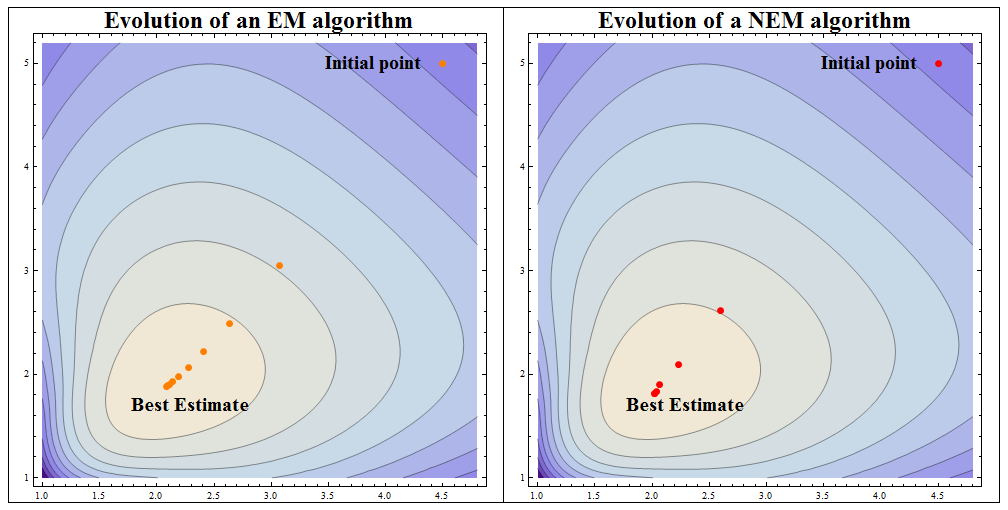}}
	\caption{
		NEM noise injection can speed up the convergence of the EM algorithm. The plot shows the evolution of an EM algorithm on a log-likelihood surface with and without noise injection. Both algorithms start at the same initial estimate and converge to the same point on the log-likelihood surface. The EM algorithm converges in $10$ iterations while the noise-enhanced algorithm converges in $7$ iterations---$30\%$ faster than the EM algorithm.
	}
\label{fg:intro-demo}
\end{figure}

\section{Applications of Noisy Expectation-Maximization}
Finding the NEM noise benefit led to recasting other iterative statistical algorithms as EM algorithms to allow a noise boost. The NEM theorem is a general prescriptive tool for extracting noise benefits from arbitrary EM algorithms. So these reinterpretations serve as a basis for introducing NEM noise benefits into other standard iterative estimation algorithms. This dissertation shows NEM noise benefits in three such algorithms: the $k$-means clustering algorithm (Chapter~\ref{ch:NEM-CNBT}), the Baum-Welch algorithm (Chapter~\ref{ch:NEM-HMM}), and the backpropagation algorithm (Chapter~\ref{ch:NEM-BP}). 

The most important of these algorithms is the backpropagation algorithm for feedforward neural network training. We show for the first time that the backpropagation algorithm is in fact a generalized EM (GEM) algorithm (Theorem~\ref{thm:bp_gem_equiv}) and thus benefits from proper noise injection:
\vspace{-6pt}
\begin{thm*}{\bf{[Backpropagation is a GEM Algorithm]:}}\\
	The backpropagation update equation for a feedforward neural-network likelihood function equals the GEM update equation. Thus backpropagation is a GEM algorithm.
\end{thm*}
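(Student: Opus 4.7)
The plan is to exhibit a complete-data likelihood for a feedforward network whose expectation over a latent variable yields a $Q$-function, and then show that one step of backpropagation satisfies the defining inequality for a GEM update. The first step is to choose the latent variables. For a feedforward network with weights $\theta$, input $\x$, and observed target $Y$, I would treat the vector $Z$ of hidden-layer activations as the missing data. Under a Gaussian output model (regression) or a multinomial output model (softmax classification), the conditional output likelihood $f(Y|Z,\theta)$ together with the conditional hidden-unit distribution $f(Z|\x,\theta)$ defines a complete-data likelihood $f(Y,Z|\theta)$ whose negative log reduces, along the activation path, to the usual squared-error or cross-entropy loss used by backpropagation.

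Next I would construct the E-step. The $Q$-function is $Q(\theta|\theta_k) = \E_{Z|Y,\theta_k}[\ln f(Y,Z|\theta)]$. The key tool is Fisher's identity, which follows from differentiating the decomposition $\ln f(Y|\theta) = Q(\theta|\theta_k) + H(\theta|\theta_k)$, where $H(\theta|\theta_k)$ is minus the expected log conditional latent density and has vanishing gradient at $\theta=\theta_k$. This yields $\nabla_\theta Q(\theta|\theta_k)\big|_{\theta=\theta_k} = \nabla_\theta \ln f(Y|\theta)\big|_{\theta=\theta_k}$. The right-hand side is exactly what backpropagation computes via the chain rule: the partial derivatives of the observed-data log-likelihood (equivalently, minus the loss) with respect to the weights, propagated backward layer by layer.

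From here the GEM claim is almost immediate. A backpropagation step $\theta_{k+1} = \theta_k - \eta\, \nabla_\theta L(\theta_k) = \theta_k + \eta\, \nabla_\theta \ln f(Y|\theta_k)$ is, by Fisher's identity, also an ascent step on $Q(\cdot|\theta_k)$ at $\theta_k$. For sufficiently small $\eta$ (or under a standard Armijo-type step-size rule) we obtain $Q(\theta_{k+1}|\theta_k) \geq Q(\theta_k|\theta_k)$, which is exactly the GEM admissibility condition. Hence every backpropagation update is a valid GEM update, so backpropagation is a GEM algorithm and inherits the noise benefit from Theorem~\ref{thm:NEM}.

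The main obstacle, I expect, will be making the latent-variable specification tight enough that the chain-rule derivative of the observed-data log-likelihood coincides \emph{exactly} with the standard backpropagation recursion rather than only up to additional expectation terms. In particular, the output noise model must be matched to the loss (Gaussian to squared error, categorical to cross-entropy), and care is needed so that the expectation over $Z|Y,\theta_k$ does not inject extra entropy terms when the hidden-unit map is deterministic given $\theta$ and $\x$. These are matters of careful bookkeeping rather than of deep analysis, and once discharged they close the gap between the two update equations.
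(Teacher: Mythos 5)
Your proposal is correct and follows essentially the same route as the paper: both rest on the decomposition $\ln p(y|\mathbf{x},\Theta) = Q(\Theta|\Theta^n) + H(\Theta|\Theta^n)$ together with the observation that the Kullback--Leibler non-negativity forces $\nabla_\Theta H(\Theta|\Theta^n) = 0$ at $\Theta = \Theta^n$ (Fisher's identity), so the backpropagation gradient of the observed-data log-likelihood coincides with the gradient of $Q$ at the current estimate. The only cosmetic difference is that the paper simply declares the two gradient-update equations identical, whereas you additionally invoke a small step size to land the explicit GEM admissibility inequality $Q(\theta_{k+1}|\theta_k) \geq Q(\theta_k|\theta_k)$ --- a slightly more careful finish of the same argument.
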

\goodbreak

This theorem illustrates a general theme in recasting estimation algorithms as EM algorithms: iterative estimation algorithms that make use of missing information and increase a data log-likelihood are usually (G)EM algorithms. Chapter~\ref{ch:NEM-BP} provides proof details and simulations of NEM noise benefits for backpropagation. The NEM condition for backpropagation (Theorem~\ref{thm:sph-ndnn-bp}) has interesting geometric properties as the backpropagation noise ball in Figure~\ref{fg:BP-demo} illustrates.
\begin{figure}[ht!]
	\centerline{\bf{Geometry of NEM Noise for Backpropagation}}
	\centerline{\includegraphics[height=0.5\textheight]{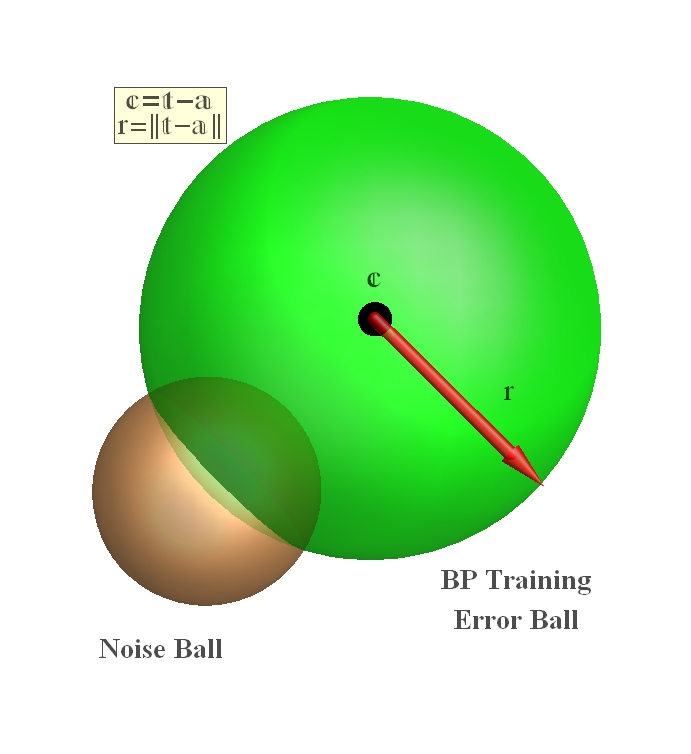}}
	\caption{
		NEM noise for faster backpropagation using Gaussian output neurons. The NEM noise must fall inside the backpropagation ``training error'' sphere. This is the sphere with center $\mathbf{c} = \mathbf{t}-\mathbf{a}$ (the error between the target output $\mathbf{t}$ and the actual output $\mathbf{a}$) with radius $r=\|\mathbf{c}\|$. Noise from the noise ball section that intersects with the error sphere will speed up backpropagation training according to the NEM theorem. The error ball changes at each training iteration.
	}
\label{fg:BP-demo}
\end{figure}

Deep neural networks can also benefit from the NEM noise benefit. Deep neural networks are ``deep'' stacks of restricted Boltzmann machines (RBMs). The depth of the network may help the network identify complicated patterns or concepts in complex data like video or speech. These deep networks are in fact bidirectional associative memories (BAMs). The stability and fast training properties of deep networks are direct consequences of the global stability property of BAMs.

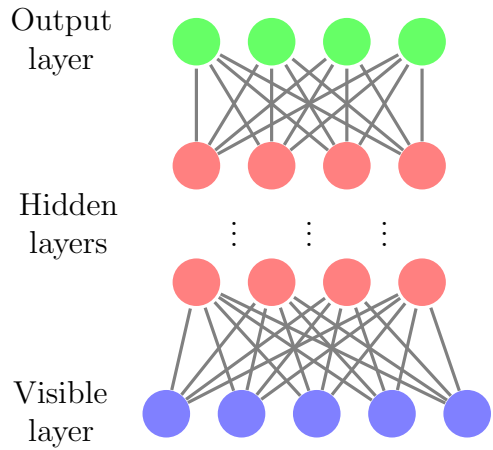
\begin{figure}[ht!]
	\centering
	\begin{tikzpicture}[shorten >=1pt,->,draw=black!50, node distance=\layersep]
		\tikzstyle{every pin edge}=[<-,shorten <=2pt]
		\tikzstyle{neuron}=[circle,fill=black!25,minimum size=18pt,inner sep=0pt]
		\tikzstyle{input neuron}=[neuron, fill=blue!50];
		\tikzstyle{hidden neuron}=[neuron, fill=red!50];
		\tikzstyle{output neuron}=[neuron, fill=green!60];
		\tikzstyle{annot} = [text width=4em, text centered]
		\foreach \name / \y in {1,...,5}
			\node[input neuron] (I-\name) at (-\y-0.4,0) {};
		\foreach \name / \y in {2,...,5}
				\node[hidden neuron] (H-\name) at (-\y cm, \layersepp) {};
		\foreach \name / \y in {2,...,5}
				\node[hidden neuron] (H2-\name) at (-\y cm, \layersep-0.2cm) {};
		\foreach \name / \y in {2,...,5}
				\node[output neuron] (O-\name) at (-\y cm, \layersep+\layersepp-0.3cm) {};
		
		\foreach \source in {1,...,5}
			\foreach \dest in {2,...,5}
				\draw[-, very thick] (I-\source) edge (H-\dest);

		\foreach \source in {2,...,5}
			\foreach \dest in {2,...,5}
				\draw[-, very thick] (H2-\source) edge (O-\dest);
				
		\node[inner sep=0,minimum size=0] (phantom) at (-3.5 cm, 1.75cm) {}; 
		\node[annot, above of=phantom, node distance=0.75cm](dots) {$\mathbf{\vdots}$};
		\node[annot, right of=dots, node distance=1cm] {$\mathbf{\vdots}$};
		\node[annot, left of=dots, node distance=1cm] (hs) {$\mathbf{\vdots}$};
				
		\node[annot,left of=hs, node distance=2.2cm] (hl) {Hidden layers};
		\node[annot,left of=I-5, node distance=1.4cm] {Visible layer};
		\node[annot,left of=O-5, node distance=1.8cm] {Output layer};
		
	\end{tikzpicture}
	\caption{A Deep Neural Network consists of a stack of restricted Boltzmann machines (RBMs) or bidirectional associative memories (BAMs).}
\end{figure}

The so-called Contrastive Divergence algorithm is the current standard algorithm for pre-training deep networks. It is an iterative algorithm for approximate maximum likelihood estimation (\Sec\ref{subsec:deepNN}). CD is also a GEM algorithm. Theorem~\ref{thm:hyp-ndnn} and Theorem~\ref{thm:sph-ndnn} give the NEM noise benefit conditions for training the RBMs in a deep network. The NEM condition for RBMs shares many geometrical properties with the NEM condition for backpropagation.
\FloatBarrier


\section{Results on Bayesian Approximation}
The last major results in this dissertation are the Bayesian approximation theorems in Chapters~\ref{ch:BAT} and~\ref{ch:ExBAT}. They address the effects of using approximate model functions for Bayesian inference. Approximate model functions are common in Bayesian statistics because statisticians often have to estimate the true model functions from data or experts. This dissertation presents the first general proof that these model approximations do not degrade the quality of the approximate posterior pdf. Below is a combined statement of the two approximation theorems in this dissertation (Theorem~\ref{thm:BAT} and Theorem~\ref{thm:ExBAT}):
\vspace{12pt}
\begin{thm*}{\bf{[The Unified Bayesian Approximation Theorem]:}}\\
	Suppose the model functions (likelihoods $g$, prior $h$, and hyperpriors $\pi$) for a Bayesian inference problem are bounded and continuous. Suppose also that the joint product of the model functions' uniform approximators $G H \Pi$ is non-zero almost everywhere on the domain of interest $\mathcal{D}$.
	
	Then the posterior pdf approximator $F = \frac{G H \Pi}{\int_{\mathcal{D}} G H \Pi}$ also uniformly approximates the true posterior pdf $f = \frac{g h \pi}{\int_{\mathcal{D}} g h \pi}$
\end{thm*}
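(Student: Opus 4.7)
The plan is to work from the inside of the Bayes quotient outward: first control the product of the model functions, then the normalizing integral, and finally the ratio. Concretely, I would fix an approximation tolerance $\varepsilon > 0$ and choose $G, H, \Pi$ so that each of $\|G-g\|_\infty$, $\|H-h\|_\infty$, $\|\Pi-\pi\|_\infty$ is at most $\varepsilon$ on $\mathcal{D}$. The first step is a standard telescoping estimate for the product,
\[
GH\Pi - gh\pi = (G-g)H\Pi + g(H-h)\Pi + gh(\Pi-\pi),
\]
which, together with the hypothesis that $g,h,\pi$ are bounded (and hence so are $G,H,\Pi$ for small $\varepsilon$), yields a uniform bound of the form $\|GH\Pi - gh\pi\|_\infty \leq M\varepsilon$ for a constant $M$ depending only on the sup norms of $g,h,\pi$.

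The second step compares the normalizers $N := \int_{\mathcal{D}} GH\Pi$ and $n := \int_{\mathcal{D}} gh\pi$. Integrating the Step 1 bound gives $|N-n| \leq M\varepsilon \cdot |\mathcal{D}|$, provided $\mathcal{D}$ has finite measure (or a common integrable majorant is available). Because $f$ is a posterior pdf, $n>0$; combined with the hypothesis that $GH\Pi$ is non-zero almost everywhere, this lets me choose $\varepsilon$ small enough that $N \geq n/2 > 0$, so the denominators are bounded away from zero uniformly in the approximation level. The third step then bounds the difference of the ratios. Writing $gh\pi = f\cdot n$ and applying the standard quotient identity,
\[
|F(\theta)-f(\theta)| = \frac{\bigl|\,n\cdot GH\Pi(\theta) - N\cdot gh\pi(\theta)\,\bigr|}{N\, n} \leq \frac{\|GH\Pi - gh\pi\|_\infty}{N} + \frac{f(\theta)\,|N-n|}{N},
\]
and taking the supremum over $\theta\in\mathcal{D}$ produces $\|F-f\|_\infty \leq C\varepsilon$ for a constant $C$ that depends only on $\|g\|_\infty,\|h\|_\infty,\|\pi\|_\infty$, $\|f\|_\infty$, and $n$. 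Since $\varepsilon$ is arbitrary, this is the claimed uniform approximation.

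The main obstacle is the second step on an unbounded domain: a uniform pointwise bound on an integrand does not by itself bound the integral when $|\mathcal{D}| = \infty$, so the argument above needs either a finite-measure hypothesis on $\mathcal{D}$, an integrable dominating function for the approximating products, or explicit tail control on $G,H,\Pi$. I would expect the extended theorem in Chapter~\ref{ch:ExBAT} to address exactly this issue by strengthening the uniform-approximation hypothesis so that integrability of $gh\pi$ is inherited by $GH\Pi$. The only other subtlety is keeping $N$ bounded below by a strictly positive constant as $\varepsilon\to 0$; this is precisely where the hypothesis that $GH\Pi$ is non-zero almost everywhere on $\mathcal{D}$ is used, since it rules out degenerate approximators whose normalizer could collapse even while they remain uniformly close to $gh\pi$ outside a small exceptional set.
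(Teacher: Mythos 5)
Your outline follows essentially the same route as the paper's proofs of Theorem~\ref{thm:BAT} and Theorem~\ref{thm:ExBAT}: expand the product of approximators against the product of approximands (the paper expands $(\Delta H + h)(\Delta G + g)$, which is your telescoping estimate in a different arrangement), integrate that bound to control the Bayes-factor error $|\Delta Q|$ using the finite Lebesgue measure of $\mathcal{D}$, and then run the standard quotient estimate with the denominator kept away from zero by making the model-function errors small enough that $\epsilon_q < \min\{q\}$. Your worry about unbounded domains is answered in the paper by hypothesis rather than by a dominating function: both theorems assume $\mathcal{D}$ (and the data space) compact, so $m(\mathcal{D}) < \infty$ and no tail control is needed.

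The one genuine gap is uniformity over the data $x$. The normalizer $n$ in your second step is not a single number but a function $q(x) = \int_{\mathcal{D}} g(x|\theta)\,h(\theta|\tau)\,\pi(\tau)\,d\theta\,d\tau$, and your final constant $C$ depends on $n$ and on $\|f\|_\infty$, both of which degenerate if $q(x)$ can approach zero over the range of $x$. As written, your argument therefore yields a pointwise-in-$x$ bound rather than the uniform one the theorem claims for all $x$ and $\theta$. The paper closes this by observing that $q$ is continuous in $x$ (since $gh\pi$ is continuous), strictly positive by assumption, and defined on a compact data space, so the extreme value theorem gives $0 < \min_x q(x) \leq \max_x q(x) < \infty$; the same device bounds $\max_x \int_{\mathcal{D}} g(x|\theta)\,d\theta$, which enters the bound on $|\Delta Q|$. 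A smaller point: the hypothesis that $GH\Pi \neq 0$ almost everywhere is used only to guarantee $Q \neq 0$ so that $F$ is well defined for the given approximators; the eventual lower bound $Q \geq q - \epsilon_q > 0$ comes from the smallness of $\epsilon_q$ relative to $\min_x q(x)$, not from that hypothesis.
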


This approximation theorem gives statisticians the freedom to use approximators to approximate arbitrary model functions---even model functions that have no closed functional form---without worrying about the quality of their posterior pdfs.

Statisticians can choose any uniform approximation method to reap the benefits of this theorem. Standard additive model (SAM) fuzzy systems are one such tool for uniform function approximation. Fuzzy systems can use linguistic information to build model functions. Figure~\ref{fg:FAT-Demo} below shows an example of a SAM system approximating a pdf using $5$ fuzzy rules. \Sec\ref{sec:AdaptFAT} discusses Fuzzy function approximation in detail. Chapter~\ref{ch:ExBAT} addresses the complexities of approximate Bayesian inference in hierarchical or iterative inference contexts.
\begin{figure}[h!]
\centerline{\includegraphics[height=0.31\textheight, width=0.75\textwidth]{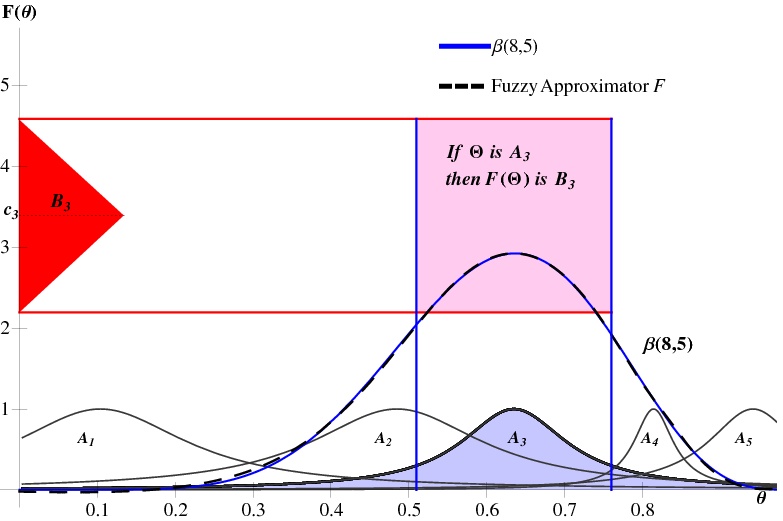}}
	\caption{
		A fuzzy function approximation for a $\beta(8, 5)$ prior pdf. An adaptive SAM (standard additive model) fuzzy system tuned five fuzzy sets to give a nearly exact approximation of the beta prior.  Each fuzzy rule defines a patch or 3-D surface above the input-output planar state space.  The third rule has the form ``If $\Theta = A_3$ then $B_3$'' where then-part set $B_3$ is a fuzzy number centered at centroid $c_3$.  This rule might have the linguistic form ``If $\Theta$ is {\it approximately} $\frac{1}{2}$ then $F(\Theta)$ is {\it large}.''
	}
\label{fg:FAT-Demo}
\end{figure}

This dissertation also contains other minor results of note including a convergence theorem (Theorem~\ref{thm:MM-converge}) for a subset of minorization-maximization (MM) algorithms. MM algorithms generalize EM algorithms. But there are no published proofs of MM convergence. There is an extension of the GMM-NEM condition to mixtures of \emph{jointly} Gaussian sub-populations (Corollary~\ref{cor:JGMM-NEM}). There is also an alternate proof showing that the $k$-means algorithm is a specialized EM algorithm. Other proofs of this subsumption already exists in the literature. This dissertation ends with discussions about ongoing work to establish or demonstrate NEM noise benefits in genomics and medical imaging applications (\Sec\ref{subsec:genomics}, \Sec\ref{subsec:PET-NEM}, \& \Sec\ref{subsec:MRI-NEM}).

\clearpage

\captionsetup{list=yes}

\chapter{The Expectation--Maximization (EM) Algorithm}\label{ch:EM}

The Expectation--Maximization (EM) algorithm~\parencite{dempster-laird-rubin1977, mclachlan-krishnan2007, gupta-chen2010} is an iterative statistical algorithm that estimates high--likelihood parameters from incomplete or corrupted data. This popular algorithm has a wide array of applications that includes data clustering~\parencite{celeux-govaert1992,ambroise-dang-govaert1997}, automated speech recognition~\parencite{rabiner1989,juang-rabiner1991}, medical imaging~\parencite{shepp-vardi1982, zhang-brady-smith2001}, genome-sequencing~\parencite{lawrence-reilly1990,bailey-elkan1995a}, radar denoising~\parencite{wang-Dogandzic-Nehorai2006}, and infectious-diseases tracking~\parencite{reilly-lawlor1999,bacchetti1990}. A prominent mathematical modeler even opined that the EM algorithm is ``as close as data analysis algorithms come to a free lunch''~\parencite[][p. 177]{gershenfeld1999}. Other data-mining researchers consider the EM algorithm to be one of the top ten algorithms for data-mining~\parencite{top-ten-algs2008}.

EM algorithms split an estimation problem into two steps: the \emph{Expectation} (E) step and the \emph{Maximization} (M) step. The E step describes the best possible complete model for the incomplete data given all current information. The M step uses that new complete model to pick higher likelihood estimates of the distribution parameters for the incomplete data. The improved parameter estimates from the M step lead to better complete models in the E step. The EM algorithm iterates between the E and M steps until the parameter estimates stop improving.

This chapter presents the EM algorithm in full detail. The EM algorithm is a generalization of maximum likelihood estimation (MLE). It inherits many properties from MLE. The next section reviews MLE and motivates the EM generalization. Then we formulate the EM algorithm and examine some of its convergence properties. The chapter ends with a survey of instances, notable variants, and a generalization of the EM algorithm.

\section{A Review of Maximum Likelihood Estimation}\label{sec:ell-expo}

Maximum likelihood estimation methods search for parameter values $\theta$ which maximize a likelihood function $\ell(\theta)$. The likelihood $\ell(\theta)$ is a statistic of the data $y$ that defines a preference map over all possible parameters $\theta \in \Theta$ for the data model~\parencite{pawitan2001}. This preference map quantifies how well a parameter $\theta_0$ describes or summarizes the sample $y$ relative to all other parameters $\theta_i \in \Theta$~\parencite{pawitan2001,efron-mle-dec1982}.

The likelihood $\ell(\theta)$ measures the quality of a parameter estimate $\theta$ using the observed sample $y$ and the data's parametric pdf $f(y|\theta)$. Fisher~\parencite*{fisher1922,fisher1925} formalized the use of the likelihood as a method for evaluating candidate parameter estimates. He defined the likelihood $\ell(\theta)$ of $\theta$ for a sample $y$ of the observed random variable (rv) $Y$ as
\begin{equation} 
	\ell(\theta) = f(y|\theta)
\end{equation}
where $Y$ has the probability density function (pdf) $f(y|\theta)$. $\ell(\theta)$ has the unusual property of being both a function of $\theta$ and a statistic of the data $y$. If an estimate $\theta_0$ has low likelihood $\ell(\theta_0)$ for a given sample $y$ then there is a low probability of observing $y$ under the hypothesis that the true data pdf is $f(y|\theta_0)$. A parameter $\theta_1$ is more likely than $\theta_0$ if it gives a higher likelihood than $\theta_0$. So there is a higher probability of observing $y$ if the true sampling pdf was $f(y|\theta_1)$ instead of $f(y|\theta_0)$.

One approach to statistics asserts that this likelihood function contains all the objective evidence the experiment (i.e. the random sampling of $Y$) provides about the  unknown parameter $\theta$. This assertion is the \emph{likelihood principle} and it is often an implicit assumption in statistics applications~\parencite{berger-wolpert1988} including Bayesian applications.

\subsection{MLE Overview} \label{subsec:MLE-over}

A basic question in statistics is: which pdf $f(y)$ \emph{best} describes a set of observed random samples $\{y_i\}_i^n$? Statisticians often invoke the simplifying assumption that the data samples come from one pdf in a class of parametric sampling pdfs $\{f(y|\theta)\}_{\theta \in \Theta}$\footnote{
	This assumption may be insufficient if the parametric pdf class is not general enough. A rigorous statistician would ensure that the parametric pdf class $\{f(y|\theta)\}_{\theta \in \Theta}$ is provably dense in the class of appropriate, possibly non-parametric pdfs $\{f_\psi(y)\}_{\psi\in \Psi}$ for $y$. Most estimation applications just assume a convenient pdf family and do not address this assumption rigorously.
}.
Then the question becomes: which parameter estimate $\hat{\theta}$ gives the \emph{best} parametric pdf $f(y|\theta)$ for describing the observed random samples $\{y_i\}_i^n$?

The preceding discussion on likelihoods suggests that the best parameter estimate $\hat{\theta}$ is the value of the parameter that gives the highest observation probability for $\{y_i\}_i^n$ i.e. the most likely parameter. This is the \emph{Maximum Likelihood Estimate} (MLE)~\parencite{fisher1922, efron-mle-dec1982, aldrich1997, pawitan2001}. The likelihood $\ell(\theta)$ of the random samples $\{y_i\}_i^n$ is
\begin{equation} 
	\ell(\theta) = \prod_i^n f(y_i|\theta) \;.
\end{equation}
Thus the ML estimate $\hat{\theta}$ is
\begin{equation}
	\hat{\theta}_n = \argmax{\theta \in \Theta} \ell(\theta) \;. 
\end{equation}
The product structure of the joint pdf and the exponential nature of many pdfs make it easier to optimize the logarithm of the likelihood $\ln \ell(\theta)$ instead of the likelihood $\ell(\theta)$ itself. So we define the log-likelihood $L(\theta)$ as\footnote{
	I use the $L(\theta)$ and $\ell(\theta)$ to denote $L(\theta|y)$ and $\ell(\theta|y)$ when the data random variable $Y$ is unambiguous.
}
\begin{equation}
	L(\theta) = \ln \ell(\theta) \;.
\end{equation}
The logarithmic transformation preserves the likelihood function's stationary points because $L'(\theta) = k~\ell'(\theta)$ where $k$ is a strictly positive scalar for all viable values\footnote{
	$\theta$ is viable if it has strictly positive likelihood $\ell(\theta) > 0$. i.e. $f(y|\theta) > 0 $ for the observed sample $y$. The viability condition is necessary because $L'(\theta)=\frac{\ell'(\theta)}{\ell(\theta)}$. Thus $L$ preserves the stationary points of $\ell$ only if $k=\frac{1}{\ell(\theta)} \neq 0$. The log-likelihood $L(\theta)$ also preserves the preference order specified by $\ell(\theta)$ since $\ell(\theta_0) \leq \ell(\theta_1) \iff \ln \ell(\theta_0) \leq \ln \ell(\theta_1)$ by the monotone increasing property of the log-transformation.
}
of $\theta$. So the ML estimate is equivalent to
\begin{equation}
	\hat{\theta}_n = \argmax{\theta \in \Theta} L(\theta) \;.
\label{eq:MLE}
\end{equation}
The EM algorithm applies a generalized MLE approach to find optimal parameters to fit complicated pdfs for incomplete data samples.

\subsection{A Brief History of Maximum Likelihood Methods}\label{subsec:hist-mle}

Ronald Fisher introduced\footnote{
	Francis Y. Edgeworth have preempted Fisher \parencite{pratt1976, edgeworth1908} in formulating the maximum likelihood principle in 1908.  But his work did not gain attention until Fisher reformulated an equivalent idea outside the Bayesian setting many years later.
} maximum likelihood methods with his 1912 paper \parencite{fisher1912}. He disparaged the then state-of-the-art parameter estimation methods: the method of moments and the minimum mean squared estimation (MMSE) method. He criticized the fact that both methods select \emph{optimal} parameters based on arbitrary criteria: matching finite-order moments and minimizing the mean squared error respectively. But his main criticism was that these methods give different estimates under different parameterizations of the underlying distribution. 

Suppose the sampling pdf $f(y|\theta)$ has a different functional representation ${\tilde f}(y|\phi)$ where $\phi = t(\theta)$. Then the best estimate $\hat \theta$ for $\theta$ should identify the best estimate $\hat{\phi}$ for $\phi$ via the parameter transformation
\begin{equation}
	\hat{\phi} = t(\hat{\theta}) \;.
\end{equation}
The moments and MMSE methods do not obey this invariance principle i.e. $\hat{\phi} \neq t(\hat{\theta})$ in general. Fisher argued that the invariance principle should hold for any truly \emph{optimal} estimate since the observed samples do not change with re-parameterizations of the pdf. Fisher then proposed an alternate method of statistical estimation that maximized parameters for parametric pdfs. He showed that this method is invariant under parameter transformations. 

Fisher did not name the method until he wrote his 1922 paper~\parencite{fisher1922} in which he formalized the idea of a likelihood as distinct from a probability. Likelihoods measure degree of certainty for parameters just as probabilities do for events. But likelihoods have no independent axiomatic foundation. Likelihoods, for example, do not have to integrate to one. And so there is no clear concept of likelihood marginalization over unwanted variables \parencite{bayarri-degroot1992}. EM provides one likelihood equivalent to Bayesian posterior marginalization over unobserved \emph{data} random variables.

Fisher, Doob, and Wald later found other attractive statistical properties of maximum likelihood estimates. The ML estimate $\hat{\theta}_n$ for a parameter $\theta$ is consistent ($\hat{\theta}_n \rightarrow \theta$ in probability) \parencite{doob1934}. It is \emph{strongly} consistent ($\hat{\theta}_n \rightarrow \theta$ with {probability-one}) under additional weak conditions \parencite{wald1949}. The MLE is asymptotically efficient \parencite{fisher1925}:
\begin{equation}
	\lim_n \V[\hat{\theta}_n] =  \mathcal{I}_n^{-1}(\theta)
\end{equation}
where $\mathcal{I}(\theta)$ is the Fisher information $\mathcal{I}(\theta) = -\E_{Y|\theta} \left[L''(\theta) \right]$ \parencite{hogg-mckean-craig2005}. And the MLE is asymptotically normal \parencite{doob1934, wald1943}:
\begin{equation}
	\lim_n (\hat{\theta}_n - \theta) \sim \mathcal{N}(0, \mathcal{I}_{n}^{-1}) \;. \label{eq:MLE-AsymNorm}
\end{equation}

\subsection{Numerical Maximum Likelihood Estimation}

Maximum Likelihood Estimation converts a statistical estimation problem into an optimization problem. We can solve the optimization problem analytically. But numerical methods are necessary when equation (\ref{eq:MLE}) has no analytic solution. ML estimates are roots of the derivative $L'(\theta)$ of the log likelihood $L(\theta)$. This derivative $L'(\theta)$ is also the score statistic $S(\theta)$
\begin{equation}
	S(\theta) = L'(\theta) = \frac{\ell'(\theta)}{\ell(\theta)}.
\end{equation}

The Newton-Raphson (NR) method \parencite{ypma1995} is a Taylor-series-based iterative method for root-finding. NR uses the first-derivative of the objective function to tune update directions. The first-derivative of the score $S'(\theta)$ is the second derivative of the log-likelihood $L''(\theta)$. This is the \emph{observed Fisher information}~\parencite{fisher1934, pawitan2001}: \nocite{efron-hinkley1978}
\begin{align}
	I(\theta) &= -L''(\theta) \\	
	\text{Or} \quad \big{[}\mathbf{I}(\boldsymbol{\theta}) \big{]}_{i,j}^{p,p} &= -\frac{\partial^2 L(\boldsymbol{\theta})}{\partial \theta_i \partial \theta_j}
\end{align} for a vector parameter $\boldsymbol{\theta}=( \theta_1, \cdots, \theta_p)$.
Thus the NR update equation is
\begin{equation}
	\theta_{k+1} = \theta_{k} + I^{-1}(\theta)  S(\theta)\;.
\label{eq:NR-MLE}
\end{equation}
The \emph{Fisher scoring method} uses the expected Fisher information $\mathcal{I(\theta)}=\E_{Y|\theta}[I(\theta)]$ instead of the observed Fisher information $I(\theta)$ in (\ref{eq:NR-MLE}).

Analytic and numerical optimization methods work well for MLE on simple data models. But they often falter when the data model is complex or when the data has random missing information artifacts.

\section{The Expectation Maximization Algorithm}\label{sec:EM} 

Many statistical applications require complicated data models to account for experimental effects like data corruption, missing samples, sample grouping, censorship, truncation, and additive measurement errors. The appropriate likelihood functions for these data models can be very complicated. The Expectation Maximization (EM) algorithm \parencite{dempster-laird-rubin1977, mclachlan-krishnan2007} is an extension of the MLE methods for such complicated data models.

\subsection{Motivation \& Historical Overview}

The EM algorithm is an iterative ML technique that compensates for missing data by taking conditional expectations over missing information given the observed data \parencite{dempster-laird-rubin1977, mclachlan-krishnan2007}. The basic idea behind the EM algorithm is to treat the complex model for the observed data $Y$ as an incomplete model. The EM algorithm augments the observed data random variable $Y$ with a hidden random (latent) random variable $Z$. The aim of this augmentation is to complete the data model and obtain a simpler likelihood function for estimation. But the resulting complete log-likelihood function $L_c(\theta|Y,Z)$ needs to fit the observed incomplete data $Y=y$. The EM algorithm addresses this by sequentially updating its best guess for the complete log-likelihood function $L_c(\theta|Y,Z)$. The EM algorithm uses the conditional expectation of the complete log-likelihood $\E_Z[L_c(\theta| y,Z)|Y =y,\theta_k]$ given the observed data\footnote{
	Some researchers incorrectly assume that this conditional expectation just fills in the missing data $Z$ with the current estimate $\E[Z|Y,\theta_k]$. The process of substituting estimates for missing random variables is \emph{imputation} \parencite{little-rubin2002, schafer-graham2002}\nocite{rubin1976}. EM is equivalent to direct imputation only when the data model has a log-linear likelihood~\parencite{wei-tanner1990,ng-mclachlan2004} like the exponential or gamma distributions (see \Sec\ref{subsec:Exp-EM}). But EM does not impute estimates for $Z$ directly when the data model has a nonlinear log-likelihood.
}
as its best guess for the compatible complete log-likelihood function on the $k^{th}$ iteration.

\begin{align}
\text{\bf E-Step:}& ~~Q \left( \theta |\theta_k \right) = \E_{Z|y,\theta_k}  \left[L_c(\theta| y,Z)\right] \label{eq:E-step} \\
\text{\bf M-Step:}& ~~\theta_{k+1} = \argmax{\theta} \left\{ Q\left( \theta |\theta_k \right) \right\}  \label{eq:M-Step}
\end{align}

The EM algorithm is a synthesis of ML statistical methods for dealing with complexity due to missing information. Orchard and Woodbury's \emph{Missing Information Principle} \parencite{orchard-woodbury1972} was the first coherent formulation of the idea that augmenting a data model with missing information can simplify statistical analysis. There had been decades of statistical work weaving this idea into ad hoc solutions for missing information problems in different areas. Hartley used this principle to address truncation and censoring effects in discrete data \parencite{hartley1958}. The Baum-Welch algorithm \parencite{baum-et-al1970, welch2003} uses the same missing information principle to estimate ML parameters for hidden Markov models. Sundberg \parencite{sundberg1974, sundberg1976} identified a similar theme in the analysis of incomplete generalized-exponential family models. Beale and Little \parencite{beale-little1975} applied the missing information idea to estimate standard errors for multivariate regression coefficients on incomplete samples. 

The Dempster et al. paper (\cite*{dempster-laird-rubin1977}) distilled out the unifying principle behind these ad hoc solutions and extended the principle to a general formulation for solving other missing information MLE problems. The EM formulation was such a powerful synthesis that Efron argued $17$ years later \parencite{efron1994} that the field of missing data analysis was an outgrowth of  the EM algorithm\footnote{A claim which Rubin countered quite effectively in his rejoinder to Efron's paper \parencite{rubin1994}}.
The general EM formulation now applies to parameter estimation on a wide array of data models including Gaussian mixtures, finite mixtures, censored, grouped, or truncated models, mixtures of censored models, and multivariate $t$-distribution models. 

The EM algorithm is not so much a single algorithm as it is an algorithm schema or family of similar MLE techniques for handling incomplete data. The conceptual simplicity and the mature theoretical foundations of the EM schema are powerful incentives for recasting disparate estimation algorithms as EM algorithms. These algorithms include the Baum-Welch algorithm~\parencite{welch2003}, Iteratively Re-weighted Least Squares (IRLS)~\parencite{dempster-laird-rubin1980}, older parameter estimation methods for normal mixtures~\parencite{redner-walker1984}, Iterative Conditional Estimation (ICE)~\parencite{delmas1997}, Gaussian Mean-Shift~\parencite{carreira2005}, $k$-means clustering~\parencite{celeux-govaert1992, osoba-kosko2013}, backpropagation~\parencite{audhkhasi-osoba-kosko-BP2013} etc. 

This dissertation extends the EM algorithm schema via the use of noise injection. Noise injection causes the algorithm to explore more of the log-likelihood surface. Intelligent log-likelihood exploration can lead to faster average convergence speed for the EM algorithm. Figure \ref{fg:NEM-Demo} is a demonstration of an EM algorithm speed-boost due to noise injection. The figure shows the evolution of an EM and a noisy EM (NEM) algorithm on the same log-likelihood surface. The noise-enhanced algorithm converges to the same solution $30\%$ faster than the EM algorithm. Subsequent chapters discuss the details of noise-enhanced EM algorithms like the one in this figure.

\begin{figure}[bt]
	\centerline{\bf{Log-likelihood Comparison of EM and Noise-enhanced EM}}
	\centerline{\includegraphics[width=0.95\textwidth]{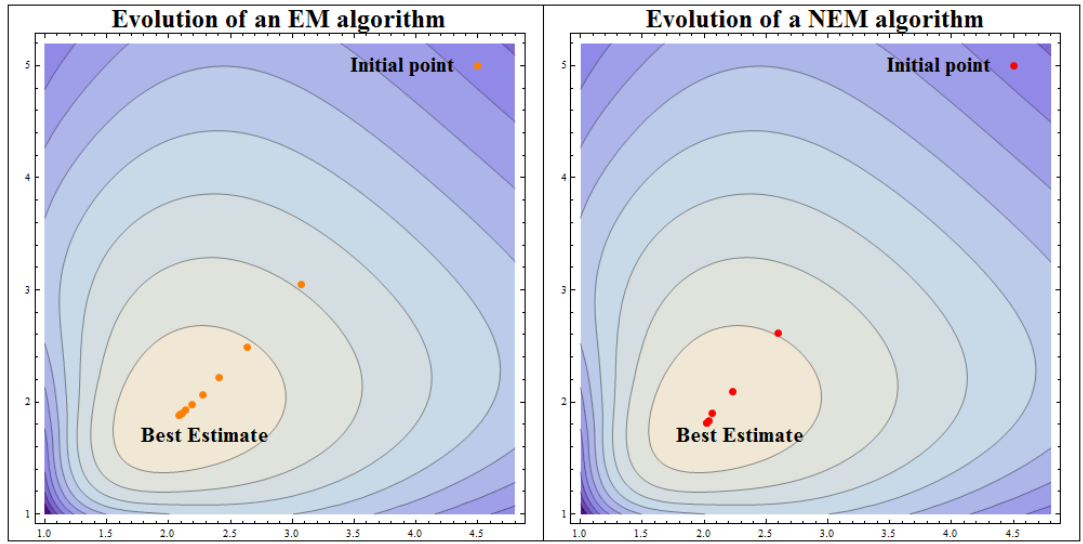}}
	\caption[Demonstration: noise injection can speed up the convergence of the EM algorithm]{
		Plot showing that noise injection can speed up the convergence of the EM algorithm. The plot shows the evolution of an EM algorithm on a log-likelihood surface with and without noise injection. Both algorithms estimate a $2$-dimensional ML parameter on the same data. Both algorithms also start at the same initial estimate and converge to the same point on the log-likelihood surface. The EM algorithm converges in $10$ iterations while the noise-enhanced algorithm converges in $7$ iterations--$30\%$ faster than the EM algorithm. 
	}
\label{fg:NEM-Demo}
\end{figure}

\subsection{Formulation of the EM Algorithm}
We now formulate the EM algorithm using the following notation:
\begin{align*}
\theta \in \Theta \subseteq \mathcal{R}^d &: \text{unknown $d$-dimensional pdf parameter}\\
\mathcal{X} &: \text{Complete data space}\\
\mathcal{Y} &: \text{Observed data space}\\
Y &: \text{observed data random variable with pdf } g(y|\theta) \\
Z &:\text{unobserved latent random variable with pdf } f(z|y, \theta) \\
X = (Y,Z) &: \text{complete data random variable with pdf} f(x|\theta) \\
f(x|\theta) =f(y,z|\theta) &:\text{joint pdf of } Z \text{ and }Y\\
L_c(\theta|x) = \ln f(x|\theta) &: \text{complete data log-likelihood} \\
L(\theta|y) = \ln g(y|\theta) &: \text{observed data log-likelihood} \\
\hat{\theta} &: \text{MLE for } \theta \\ 
\theta_t &: \text{an estimate for } \hat{\theta} \text{ at iteration } k
\end{align*} 
We use the full log-likelihood notation $L(\theta| y)$ instead of just $L(\theta)$ to avoid confusion between log-likelihoods for the different random variables.

The goal of the EM algorithm is to find the estimate $\hat{\theta}$ that maximizes $g(y|\theta)$ given observed samples of $Y$ i.e.
\begin{equation}
\hat{\theta} = \argmax{\theta} \ln g(y|\theta) = \argmax{\theta} L(\theta|y) \;.
\end{equation}
The algorithm makes essential use of the complete data pdf $f(x|\theta)$ to achieve this goal.

The EM scheme applies when we observe an incomplete data random variable $Y=r(X)$ instead of the complete data random variable $X$. The function $r: \mathcal{X} \rightarrow \mathcal{Y}$ models data corruption or information loss. $X=(Y,Z)$ can often denote the complete data $X$ where $Z$ is a latent or missing random variable. $Z$ represents any statistical information lost during the observation mapping $r(X)$. This corruption makes the observed data log-likelihood $L(\theta|y)$ complicated and difficult to optimize directly in ({\ref{eq:MLE}}).

The EM algorithm addresses this difficulty by using the simpler complete likelihood $L_c(\theta|y,z)$ to derive a surrogate log-likelihood $Q(\theta|\theta_t)$ as a replacement for $L(\theta|y)$. $Q(\theta|\theta_t)$ is the average of $L_c(\theta|y,z)$ over all possible values of the unobserved latent variable $Z$ given the observation $Y=y$ and the current parameter estimate $\theta_t$:
\begin{align}
	Q(\theta|\theta_t) &= \E_{Z} \left[ L_c(\theta|y,Z) \big{|} Y=y, \theta_t \right] \nonumber \\
	&= \int_{\mathcal{Z}} L_c(\theta|y,z)f(z|y,\theta_t) ~\mathrm{d}z.
\end{align} 

Dempster et al. \cite{dempster-laird-rubin1977} (DLR) first showed that any $\theta$ that increases $Q(\theta|\theta_t)$ cannot reduce the likelihood difference $\ell(\theta|y)-\ell(\theta_t|y)$. This ``ascent property'' led to an iterative method that performs gradient ascent on the likelihood $\ell(\theta|y)$. This result underpins the EM algorithm and its many variants \parencite{fessler-hero1994, hudson-larkin1994, meng-rubin1993, liu-rubin1994, liu-rubin-wu1998,meng-vandyk1997}. 

A standard EM algorithm performs the following two steps iteratively on a vector $\mathbf{y} = (y_1,\ldots,y_M)$ of observed random samples of $Y$: (assume $\theta_0$ is a suitable random initialization)

\vspace{6pt}

\begin{algorithm}[H]
\DontPrintSemicolon
\SetKwInOut{Input}{Input}
\SetKwInOut{Output}{Output}
\Input{$\mathbf{y} = \{y_i\}_i$ : vector of observed incomplete data}
\Output{ $\hat{\theta}_{EM}$ : EM estimate of parameter $\theta$}
\BlankLine
\While{($\|\theta_t-\theta_{t-1}\| \geq 10^{-tol}$)}{
{\bf E-Step:} $Q \left( \theta |\theta_t \right) \leftarrow\E_{\mathbf{Z}|\mathbf{y},\theta_t}  \left[L_c(\theta|\mathbf{y},\mathbf{Z}) \right] $\;
{\bf M-Step:} $\theta_{t+1} \leftarrow \argmax{\theta} \left\{ Q\left( \theta |\theta_t \right) \right\}$\;
$t \leftarrow t+1$\;
}
$\hat{\theta}_{EM} \leftarrow \theta_t$
\caption{The Expectation Maximization Algorithm}\label{algo:EM}
\end{algorithm}

\vspace{6pt}

\noindent $L(\theta_t|y)$ increases or remains constant with each EM iteration. Thus the algorithm performs a gradient ascent procedure on the likelihood surface $L(\theta|y)$. The algorithm stops when successive estimates differ by less than a given tolerance $\|\theta_t-\theta_{t-1}\| < 10^{-tol}$ or when $\|L(\theta_t|y)-L(\theta_{t-1}|y)\| < \epsilon$ \parencite{gupta-chen2010}. The algorithm converges when the estimate is close to a local optimum \parencite{mclachlan-krishnan2007,gupta-chen2010,wu1983, boyles1983}.

\subsection{Complete Data Spaces for Incomplete Data Models}
EM algorithms optimize observed data log-likelihoods $L(\theta|Y)$ via the complete log-likelihood $L_c(\theta|Y)$ on a complete data space (CDS). The CDS specifies the complete data random variable $X$ and its likelihood function $L_c(\theta|X)$. The complete data random variable $X$ and the associated latent variable $Z$ depend crucially on the data model. The selection of $Z$ or $X$ determines the E-step function $Q(\theta|\theta_t)$. Careful CDS selection can improve the analytic, computational, and convergence properties of the EM algorithm \parencite{fessler-hero1993, fessler-hero1994}. 

So far we have assumed that the complete data random variable $X$ augments the observed data $Y$ with a latent variable $Z$ via a direct product operation $X=(Y,Z)$. This direct product complete data space identifies the complete data random variable as $X=(Y,Z)$. Then the data corruption function $r(X)$ is a projection onto the observed data space $\mathcal{Y}$. And the observed data pdf is just a marginal pdf of the complete data pdf:
\begin{align}
X&=(Y,Z) \\
Y&= r(X) = proj_\mathcal{Y}(X) \\
g(y|\theta) &= \int_{r^{-1}(y)} f(x|\theta) ~\mathrm{d}x = \int_{Z} f(y,z|\theta) ~\mathrm{d}z \;.
\end{align}
This approach is standard for estimating parameters of mixture models. Mixture model data comes from an unknown convex mixture of sub-populations. The latent random variable $Z$ identifies the sub-population to which a sample $Y$ belongs. 

The direct product CDS models are illustrative for exposition purposes. General observation functions $r(.)$ are often re-expressible as functions over the observed data $Y$ and some ancillary variable $Z$. But direct products are not the only general way to model CDSs. Many observation functions $r(\cdot)$ do not identify the latent variable $Z$ explicitly.

Another approach is to use the complete random variable $X$ as the latent random variable $Z$. All the action is in the observation transformation $r(.)$ between the complete and the observed data spaces. The right-censored gamma model gives such an example \parencite{chauveau1995, tan-tian-ng2010}. Censored models are common in survival and lifetime data analysis. The latent random variable $Z$ is the unobserved uncensored gamma random variable. 
\begin{align}
X &= Z\\
Y &= r(X) = \min\{X, T\}\\
g(y|\theta) &= f_X\left(y|X \in [0,T] \right) \;.
\end{align}
Suppose the data $Y$ represents the lifetime of subjects after a medical procedure. Right censorship occurs if the survey experiment keeps track of lifetime data up until time-index $T$. Then the latent random variable $Z$ is the true unobserved lifetime of a subject which may exceed the censorship time-index $T$.

These CDSs are special cases. Identifying $r(X)$ can be difficult for general data models. But a functional description of $r(X)$ is not necessary. The main requirement for an EM algorithm is a description of how $r(X)$ changes the complete likelihood $L_c$ into the observed likelihood $L$:
\begin{align}
Y &= r(X) \\
g(y|\theta) &= \int_{r^{-1}(y)} f(x|\theta) \mathrm{d}x \\
\text{Where:  } r^{-1}(y) &= \{ x \in \mathcal{X}| r(x)=y  \} \;.
\end{align}

Fessler and Hero (\cite*{fessler-hero1993, fessler-hero1994}) introduced a further generalization to the idea of a CDS. They defined an \emph{admissible complete data space} $\mathcal{X}$ for the observed density $g(y|\theta)$ as a CDS whose joint density of $X \in \mathcal{X}$ and $Y \in \mathcal{Y}$ satisfies 
\begin{equation}
f(y,x|\theta) = f(y|x) f(x|\theta) \label{eq:CDS-Defn}
\end{equation}
with $f(y|x)$ independent of $\theta$. The classical EM setup assumes that $x$ selects $y$ via the deterministic assignment $y=r(x)$. The admissible CDS reduces to the classical deterministic CDS when the conditional pdf $f(y|x)$ is just a delta function:
\begin{equation}
f(y,x|\theta) = \delta(y-r(x)) f(x|\theta). \label{eq:CDS-Subsume}
\end{equation}
The admissible CDS definition in equation (\ref{eq:CDS-Defn}) allows for a more general case where the corrupting process $r(.)$ is a random transformation. A CDS is admissible under this definition only if the transformation $y=r(x)$ is independent of $\theta$. The use of admissible complete data spaces adds another level of flexibility to EM algorithms. The cascade \parencite{segal-weinstein1988} and SAGE variants \parencite{fessler-hero1994} of the EM algorithm manipulate admissible CDSs to speed-up EM convergence for data models with large parameter spaces $\boldsymbol{\Theta}$.

\section{EM Convergence Properties}
The ascent property of the EM algorithm~\parencite{dempster-laird-rubin1977} is its main strength. It ensures the stability of the EM algorithm. And it is the first step in the proof that the EM algorithm converges to desirable parameter estimates. Below is a statement and proof this property:

\begin{prop}{\bf{[The Ascent Property of the EM Algorithm]:}} \label{thm:ascent}\\
\begin{equation}
	L(\theta|y)-L(\theta_t|y) \geq Q(\theta|\theta_t) - Q(\theta_t|\theta_t) \label{eq:Ascent}
\end{equation}
\end{prop}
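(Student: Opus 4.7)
The plan is to decompose the observed log-likelihood into the surrogate $Q$ plus a residual term that turns out to be the negative of an entropy-like quantity, then show that the residual's change between $\theta$ and $\theta_t$ always has the right sign via an information-theoretic inequality.

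First I would apply Bayes' rule on the joint pdf to write $f(y,z|\theta) = f(z|y,\theta)\,g(y|\theta)$, and then take logs to obtain the pointwise identity
\begin{equation}
\ln g(y|\theta) \;=\; \ln f(y,z|\theta) \;-\; \ln f(z|y,\theta)\;.
\end{equation}
Since the left side does not depend on $z$, I would take the conditional expectation $\E_{Z|y,\theta_t}[\cdot]$ of both sides. The left side is unchanged, while the right side splits into $Q(\theta|\theta_t)$ and a second term $H(\theta|\theta_t) := \E_{Z|y,\theta_t}[\ln f(Z|y,\theta)]$, giving the key decomposition
\begin{equation}
L(\theta|y) \;=\; Q(\theta|\theta_t) - H(\theta|\theta_t)\;.
\end{equation}

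Writing the same identity at $\theta=\theta_t$ and subtracting yields
\begin{equation}
L(\theta|y) - L(\theta_t|y) \;=\; \bigl[Q(\theta|\theta_t) - Q(\theta_t|\theta_t)\bigr] \;-\; \bigl[H(\theta|\theta_t) - H(\theta_t|\theta_t)\bigr]\;.
\end{equation}
Thus the claim reduces to showing $H(\theta|\theta_t) \leq H(\theta_t|\theta_t)$, i.e.\ that the self-term is the maximum.

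The main (and only real) obstacle is this last inequality, but it is exactly Gibbs' inequality applied to the conditional pdfs $f(z|y,\theta_t)$ and $f(z|y,\theta)$. I would write
\begin{equation}
H(\theta_t|\theta_t) - H(\theta|\theta_t) \;=\; \E_{Z|y,\theta_t}\!\left[\ln \frac{f(Z|y,\theta_t)}{f(Z|y,\theta)}\right]\;,
\end{equation}
recognize the right side as the Kullback--Leibler divergence $D_{\mathrm{KL}}\bigl(f(\cdot|y,\theta_t)\,\|\,f(\cdot|y,\theta)\bigr)$, and invoke Jensen's inequality on the convex function $-\ln$ (equivalently, the non-negativity of KL divergence) to conclude the difference is $\geq 0$. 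Substituting this back into the decomposition gives the ascent inequality. I would also briefly note that equality holds only when $f(z|y,\theta) = f(z|y,\theta_t)$ almost everywhere, which foreshadows the fixed-point/convergence discussion that follows in the chapter.
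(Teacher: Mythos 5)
Your proposal is correct and follows essentially the same route as the paper's proof: the Bayes-rule decomposition $L(\theta|y) = Q(\theta|\theta_t) - H(\theta|\theta_t)$, subtraction at $\theta_t$, and the identification of $H(\theta_t|\theta_t) - H(\theta|\theta_t)$ as a Kullback--Leibler divergence whose non-negativity yields the ascent inequality. The only addition is your closing remark on the equality case, which the paper omits but which is a harmless and accurate observation.
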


\begin{proof}
\noindent By Bayes theorem:
\begin{align}
	f(z|y,\theta)&= \frac{f(z,y|\theta)}{g(y|\theta)} \\
	\ln f(z|y,\theta)&= \ln f(z,y|\theta) - L(\theta|y) \\
	L(\theta|y)&= \ln f(z,y|\theta) - \ln f(z|y,\theta)\;.
\end{align}
Average out the latent random variable $Z$ conditioned on the observed data $y$ and the current parameter estimate $\theta_t$ by applying the conditional expectation operator $E_{Z|y,\theta_t}[\cdot]$:
\begin{equation}
	E_{Z|y,\theta_t}[L(\theta|y)] = E_{Z|y,\theta_t}[\ln f(Z,y|\theta)] - E_{Z|y,\theta_t}[\ln f(Z|y,\theta) ] \label{eq:condExp}
\end{equation}
\noindent Therefore
\begin{equation}
	L(\theta|y) = Q(\theta|\theta_t) - H(\theta|\theta_t) 
\label{eq:LQH-intro}
\end{equation}
\noindent Where:
\begin{align}
	Q(\theta|\theta_t) \triangleq& \E_{Z|y,\theta_t}[\ln f(Z,y|\theta)] \\
	H(\theta|\theta_t) \triangleq& \E_{Z|y,\theta_t}[\ln f(Z|y,\theta)] 
\end{align}

$L(\theta|y)$ has no dependence on $Z$ in (\ref{eq:condExp}). Thus the expectation leaves it unchanged. The goal of MLE is to maximize $L(\theta|y)$. So our immediate goal is to force $L(\theta|y)$ upwards. Compare $L(\theta|y)$ to its current value $L(\theta_t|y)$ using (\ref{eq:LQH-intro}):
\begin{align}
	L(\theta|y)-L(\theta_t|y) &=  Q(\theta|\theta_t) - H(\theta|\theta_t) - Q(\theta_t|\theta_t) + H(\theta_t|\theta_t)  \\
	&=Q(\theta|\theta_t) - Q(\theta_t|\theta_t) + \left[H(\theta_t|\theta_t) - H(\theta|\theta_t)\right]  
\label{eq:pre-EndGame}
\end{align}
Examine the terms in square brackets:
\begin{align}
	H(\theta_t|\theta_t) - H(\theta|\theta_t) &= \int_X \left[\ln f(z|y,\theta_t)-\ln f(z|y,\theta)\right] f(z|y,\theta_t) \mathrm{d}x \\
	&=  \int_X \ln \left[\frac{f(z|y,\theta_t)}{f(z|y,\theta)}\right] f(z|y,\theta_t) \mathrm{d}x
\end{align}
\noindent The expectation of a logarithm of a pdf ratio in the last line
\begin{equation}
	\int_X \ln \left[\frac{f(z|y,\theta_t)}{f(z|y,\theta)}\right] f(z|y,\theta_t) \mathrm{d}x = D\left(f(z|y,\theta_t) \Vert f(z|y,\theta)\right)
\end{equation}
is the Kullback-Leibler divergence or the relative entropy $D(\cdot \Vert \cdot)$ \parencite{cover-thomas91, kullback-leibler1951}. 
$D(\cdot \Vert \cdot)$ is always non-negative
\begin{equation}
	D\left(f(z|y,\theta_t) \Vert f(z|y,\theta)\right) \geq 0. 
\end{equation} 
Hence \begin{equation}H(\theta_t|\theta_t) - H(\theta|\theta_t) \geq 0. \label{eq:KL-positivity} \end{equation} Thus Equations (\ref{eq:pre-EndGame}) and (\ref{eq:KL-positivity}) give the ascent property inequality:
\begin{equation}
	L(\theta|y)-L(\theta_t|y) \geq Q(\theta|\theta_t) - Q(\theta_t|\theta_t) 
\end{equation}
\end{proof}

This implies that any value of $\theta$ that increases $Q$ relative to its current value $Q(\theta_t|\theta_t)$ will also increase $L(\theta|y)$ relative to $L(\theta_t|y)$. The standard EM algorithm chooses the value of $\theta$ that gives a maximum increase in $Q$. i.e. 
\begin{equation}
\theta_{t+1} = \argmax{\theta}~Q(\theta|\theta_t).
\end{equation}
The \emph{Generalized Expectation-Maximization} (GEM) relaxes the M-step by requiring only improved intermediate estimates $Q(\theta_{t+1}|\theta_t) \geq  Q\left( \theta_t |\theta_t \right)$ instead of maximized intermediate estimates. 

\vspace{6pt}

\begin{algorithm}[H]
{\bf M-Step:} $\theta_{t+1} \leftarrow \tilde{\theta}$  such that $Q(\tilde{\theta}|\theta_t) \geq  Q\left( \theta_t |\theta_t \right) $
\caption{Modified M--Step for Generalized EM.}\label{algo:GEM}
\end{algorithm}

\vspace{6pt}

The ascent property implies that the EM algorithm produces a sequence of estimates $\{ \theta_t \}_{t=0}^\infty$ such that
\begin{equation}
L(\theta_{t+1}|y) \geq L(\theta_{t}|y).
\end{equation}
If we assume that the likelihood is bounded from above, then we get a finite limit for the log-likelihood sequence:
\begin{equation}
\lim_{t \rightarrow \infty} L(\theta_{t}|y) = L^* < \infty \;.
\end{equation}
The existence of this limit does not mean that $L^*$ is a global maximum. $L^*$ can be a stationary point i.e. a saddle point, a local maximum, or a global maximum. This limit also does not mean the sequence of EM iterates $\{ \theta_t \}_{t=0}$ necessarily converges to a single point. DLR purported to prove this erroneous result. But Boyles \parencite*{boyles1983} presented a counter-example of a GEM that fails to converge. The GEM in the counter-example produced non-convergent estimates stuck in an uncountable, compact, connected set.

\subsection{Convergence of the (G)EM Algorithm}
GEM estimates do not converge to the global maximizer (the ML estimate) in general. The only guarantee is that GEM estimates $\theta_t$ converge to a point $\theta_*$ in the set of stationary points $\mathcal{S}$ for the log-likelihood $L$. This section presents a proof of this claim. The proof is a direct application of Zangwill's \emph{Convergence Theorem A} \parencite[][p. 91]{zangwill1969}. The proof applies  to the GEM algorithm and therefore  to the EM algorithm by subsumption.

We first define some terms. A \emph{point-to-set} $M: \Theta \rightarrow 2^\Theta$ is a function that the maps points $\theta \in \Theta$ to subsets of $V$ i.e. $M: \theta \rightarrow M(\theta) \subset \Theta$. A point-to-set map $M$ is \emph{closed} on $J \subset \Theta$ if
\begin{align}
\text{for all points} \quad \theta_\infty &\in J \\
\text{such that} \quad \theta_t &\rightarrow \theta_\infty \\
\text{and} \quad \phi_t &\in M(\theta_t) \\
\text{and} \quad \phi_t &\rightarrow \phi_\infty ~~ \text{for some}~ \phi_\infty
\end{align}
then
\begin{equation}
\phi_\infty \in M(\theta_\infty).
\end{equation} This is a generalization of function continuity to point-to-set maps. A function $f$ is continuous if 
\begin{equation}
	\lim_{t \rightarrow \infty} f(x_t) = f(\lim_{t \rightarrow \infty} x_t)\;.
\end{equation}
The closedness property for the map $M$ is similar. The last equation is exactly analogous to the closed map if we replaces equality $=$ with set-elementhood $\in$ and identify $\phi_\infty = \lim_t M(\theta_t)$.

Zangwill \parencite*{zangwill1969} defines an \emph{algorithm} for non-linear optimization as a sequence of point-to-set maps $M_t(\theta)$ that generate a sequence of estimates $\{ \theta_t \}$ via the recursion:
\begin{equation}
\theta_{t+1} \in M(\theta_t).
\end{equation}
Algorithms solve optimization problems by generating a sequence which converges to elements in the problem's solution set $\mathcal{S} \subset \Theta$. The GEM algorithm is an example. The next theorem by Zangwill gives conditions under which general optimization algorithms converge. A sketch of Zangwill's proof follows the theorem.

\begin{thm}{\bf{[Zangwill's Convergence Theorem A]:}} \label{thm:zangwill}

\noindent Suppose the algorithm $M: \Theta \rightarrow 2^\Theta$ generates a sequence $\{ \theta_t \in M(\theta_{t-1})\}_t$. Let $\mathcal{S}$ be the set of solution points for a continuous objective function $L(\theta)$. Suppose also that
\begin{enumerate}
\item $\theta_t \in J \subset V$ for all $t$ where $J$ is compact.
\item $M$ is closed over $\mathcal{S}^c$.
\item if $\theta \in \mathcal{S}^c$ then $L(\psi) > L(\theta)$ for all $\psi \in M(\theta)$
\item if $\theta \in \mathcal{S}$ then $L(\psi) \geq L(\theta)$ for all $\psi \in M(\theta)$.
\end{enumerate}

Then the limit $\theta_*$ of the sequence $\{ \theta_t \}$ or any of its convergent subsequences is in the solution set $\mathcal{S}$.
\end{thm}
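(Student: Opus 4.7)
The plan is to combine compactness with the strict ascent condition to force any accumulation point to lie in the solution set $\mathcal{S}$. First I would use compactness of $J$ and condition (1) to extract a convergent subsequence $\theta_{t_k} \to \theta_*$. Then I would show that the full sequence $\{L(\theta_t)\}$ converges to a finite limit $L^*$: it is non-decreasing because of conditions (3) and (4) applied along the iterations $\theta_{t+1} \in M(\theta_t)$, and it is bounded above because $L$ is continuous on the compact set $J$. Continuity of $L$ at $\theta_*$ then gives $L(\theta_{t_k}) \to L(\theta_*) = L^*$.

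The core of the argument is a contradiction: assume $\theta_* \in \mathcal{S}^c$. Look at the ``next iterate'' subsequence $\{\theta_{t_k+1}\}$, which lies in the compact $J$ and therefore has its own convergent sub-subsequence $\theta_{t_{k_j}+1} \to \theta_{**}$. Since $\theta_{t_{k_j}+1} \in M(\theta_{t_{k_j}})$, $\theta_{t_{k_j}} \to \theta_*$, and $M$ is closed at $\theta_* \in \mathcal{S}^c$ by condition (2), closedness of the point-to-set map yields $\theta_{**} \in M(\theta_*)$. Condition (3) then forces the strict inequality $L(\theta_{**}) > L(\theta_*)$. But $\{L(\theta_{t_{k_j}+1})\}$ is a subsequence of the convergent sequence $\{L(\theta_t)\}$, so by continuity $L(\theta_{**}) = L^* = L(\theta_*)$, a contradiction. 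Hence $\theta_* \in \mathcal{S}$, and the same argument applies verbatim to any convergent subsequence of $\{\theta_t\}$.

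The main obstacle, and the step that justifies invoking closedness, is the double-subsequence extraction: one cannot directly apply condition (2) to the ``raw'' sequence $\{\theta_t\}$ because $\{\theta_{t+1}\}$ need not converge just because $\{\theta_{t_k}\}$ does. Compactness of $J$ is what rescues the argument by providing a further convergent subsequence of $\{\theta_{t_k+1}\}$ so that closedness of $M$ at $\theta_*$ can be applied. A secondary technical point is that the strict inequality in condition (3) is essential: the weak inequality in (4) would make the final step collapse to $L(\theta_{**}) \geq L(\theta_*)$, which is consistent with $L(\theta_{**}) = L(\theta_*)$ and would fail to contradict anything.
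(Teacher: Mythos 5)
Your proposal is correct and follows essentially the same route as the paper's own proof: extract a convergent subsequence via compactness, establish monotone convergence of $\{L(\theta_t)\}$ from conditions (3) and (4), take a further convergent subsequence of the one-step-advanced iterates, apply closedness of $M$ at the limit, and derive a contradiction from the strict inequality in (3). Your write-up is in fact somewhat more explicit than the paper's sketch about why the double subsequence extraction is needed before closedness can be invoked, but the argument is the same.
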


\begin{proof}
See Zangwill \parencite*[][pp. 91--94]{zangwill1969}.

The algorithm either stops at a solution or generates an infinite sequence. Suppose $\{ \theta_t \}$ is an infinite sequence on a compact set by condition (1). So $\{ \theta_t \}$ must have a convergent subsequence $\{ \theta_{\tau} \}$ with limit $\theta_*$. $L$ is a continuous function. So $\lim_\tau L(\theta_\tau) = L(\theta_*)$. The sequence $\{ L(\theta_t) \}_t$ is monotone increasing by conditions (3) and (4). Thus the original sequence $\{ L(\theta_t) \}_t$ and the subsequence $\{ L(\theta_\tau) \}_\tau$ converge to the same value $L(\theta_*)$
\begin{equation}
	\lim_t L(\theta_t) = \lim_\tau L(\theta_\tau) = L(\theta_*) \;. \label{eq:lim-L}
\end{equation}
This holds for any subsequence of $\{ L(\theta_t) \}_t$

Suppose $\theta_*$ is not a solution for the objective function $L$. Define a derived subsequence $\{ \theta_{k} \}_{k}$ such that
\begin{equation}
	\theta_{k} \in M(\theta_\tau)
\end{equation}
for all $\tau$. This derived sequence is the original subsequence $\{ \theta_{\tau} \}$ after a single step advance. The derived sequence will also have a convergent subsequence $\{ \theta_{\tilde k} \}$ with limit $\lim_{\tilde k} \theta_{\tilde k} = \theta_{*+1}$. Then $\theta_{*+1} \in M(\theta_*)$ by the closure condition in (2). $\theta_*$ is not a solution by assumption. So 
\begin{equation}
L(\theta_{*+1}) > L(\theta_*) \label{eq:L-contra}
\end{equation} by condition (3). A similar argument for (\ref{eq:lim-L}) means that
\begin{equation}
	L(\theta_{*+1}) = L(\theta_*)\;.
\end{equation}
This contradicts (\ref{eq:L-contra}). Thus $\theta_*$ must be a solution.
\end{proof}

\begin{thm}{\bf{[Wu's (G)EM Convergence Theorem]:}} \label{thm:wu-converge}\\
Suppose the log-likelihood function $L(\theta|y)$ is continuous and bounded from above. Suppose also that the set $J$
\begin{equation}
J = \{ \theta \in \Theta |  L(\theta|y) \geq L(\theta_{0}|y) \}
\end{equation} is compact for all $\theta_{0}$ and $Q(\psi|\phi)$ is continuous in both $\psi$ and $\phi$. Then the limit points of a (G)EM sequence $\{ \theta_t \}$ are stationary points of the $L(\theta|y)$.
\end{thm}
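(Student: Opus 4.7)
The plan is to invoke Zangwill's Convergence Theorem A (Theorem \ref{thm:zangwill}) applied to the (G)EM point-to-set map
\[
M(\theta) \;=\; \{\phi \in \Theta : Q(\phi|\theta) \geq Q(\theta|\theta)\},
\]
with solution set $\mathcal{S}$ taken to be the stationary points of $L(\cdot|y)$. Every GEM iterate satisfies $\theta_{t+1} \in M(\theta_t)$ by Algorithm \ref{algo:GEM}, so Zangwill's conclusion is exactly the claim of the theorem.

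Conditions (1) and (4) follow cheaply from the ascent property (Proposition \ref{thm:ascent}). The ascent inequality (\ref{eq:Ascent}) forces $L(\theta_t|y)$ to be non-decreasing, so every iterate stays inside the compact level set $J = \{\theta : L(\theta|y) \geq L(\theta_0|y)\}$, giving condition (1); and the same inequality gives $L(\phi|y) \geq L(\theta|y)$ for every $\phi \in M(\theta)$, which is condition (4). The closedness condition (2) I would verify by taking $\theta_t \to \theta_\infty$ and $\phi_t \to \phi_\infty$ with $\phi_t \in M(\theta_t)$, then passing to the limit in $Q(\phi_t|\theta_t) \geq Q(\theta_t|\theta_t)$ using the assumed joint continuity of $Q$ in both arguments to obtain $\phi_\infty \in M(\theta_\infty)$.

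The main obstacle is Zangwill's strict-ascent condition (3). I would prove it by combining the decomposition $L(\theta|y) = Q(\theta|\theta_t) - H(\theta|\theta_t)$ from (\ref{eq:LQH-intro}) with the observation that $H(\cdot|\theta_t)$ is minimized over $\theta$ at $\theta = \theta_t$ (because the KL divergence $H(\theta_t|\theta_t) - H(\theta|\theta_t) \geq 0$ from (\ref{eq:KL-positivity}) vanishes there). Differentiating in $\theta$ then yields $\nabla_\theta Q(\theta|\theta_t)|_{\theta=\theta_t} = \nabla L(\theta_t|y)$, so whenever $\theta \notin \mathcal{S}$ there is some $\phi \in M(\theta)$ with $Q(\phi|\theta) > Q(\theta|\theta)$, and (\ref{eq:Ascent}) upgrades this to $L(\phi|y) > L(\theta|y)$. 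The real subtlety is that $\theta$ itself lies in $M(\theta)$ under the definition above, so to obtain strict ascent for \emph{all} $\phi \in M(\theta)$ one must either restrict the map to nontrivial updates or invoke a mild identifiability assumption on the complete-data family $f(z|y,\theta)$ that makes the KL term in (\ref{eq:Ascent}) strictly positive whenever $\phi \neq \theta$. Once (1)--(4) hold, Zangwill's theorem directly gives that all limit points of $\{\theta_t\}$ lie in $\mathcal{S}$, which is the statement of the theorem.
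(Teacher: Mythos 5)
Your proposal follows essentially the same route as the paper: both apply Zangwill's Convergence Theorem A (Theorem~\ref{thm:zangwill}) to the GEM point-to-set map $M(\theta_t) = \{\theta : Q(\theta|\theta_t) \geq Q(\theta_t|\theta_t)\}$ with the solution set $\mathcal{S}$ taken as the stationary points of $L$, verifying condition (1) via the ascent property and compactness of $J$, condition (2) via joint continuity of $Q$, and conditions (3)--(4) via the ascent property. If anything you are more scrupulous than the paper, which disposes of the strict-ascent condition (3) in a single sentence, whereas you correctly flag that $\theta \in M(\theta)$ makes strict ascent for \emph{all} $\phi \in M(\theta)$ fail verbatim and must be repaired by restricting the map to nontrivial updates or by a strict-positivity assumption on the KL term in (\ref{eq:Ascent}) --- a genuine subtlety present in Wu's original treatment that the paper's proof glosses over.
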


\begin{proof}
The theorem applies directly to GEM algorithms under the following identifications and assumptions~\parencite{wu1983}. The objective function $L(\theta)$ is the observed data log-likelihood $L(\theta|y)$. The solution set $\mathcal{S}$ is the set of interior stationary points of $L$. 
\begin{equation}
\mathcal{S} = \{ \theta \in int(\Theta) ~~ |~~~  L'(\theta|y) = 0\}. \label{eq:GEM-solutionset}
\end{equation}
The \emph{point-to-set} algorithm map $\theta \rightarrow M(\theta_t)$ for the GEM algorithm is
\begin{equation}
M(\theta_t) = \{ \theta \in \Theta ~~ | ~~~ Q(\theta|\theta_t) \geq Q(\theta_t|\theta_t)\}. \label{eq:GEM-pointset-map}
\end{equation}

The current estimate $\theta_t$ is an element of the set $J_t$ 
\begin{equation}
\theta_t \in J_t = \{ \theta \in \Theta |  L(\theta|y) \geq L(\theta_{t-1}|y) \} \;.
\end{equation}
If $\theta_0$ is the initial estimate then the largest $J_t$ set is 
\begin{equation}
J = J_0 = \{ \theta \in \Theta |  L(\theta|y) \geq L(\theta_{0}|y) \}
\end{equation}
and $J_t \subseteq J $ for all $t$ by the set definitions and the GEM ascent property. The log-likelihood $L(\theta|y)$ is bounded from above. The set $J$ is compact by assumption and $\theta_t \in J$ for all $t$, thus satisfying Zangwill's condition (1). This is a generalization of the case where the entire parameter $\Theta$ space is compact.

It is hard to verify that arbitrary $M$-map algorithms are closed over $\mathcal{S}^c$ in general \parencite{zangwill1969}. But $M$-maps for (G)EM algorithm are closed whenever $Q(\psi|\theta)$ is continuous in both $\psi$ and $\theta$ \parencite{wu1983}, thus satisfying Zangwill's condition (2). Zangwill's conditions (3) and (4) follow because of the ascent property and because the log-likelihood has an upper bound. 

These four conditions imply that the Zangwill's convergence theorem holds. Thus the (G)EM iterates $\{\theta_t\}_t$ converge to the solution set $\mathcal{S}$ of stationary points of the log-likelihood $L$.
\end{proof}

The final (G)EM estimate is a fixed-point of the GEM point-to-set map $M$. This is because it converges to estimates $\theta_*$ such that
\begin{align}
	\theta_* \in M(\theta_*) \;.
\end{align}
{Point-to-set maps or \emph{correspondences} feature prominently in the theory of Nash equilibria in game theory. Kakutani's \parencite*{kakutani1941} fixed-point theorem gives conditions under which these maps have fixed-points. That theorem applies to (G)EM algorithms only if $M(\theta_t)$ is a convex set at each iteration. This is not always the case for EM algorithms.}

In summary, (G)EM guarantees that the estimates $\{ \theta_t \}$ converge into a set of stationary points for the log-likelihood when the $M$ map is closed. The $M$ map is closed when $Q(\psi|\phi)$ is continuous in both $\psi$ and $\phi$. This continuity condition holds for incomplete data models based on exponential-family pdfs~\parencite{sundberg1976,wu1983}. These exponential family models account for a large number of EM applications (e.g. censored gamma, Gaussian mixtures, convolutions of exponential pdfs). 
Stronger conditions on $Q(\psi|\phi)$ and its first-order partial derivatives can replace the closed $M$-map condition for convergence~\parencite[cf. Theorem 8.2]{little-rubin2002}. Further stronger conditions on $Q(\psi|\phi)$ may restrict that solution set to just local maxima instead of stationary points.

The (G)EM sequence $\{ \theta_t \}$ may not converge point-wise ($\theta_t \rightarrow \theta_*$) if the sequence gets trapped in a compact, connected non-singleton set over which the likelihood is flat (like in Boyles'~\parencite*{boyles1983} example). The (G)EM sequence must be a Cauchy sequence or the likelihood must have unique stationary points to avoid this kind of pathology \parencite{boyles1983,wu1983,hartley-hocking1971}. But this kind of pathology is usually easy to detect in practice. And the convergence of the log-likelihood sequence $\{ L(\theta_t|y) \}_t$ to stationary points is more important than the point-wise convergence of the estimate sequence $\{ \theta_t \}_t$ for MLE problems.

\subsection{Fisher Information in EM Algorithms} \label{sec:Fisher-EM}
The Fisher information
\begin{equation}
	\mathcal{I}(\theta)=\E_{Y|\theta} \left[ \left(\frac{\partial L(\theta)}{\partial \theta} \right)^2 \right] = -\E_{Y|\theta} \left[\frac{\partial^2 L(\theta)}{\partial \theta^2} \right] 
\label{eq:Fisher-Info}
\end{equation}
plays an important part in MLE methods. The Fisher information measures the average amount of information each sample carries about the parameter. It is a measure of precision for ML estimates. This is because the ML estimator $\hat{\theta}_n$ is asymptotically normal with variance equal the Fisher information inverse $[\mathcal{I}_n(\theta)]^{-1}$. So the estimator $\hat{\theta}_n$ for large $n$ has standard error approximately equal to $(\mathcal{I}_n(\theta))^{-0.5}$. Both the observed Fisher information $I(\theta)=-L''(\theta)$ and the expected Fisher information $\mathcal{I}(\theta)$ are useful for measuring the standard errors of EM estimates. But Efron and Hinkley~\parencite*{efron-hinkley1978} showed that the observed Fisher information is the preferred measure of standard errors.

The Fisher information has a further descriptive role for EM algorithms. It quantifies how much missing information the complete data random variable has to impute to complete the observed data. We can use equation (\ref{eq:LQH-intro}) to analyze the relative information in the observed and complete data samples:
\begin{align}
	L(\theta|y) &= Q(\theta|\theta_t) - H(\theta|\theta_t) \\
	\therefore \frac{-\partial^2 L(\theta|y)}{\partial \theta^2} &= \frac{-\partial^2 Q(\theta|\theta_t)}{\partial \theta^2} - \frac{-\partial^2 H(\theta|\theta_t) }{\partial \theta^2}
\end{align}
\begin{equation}
	\therefore I_\textsc{observed} = I_\textsc{complete} - I_\textsc{missing} \;. \label{eq:EM-Obs-Fisher}
\end{equation}
These second derivatives are the observed Fisher information for the observed data, the complete data, and the missing data respectively. The expectation over the population $\E[I]$ gives the Fisher information $\mathcal{I} = \E \left[ I \right]$. The expectations are over $Y$:
\begin{align}
	\E_{Y|\theta} \left[\frac{-\partial^2 L(\theta|y)}{\partial \theta^2}\right] &= \E_{Y|\theta} \left[\frac{-\partial^2 Q(\theta|\theta_t)}{\partial \theta^2}\right] - \E_{Y|\theta} \left[\frac{-\partial^2 H(\theta|\theta_t) }{\partial \theta^2} \right]
\end{align}
\begin{equation}
	\therefore \mathcal{I}_\textsc{observed}(\theta) =\mathcal{I}_\textsc{complete}(\theta) - \mathcal{I}_\textsc{missing}(\theta)\;. \label{eq:EM-Fisher}
\end{equation}

These observed and expected Fisher information decompositions in (\ref{eq:EM-Obs-Fisher}) and (\ref{eq:EM-Fisher}) are examples of Orchard and Woodbury's Missing Information Principle~\parencite{orchard-woodbury1972} in the EM scheme~\parencite{louis1982}. Equation (\ref{eq:EM-Fisher}) shows that missing information in an EM data model diminishes the information each sample of $Y$ holds about the parameter $\theta$. The more missing information the EM algorithm has to estimate, the less informative the observed samples are. Iterative MLE methods converge slowly to stated standard error levels when the samples are less informative. 

Dempster et al.~\parencite*{dempster-laird-rubin1977} showed that the observed Fisher information controls the rate of EM convergence when the current estimate $\theta_t$ is \emph{close} to the fixed-point (the local maximum) $\theta_*$. The EM algorithm convergence rate $\lambda_{EM}$ is proportional to the ratio of missing to complete information~\parencite{tanner1996}: 
\begin{equation}
	\lambda_{EM} \propto \frac{I_\textsc{missing} }{ I_\textsc{complete} }
	\label{eq:EM-convrate}
\end{equation}
in a \emph{small} neighborhood of $\theta_*$. Thus higher missing information (higher $I_\textsc{missing}$) leads to slower EM convergence in keeping with the Missing Information Principle.

\section{Variations on the EM Theme}
\subsection{M--step Variations}
There are a number of variations of the EM algorithm. EM variants usually modify either the M-step or the E-step. The M-step modifications are more straight-forward. The first M-step variant is the GEM algorithm \parencite{dempster-laird-rubin1977}. GEM is useful if the roots of the derivatives of $Q$ are hard to find. 

Another M-step variant is very useful for estimating multidimensional parameters $\boldsymbol{\theta} = \left\{\theta^i\right\}_i^d$. This variant replaces the M-step's single $d$-dimensional optimization for $\boldsymbol{\theta}$ with a series of 1-dimensional conditional maximizations (CM) for $\theta^i$ with all other $\left\{\theta^j\right\}_{j\neq i}$ fixed. The M-step may also conditionally maximize sub-selections of the parameter vector $\theta$ instead of single parameter dimensions. These conditional maximizations are typically easier than performing a single multidimensional maximization. This variant is the \emph{Expectation Conditional Maximization} (ECM) algorithm\parencite{meng-rubin1993}. 

Fessler and Hero developed the idea of iteratively optimizing subsets of the EM parameter vector. Fessler and Hero were working from a medical image reconstruction point of view. They based their approach on the relation in (\ref{eq:EM-convrate}) which states that EM convergence rates $\lambda_{EM}$ are inversely proportional to the amount of complete information embedded in the complete data space $\lambda_{EM} \propto ( I_\textsc{complete} )^{-1}$. Subspaces of the full parameter space correspond to smaller CDSs that are less informative than the full CDS~\parencite{fessler-hero1994}. Therefore iterative optimization on parameter subspaces can lead to faster EM convergence. The algorithm may alternate between different subspaces at each EM iteration. These switches change the complete data space for the E-step. So they called this EM-variant \emph{Space Alternating Generalized Expectation-maximization} (SAGE). 

SAGE, ECM, and other parameter subspace methods are very appealing for applications with large parameter spaces. Image reconstruction and tomography applications \parencite{shepp-vardi1982, vardi-shepp-kaufman1985,hudson-larkin1994} make heavy use of these EM methods. The parameter spaces in these applications are very large; every image pixel or voxel corresponds to a parameter.

Liu and Rubin (\cite*{liu-rubin1994}) described a simple extra variation on ECM. ECM (and EM in general) maximize the $Q$-function with the goal of improving the log-likelihood function $L(\theta|y)$. Liu and Rubin proposed replacing some of the CM step of ECM with steps that conditionally maximize the log-likelihood $L(\theta|y)$ directly. They called this the \emph{Expectation Conditional Maximization Either} (ECME) where the ``either'' implies that the CM-steps either conditionally maximize the $Q$-function or the log-likelihood $L(\theta|y)$. Meng and {van Dyk} later combined SAGE and ECME into the more general \emph{Alternating Expectation Conditional Maximization} (AECM) algorithm \parencite{meng-vandyk1997}.

\subsection{E--step Variations}
These E--step variants are no longer EM algorithm in the strict sense. They do not always satisfy the EM ascent property. But they share the EM theme and they may satisfy the ascent property with high probability for some cases.

E--Step variants try to simplify the computation of the conditional expectation and make the $Q$-function simpler to calculate. The $Q$-function is an expectation with respect to  a conditional distribution
\begin{align}
Q(\theta|\theta_t) &= \int_Z \ln f(y,Z|\theta) ~df(z|Y=y,\theta_t)\\
&= \E_{Z|Y=y, \theta_t}[\ln f(y,Z|\theta)].
\end{align}
A Monte Carlo approximation to this integral uses $M$ samples of $Z$ drawn from the current estimate of the conditional pdf $f(z|Y=y,\theta_t)$. Then the approximation is
\begin{equation}
\tilde{Q}(\theta|\theta_t) = \frac{1}{M} \sum_{m=1}^M \ln f(y,z_m|\theta). \label{eq:MCEM-Q}
\end{equation}
This is the sample mean of the random variable $\ln f(y,Z|\theta)$ for samples of $Z$ with pdf $f(Z|y, \theta_t)$. Thus the Strong Law of Large Numbers \parencite{durrett2010} implies that
\begin{equation}
\tilde{Q}(\theta|\theta_t)  \longrightarrow Q(\theta|\theta_t) \quad \text{with probability--one as }~~ M\rightarrow \infty.
\end{equation}
Replacing $Q$ with $\tilde{Q}$ gives the \emph{Monte Carlo Expectation Maximization} (MCEM) algorithm \parencite{wei-tanner1990}. MCEM is similar to multiple imputation where multiple conditional sample draws replace each missing point in a data set \parencite{rubin1996, little-rubin2002}. The MCEM iterations become increasingly similar to EM iterations as the number of Monte Carlo samples $M$ increases. MCEM may need to run multiple secondary iterations of Markov chain Monte Carlo (MCMC) \parencite{robert-casella2004} to get samples from the appropriate distribution on each EM iteration.

Stochastic Expectation Maximization (SEM) \parencite{celeux-diebolt1985} also uses Monte Carlo methods to approximate the EM approach. SEM ``completes'' each observed data $y$ with a single random sample of $z$ drawn from $f(z|Y=y, \theta_t)$. The M--step maximizes the complete log-likelihood $L_c(\theta|Y=y, Z=z)$. The SEM algorithm is most useful for data models where the complete data space has a simple direct product structure (e.g. mixture models). SEM draws a single Monte Carlo sample $z$ per observed data sample $y$ per iteration. While MCEM draws $M$ Monte Carlo samples per observed data sample $y$ per iteration. The difference between SEM and MCEM mirrors the difference between single imputation and multiple imputation as methods for handling missing data \parencite{schafer-graham2002,little-rubin2002}.

\subsection{MAP EM for Bayesian Parameter Estimation}
Maximum A Posteriori (MAP) estimation picks parameters $\theta$ that maximize the posterior distribution $f(\theta|y)$ instead of the likelihood $g(y|\theta)$. Bayes theorem together with a prior pdf $h(\theta)$ specify the posterior:
\begin{equation}
f(\theta|y) \propto h(\theta) g(y|\theta)
\end{equation}
The MAP estimate is thus
\begin{align}
\hat{\theta}_{MAP} &= \argmax{\theta} f(\theta|y) \\
\text{or} \quad \hat{\theta}_{MAP} &= \argmax{\theta} \ln f(\theta|y) \\
\hat{\theta}_{MAP} &= \argmax{\theta} \left(\ln g(y|\theta) + \ln h(\theta) \right)
\end{align}
The MAP estimate is the mode of the posterior pdf $f(\theta|y)$. MAP estimation reduces to MLE when the prior pdf is a constant function. The log-prior term adds a penalty for poor choices of $\theta$. This penalty effect makes MAP estimation an example of \emph{penalized} maximum likelihood estimation methods \parencite{green:JRSS1990}. However the penalty term for general penalized MLE methods may be arbitrary, motivated by optimization desiderata rather than specific Bayesian model considerations. The log prior term usually makes the objective function more concave and thus makes the optimization problem easier to solve \parencite{boyd-vandenberghe2004}. 

MAP estimation for missing information problems may use a modified variant of the EM algorithm. The MAP variant modifies the $Q$-function by adding a log prior term $P(\theta) = \ln h(\theta)$:
\begin{align}
Q(\theta|\theta_t) = \E_{Z|Y, \theta_t}[L_c(\theta|y,Z)] + P(\theta) \;.
\end{align}
The MAP-EM framework is very prominent in the field of medical image analysis \parencite{hebert-leahy1989,hebert-leahy1992,green:TMI1990}. The use of prior knowledge can help smoothen the image and prevents reconstruction artifacts \parencite{depierro1995}.

\section{Examples of EM Algorithms}
We now outline EM algorithms for some popular data models. The E-step changes with the data models. The M-step is independent of the model.

\subsection{EM for Curved-Exponential Family Models} \label{subsec:Exp-EM}
EM algorithms for curved-exponential family models apply to observed data with complete data random variables from exponential family distributions. An exponential-family pdf with vector parameter $\boldsymbol{\theta}$ has the form
\begin{equation}
f(x|\boldsymbol{\theta}) = B(x)~\exp \big[ 
\mathbf{T}(x) \cdot \boldsymbol{\eta}(\boldsymbol{\theta}) - 
A(\boldsymbol{\theta}) 
\big]
\label{eq:pdf-Exp-Form}
\end{equation}
and log-likelihood
\begin{equation}
L(\boldsymbol{\theta}|x) = \mathbf{T}(x) \cdot \boldsymbol{\eta}(\boldsymbol{\theta}) - A(\boldsymbol{\theta}) + \ln B(x) \;.
\label{eq:LL-Exp-Form}
\end{equation}
Examples include the exponential ($exp(\theta)$), gamma ($\gamma(\alpha, \theta)$),Poisson ($p(\lambda)$) and normal ($N(\mu, \sigma^2)$) pdfs: 
\begin{align}
L_{exp}(\theta|x) &= - x~\frac{1}{\theta} -\ln(\theta) \;, \\
L_{\gamma}(\alpha,\theta|x) &=  x\left(- \frac{1}{\theta} + (\alpha-1) \right) -\ln(\Gamma(\alpha)) - \alpha\ln\theta \;, \\
L_{P}(\lambda|x) &= x~\ln\lambda - \lambda - \ln(x!) \;, \\
L_{N}(\mu, \sigma|x) &= (x, x^2) \cdot (\mu, -0.5)\frac{1}{\sigma^2}  - \frac{\mu^2}{2\sigma^2} - 0.5\ln(2\pi\sigma) \;.
\end{align}
EM algorithms on these models are the simplest~\parencite{sundberg1974,sundberg1976,dempster-laird-rubin1977}, and have the best possible convergence properties~\parencite{sundberg1974,wu1983} for moderate amounts of missing information. Some of the $Q$-functions are simple polynomials or even linear functions in the case of exponential or geometric pdfs.

One simple example is the EM algorithm on right-censored exponential data. The complete data random variable is $X \sim exp(\theta)$. Right-censorship occurs when the observer only records data up to a certain value $C$. This implies that the observed data $Y$ is the minimum of the complete data $X$ and the censorship point $C$. Censorship is common in time-limited trials (e.g. medical testing) and product reliability analyses. The $Q$-function is 
\begin{align}
Q(\theta|\theta_t) &= \E_{X|Y=y,\theta_t} \big[ L(\theta|X) \big] \\
&= -\ln(\theta) - \frac{1}{\theta} ~\E_{X|Y,\theta_t} \big[ X \big] \;. \label{eq:exp-Estep}
\end{align}
The log-linear $exp(\theta)$ likelihood means that the EM algorithm just replaces the latent random variable$X$ with the current best estimate for the latent random variable $\E_{X|Y,\theta_t}[X]$. The surrogate likelihood is easy to maximize
\begin{align}
Q'(\theta|\theta_t) &= -\frac{1}{\theta} + \frac{1}{\theta^2} ~\E_{X|Y,\theta_t} \big[ X \big] \;. \\
Q'(\theta|\theta_t)\big{|}_{\theta=\theta_{t+1}} &= 0 \\ 
\therefore \theta_{t+1} &= \E_{X|Y,\theta_t} \big[ X \big] \;. \label{eq:exp-Mstep}
\end{align}
Equations (\ref{eq:exp-Mstep}) is the EM-update for any incomplete data model on the exponential complete data space. The conditional expectation in the right-censorship case depends on the relation $Y=\min\{X,C\}$. Thus
\begin{align}
\E_{X|Y=y,\theta_t} \big[ X \big] &= 
\begin{cases}
y & \text{if } y < C \\
\E \big[X| \{X \geq C\},\theta_t \big]& \text{if } y = C
\end{cases}\\
\therefore \E_{X|Y=y,\theta_t} \big[ X \big] &= \begin{cases}
y & \text{if } y < C \\ 
C+\theta_t & \text{if } y = C
\end{cases} \;.
\end{align}

The gamma EM algorithm generalizes exponential EM algorithm using the following $Q$-function for the vector parameter $\boldsymbol{\varphi} = (\alpha, \theta)$:
\begin{equation}
Q(\boldsymbol{\varphi}|\boldsymbol{\varphi}_t) = \E_{X|Y=y,\boldsymbol{\varphi}_t} \big[
X(- \frac{1}{\theta} + (\alpha-1))
- \ln(\Gamma(\alpha)) - \alpha\ln\theta 
\big] 
\end{equation}
Taking the conditional expectation gives
\begin{align}
Q(\boldsymbol{\varphi}|\boldsymbol{\varphi}_t) &= \E_{X|y,\boldsymbol{\varphi}_t}[X] 
\big( -\frac{1}{\theta} + (\alpha-1) \big)
- \ln(\Gamma(\alpha)) - \alpha\ln\theta 
\label{eq:gamma-Estep}\\
\text{and}\quad \nabla_{\boldsymbol{\varphi}} Q(\boldsymbol{\varphi}|\boldsymbol{\varphi}_t) &=
\Big{\{}
-\frac{\partial \ln(\Gamma(\alpha))}{\partial \alpha} -\ln\theta, 
\frac{\E_{X|y,\boldsymbol{\varphi}_t}[X]}{\theta^2}- \frac{\alpha}{\theta} 
\Big{\}}\\
\nabla_{\boldsymbol{\varphi}} Q (\boldsymbol{\varphi}|\boldsymbol{\varphi}_t) \big{|}_{\boldsymbol{\varphi}=\boldsymbol{\varphi}_{t+1}} &= \mathbf{0} \;.
\end{align} 
The transcendental nature of the $\Gamma$-function and its derivative results in intractable closed-forms for the M-step for general values of $\alpha$. The M-step can use numerical estimates here. GEM algorithms on this model would have a simpler numerical M-step. The M-step just does a local search for an estimate that increases the $Q$-function. An ECM algorithm is also useful here because the $Q$-function is complicated only in the $\alpha$ coordinate. Users can decompose the M-step into an analytic conditional maximization in the $\theta$ coordinate and a numerical conditional maximization in the $\alpha$ coordinate.

\subsection{EM for Finite Mixture Models} \label{subsec:FMM-EM}

A finite mixture model~\cite{redner-walker1984,mclachlan-peel2004} is a convex combination of a finite set of sub-populations. The sub-population pdfs come from the same parametric family. Mixture models are useful for modeling mixed populations for statistical applications such as clustering and pattern recognition~\parencite{xu-wunsch2005}. We use the following notation for mixture models. $Y$ is the observed mixed random variable. $K$ is the number of sub-populations. $Z \in \left\{1,\ldots,K\right\}$ is the hidden sub-population index random variable. The convex population mixing proportions $\alpha_1,\ldots, \alpha_K$ form a discrete pdf for $Z$: $P(Z=j) = \alpha_j$. The pdf $f(y|Z=j,\theta_j)$ is the pdf of the $j^{th}$  sub-population where $\theta_1,\ldots, \theta_K $ are the pdf parameters for each sub-population. $\Theta$ is the vector of all model parameters $\Theta = \left\{\alpha_1, \ldots , \alpha_K, \theta_1, \ldots , \theta_K\right\}$. The joint pdf $f(y, z|\Theta)$ is
\begin{equation}
	f(y,z|\Theta) =\sum_{j=1}^K \alpha_j ~f(y|j,\theta_j) ~\delta[z-j] \;.
\end{equation}
The marginal pdf for $Y$ and the conditional pdf for $Z$ given $y$ are
\begin{align}
	f(y|\Theta)=&\sum_{j} \alpha_j f(y|j,\theta_j) \\
	\textrm{and} \qquad p_Z(j|y,\Theta) =& \frac{\alpha_j f(y|Z=j,\theta_j)}{f(y|\Theta)}
\label{eq:Z-condpdf}
\end{align}
by Bayes theorem.

We rewrite the joint pdf in exponential form for ease of analysis:
\begin{align}
	f(y,z|\Theta) = &\exp \left[ \sum_{j} \left[ \ln( \alpha_j) + \ln f(y|j,\theta_j) \right] \delta[ z-j]  \right] \;, \label{eq:FMM-fyz-ExpForm}\\
	\ln f(y,z|\Theta) =& \sum_j \delta[z-j] \ln[\alpha_j f(y|j,\theta_j)] \;. \label{eq:FMM-LL-ExpForm}
\end{align}

EM algorithms for finite mixture models estimate $\Theta$ using the sub-population index $Z$ as the latent variable. An EM algorithm on a finite mixture model uses (\ref{eq:Z-condpdf}) to derive the $Q$-function
\begin{align}
	Q(\Theta|\Theta_t) =& \E_{Z|y,\Theta_t}[\ln f(y,Z|\Theta)]  \\
	=& \sum_{z} \left(\sum_j \delta[z-j] \ln[\alpha_j f(y|j,\theta_j)]\right)\times p_Z(z|y,\Theta_t) \\
	=& \sum_{j} \ln[\alpha_j f(y|j,\theta_j)] p_Z(j|y,\Theta_t). 
\label{eq:Q_Mixture}
\end{align}
This mixture model is versatile because we can replace the sub-populations by changing the function $f(y|j,\theta_j)$. We can also use different parametric pdfs for the sub-populations. The versatility of FMMs makes them very popular in areas such as data clustering, genomics, proteomics, image segmentation, speech recognition, and speaker identification.

Mixture Models are not identifiable \parencite{teicher1961,teicher1963}. This means there may not be a one-to-one mapping between distinct mixture model distributions and distinct parameter vectors in the FMM parameter space. So there may be no unique maximum likelihood parameter estimate that specifies an FMM.

\begin{figure}[ht!]
\centerline{ \includegraphics[width=\columnwidth]{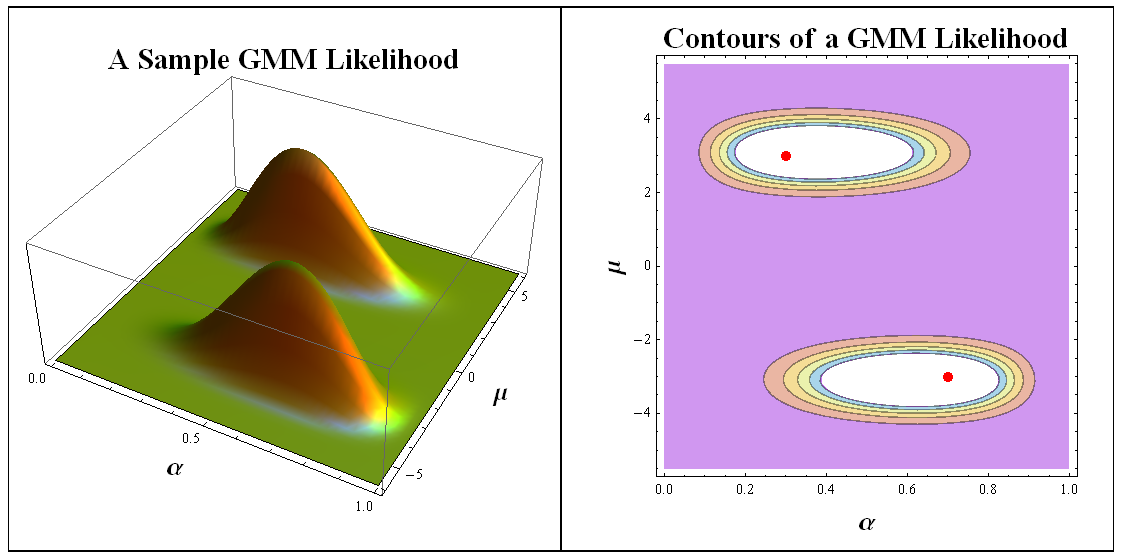} }
\caption[Demonstration: Finite mixture models may not be uniquely identifiable]{
	Demonstration: finite mixture models (FMMs) may not be uniquely identifiable. The figure shows a likelihood function  for $8$ samples of a mixture of two Gaussian distributions with equal variance sub-populations. Points on the likelihood space coordinate $\theta = (\alpha, \mu)$ represent all possible FMM distributions of the form $\mathcal{F} = \alpha \mathcal{N}(\mu, 1.5) + (1-\alpha) \mathcal{N}(-\mu, 1.5)$. The $8$ sample points generating the likelihood in the figure come from an FMM with true parameter values $(\alpha_*, \mu_*) = (0.3, 3)$. But the sample likelihood function has two maxima on the coordinate space at approximately $\theta_1= (\alpha_*, \mu_*)$ and $\theta_2 = (1-\alpha_*, -\mu_*)$. These are the dots in the contour plot. Both maxima represent alternate parameterizations of the same distribution.
}
\label{fg:FMM-nonID}
\end{figure}

\subsection{EM for Positron Emission Tomography}\label{subsec:PET-EM}

Positron Emission Tomography (PET) constructs a radioisotope emission profile for a biological organ. The emission profile image represents levels of metabolic activity, blood flow, or chemical activity within the organ. 

A PET scan starts with an injection of a radioisotope into the tissue of interest. An array of detectors around the tissue captures the radioisotope emissions (electrons in PET scans) from within the tissue. The model assumes a ``fine enough'' grid over the tissue so that emission statistics are uniform within each grid square. PET scans usually generate a series of 2-D images. So each grid square is a picture element or pixel. Each pixel can emit particles. The emissions are discrete events and the detectors simply count the number of emissions along multiple lines-of-sight. The geometry of the detector array determines the mode of reconstruction. It also sets up a latent random variable structure that admits an EM-model which Shepp and Vardi developed \parencite{shepp-vardi1982, vardi-shepp-kaufman1985}. The PET-EM model is a geometry-dependent finite mixture model where the sub-populations have a stable discrete distribution.

The probabilistic model for PET image reconstruction via EM assumes that emissions from each pixel are Poisson-distributed. A detector can only observe the sum of all pixel emissions along the detector's line-of-sight. So the detector records a geometry-dependent sum of Poisson random variables. The detector response is the observed random variable. The individual pixel emissions are the complete unobserved random variables. And the pixel intensities (Poisson parameters for each pixel) are the parameters we want to optimize.

The model starts with a 2-D array of tissue pixels
\begin{align}
\mathbf{X} = \{X_i\}_i^n : X_i \sim Poisson(\lambda_i)
\end{align}
where $n$ is the image size. The detector array is
\begin{align}
\mathbf{Y} = \{Y_j\}_j^d : Y_j \sim Poisson(\mu_j)
\end{align}
where $d$ is the number of detectors. An emission from $X_i$ can go to one of many detectors $Y_j$. The probability of emitting at $X_i$ and detecting at $Y_j$ is $p_{ij}$. This gives an array of parameters
\begin{align}
\mathbf{P} &= ((p_{ij}))_{i,j}^{n,d} \\
\text{where} \quad p_{ij} &= P\big( \text{Emit at }X_i, \text{Detect at }Y_j \big).
\end{align}
$\mathbf{P}$ depends on the geometry of the detector array. If we define the portion of $X_i$ emissions that go to $Y_j$ as $Z_{ij}$. Then
\begin{align}
Z_{ij} &= p_{ij} X_i\;. \\
\text{So} \quad Y_j &= \sum_i^n Z_{ij} \\
\text{and} \quad Y_j &\sim Poisson \Big(\sum_i^n p_{ij} \lambda_i \Big) \;.
\end{align}
The estimate for $\Lambda = \{ \lambda_ i\}_i^n$ gives the average emission intensities at each pixel. \emph{This is the PET image reconstruction}.

We use $\mathbf{Z} = \{Z_{ij}\}_{i,j}$ as the complete data random variable for the EM model. This gives the complete data pdf under an independent pixel assumption
\begin{align}
f(\mathbf{Z}| \Lambda) &= \prod_j \prod_i f(z_{ij}|\Lambda)\\
&= \prod_j \prod_i \exp(-p_{ij} \lambda_i) \frac{(p_{ij} \lambda_i)^{z_{ij}}}{z_{ij}!} \;.
\end{align}
Thus
\begin{equation}
\ln f(\mathbf{Z}| \Lambda) = \sum_j \sum_i -p_{ij} \lambda_i + z_{ij} \ln (p_{ij} \lambda_i) - \ln(z_{ij}!) 
\end{equation}
The $Q$ function for the EM model is
\begin{align}
Q(\Lambda | \Lambda(t)) &= \E_{\mathbf{Z}| \mathbf{Y,} \Lambda(t)} \Big[ \ln f(\mathbf{Z}| \Lambda) \Big] \\
Q(\Lambda | \Lambda(t)) &= \sum_j \sum_i -p_{ij} \lambda_i + \E_{\mathbf{Z}| \mathbf{Y,} \Lambda(t)}[Z_{ij}] \ln (p_{ij} \lambda_i) - \E_{\mathbf{Z}| \mathbf{Y,} \Lambda(t)}[\ln(z_{ij}!)].  \label{eq:Q-PET}
\end{align}
The Poisson pdf is an exponential-family pdf. So the $Q$-maximization has the closed-form:
\begin{align}
\lambda_i(t+1) = \lambda_i(t) \sum_j^d \Big( \frac{Y_j p_{ij}}{\sum_k^n \lambda_k(t) p_{kd} } \Big)
\end{align}

There has been a lot of work done on PET with EM. Some EM innovations originated from this problem space. These include the use of penalty terms or Gibbs priors to avoid singularities in the objective function \parencite{hebert-leahy1989,fessler-hero1995}, block iterative approaches to estimation \parencite{hudson-larkin1994}, and methods for convexifying the reconstruction objective function \parencite{depierro1995}. The basic structure of the model remains the same.

\section{MM: A Generalization for EM}
The EM algorithm relies on an incomplete data structure for which we supply a complete data space. Some estimation problems do not fit into this structure (e.g. estimating logistic regression coefficients). This invalidates the E-step. \emph{Minorization-Maximization} or (\emph{Majorization-Minimization}) (MM) \parencite{jacobson-fessler2007,becker-yang-lange1997} algorithms generalize the idea of a surrogate optimization function (the $Q$-function in EM). An MM algorithm specifies a \emph{tangent minorizing function}. A function $q$ is a tangent minorizer for $L(\theta|y)$ if $q(\theta|\theta_t)$ is a $\theta$-parametrized function which depends on the current estimate $\theta_t$ such that \parencite{wu-lange2010}
\begin{align}
q(\theta|\theta_t) &\leq L(\theta|y) \quad \forall \; \theta \in \Theta \label{eq:minorizeQ} \\
\text{and} \quad q(\theta_t|\theta_t) &= L(\theta_t|y). \label{eq:tangentQ}
\end{align}
MM algorithms maximize the surrogate $q$-function instead of $L(\theta_t|y)$ just like the EM algorithm:
\begin{equation}
\theta_{t+1} = \argmax{\theta} q(\theta|\theta_t).
\end{equation}
The difference of equations (\ref{eq:minorizeQ}) and (\ref{eq:tangentQ}) 
\begin{equation}
q(\theta|\theta_t) - q(\theta_t|\theta_t) \leq L(\theta|y) - L(\theta_t|y)
\end{equation}
establishes an analogous ascent property for MM algorithms. 

The MM algorithm transfers the optimization from the original objective function to a minorizing (or majorizing) function. Ideally the minorizing function is easier to optimize like the $Q$-function in the EM algorithm. Lange et al.~\cite*{lange-hunter-yang2000} use the term \emph{optimization transfer} instead of ``MM'' to highlight this transference behavior.

The EM algorithm fits into the MM scheme with a simple modification: EM's $Q$-function needs a constant level-shift $L(\theta_t|y) - Q(\theta_t|\theta_t)$ to satisfy the MM conditions. Level-shifts do not change the optimization. So MM subsumes the EM algorithm. This subsumption depends on the decomposition $L=Q-H$ (\ref{eq:LQH-intro}) and the ascent property (Prop. \ref{thm:ascent}). The minorizer for the EM algorithm is
\begin{equation}
q_{EM}(\theta|\theta_t) = Q(\theta|\theta_t) + [L(\theta_t|y) - Q(\theta_t|\theta_t)] \;.
\end{equation}
The ascent property implies the minorizing condition in  (\ref{eq:minorizeQ}):
\begin{align}
L(\theta|y) - L(\theta_t|y)&\geq Q(\theta|\theta_t) - Q(\theta_t|\theta_t) \\
\therefore L(\theta|y) &\geq Q(\theta|\theta_t) + [L(\theta_t|y) - Q(\theta_t|\theta_t)]\;.
\end{align}
The tangency condition in (\ref{eq:minorizeQ}) holds because 
\begin{align*}
q_{EM}(\theta_t|\theta_t) &= Q(\theta_t|\theta_t) - Q(\theta_t|\theta_t) + L(\theta_t|y)\\
&= L(\theta_t|y)\;.
\end{align*}
The M-step is the same (or may be one of the aforementioned M--step variants). This establishes EM as a special case of MM.

The EM subsumption argument shows that a primary property of EM and MM algorithms is that the gradient vectors of the objective and surrogate functions are parallel at the current estimate $\theta_t$. This ensures that the the surrogate function inherits the objective function's direction of steepest ascent/descent at $\theta_t$. The MM tangency property is just an indirect means to this goal. The tangency condition implies this parallel gradient vector property because curves that are tangent (but not crossing) at a point have parallel gradient vectors at that point of tangency. The EM algorithm also satisfies this parallel gradient property without requiring tangency since
\begin{equation}
\nabla_{\theta} L(\theta|y)\Big{|}_{\theta=\theta_t}  = \nabla_{\theta} Q(\theta|\theta_t) \Big{|}_{\theta=\theta_t}.
\end{equation}
This is one of Oakes' results on the derivatives of the observed likelihood in EM algorithms \parencite{oakes1999, meng2000}. The result holds because the residual $H$ in the $L=Q-H$ decomposition also has a stationary point precisely at $\theta=\theta_t$.

The Quadratic Lower-Bound (QLB) algorithm \parencite{bohning-lindsay1988} is another example of an MM algorithm. The QLB $q$-function comes from local convexity considerations instead of EM's missing information considerations. For example, a QLB algorithm on an MLE problem would use the following minorizer
\begin{equation}
q(\theta|\theta_t)=L(\theta_t|y)+(\theta-\theta_t)^T \nabla L(\theta_t|y) - \frac{1}{2} (\theta-\theta_t)^T \mathbf{B} (\theta-\theta_t) \label{eq:QLB}
\end{equation}
where $\mathbf{B}$ is any positive definite matrix that satisfies the inequality 
\begin{equation}
\nabla^2 L(\theta_t|y)+\mathbf{B} \geq 0.
\end{equation}
The QLB update equation is thus:
\begin{equation}
\theta_{t+1} = \theta_t + \mathbf{B}^{-1} \nabla L(\theta_t|y).
\end{equation}
These QLB constraints effectively create local quadratic approximations for $L$ at every step $t$. Thus QLB produces \emph{convex} minorizing functions at every QLB iteration. This is not necessarily true for the EM-algorithm in general.

MM methods bypass the need for a CDS specification. But the user must specify custom tangent-minorizers for each estimation problem. Proponents of MM methods argue that designing minorizer functions is less difficult than designing complete data spaces for an EM model. And MM methods provide an extra level of flexibility: there are often many MM approaches to the same problem. However there is no published proof of convergence for general MM algorithms \parencite{jacobson-fessler2007}. I present a \emph{new} convergence proof for a restricted class of MM algorithms. It mirrors Wu's EM convergence theorem (Theorem~\ref{thm:wu-converge}).

\begin{thm}{\bf{[MM Convergence Theorem]:}} \label{thm:MM-converge}\\
An minorization-maximization algorithm for optimizing a continuous, upper-bounded objective function $L(\theta|y)$ converges to the set of stationary points of $L(\theta|y)$ if the following conditions hold:
\begin{itemize}
\item the set $J$
\begin{equation}
J = \{ \theta \in \Theta |  L(\theta|y) \geq L(\theta_{0}|y) \}
\end{equation} is compact for all $\theta_{0}$,
\item the MM point-to-set algorithm map $\theta \rightarrow M(\theta_t)$ is closed over $\mathcal{S}_{MM}^c$. This MM map is
\begin{equation}
M(\theta_t) = \{ \theta \in \Theta |  q(\theta|\theta_t) \geq q(\theta_t|\theta_t)\}. \label{eq:MM-pointset-map}
\end{equation}
\end{itemize}
\end{thm}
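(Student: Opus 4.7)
The plan is to apply Zangwill's Convergence Theorem A (Theorem~\ref{thm:zangwill}) to the MM iteration, mirroring the structure of Wu's (G)EM convergence proof (Theorem~\ref{thm:wu-converge}). I would identify the solution set with the interior stationary points of the log-likelihood,
\begin{equation}
\mathcal{S}_{MM} = \{\theta \in \mathrm{int}(\Theta) : L'(\theta|y) = 0 \},
\end{equation}
and then verify Zangwill's four hypotheses one at a time for the MM point-to-set map defined in (\ref{eq:MM-pointset-map}).

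First, to obtain Zangwill's compactness condition (1), I would use the MM ascent inequality $q(\theta|\theta_t) - q(\theta_t|\theta_t) \leq L(\theta|y) - L(\theta_t|y)$ together with the $M$-map requirement $q(\theta_{t+1}|\theta_t) \geq q(\theta_t|\theta_t)$. These together give $L(\theta_{t+1}|y) \geq L(\theta_t|y)$, so every iterate lies in $J_t = \{\theta : L(\theta|y) \geq L(\theta_{t-1}|y)\} \subseteq J$, and $J$ is compact by hypothesis. Boundedness above of $L(\theta|y)$ then forces the monotone sequence $\{L(\theta_t|y)\}$ to have a finite limit $L_*$. Zangwill's condition (2) — closedness of $M$ on $\mathcal{S}_{MM}^c$ — is supplied directly by the second hypothesis of the theorem. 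Condition (4) is immediate: for any $\psi \in M(\theta)$, the MM ascent inequality yields $L(\psi|y) - L(\theta|y) \geq q(\psi|\theta) - q(\theta|\theta) \geq 0$.

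The main obstacle is Zangwill's strict-ascent condition (3). Because $\theta_t \in M(\theta_t)$ by construction, the weak ascent inequality alone cannot give $L(\psi|y) > L(\theta|y)$ for every $\psi \in M(\theta)$. My plan for getting past this is to refine the MM map to include only strict $q$-improvers whenever one exists (the customary fix in the GEM analysis), and then invoke the parallel-gradient property of tangent minorizers that was already noted in the MM subsumption discussion: the conditions $q(\theta|\theta_t) \leq L(\theta|y)$ with equality at $\theta = \theta_t$ force
\begin{equation}
\nabla_\theta\, q(\theta|\theta_t)\big|_{\theta=\theta_t} = \nabla_\theta\, L(\theta|y)\big|_{\theta=\theta_t}.
\end{equation}
Hence if $\theta_t \notin \mathcal{S}_{MM}$, then $\nabla_\theta L(\theta_t|y) \neq 0$, so $\nabla_\theta q(\cdot|\theta_t)$ is also nonzero at $\theta_t$ and $q(\cdot|\theta_t)$ admits a strict local improver. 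For any such $\psi$ with $q(\psi|\theta_t) > q(\theta_t|\theta_t)$, the ascent inequality upgrades to the strict bound $L(\psi|y) > L(\theta_t|y)$, giving condition (3).

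With Zangwill's four hypotheses in place, Theorem~\ref{thm:zangwill} yields that the limit of any convergent subsequence of $\{\theta_t\}$ lies in $\mathcal{S}_{MM}$, which is the desired conclusion. The remaining delicate points, both essentially the same difficulty, are (i) making the refinement of $M$ in the preceding paragraph fully rigorous without weakening the theorem's applicability to generic MM schemes, and (ii) ensuring that the closedness hypothesis on $M$ over $\mathcal{S}_{MM}^c$ is actually achievable in practice — typically by imposing joint continuity of $q(\psi|\phi)$ in both arguments, exactly as Wu does for the $Q$-function in the EM case.
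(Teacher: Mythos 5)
Your proposal follows essentially the same route as the paper: identify $\mathcal{S}_{MM}$ with the interior stationary points of $L(\theta|y)$ and verify Zangwill's four hypotheses for the MM point-to-set map, with the compactness and closedness assumptions supplying conditions (1) and (2) and the MM ascent property handling the monotonicity conditions. The only real difference is that the paper dismisses Zangwill's strict-ascent condition (3) in a single sentence (``conditions (3) and (4) follow by the ascent property and boundedness''), whereas you correctly observe that the weak ascent inequality cannot give strict increase for every $\psi \in M(\theta_t)$ --- since $\theta_t \in M(\theta_t)$ always --- and you supply the standard patch via the parallel-gradient property of tangent minorizers; on this point your argument is the more careful of the two.
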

\begin{proof}

Zangwill's Global Convergence Theorem A (Theorem~\ref{thm:zangwill}) applies directly to MM algorithms under the following identifications and assumptions. The objective function $L(\theta)$ is the MM objective function $L(\theta|y)$. The solution set $\mathcal{S}_{MM}$ is the set of interior stationary points of $L$. 
\begin{equation}
\mathcal{S}_{MM} = \{ \theta \in int(\Theta) |  L'(\theta|y) = 0\}. \label{eq:MM-solutionset}
\end{equation}

The compactness assumption implies that Zangwill's conditions (1) holds just like in the (G)EM case. The closure assumption fulfills Zangwill's condition (2). Conditions (3) and (4) in Zangwill's theorem follow by ascent property and boundedness of the objective function $L$. Thus the iterates converge to limit points in the solution set of stationary points.

Majorization-minimization is equivalent to the minorization-maximization of the negative of the objective function. So the proof applies to both types of MM algorithms.

\end{proof}

Continuity of the minorizing function $q(\psi|\phi)$ in both $\psi$ and $\phi$ implies that the $M$ map is closed over $\mathcal{S}^c$. EM and QLB algorithms are examples of MM algorithms that satisfy this closure condition. But the class of MM algorithms may be broad enough to include algorithms which violate the closure condition.

\section{Conclusion}
The EM algorithm is a versatile tool for analyzing incomplete data. But the EM algorithm has some drawbacks. It may converge slowly for high-dimensional parameter spaces or when the algorithm needs to estimate large amounts of missing information (see \textbf{\S}\ref{sec:Fisher-EM}, \parencite{mclachlan-krishnan2007,tanner1996}). EM implementations can also get very complicated for some models. And there is also no guarantee that the EM algorithm converges to the global maximum. 


The next few chapters develop and demonstrate a new EM scheme that addresses the EM algorithm's slow convergence. It improves average EM convergence rates via a novel application of a noise--benefit or ``stochastic resonance''. Chapter \ref{ch:NEM} presents the theory behind this approach. Chapters \ref{ch:NEM-CNBT}, \ref{ch:NEM-HMM}, \ref{ch:NEM-BP} demonstrate the speed improvement in key EM applications.

\clearpage


\def\lgcx{$13.3\%$}
\def\Tlgcx{$4.72$}
\def\GMMpct{$27.2\%$}

\chapter{Noisy Expectation--Maximization (NEM)} \label{ch:NEM}

This chapter introduces the idea of a \emph{noise benefit} as a way to improve the average convergence speed of the EM algorithm. The result is a noise-injected version of the Expectation-Maximization (EM) algorithm: the Noisy Expectation Maximization (NEM) algorithm. The NEM algorithm (Algorithm~\ref{algo:NEM}) uses noise to speed up the convergence of the noiseless EM algorithm. The NEM theorem (Theorem~\ref{thm:NEM}) shows that additive noise speeds up the average convergence of the EM algorithm to a local maximum of the likelihood surface if a positivity condition holds. Corollary results give special cases when noise improves the EM algorithm. We demonstrate these noise benefits on EM algorithms for three data models: the Gaussian mixture model (GMM), the Cauchy mixture model (CMM), and the censored log-convex gamma model. The NEM positivity condition simplifies to a quadratic inequality in the GMM and CMM cases. A final theorem shows that the noise--benefit for independent identically distributed additive noise models decreases with sample size in mixture models. This theorem implies that the noise benefit is most pronounced if the data is sparse.

The next section (\Sec\ref{sec:NB-SR}) reviews the concept of a noise benefit or \emph{stochastic resonance} (SR). We then formulate the idea of a noise benefit for EM algorithms and discuss some intuition behind noise benefits in this context. A Noisy EM algorithm is any EM algorithm that makes use of noise benefits to improve the performance of the EM algorithm. \Sec\ref{sec:NEM} introduces the theorem and corollaries that underpin the NEM algorithm. \Sec\ref{sec:NEM-Alg} presents the NEM algorithm and some of its variants. \Sec\ref{sec:samp-sz-effects} presents a theorem that describes how sample size affects the NEM algorithm for mixture models when the noise is independent and identically distributed (i.i.d.). \Sec\ref{sec:samp-sz-effects} also shows how the NEM positivity condition arises from the central limit theorem and the law of large numbers.

\section{Noise Benefits and Stochastic Resonance}\label{sec:NB-SR}
A noise benefit or \emph{stochastic resonance} (SR)~\parencite{bulsara-gammaitoni96,gammaitoni-hanggi-jung-marchesoni1998,kosko2006noise} occurs when noise improves a signal system's performance: small amounts of noise improve the performance while too much noise degrades it. Examples of such improvements include improvements in signal-to-noise ratio~\parencite{bulsara-zador96}, Fisher information~\parencite{chapeau-blondeau-rousseau2002,chapeau-blondeau-blanchard-rousseau2008}, cross-correlation~\parencite{collins-chow-capela-imhoff96,collins-chow-imhoff95b}, or mutual information~\parencite{kosko-mitaimNN2003} between the input and output signal.

Much early work on noise benefits involved natural systems in physics~\cite{brey-prados96}, chemistry~\cite{kramers40, forster-merget-schneider96}, and biology~\cite{moss-bulsara-shlesinger93, cordo-inglis-vershueren-collins-merfeld-rosenblum-buckley-moss96, adair-astumian-weaver1998, hanggi2002}. Early descriptions of noise benefits include Benzi's model for ice age periodicity~\parencite{benzi-sutera-vulpiani81, benzi-parisi-sutera-vulpiani83}, noise-induced SNR improvements in bidirectional ring lasers~\parencite{mcnamara-wiesenfeld-roy88}, and Kramers' model of Brownian motion-assisted particle escape from a chemical potential wells~\parencite{kramers40}. 

These early works inspired the search for noise benefits in nonlinear signal processing and statistical estimation.~\parencite{bulsara-zador96, chapeau-blondeau-rousseau2004,mcdonnell-stocks-pearce-abbott2008,chen-varshney-kay-michels2009,patel-kosko-TSP2011,franzke-kosko2011}. Some of these statistical signal processing works describe screening conditions under which a system exhibits noise benefits\parencite{patel-thesis2009}. Forbidden interval theorems~\parencite{kosko2004robust}, for example, give necessary and sufficient conditions for SR in threshold detection of weak binary signals. These signal processing screening theorems also help explain some previously observed noise benefits in natural systems~\parencite{kosko2009applications}.

Figure \ref{fg:SR-Demo} shows a typical example of a noise benefit. The underlying grayscale image is barely perceptible without any additive noise (leftmost panel). Additive pixel noise makes the image more pronounced (middle panels). The image becomes worse as the noise power increases (rightmost panel). This example demonstrates how noise can improve the detection of subthreshold signals.
\begin{figure}[ht!]
\centerline{ \includegraphics[width=0.85\textwidth]{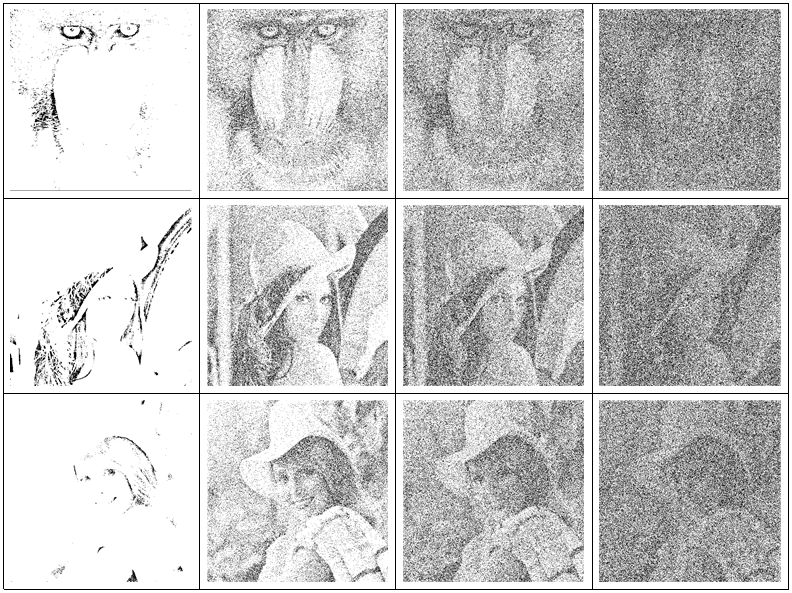} }
\caption[Stochastic Resonance on faint images using white Gaussian pixel noise]{
	Stochastic Resonance on faint images (mandrill, Lena, and Elaine test images) using white Gaussian pixel noise. The images are faint because the gray-scale pixels pass through a suboptimal binary thresholder. The faint images (leftmost panels) become clearer as the power $\sigma$ of the additive white Gaussian pixel noise increases. But increasing the noise power too much degrades the image beyond recognition (rightmost panels).
}
\label{fg:SR-Demo}
\end{figure}
Noise benefits exhibit as non-monotonic curves of the system's figure-of-merit as a function of noise power. Noise improves the figure-of-merit only up to a point. Further noise degrades the figure-of-merit. Figure \ref{fg:GaussNEM} below is an example of such a non-monotonic SR signature curve.

This chapter presents a new type of noise benefit on the popular EM algorithm~(\Sec\ref{ch:EM}). A key weakness of the EM algorithm is its slow convergence speed in many applications~\parencite{mclachlan-krishnan2007}. Careful noise injection can increase the average convergence speed of the EM algorithm. Theorem~\ref{thm:NEM} states a general sufficient condition for this EM noise benefit. The EM noise benefit does not involve a signal threshold unlike almost all SR noise benefits~\parencite{gammaitoni-hanggi-jung-marchesoni1998}. We apply this general sufficient condition and demonstrate the EM noise benefit on three data models: the ubiquitous Gaussian mixture model (Figure~{\ref{fg:GaussNEM}}), the Cauchy mixture model (Figure~{\ref{fg:CauchyNEM}}), and the censored gamma model (Figure~{\ref{fg:LogConvex}}). The simulations in Figure~{\ref{fg:NEMCompare-varyM}} and Figure~{\ref{fg:NEMCompare-SingleM}} also show that the noise benefit is faint or absent if the system simply injects blind noise that ignores the sufficient condition. This suggests that the noise benefit sufficient condition may also be a necessary condition for some data models. The last results of this chapter show that the noise benefit occurs most sharply in sparse data sets.

\subsection{Noise Benefits in the EM Algorithm}
Theorem~\ref{thm:NEM} below states a general sufficient condition for a noise benefit in the average convergence time of the EM algorithm. Figure \ref{fg:GaussNEM} shows a simulation instance of this theorem for the important EM case of Gaussian mixture densities.  Small values of the noise variance reduce convergence time while larger values increase it.  This U-shaped noise benefit is the non-monotonic signature of stochastic resonance. The optimal noise speeds convergence by \GMMpct. Other simulations on multidimensional GMMs have shown speed increases of up to $40\%$.

\begin{figure}[ht!]
\centerline{ \includegraphics[width=0.75\textwidth]{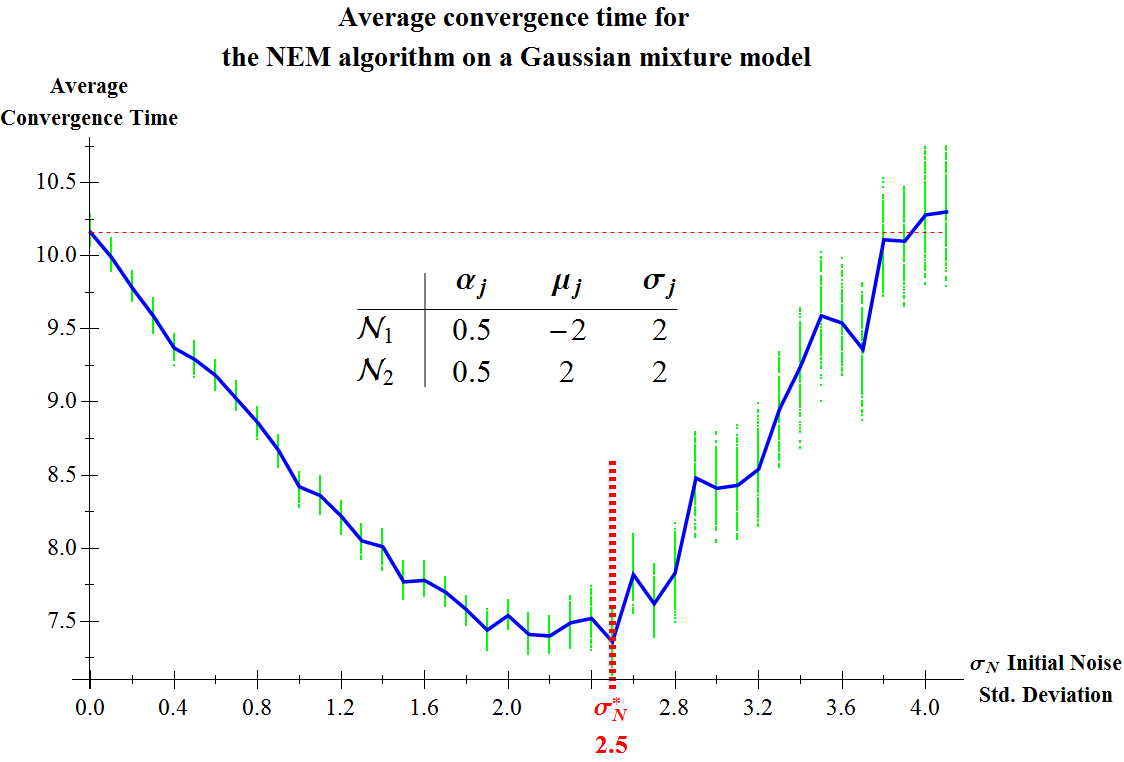} }
\caption[EM noise benefit for a Gaussian mixture model]{
	EM noise benefit for a Gaussian mixture model. The plot uses the noise-annealed NEM algorithm. Low intensity starting noise decreases convergence time while higher intensity starting noise increases it.  The optimal initial noise level has standard deviation $\sigma^*_N = 2.5$. The average optimal NEM speed-up over the noiseless EM algorithm is $27.2\%$. This NEM procedure adds noise with a cooling schedule. The noise cools at an inverse-square rate. The Gaussian mixture density is a convex combination of two normal sub-populations $\mathcal{N}_1$ and $\mathcal{N}_2$. The simulation uses $200$ samples of the mixture normal distribution to estimate the standard deviations of the two sub-populations. The additive noise uses samples of zero-mean normal noise with standard deviation $\sigma_N$ screened through the GMM--NEM condition in ({\ref{eq:FinMixtureCond}}). Each sampled point on the curve is the average of $100$ trials. The vertical bars are $95\%$ bootstrap confidence intervals for the mean convergence time at each noise level.
}
\label{fg:GaussNEM}
\end{figure}

The EM noise benefit differs from almost all SR noise benefits because it does not involve the use of a signal threshold \parencite{gammaitoni-hanggi-jung-marchesoni1998}. The EM noise benefit also differs from most SR noise benefits because the additive noise can depend on the signal. Independent noise can lead to weaker noise benefits than dependent noise in EM algorithms. This also happens with enhanced convergence in noise-injected Markov chains \parencite{franzke-kosko2011}. Figure \ref{fg:NEMCompare-varyM} shows that the proper dependent noise outperforms independent noise at all tested sample sizes for a Gaussian mixture model. The dependent noise model converges up to $14.5\%$ faster than the independent noise model.

\subsection{Intuition on EM Noise Benefits}

We can gain some intuition into the EM noise benefit by examining the space of plausible likelihood functions for the observed data. The likelihood function $\ell(\theta|y)$ is a statistic of the observed data $y$. Its functional behavior often overshadows its statistical behavior. We use the likelihood function's statistical behavior to produce a noise benefit.

Suppose an experiment produces a data sample $y$\footnote{If the data model has a sufficient statistic $T(y_1,\cdots,y_n)$ then we can apply the subsequent analysis to the sampling pdf of $T(y_1,\cdots,y_n)$ instead.}.
Suppose also that the sampling pdf $f(y|\theta)$ is model-appropriate and sufficiently smooth. Then there is a $\Delta n$-ball $\mathcal{B}_{\Delta n}(y)$ of samples around $Y=y$ that the experiment could have emitted with almost equal probability. The size of this ball depends on the steepness and continuity of the sampling pdf at $y$. This understanding is consistent with the probabilistic model for the experiment but easy to ignore when working with sample realizations $y$. This $\Delta n$-ball of alternate representative samples specifies a set of likelihoods 
\begin{equation}
	\mathcal{L}(\theta|y) = \{y' \in \mathcal{B}_{\Delta n}(y)| \; \ell(\theta|y') \; \}
\end{equation}
that can also model $\theta$ for the experiment. Some of the alternate samples $y' \in \mathcal{B}_{\Delta n}(y)$ assign a higher likelihood $\ell(\theta_*|y')$ to the true $\theta_*$ than the current sample's likelihood $\ell(\theta_*|y)$ does. So it is useful to pay attention to $\mathcal{L}(\theta|y)$ while seeking estimates for $\theta_*$.

Analytic MLE finds the best estimate for $\theta_*$ in a single iteration. So there is little point in analyzing estimates from the set of alternative likelihoods $\mathcal{L}(\theta|y)$. The set $\mathcal{L}(\theta|y)$ provides enough information to calculate bootstrap estimates~\parencite{efron-tibshirani1991} for the standard error of $\hat{\theta}$ for example. But the analytic Fisher information already summarizes that information since standard error is the inverse square root of the Fisher information.

EM and other iterative MLE techniques can benefit from the information $\mathcal{L}(\theta|y)$ contains. The NEM algorithm uses noise to pick favorable likelihoods in $\mathcal{L}(\theta|y)$ that result in higher likelihood intermediate estimates $\theta_k$. This leads to faster EM convergence. The NEM condition in Theorem~\ref{thm:NEM} describes how to screen and pick favorable likelihoods at the current EM iteration.

The idea behind the NEM condition is that sometimes \emph{small} sample noise $n$ can increase the likelihood of a parameter $\theta$. This occurs at the local level when
\begin{equation} 
	f(y+n|\theta)~>~f(y|\theta) \label{eq:MLE-DomCondn} 
\end{equation} 
for probability density function (pdf) $f$, realization $y$ of random variable $Y$, realization $n$ of random noise $N$, and parameter $\theta$. This condition holds if and only if the logarithm of the pdf ratio is positive:
\begin{equation} 
	\ln \left( \frac{f(y + n|\theta)}{f(y|\theta)} \right) ~>~0 \;. \label{eq:LogCondn} 
\end{equation}

The logarithmic condition (\ref{eq:LogCondn}) in turn occurs much more generally if it holds only on average with respect to all the pdfs involved in the EM algorithm:
\begin{equation}
	\E_{Y,Z,N|\theta_*} \left[ \ln \frac{f(Y + N, Z|\theta_k)}{f(Y, Z|\theta_k)}  \right] ~\geq~0 
\label{eq:E-LogCondn} 
\end{equation}
\noindent where random variable $Z$ represents missing data in the EM algorithm and where $\theta_*$ is the limit of the EM estimates $\theta_k$: $\theta_k \longrightarrow \theta_*$. The positivity condition (\ref{eq:E-LogCondn}) is precisely the sufficient condition for a noise benefit in Theorem~\ref{thm:NEM} below.

\section{Noisy Expectation Maximization Theorems} \label{sec:NEM}
The EM noise benefit requires the use of the modified surrogate log-likelihood function 
\begin{equation}
	Q_N\left( \theta |\theta_k \right) = \E_{Z|y,\theta_k}   \left[ \ln f(y+N,Z| \theta)  \right]
\end{equation} and its maximizer 
\begin{equation*}
	\theta_{k+1,N} = \argmax{\theta} \left\{ Q_N\left( \theta |\theta_{k} \right) \right\} \;.
\end{equation*}
The modified surrogate log-likelihood $Q_N\left( \theta |\theta_{k} \right)$ equals the regular surrogate log-likelihood $Q\left( \theta |\theta_{k} \right)$ when $N = 0$. $Q(\theta|\theta_*)$ is the final surrogate log-likelihood given the optimal EM estimate $\theta_*$. So $\theta_*$ maximizes $Q(\theta|\theta_*)$. Thus
\begin{align}
	Q(\theta_*|\theta_*) & \geq Q(\theta|\theta_*)  \quad \text{for all } \theta \;. \label{eq:QD-NoiseBenefit}
\end{align}

An EM noise benefit occurs when the noisy surrogate log-likelihood $Q_N(\theta_k|\theta_*)$ is closer to the optimal value $Q(\theta_*|\theta_*)$ than the regular surrogate log-likelihood $Q(\theta_k|\theta_*)$ is. This holds when
\begin{align}
	Q_N(\theta_k|\theta_*) &\geq Q(\theta_k|\theta_*) \label{eq:QN-Ineq-NoiseBenefit} \\
	\text{or } \Big( Q(\theta_*|\theta_*) - Q(\theta_{k}|\theta_*) \Big)  &\geq \Big( Q(\theta_*|\theta_*) - Q_N(\theta_{k}|\theta_*) \Big). \label{eq:Q-NoiseBenefit}
\end{align}
So the noisy perturbation $Q_N(\theta|\theta_k)$ of the current surrogate log-likelihood $Q(\theta|\theta_k)$ is a better log-likelihood for the data than $Q$ is itself. An average noise benefit results when we take expectations of both sides of inequality (\ref{eq:Q-NoiseBenefit}):
\begin{align}
	\E_N \Big[ Q(\theta_*|\theta_*) - Q(\theta_{k}|\theta_*) \Big] &\geq \E_N \Big[ Q(\theta_*|\theta_*) - Q_N(\theta_{k}|\theta_*) \Big].  \label{eq:EQ-NoiseBenefit}
\end{align}

This average noise benefit takes on a more familiar form if we re-express (\ref{eq:EQ-NoiseBenefit}) in terms of the relative entropy pseudo-distance. The relative entropy (or Kullback-Leibler divergence)\parencite{cover-thomas91, kullback-leibler1951} between pdfs $f_0(x)$ and $f_1(x)$ is
\begin{align}
	D\left(f_0(x) \Vert f_1(x)\right) &= \int_X \ln \left[\frac{f_0(x)}{f_1(x)}\right] f_0(x) ~\mathrm{d}x \\
	&= \E_{f_0}\left[ \ln f_0(x) -\ln f_1(x) \right] \;.
\end{align}
when the pdfs have the same support. It quantifies the average information--loss incurred by using the pdf $f_1(x)$ instead of $f_0(x)$ to describe random samples of $X$~\parencite{bernardo-smith2009}. It is an example of a Bayes risk~\parencite{carlin-louis2009, bernardo-smith2009}.

Let $f_0(x)$ be the final EM complete pdf and $f_1(x)$ be the current EM or NEM complete pdf. Then the relative-entropy pseudo-distances in the noisy EM context are
\begin{align}
	c_k\left(N\right) =& D\left(f(y,z|\theta_*)\Vert f(y+N,z|\theta_k)\right)
	\label{eq:Noisy-c}\\
	\textrm{and} \quad \quad c_k =& D\left(f(y,z|\theta_*)\Vert f(y,z|\theta_k)\right) = c_k\left(0\right) \;.
	\label{eq:noiseless-c}
\end{align}

The average noise benefit (\ref{eq:EQ-NoiseBenefit}) occurs when the final EM pdf $f(y, z| \theta_*)$ is closer in relative-entropy to the noisy pdf $f(y + N,z|\theta_k)$ than it is to the noiseless pdf $f(y, z|\theta_k)$. So noise benefits the EM algorithm when
\begin{equation}
	c_k  \geq c_k\left(N\right) \;.
\label{eq:NoiseBenefit}
\end{equation}
This means that the noisy pdf is a better information-theoretic description of the complete data than the regular pdf. The noisy pdf incurs lower average information--loss or Bayes risk than the regular pdf.

The proof of the NEM theorem in the next section shows that the relative-entropy and expected $Q$-difference formulations of the average noise benefit are equivalent. Manipulating the relative-entropy formulation leads to the NEM sufficient condition that guarantees the average EM noise benefit.

\subsection{NEM Theorem}
The Noisy Expectation Maximization (NEM) Theorem below uses the following notation. The noise random variable $N$ has pdf $f(n|y)$. So the noise $N$ can depend on the data $Y$. Independence implies that the noise pdf becomes $f(n|y)=f_N(n)$. $\left\{\theta_k\right\}$  is  a sequence of EM estimates for $\theta$. $\theta_*= \lim_{k \to \infty} \theta_k$ is the converged EM estimate for $\theta$. Assume that the differential entropy of all random variables is finite. Assume also that the additive noise keeps the data in the likelihood function's support.

\begin{thm}{\bf{[Noisy Expectation Maximization (NEM)]:}} \label{thm:NEM}\\
An EM iteration noise benefit
\begin{equation}
	\Big( Q(\theta_*|\theta_*) - Q(\theta_{k}|\theta_*) \Big)  \geq \Big( Q(\theta_*|\theta_*) - Q_N(\theta_{k}|\theta_*) \Big) 
	\end{equation}
	occurs on average if
	\begin{equation}
		\E_{Y,Z,N|\theta_*} \left[ \ln\left( \frac{f(Y+N,Z|\theta_k)}{f(Y,Z|\theta_k)} \right) \right] \geq 0\;. \label{eq:Goal}
	\end{equation}
\end{thm}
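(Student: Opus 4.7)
The plan is to reduce the noise-benefit inequality to the relative-entropy comparison $c_k \geq c_k(N)$ that the preceding discussion has already identified as the content of the average noise benefit. First I would cancel $Q(\theta_*|\theta_*)$ from both sides of the noise-benefit statement, which reduces it to $Q_N(\theta_k|\theta_*) \geq Q(\theta_k|\theta_*)$. Using the definitions $Q_N(\theta_k|\theta_*) = \E_{Z|y,\theta_*}[\ln f(y+N,Z|\theta_k)]$ and the analogous expression for $Q(\theta_k|\theta_*)$, the pointwise difference collapses to a single conditional expectation over $Z$ of the log-ratio $\ln[f(y+N,Z|\theta_k)/f(y,Z|\theta_k)]$.

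Next I would take expectations over $Y$ and $N$ under the true parameter $\theta_*$, which is the formal content of the phrase ``on average.'' By the tower property the iterated expectations coalesce into $\E_{Y,Z,N|\theta_*}[\ln(f(Y+N,Z|\theta_k)/f(Y,Z|\theta_k))]$, which is exactly the left-hand side of the NEM condition (\ref{eq:Goal}). Hence the NEM condition with $\geq 0$ is identically the statement that the expected surrogate-likelihood gap closes, i.e.\ that the average noise benefit (\ref{eq:EQ-NoiseBenefit}) holds.

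As a cross-check I would verify the equivalent relative-entropy formulation directly. Expanding $c_k = D(f(y,z|\theta_*) \Vert f(y,z|\theta_k))$ and $c_k(N) = D(f(y,z|\theta_*) \Vert f(y+N,z|\theta_k))$ under the joint distribution of $(Y,Z,N)$ given $\theta_*$, the common $\ln f(Y,Z|\theta_*)$ terms cancel in the subtraction, leaving $c_k - c_k(N) = \E_{Y,Z,N|\theta_*}[\ln(f(Y+N,Z|\theta_k)/f(Y,Z|\theta_k))]$. So the NEM sufficient condition is literally the statement $c_k \geq c_k(N)$ --- the noisy complete-data pdf is at least as close in Kullback--Leibler divergence to the true pdf as the noiseless one --- and this gives the claimed average reduction in the surrogate-function gap.

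The main obstacle is interpretive rather than technical: one must justify that the ``on average'' in the theorem refers to averaging over the sampled $Y$ (and latent $Z$) as well as the injected noise $N$, since the theorem displays $Q(\theta_k|\theta_*)$ for an apparently fixed data realization while the sufficient condition averages over $Y$. This reading is natural because $Y$ is itself random under $\theta_*$, and the expectation over $Y$ is what converts the pointwise Q-inequality into the $c_k$-vs-$c_k(N)$ inequality on the relative-entropy Bayes risks. Once this is clarified, the rest is routine use of Fubini/Tonelli to exchange the order of the $Z$, $Y$, $N$ expectations and the integral definition of $D(\cdot\Vert\cdot)$.
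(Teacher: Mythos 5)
Your proposal is correct and follows essentially the same route as the paper's proof: both identify the averaged $Q$-gap inequality with the relative-entropy comparison $c_k \geq \E_N[c_k(N)]$, expand the difference $c_k - c_k(N)$ so that the common $\ln f(y,z|\theta_*)$ terms cancel and leave exactly the expectation in (\ref{eq:Goal}), and invoke Fubini (justified by finite differential entropy) to reorder the $Y$, $Z$, $N$ integrations. Your reading of ``on average'' as averaging over $Y$, $Z$, and $N$ under $\theta_*$ is also the one the paper adopts.
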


\begin{proof}
First show that each expectation of $Q$-function differences in (\ref{eq:EQ-NoiseBenefit}) is a distance pseudo-metric.
\noindent Rewrite $Q$ as an integral:
\begin{equation}
\int_Z \ln [f(y,z | \theta)] f( z| y, \theta_k) ~dz \;.
\end{equation}

\noindent $c_k=D(f(y,z|\theta_*)\Vert f(y,z|\theta_k))$ is the expectation over $Y$ because
\begin{align}
c_k & = \iint [\ln(f(y,z|\theta_*)) - \ln(y,z|\theta_k)] f(y,z|\theta_*) ~dz ~dy \\
& = \iint [\ln(f(y,z|\theta_*)) - \ln(y,z|\theta_k)] f(z|y, \theta_*)  f(y|\theta_*) ~dz ~dy \\
& = \E_{Y | \theta_k}  \Big[  Q \left( \theta_* |\theta_* \right)  -  Q \left( \theta_k |\theta_* \right) \Big] \;.
\end{align}

\noindent $c_k\left(N\right)$ is likewise the expectation over $Y$:
\begin{align}
c_k\left(N\right)  &= \iint [\ln(f(y,z|\theta_*)) - \ln(y+N,z|\theta_k)] f(y,z|\theta_*) ~dz ~dy \\
& = \iint [\ln(f(y,z|\theta_*)) - \ln(y+N,z|\theta_k)]  f(z|y, \theta_*)  f(y|\theta_*) ~dz ~dy \\
& = \E_{Y| \theta_k}  \Big[  Q \left( \theta_* |\theta_* \right)  -  Q_N \left( \theta_k |\theta_* \right) \Big] \;.
\end{align}
\noindent Take the noise expectation of $c_k$ and $c_k\left(N\right)$:
\begin{align}
\E_N \big[ c_k \big]  &= c_k \\
\E_N \big[ c_k\left(N\right) \big]  &= \E_N \big[ c_k\left(N\right) \big] \;.
\end{align}
\noindent So the distance inequality
\begin{align}
c_k \geq E_N\left[c_k\left(N\right)\right] \;. \label{eq:c-Dom}
\end{align}
guarantees that noise benefits occur on average:
\begin{align}
\E_{N,Y| \theta_k} \Big[ Q \left( \theta_* |\theta_* \right)  -  Q \left( \theta_k |\theta_* \right) \Big] \geq \E_{N,Y| \theta_k} \Big[ Q \left( \theta_* |\theta_* \right)  -  Q_N \left( \theta_k |\theta_* \right) \Big]
\end{align}

We use the inequality condition (\ref{eq:c-Dom}) to derive a more useful sufficient condition for a noise benefit.  Expand the difference of relative entropy terms $c_k-c_k\left(N\right)$:
\begin{multline}
c_k-c_k\left(N\right)  \\
=\iint_{Y,Z} \left( \ln\left[ \frac{f(y,z|\theta^*)}{f(y,z|\theta_k)} \right] - \ln\left[ \frac{f(y,z|\theta^*)}{f(y+N,z|\theta_k)} \right] \right) f(y,z|\theta^*)~dy~dz
\end{multline}
\begin{align}
&= \iint_{Y,Z} \left( \ln \left[\frac{f(y,z|\theta^*)}{f(y,z|\theta_k)} \right] + \ln\left[ \frac{f(y+N,z|\theta_k)}{f(y,z|\theta^*)} \right] \right) f(y,z|\theta^*)~dy~dz \\
&= \iint_{Y,Z}  \ln \left[ \frac{f(y,z|\theta^*) f(y+N,z|\theta_k)}{f(y,z|\theta_k) f(y,z|\theta^*)} \right] f(y,z|\theta^*)~dy~dz\\
&= \iint_{Y,Z} \ln \left[ \frac{f(y+N,z|\theta_k)}{f(y,z|\theta_k)} \right]  f(y,z|\theta^*)~dy~dz \;.
\end{align}
Take the expectation with respect to the noise term N:
\begin{align}
E_{N} &\left[ c_k-c_k\left(N\right) \right] = c_k - E_{N} \left[c_k\left(N\right) \right] \\
&=\int_{N} \iint_{Y,Z} \ln \left[ \frac{f(y+n,z|\theta_k)}{f(y,z|\theta_k)} \right] f(y,z|\theta^*) f(n|y)~dy~dz~dn \\
&= \iint_{Y,Z}  \int_{N} \ln \left[ \frac{f(y+n,z|\theta_k)}{f(y,z|\theta_k)} \right]  f(n|y) f(y,z|\theta^*) ~dn~dy~dz \\
&= \E_{Y,Z,N|\theta^*} \left[ \ln \frac{f(Y+N,Z|\theta_k)}{f(Y,Z|\theta_k)} \right] \;.
\label{eq:Equality}
\end{align}
\noindent The assumption of finite differential entropy for $Y$ and $Z$ ensures that  \begin{equation}\ln f(y,z|\theta) f(y,z|\theta_*)\end{equation} is integrable. Thus the integrand is integrable. So Fubini's theorem \parencite{folland1999} permits the change in the order of integration in (\ref{eq:Equality}):
\begin{align}
c_k \geq E_N\left[c_k\left(N\right)\right] \quad \text{iff} \quad \E_{Y,Z,N|\theta_*} \left[ \ln\left(\frac{f(Y+N,Z|\theta_k)}{f(Y,Z|\theta_k)}\right)\right] \geq 0 
\end{align}
\noindent Then an EM noise benefit occurs on average if 
\begin{equation}\E_{Y,Z,N|\theta_*} \left[ \ln\left(\frac{f(Y+N,Z|\theta_k)}{f(Y,Z|\theta_k)}\right)\right] \geq 0 \;.\end{equation} 
\end{proof} 


The NEM theorem also applies to data models that use the complete data as their latent random variable. The proof for these cases is the same. The NEM positivity condition in these models changes to
\begin{equation}
\E_{X,Y,N|\theta_*} \left[ \ln\left( \frac{f(X+N|\theta_k)}{f(X|\theta_k)} \right) \right] \geq 0\;. \label{eq:X-Goal}
\end{equation}

The relative entropy definition of the noise benefit also allows for a more general version of NEM condition in users replace noise addition $y+N$ with other methods of noise injection $\phi(y,N)$ such as \emph{multiplicative noise injection} $y.N$. The NEM condition for generalized noise injection is
\begin{equation}
	\E_{Y,Z,N|\theta_*} \left[ \ln\left( \frac{f(\phi(Y, N),Z|\theta_k)}{f(Y,Z|\theta_k)} \right) \right] \geq 0 
	\label{eq:genNEM-Goal}
\end{equation}
The proof of this general condition is exactly the same as in the additive noise case with $f_N(y,z|\theta_k) = f(\phi(y, N),z|\theta_k)$ replacing $f(y+N,z|\theta_k)$.

The NEM Theorem implies that each iteration of a suitably noisy EM algorithm moves closer on average towards the EM estimate $\theta_*$  than does the corresponding noiseless EM algorithm \parencite{osoba-mitaim-kosko2011}.  This holds because the positivity condition (\ref{eq:Goal}) implies that $E_N\left[c_k\left(N\right)\right] \leq c_k$ at each step $k$ since $c_k$ does not depend on $N$ from (\ref{eq:noiseless-c}). The NEM algorithm uses larger overall steps on average than does the noiseless EM algorithm for any number $k$ of steps. 

The NEM theorem's stepwise noise benefit leads to a noise benefit at any point in the sequence of NEM estimates. This is because we get the following inequalities when the expected value of inequality({\ref{eq:QD-NoiseBenefit}}) takes the form
\begin{align}
Q(\theta_{k}|\theta_*) &\leq \E [Q_{N}(\theta_{k}|\theta_*) ] \quad \textrm{for any } k. {\label{eq:EQv2-NoiseBenefit}} 
\end{align}
Thus 
\begin{align}
Q(\theta_*|\theta_*) - Q(\theta_{k}|\theta_*) &\geq   Q(\theta_*|\theta_*) - \E [Q_{N}(\theta_{k}|\theta_*) ]  \quad \textrm{for any } k. {\label{eq:EQD-NoiseBenefit}} 
\end{align}
The EM (NEM) sequence converges when the left (right) side of inequality ({\ref{eq:EQD-NoiseBenefit}}) equals zero. Inequality ({\ref{eq:EQD-NoiseBenefit}}) implies that the difference on the right side is closer to zero at any step $k$. 

NEM sequence convergence is even stronger if the noise $N_k$ decays to zero. This noise annealing implies $N_k \rightarrow 0$ with probability one. Continuity of $Q$ as a function of $Y$ implies that $Q_{N_k}(\theta|\theta_k) \rightarrow Q(\theta|\theta_k)$ as $N_k \rightarrow 0$. This holds because $Q(\theta|\theta_k) = \E_{Z|y,\theta_k}[\ln f(y,Z|\theta)]$ and and because the continuity of $Q$ implies that
\begin{align}
\lim_{N \rightarrow 0} Q_N(\theta|\theta_k) &= \E_{Z|y,\theta_k}[\ln f(\lim_{N \rightarrow 0}(y+N) ,Z|\theta)] \\
&= \E_{Z|y,\theta_k}[\ln f(y ,Z|\theta)] \\
&= Q(\theta|\theta_k) \;.
\end{align}
The evolution of EM algorithms guarantees that $\lim_k Q(\theta_k|\theta_*) = Q(\theta_*|\theta_*)$. This gives the probability-one limit
\begin{align}
\lim_k Q_{N_k}(\theta_k|\theta_*) &= Q(\theta_*|\theta_*). \label{eq:QNk-aslimit}
\end{align}
So for any $\epsilon > 0$ there exists a $k_0$ such that for all $k>k_0$:
\begin{align}
\big|Q(\theta_{k}|\theta_*) - Q(\theta_*|\theta_*) \big| &< \epsilon \quad \text{and} 
\\ \big| Q_{N_k}(\theta_{k}|\theta_*) - Q(\theta_*|\theta_*) \big| &< \epsilon 
\;\; \textrm{with probability one.} \label{eq:epsilon-Q}
\end{align}
Inequalities ({\ref{eq:EQv2-NoiseBenefit}}) and ({\ref{eq:epsilon-Q}}) imply that $Q(\theta_{k}|\theta_*)$ is $\epsilon$-close to its upper limit $Q(\theta_*|\theta_*)$ and
\begin{align}
\E [Q_{N_k}(\theta_{k}|\theta_*) ] &\geq Q(\theta_{k}|\theta_*)
\quad \text{and} \quad
Q(\theta_*|\theta_*) \geq  Q(\theta_{k}|\theta_*) \;. \label{eq:QOrder-Ineq}
\end{align}
So the NEM and EM algorithms converge to the same fixed-point by ({\ref{eq:QNk-aslimit}}). And the inequalities ({\ref{eq:QOrder-Ineq}}) imply that NEM estimates are closer on average to optimal than EM estimates are at any step $k$.

\subsection{NEM for Finite Mixture Models}
The first corollary of Theorem \ref{thm:NEM} gives a dominated-density condition that satisfies the positivity condition (\ref{eq:Goal}) in the NEM Theorem.  This strong point-wise condition is a direct extension of the pdf inequality in (\ref{eq:MLE-DomCondn}).


\begin{cor}{\bf{[Dominance Condition for NEM]:}}\label{cor:DomPDF-NEM}
	The NEM positivity condition
	\begin{equation}
		\E_{Y,Z,N|\theta^*} \left[ \ln \frac{f(Y+N,Z|\theta)}{f(Y,Z|\theta)} \right] \geq 0
	\end{equation}
	holds if
	\begin{equation}
		f(y+n,z|\theta) \geq f(y,z|\theta) \label{eq:Dominance}
	\end{equation} for almost all $n,y,z$.
\end{cor}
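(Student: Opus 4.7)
The plan is to observe that the claimed dominance is a pointwise inequality on pdf values, and that both the logarithm and the expectation operator preserve non-negativity, so the NEM positivity condition follows almost immediately.

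First I would divide both sides of the dominance inequality $f(y+n,z|\theta) \geq f(y,z|\theta)$ by $f(y,z|\theta)$ on the viable set where $f(y,z|\theta) > 0$. Outside this viable set the likelihood ratio is irrelevant (and the NEM expectation is taken with respect to $f(y,z|\theta_*)$ weighted by $f(n|y)$, so one can restrict attention to the support). This yields
\begin{equation}
\frac{f(y+n,z|\theta)}{f(y,z|\theta)} \geq 1 \quad \text{for almost all } (n,y,z).
\end{equation}
Applying the monotone increasing logarithm then gives
\begin{equation}
\ln \frac{f(y+n,z|\theta)}{f(y,z|\theta)} \geq 0 \quad \text{for almost all } (n,y,z).
\end{equation}

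Next I would integrate this pointwise inequality against the joint density $f(y,z|\theta_*) f(n|y)$. Since the integrand is non-negative almost everywhere and the integrating measure is a probability measure, monotonicity of the Lebesgue integral preserves the inequality:
\begin{equation}
\E_{Y,Z,N|\theta_*} \left[ \ln \frac{f(Y+N,Z|\theta)}{f(Y,Z|\theta)} \right] \geq 0,
\end{equation}
which is exactly the NEM positivity condition in equation (\ref{eq:Goal}).

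The argument has essentially no obstacle: the only delicate point is bookkeeping on the support of $f(y,z|\theta)$ so that the ratio is well-defined, which is already subsumed by the ``almost all'' qualifier in the hypothesis and by the standing viability assumption on the additive noise (that it keeps the data in the likelihood's support). No invocation of Fubini or of the NEM theorem's integrability assumptions is needed beyond what already underlies the statement of Theorem~\ref{thm:NEM}.
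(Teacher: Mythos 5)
Your proposal is correct and follows essentially the same route as the paper's own proof: both reduce the claim to the pointwise non-negativity of $\ln\bigl[f(y+n,z|\theta)/f(y,z|\theta)\bigr]$ almost everywhere (the paper via monotonicity of the logarithm applied to each side, you via dividing first and then taking logs) and then invoke monotonicity of the expectation. The support bookkeeping you mention is handled implicitly in the paper by the same ``almost all'' qualifier, so there is no substantive difference.
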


\begin{proof}
	The following inequalities need hold only for almost all $y,z$, and $n$:
	\begin{align}
	 f(y+n,z|\theta) &\geq  f(y,z|\theta) \\
	\text{iff} \quad \ln\left[f(y+n,z|\theta)\right] &\geq \ln\left[f(y,z|\theta)\right] \\
	\text{iff} \quad \ln\left[f(y+n,z|\theta)\right] - \ln\left[f(y,z|\theta)\right] &\geq 0 \\
	\text{iff} \quad \ln\left[ \frac{f(y+n,z|\theta)}{f(y,z|\theta)} \right] &\geq 0 \; .
	\end{align}
	Thus
	\begin{equation} \E_{Y,Z,N|\theta^*} \left[\ln \frac{f(Y+N,Z|\theta)}{f(Y,Z|\theta)} \right] \geq 0 \;. \end{equation} 
\end{proof} 

Most practical densities do not satisfy the condition (\ref{eq:Dominance}) in Corollary \ref{cor:DomPDF-NEM}\footnote{
	PDFs for which condition (\ref{eq:Dominance}) holds are monotonically increasing or non-decreasing like ramp pdfs. The condition holds only for positive noise samples $n$ even in these cases.
}. But Corollary~\ref{cor:DomPDF-NEM} is still useful for setting conditions on the noise $N$ to produce the effect of the condition (\ref{eq:Dominance}).

We use Corollary~\ref{cor:DomPDF-NEM} to derive conditions on the noise $N$ that produce NEM noise benefits for mixture models. NEM mixture models use two special cases of Corollary \ref{cor:DomPDF-NEM}.  We state these special cases as Corollaries \ref{cor:GMM-NEM} and \ref{cor:CMM-NEM} below. The corollaries use the finite mixture model notation in \Sec\ref{subsec:FMM-EM}. Recall that the joint pdf of $Y$ and $Z$ is
\begin{equation}
	f(y,z|\theta)= {\sum}_j \alpha_j ~f(y|j,\theta) ~\delta[z-j] \;.
	\label{eq:gmm-compl-pdf}
\end{equation}
Define the population-wise noise likelihood difference as 
\begin{equation}
	\Delta f_j(y,n) = f(y+n|j,\theta)-f(y|j,\theta)\;. 
\end{equation}
Corollary~\ref{cor:DomPDF-NEM} implies that noise benefits the mixture model estimation if the dominated-density condition holds:
\begin{align}
	f(y+n,z|\theta) &\geq f(y,z|\theta) \;. 
\end{align}
This occurs if
\begin{equation}
	\Delta f_j(y,n) \geq 0 \quad \text{for all } j\;. \label{eq:DeltaCond}
\end{equation}

The Gaussian mixture model (GMM) uses normal pdfs for the sub-population pdfs \parencite{hasselblad1966, redner-walker1984}. Corollary~\ref{cor:GMM-NEM} states a simple quadratic condition that ensures that the noisy sub-population pdf $f(y+n|Z=j,\theta)$ dominates the noiseless sub-population pdf $f(y|Z=j,\theta)$ for GMMs. The additive noise samples $n$ depend on the data samples $y$.

\begin{cor}{\bf{[NEM Condition for GMMs]:}}\label{cor:GMM-NEM} \\
	Suppose $Y|_{Z=j} \sim {\cal N}(\mu_j,\sigma^2_j)$ and thus $f(y|j,\theta)$ is a normal pdf. Then 
	\begin{align}
		\Delta f_j(y,n) &\geq 0 
	\end{align}
	holds if 
	\begin{align}
		n^2 &\leq 2n\left(\mu_j-y\right) \;.  \label{eq:GaussMLECondn}
	\end{align} 
\end{cor}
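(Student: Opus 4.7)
The plan is to exploit the fact that the Gaussian density $f(y \mid j, \theta) = \frac{1}{\sqrt{2\pi\sigma_j^2}} \exp\!\bigl(-\tfrac{(y-\mu_j)^2}{2\sigma_j^2}\bigr)$ is a strictly decreasing function of the squared deviation $(y - \mu_j)^2$. Therefore the inequality $\Delta f_j(y,n) \geq 0$, which is the same as $f(y+n \mid j, \theta) \geq f(y \mid j, \theta)$, is equivalent to the ordering
\begin{equation}
(y + n - \mu_j)^2 \leq (y - \mu_j)^2
\end{equation}
of the squared deviations. Since the normalizing constant $\frac{1}{\sqrt{2\pi\sigma_j^2}}$ and the negative exponent $-\frac{1}{2\sigma_j^2}$ do not depend on $y$ or $n$, the equivalence is immediate from monotonicity of $\exp$.

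Next I would expand the left-hand side as $(y - \mu_j)^2 + 2n(y - \mu_j) + n^2$ and cancel the common $(y-\mu_j)^2$ term on both sides. This reduces the Gaussian pdf-dominance inequality to the purely algebraic inequality
\begin{equation}
n^2 + 2n(y - \mu_j) \leq 0,
\end{equation}
which is the same as $n^2 \leq -2n(y - \mu_j) = 2n(\mu_j - y)$. Thus whenever the hypothesized quadratic screening condition $n^2 \leq 2n(\mu_j - y)$ holds, the chain of equivalences runs backward and yields $\Delta f_j(y, n) \geq 0$, as required.

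There is essentially no hard step: the variance $\sigma_j^2$ drops out because it is a positive multiplicative constant inside the exponent that preserves the direction of the inequality, and the entire argument is a one-line application of the monotonicity of $\exp$ followed by expanding a square. The only subtlety worth flagging in the write-up is that the condition $n^2 \leq 2n(\mu_j - y)$ is automatically sign-sensitive: it forces $n$ and $\mu_j - y$ to have the same sign (and $|n| \leq 2|\mu_j - y|$), which is the geometric content that appears in Figure~\ref{fg:Geom-intro} and is later used to describe the ``noise ball'' region from which beneficial noise samples may be drawn. Combined with Corollary~\ref{cor:DomPDF-NEM}, this algebraic screen is what transfers the pointwise pdf-dominance condition into the NEM positivity condition (\ref{eq:Goal}) for each GMM sub-population.
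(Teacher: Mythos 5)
Your proof is correct and follows essentially the same route as the paper's: both reduce $f(y+n\mid j,\theta)\geq f(y\mid j,\theta)$ to the comparison of squared deviations via monotonicity of $\exp$ and positivity of $\sigma_j^2$, then expand the square to obtain $n^2\leq 2n(\mu_j-y)$. The sign observation you flag at the end is accurate and matches the paper's later geometric discussion of the NEM set.
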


\begin{proof}
	The proof compares the noisy and noiseless normal pdfs. The normal pdf is
	\begin{align}
		f(y|\theta) &= \frac{1}{\sigma_j \sqrt{2\pi}} \exp \left[-\frac{\left(y-\mu_j \right)^2 }{2\sigma_j^2}\right] \;. 
	\end{align}
	So $f(y+n|\theta) \geq f(y|\theta)$
	\begin{align}
		\text{iff} \; \exp \left[-\frac{\left(y+n-\mu_j \right)^2 }{2\sigma_j^2}\right]  &~~\geq ~~ \exp \left[-\frac{\left(y-\mu_j \right)^2 }{2\sigma_j^2}\right] \\
		\text{iff}\quad -\left( \frac{y+n-\mu_j}{\sigma_j} \right)^2 &~~\geq ~~  -\left( \frac{y-\mu_j}{\sigma_j} \right)^2\\
		\text{iff}\quad -\left( y-\mu_j+n \right)^2 &~~\geq ~~  -\left(y-\mu_j \right)^2\;.
		\label{eq:Stpid-Line1}
	\end{align}
	Inequality (\ref{eq:Stpid-Line1}) holds because $\sigma_j$ is strictly positive. Expand the left-hand side to get (\ref{eq:GaussMLECondn}):
	\begin{align}
		\left(y-\mu_j\right)^2 + n^2 +2n\left(y-\mu_j\right) &~~\leq ~~ \left(y-\mu_j \right)^2\\
		\text{iff} \quad n^2 +2n(y-\mu_j) &~~\leq ~~ 0\\
		\text{iff} \quad n^2 &~~\leq ~~ - 2n\left(y-\mu_j\right)\\
		\text{iff} \quad n^2 &~~\leq ~~ 2n\left(\mu_j-y\right) \;.
	\end{align}
	This proves (\ref{eq:GaussMLECondn}). 
\end{proof}

Now apply the quadratic condition (\ref{eq:GaussMLECondn}) to (\ref{eq:DeltaCond}). Then 
\begin{equation}
	f(y+n,z|\theta) \geq f(y,z|\theta)
\end{equation}
holds when 
\begin{align}
	n^2 &\leq 2n\left(\mu_j-y\right) \textrm{ for all } j \;.
	\label{eq:FinMixtureCond}
\end{align}

The inequality (\ref{eq:FinMixtureCond}) gives a condition under which NEM estimates standard deviations $\sigma_j$ faster than EM. This can benefit Expectation-Conditional-Maximization (ECM) \parencite{meng-rubin1993} methods.

Corollary~\ref{cor:CMM-NEM} gives a similar quadratic condition for the Cauchy mixture model. The noise samples also depend on the data samples.

\begin{cor}{\bf{[NEM Condition for CMMs]:}}\label{cor:CMM-NEM} \\
	Suppose $Y|_{Z=j} \sim {\cal C}(m_j, d_j)$ and thus $f(y|j,\theta)$ is a Cauchy pdf. Then 
	\begin{align}
		\Delta f_j(y,n) &\geq 0 
	\end{align} 
	holds if 
	\begin{align}
		n^2 &\leq 2n\left(m_j-y\right) \;. \label{eq:CauchyMLECond}
	\end{align}
\end{cor}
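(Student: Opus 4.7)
The plan is to mirror the proof of Corollary~\ref{cor:GMM-NEM} and exploit the fact that the Cauchy pdf, like the Gaussian, is a strictly decreasing function of $(y - m_j)^2$. This shared monotonicity structure means that the dominance inequality $f(y+n|j,\theta) \geq f(y|j,\theta)$ reduces in both cases to the same comparison between the squared location residuals $(y+n-m_j)^2$ and $(y-m_j)^2$, regardless of the scale parameter.

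First I would write the Cauchy sub-population density explicitly as
\begin{equation}
f(y|j,\theta) = \frac{1}{\pi d_j \left[1 + \left(\frac{y - m_j}{d_j}\right)^2\right]}
\end{equation}
and observe that $f(y|j,\theta)$ is strictly decreasing in $(y - m_j)^2$ for any $d_j > 0$. Hence $\Delta f_j(y,n) = f(y+n|j,\theta) - f(y|j,\theta) \geq 0$ if and only if the squared residual does not increase under the shift $y \mapsto y+n$:
\begin{equation}
\left(\frac{y+n-m_j}{d_j}\right)^2 \leq \left(\frac{y-m_j}{d_j}\right)^2.
\end{equation}
Multiplying through by $d_j^2 > 0$ eliminates the scale parameter and reduces the condition to $(y+n-m_j)^2 \leq (y-m_j)^2$.

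From this point the algebra is identical to the Gaussian case in Corollary~\ref{cor:GMM-NEM}: expanding the left-hand side, canceling the common term $(y-m_j)^2$, and rearranging yields
\begin{equation}
n^2 + 2n(y - m_j) \leq 0,
\end{equation}
which is precisely the claimed quadratic screening inequality $n^2 \leq 2n(m_j - y)$.

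There is essentially no obstacle here beyond routine algebra, because the Cauchy scale parameter $d_j$ drops out of the comparison in exactly the same way that $\sigma_j$ does in the GMM proof. The deeper observation is that the proof works for \emph{any} location--scale family whose density is strictly decreasing in $|y - m_j|$, so the same screening condition would apply to Laplace, generalized Gaussian, or Student-$t$ sub-populations. The only care required is to note that the Cauchy pdf has strictly positive denominator and normalization, so every biconditional step preserves direction without needing a sign check on $d_j$.
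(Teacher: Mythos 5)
Your proof is correct and follows essentially the same route as the paper's: both reduce the dominance condition $f(y+n|j,\theta) \geq f(y|j,\theta)$ to the comparison $(y+n-m_j)^2 \leq (y-m_j)^2$ (the paper by explicitly inverting the reciprocal form of the Cauchy pdf, you by noting monotonicity in the squared residual) and then finish with the identical expansion used in the Gaussian case. Your closing observation that the argument extends to any location--scale family with density strictly decreasing in $|y-m_j|$ is a nice generalization but does not change the substance of the proof.
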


\begin{proof}
	The proof compares the noisy and noiseless Cauchy pdfs. The Cauchy pdf is
	\begin{equation}
		f(y|\theta) = \frac{1}{ \pi d_j\left[1+\left( \frac{y-m_j}{d_j} \right)^2 \right]}
	\end{equation}
	Then $f(y+n|\theta) \geq f(y|\theta) $
	\begin{align}
		\text{iff} \quad  \frac{\frac{1}{\pi d_j}}{ \left[1+\left( \frac{y+n-m_j}{d_j} \right)^2 \right]}  &\geq \frac{\frac{1}{\pi d_j}}{ \left[1+\left( \frac{y-m_j}{d_j} \right)^2 \right]}\\
		\text{iff} \quad  \left[1 + \left(\frac{y-m_j}{d_j}\right)^2\right] &\geq  \left[1 + \left(\frac{y+n-m_j}{d_j}\right)^2 \right] \\
		\text{iff} \quad  \left(\frac{y-m_j}{d_j}\right)^2 &\geq \left(\frac{y+n-m_j}{d_j}\right)^2 \;.
	\end{align}
	\noindent Proceed as in the last part of the Gaussian case:
	\begin{align}
		\left( \frac{y-m_j}{d_j} \right)^2 &\geq  \left( \frac{y-m_j+n}{d_j} \right)^2\\
		\text{iff} \quad  \left( y-m_j \right)^2 &\geq  \left( y-m_j+n \right)^2\\
		\text{iff}  \quad \left( y-m_j \right)^2 &\geq  \left( y-m_j \right)^2 + n^2 + 2n\left(y-m_j\right)\\
		\text{iff} \quad 0 &\geq  n^2 + 2n\left(y-m_j\right)\\
		\text{iff} \quad n^2 &\leq 2n\left(m_j-y\right) \;. 
\end{align}
This proves (\ref{eq:CauchyMLECond}).
\end{proof}
\noindent The conditions in Corollaries~\ref{cor:GMM-NEM} and~\ref{cor:CMM-NEM} simplify to $n \leq 2\left(\mu_j-y\right)$ when $N>0$.

Again apply the quadratic condition (\ref{eq:CauchyMLECond}) to (\ref{eq:DeltaCond}). Then
\begin{equation}
	f(y+n,z|\theta) \geq f(y,z|\theta)
\end{equation}
holds when 
\begin{align}
	n^2 &\leq 2n\left(m_j-y\right) \textrm{ for all } j \;. 
	\label{eq:FinCauchyMixtureCond}
\end{align}

Figure \ref{fg:GaussNEM} shows a simulation instance of noise benefits for EM estimation on a GMM.  Figure \ref{fg:CauchyNEM} also shows that EM estimation on a CMM exhibits similar pronounced noise benefits. The GMM simulation estimates the sub-population standard deviations $\sigma_1$ and $\sigma_2$ from $200$ samples of a Gaussian mixture of two $1$-D sub-populations with known means $\mu_1 = -2$ and $\mu_2 = 2$ and mixing proportions $\alpha_1=0.5$ and $\alpha_2=0.5$. The true standard deviations are $\sigma_1^*=2$ and $\sigma_2^*=2$. Each EM and NEM procedure starts at the same initial point\footnote{
	The use of the fixed initial points and bootstrap confidence intervals for the figure-of-merit is in keeping with Shilane's~\parencite{shilane-et-al2008} framework for the statistical comparison of evolutionary or iterative computation algorithms with random trajectories.
} with $\sigma_1(0)=4.5$ and  $\sigma_2(0)=5$. The simulation runs NEM on $100$ GMM data sets for each noise level $\sigma_N$ and counts the number of iterations before convergence for each instance. The average of these iteration counts is the \emph{average convergence time} at that noise level $\sigma_N$. The CMM simulation uses a similar setup for estimating sub-population dispersions $(d_1^*, d_2^*)$ with its own set of true values $(\alpha^*, m_1^*, d_1^*, m_2^*, d_2^*)$ (see inset table in Figure~\ref{fg:CauchyNEM}). The noise in the CMM-NEM simulation satisfies the condition in (\ref{eq:FinCauchyMixtureCond}). Simulations also confirm that non-Gaussian noise distributions produce similar noise benefits.

\begin{figure}[ht!]
\centerline{  \includegraphics[width=0.75\textwidth]{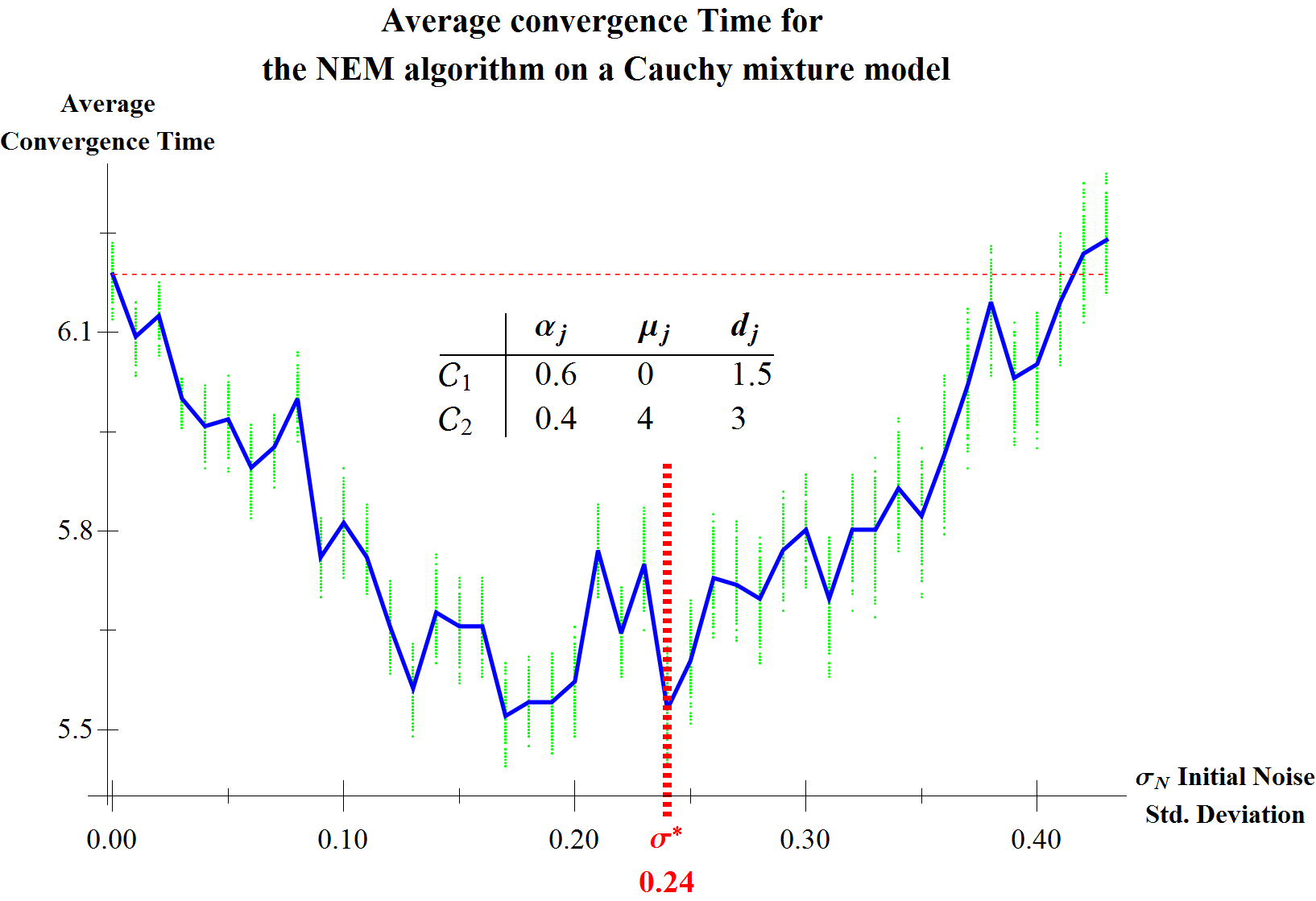} }
\caption[EM noise benefit for a Cauchy mixture model]{
	EM noise benefit for a Cauchy mixture model. This plot uses the noise-annealing variant of the NEM algorithm. Low intensity starting noise decreases convergence time while higher intensity starting noise increases it.  The optimal initial noise level has standard deviation $\sigma^* = 0.24$ which gave a speed improvement of about $11\%$. This NEM procedure adds independent noise with a cooling schedule. The noise cools at an inverse square rate. The data model is the Cauchy mixture model.  The mixture Cauchy distribution is a convex combination of two Cauchy sub-populations $\mathbb{C}_1$ and $\mathbb{C}_2$. The simulation uses $200$ samples of the mixture Cauchy distribution to estimate the dispersions of the two sub-populations. This NEM implementation adds noise to the data only when the noise satisfies the condition in ({\ref{eq:FinCauchyMixtureCond}}) . The additive noise is zero-mean normal with standard deviation $\sigma$. Each sampled point on the curve is the mean of $100$ trials. The vertical bars are the $95\%$ bootstrap confidence intervals for the convergence time mean at each noise level.
}
\label{fg:CauchyNEM}
\end{figure}

The mixture model NEM conditions predict noise benefits for the estimation of specific distribution parameters -- variance and dispersion parameters. The noise benefit also applies to the full EM estimation of all distribution parameters since the EM update equations decouple for the different parameters. So we can apply the NEM condition to just the update equations for the variance/dispersion parameters. And use the regular EM update equations for all other parameters. The NEM condition still leads to a noise--benefit in this more general estimation procedure. Figure~\ref{fg:GaussNEM-2D} shows a simulation instance of this procedure. The simulation for this figure estimates all distribution parameters for a GMM in $2$-dimensions.

\begin{figure}[ht!]
\centerline{ \includegraphics[width=0.75\textwidth]{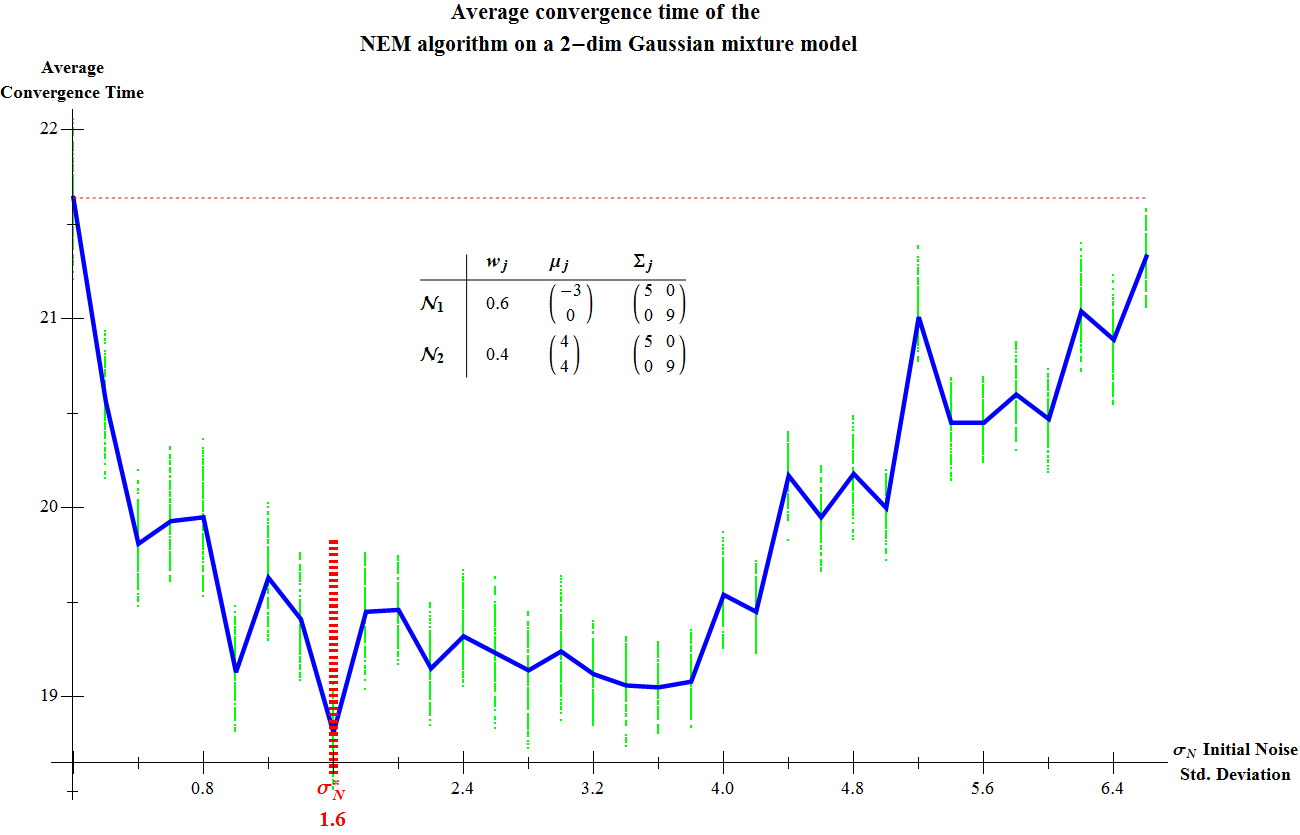} }
\caption[EM noise benefit for a $2$-D Gaussian mixture model]{
	EM noise benefit for a $2$-D Gaussian mixture model. The plot shows that noise injection can improve EM convergence time. The simulation is similar to Figure~\ref{fg:GaussNEM} except we estimate all distribution parameters $(\alpha, \mu, \sigma)$ instead of just $\sigma$. And the GMM is for $2$-dim samples instead of $1$-dim samples.	The optimal initial noise level has standard deviation $\sigma^* = 1.6$. The average optimal NEM speed-up over the noiseless EM algorithm is $14\%$. The simulation uses $225$ samples of the mixture normal distribution to estimate all distribution parameters for a $2$-D $2$-cluster GMM. The additive noise uses samples of zero-mean normal noise with standard deviation $\sigma_N$ screened through the GMM--NEM condition in ({\ref{eq:FinMixtureCond}}).
}
\label{fg:GaussNEM-2D}
\end{figure}

\subsection{The Geometry of the GMM-- \& CMM--NEM Condition}\label{subsec:NEM-Geom}

Both quadratic NEM inequalities in (\ref{eq:FinMixtureCond}) and (\ref{eq:FinCauchyMixtureCond}) reduce to 
\begin{align}
	n\left[n - 2\left(\mu_j-y\right) \right] \leq 0 \quad \forall \;  j \;. \label{eq:QuadForm-NEMCondn}
\end{align}
So the noise $n$ must fall in the interval where the \emph{parabola} $n^2  - 2n\left(\mu_j-y\right)$ is negative for all $j$. We call the set of such admissible noise samples the \emph{NEM set}. This set forms the support of the NEM noise pdf.

There are two possible solution sets for (\ref{eq:QuadForm-NEMCondn}) depending on the values of $\mu_j$ and $y$. These solution sets are
\begin{align}
	N_j^{-}(y) & =  [2\left(\mu_j-y\right), 0] \label{eq:neg-nem-set}\\
	N_j^{+}(y) & =  [0,2\left(\mu_j-y\right)] \label{eq:pos-nem-set}\;.
\end{align}
The goal is to find the set $N(y)$ of $n$ values that satisfy the inequality in (\ref{eq:FinMixtureCond}) for \emph{all} $j$:
\begin{align}
	N(y) = {\bigcap}_j N_j(y) 
\label{eq:intersect-nem-sets} 
\end{align}
where $N_j(y) = N_j^+(y)$ or $N_j(y) = N_j^-(y)$. Thus $N(y) \neq \{0\}$ holds only when the sample $y$ lies on one side of all sub-population means (or location parameters) $\mu_j$. This holds for 
\begin{align}
	\mu_j >y \textrm{ for all } j \quad \textrm{or} \quad   &\mu_j < y \textrm{ for all } j \;.
\end{align}

The NEM noise $N$ takes values in $\bigcap_j N_j^{-}$ if the data sample $y$ falls to the right of all sub-population means ($y>\mu_j$ for all $j$). The NEM noise $N$ takes values in $\bigcap_j N_j^{+}$ if the data sample $y$ falls to the left of all sub-population means ($y<\mu_j$ for all $j$). And $N=0$ is the only valid value for $N$ when $y$ falls between sub-populations means. Thus the noise $N$ tends to pull the data sample $y$ away from the tails and towards the cluster of sub-population means (or locations).

The GMM--NEM condition extends component-wise to $d$-dimensional GMM models with \emph{diagonal} covariance matrices. Suppose $\mathbf{y} = \{y_\iota\}_\iota^d$ is one such $d$-D GMM sample. Then the vector noise sample $\mathbf{n}$ satisfies the NEM condition if
\begin{equation}
	n_\iota^2 \leq 2n_\iota\left(\mu_{j,\iota}-y_\iota\right) \quad \forall \; j, \forall \; \iota \label{eq:d-Dim-IndepCondition}
\end{equation}
where $j$ denotes sub-populations and $\iota$ denotes dimensions. The condition (\ref{eq:d-Dim-IndepCondition}) holds because 
\begin{equation}
	\prod_{\iota=1}^d f(y_\iota + n_\iota|j,\theta) \geq \prod_{\iota=1}^d f(y_\iota |j,\theta)
\end{equation} when $f(y_\iota + n_\iota|j,\theta) \geq f(y_\iota |j,\theta)$ for each dimension $\iota$. So $d$-D GMM--NEM for vector sub-populations with diagonal covariance matrices is equivalent to running $d$ separate GMM--NEM estimations.

The multidimensional quadratic positivity condition (\ref{eq:d-Dim-IndepCondition}) has a simple geometric description. Suppose there exist noise samples $\mathbf{n}$ that satisfy the quadratic NEM condition. Then the valid noise samples $\mathbf{n} = \{n_\iota\}_\iota^d$ for the data $\mathbf{y}$ must fall in the interior or on the boundary of the minimal hyper-rectangle:
\begin{equation}
	\mathbf{n} \in \prod_\iota^d \{n_\iota \mid \; n_\iota^2 \leq 2n_\iota\left(\mu_{min,\iota}-y_\iota\right) \} \;. \label{eq:NEMGeom-eq}
\end{equation}
$\boldsymbol{\mu}_{min}$ is the centroid of the NEM set. The differences $\{(\mu_{j,\iota}-y_\iota)\}$ determine the location of $\boldsymbol{\mu}_{min}$:
\begin{align}
	\boldsymbol{\mu}_{min} &= (\mu_{min,1}, \ldots, \mu_{min,d}) \\
	\textrm{where} \quad \mu_{min, \iota} &= \min_{j=1,\ldots,K}\{(\mu_{j,\iota}-y_\iota) \}
\end{align}
for a GMM with $K$ sub-populations. While the differences $\{2(\mu_{j,\iota}-y_\iota)\}$ determine the bounds of the set (see Figure~\ref{fg:NEM-Geom}) just like in the $1$-D case (\ref{eq:neg-nem-set}--\ref{eq:intersect-nem-sets}). This multidimensional NEM set is the product of $1$-D GMM--NEM sets $N_j(y_\iota)$ from each dimension $\iota$. It is a hyper-rectangle because the $1$-D GMM--NEM sets are either intervals or the singleton set $\{0\}$. The NEM set always has one vertex anchored at the origin\footnote{
	Thus the EM algorithm is always a special case of the NEM algorithm. This also means that contractions of convex NEM sets still satisfy the NEM condition. This property is important for implementing noise cooling in the NEM algorithm.
} since the origin is always in the NEM set (\ref{eq:NEMGeom-eq}).

Figure~\ref{fg:NEM-Geom} illustrates the geometry of a sample $2$-D GMM--NEM set for a $2$-D GMM sample $\mathbf{y}$. GMM--NEM sets are lines for $1$-D samples, rectangles for $2$-D samples, cuboids for $3$-D samples, and hyper-rectangles for $(d>3)$-D samples.
\begin{figure}[ht!]
\centerline{  \includegraphics[width=0.9\textwidth]{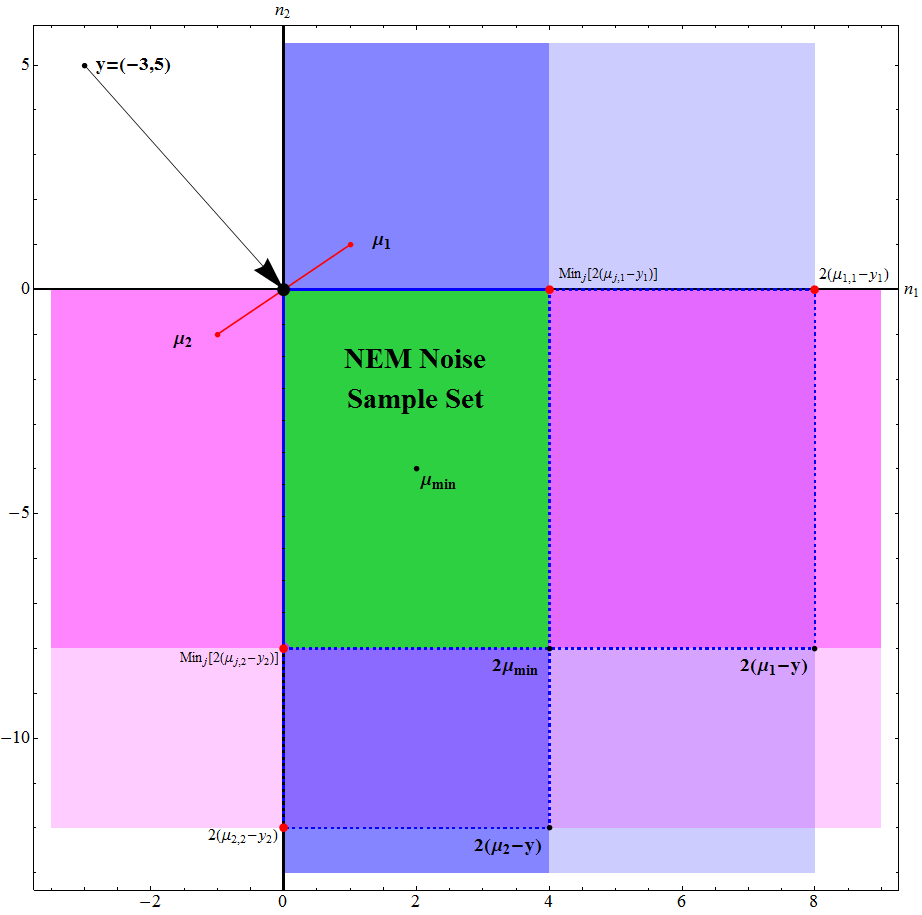} }
\caption[The Geometry of a NEM set for GMM-- and CMM--NEM]{
	An illustration of the geometry of the GMM--NEM set in $2$-dimensions. The $2$-D NEM set is the green rectangle with centroid $\boldsymbol{\mu}_{min}=(2,-4)$ and a vertex at the origin. The sample $\mathbf{y}=(-3,5)$ comes from a $2$-cluster GMM. The GMM sub-population means are $\boldsymbol{\mu}_{1}=(1,1)$ and $\boldsymbol{\mu}_{2}=(-1,-1)$. The differences $(\boldsymbol{\mu}_{j,\iota}-y_\iota)$ determine the location of the centroid $\boldsymbol{\mu}_{min}$ and $2\boldsymbol{\mu}_{min}$ is the diagonal vertex of the NEM set. The $1$-D NEM intervals in the $n_1$ (intersection of the blues sets) and $n_2$ (intersection of the magenta sets) dimensions specify the side-lengths for the $2$-D NEM-set. Vectors in the NEM set draw the sample $\mathbf{y}$ towards the midpoint of $\boldsymbol{\mu}_1$ and $\boldsymbol{\mu}_2$. 
}
\label{fg:NEM-Geom}
\end{figure}
The geometry of the NEM set implies that NEM noise tends to pulls the sample $y$ towards the centroid of the set of sub-population means $\{\boldsymbol{\mu}_1, \ldots, \boldsymbol{\mu}_K\}$. This centralizing action suggests that NEM converges faster for GMMs and CMMs because the injective noise makes the EM estimation more robust to outlying samples.

\subsection{NEM for Mixtures of Jointly Gaussian Populations}\label{subsec:JGMM-NEM}

The $d$-dimensional NEM condition in equation~(\ref{eq:NEMGeom-eq}) is valid when the sub-population covariance matrices are diagonal. More general jointly Gaussian (JG) sub-populations have correlations between dimensions and thus non-zero off-diagonal terms in their covariance matrices. NEM for such jointly Gaussian mixture models(JG-MMs) sub-populations requires a more general sufficient condition. Corollary~\ref{cor:JGMM-NEM} states this condition. The corollary uses the following notation for the quadratic forms in the JG pdf:
\begin{align}
	\minnprod{\mathbf{u}}{\mathbf{v}} &= \mathbf{u}^{T} \, \Sigma_j^{-1} \, \mathbf{v} \\
	\mnorm{\mathbf{u}} &= \mathbf{u}^{T} \, \Sigma_j^{-1} \, \mathbf{u} \;.
\end{align}
These are the inner product and the norm based on the inverse of the non-degenerate symmetric positive definite covariance matrix $\Sigma_j^{-1}$.

\begin{cor}{\bf{[NEM Condition for JG-MMs]:}}\label{cor:JGMM-NEM}\\
	Suppose $\mathbf{Y}|_{Z=j} \sim {\cal N}(\boldsymbol{\mu}_j,\Sigma_j)$ for $d$-dimensional random vectors $\mathbf{Y}$. And thus $f(\mathbf{y}|j,\theta)$ is a jointly Gaussian  pdf. Then the NEM sufficient condition
	\begin{equation}
		\E_{\mathbf{Y},Z,N|\theta^*} \left[ \ln \frac{f(\mathbf{Y}+\mathbf{N},Z|\theta)}{f(\mathbf{Y},Z|\theta)} \right] \geq 0
	\end{equation}
	holds if 
	\begin{equation}
		\mnorm{\mathbf{n}} 
		+ 2\minnprod{(\mathbf{y}-\boldsymbol{\mu}_j)}{\mathbf{n}}
		\leq  0
		\quad \forall \;\; j \;.
		\label{eq:JG-MM-NEM}
	\end{equation}
\end{cor}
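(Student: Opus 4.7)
The plan is to reduce the statement to the pointwise dominance condition of Corollary~\ref{cor:DomPDF-NEM}, just as in the proof of Corollary~\ref{cor:GMM-NEM}. First I would invoke Corollary~\ref{cor:DomPDF-NEM} together with the finite-mixture decomposition in~(\ref{eq:gmm-compl-pdf}) to reduce the expected positivity condition to the component-wise dominance $\Delta f_j(\mathbf{y},\mathbf{n}) \geq 0$ for every sub-population index $j$, which is the multivariate analogue of~(\ref{eq:DeltaCond}). This converts the integral NEM inequality into a purely pointwise statement about each jointly Gaussian density.

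Next I would use that the jointly Gaussian density $f(\mathbf{y}|j,\theta) \propto \exp\!\bigl(-\tfrac{1}{2}\mnorm{\mathbf{y}-\boldsymbol{\mu}_j}\bigr)$ has a normalizer that depends on $\Sigma_j$ but not on $\mathbf{y}$ or $\mathbf{n}$. Monotonicity of $\exp$ then reduces the sub-population dominance $f(\mathbf{y}+\mathbf{n}|j,\theta) \geq f(\mathbf{y}|j,\theta)$ to the geometric comparison
\begin{equation}
\mnorm{\mathbf{y}+\mathbf{n}-\boldsymbol{\mu}_j} \;\leq\; \mnorm{\mathbf{y}-\boldsymbol{\mu}_j}.
\end{equation}

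The final step is to expand the quadratic form on the left. Because $\Sigma_j^{-1}$ is symmetric and positive definite, the bilinear form $\minnprod{\cdot}{\cdot}$ is a genuine inner product, so the usual polarization identity gives
\begin{equation}
\mnorm{(\mathbf{y}-\boldsymbol{\mu}_j)+\mathbf{n}}
= \mnorm{\mathbf{y}-\boldsymbol{\mu}_j}
+ 2\minnprod{(\mathbf{y}-\boldsymbol{\mu}_j)}{\mathbf{n}}
+ \mnorm{\mathbf{n}}.
\end{equation}
Cancelling $\mnorm{\mathbf{y}-\boldsymbol{\mu}_j}$ from both sides yields exactly~(\ref{eq:JG-MM-NEM}), and imposing this inequality for every $j$ closes the argument via the Corollary~\ref{cor:DomPDF-NEM} reduction.

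The obstacle is mild and essentially bookkeeping: symmetry of $\Sigma_j^{-1}$ is what allows the two cross terms $\minnprod{\mathbf{y}-\boldsymbol{\mu}_j}{\mathbf{n}}$ and $\minnprod{\mathbf{n}}{\mathbf{y}-\boldsymbol{\mu}_j}$ to combine into a single factor of two rather than remaining distinct. As consistency checks, when $\Sigma_j$ is diagonal the condition~(\ref{eq:JG-MM-NEM}) decouples across coordinates and reproduces the component-wise inequality~(\ref{eq:d-Dim-IndepCondition}); specializing further to $d=1$ recovers the scalar inequality~(\ref{eq:GaussMLECondn}) of Corollary~\ref{cor:GMM-NEM}.
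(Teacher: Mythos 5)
Your proposal is correct and follows essentially the same route as the paper's proof: both reduce the expected positivity condition to the pointwise dominance of Corollary~\ref{cor:DomPDF-NEM}, pass through the finite-mixture decomposition to a per-sub-population comparison, and expand the Mahalanobis quadratic form $\mnorm{(\mathbf{y}-\boldsymbol{\mu}_j)+\mathbf{n}}$ using symmetry of $\Sigma_j^{-1}$ to obtain~(\ref{eq:JG-MM-NEM}). The only cosmetic difference is that the paper carries the $\sum_j \delta[z-j]$ form of the complete log-likelihood and then passes to the ``every summand nonpositive'' sub-case, whereas you impose the per-component dominance directly; these amount to the same thing.
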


\begin{proof}
By Corollary~\ref{cor:DomPDF-NEM}, the NEM condition
\begin{align}
	\E_{\mathbf{Y},Z,N|\Theta_*} \left[ \ln\left( \frac{f(\mathbf{Y}+\mathbf{N},Z|\Theta)}{f(\mathbf{Y},Z|\Theta)} \right) \right] \geq 0 \;.
\end{align}
holds if
\begin{equation}
	f(\mathbf{y}+\mathbf{n},z|\theta) - f(\mathbf{y},z|\theta) \geq 0
\end{equation} for almost all $\mathbf{n}, \mathbf{y}, z$.
This dominance condition is equivalent to 
\begin{equation}
	\ln f(\mathbf{y}+\mathbf{n},z|\theta) - \ln f(\mathbf{y},z|\theta) \geq 0
\end{equation}
over the support of the pdf.
The complete JG-MM log-likelihood $\ln f(\mathbf{y},z|\theta)$ is
\begin{align}
	\ln f(\mathbf{y},z|\theta) = \sum_j \delta[z-j]  \ln \Big[\alpha_j ~f(\mathbf{y}|j,\theta_j) \Big]
\end{align}
using the exponential form of the finite mixture model pdf in (\ref{eq:FMM-fyz-ExpForm}). So \linebreak $\ln f(\mathbf{y}+\mathbf{n},z|\theta) - \ln f(\mathbf{y},z|\theta) \geq 0$ iff 
\begin{align}
	\sum_j \delta[z-j] \Big[\ln \left(\alpha_j ~f(\mathbf{y} + \mathbf{n}|j,\theta_j) \right) - \ln \left(\alpha_j ~f(\mathbf{y}|j,\theta_j) \right) \Big] &\geq  0 \\
	\iff \quad \sum_j \delta[z-j] 	 \Big[ \ln f(\mathbf{y} + \mathbf{n}|j,\theta_j) - \ln f(\mathbf{y}|j,\theta_j) \Big] &\geq  0 \;.
\end{align}
JG sub-population pdf and log-likelihood are 
\begin{align}
	f(\mathbf{y}|j,\theta_j) &= \frac{1}{\sqrt{ (2\pi)^d |\Sigma_j|} } \exp \left(- 0.5\left(\mathbf{y}-\boldsymbol{\mu}_j \right)^{T} \Sigma_j^{-1} \left(\mathbf{y}-\boldsymbol{\mu}_j  \right) \right) \;,
\label{eq:JG-sub-pdf}\\
	\ln f(\mathbf{y}|j,\theta) &= -0.5 d \ln(2\pi) -0.5\ln |\Sigma_j|  - 0.5\left(\mathbf{y}-\boldsymbol{\mu_j} \right)^{T} \Sigma_j^{-1} \left(\mathbf{y}-\boldsymbol{\mu_j}  \right)
\label{eq:JG-sub-LL}
\end{align}
where $ |\Sigma_j|$ is the determinant of the covariance matrix $\Sigma_j$. Define $\mathbf{w}_j$ as 
\begin{equation}
	\mathbf{w}_j = \mathbf{y}-\boldsymbol{\mu_j} \;.
\end{equation}
This simplifies the JG sub-population log-likelihood to
\begin{align}
	\ln f(\mathbf{y}|j,\theta) &= -0.5(d\ln(2\pi) + \ln |\Sigma_j|) - 0.5\ln |\Sigma_j| - 0.5\mnorm{\mathbf{w}_j} \;.
\end{align}
Thus
\begin{multline}
	\sum_j \delta[z-j] \Big[ 
		 -0.5(d \ln(2\pi) + \ln |\Sigma_j|) - 0.5\mnorm{\mathbf{w}_j + \mathbf{n}} \\
		 +0.5(d \ln(2\pi) + \ln |\Sigma_j|) + 0.5\mnorm{\mathbf{w}_j}
	\Big] \geq  0
\end{multline}
\begin{align}
	\iff \sum_j \delta[z-j] \Big[ 
		- \mnorm{\mathbf{w}_j + \mathbf{n}} 
		+ \mnorm{\mathbf{w}_j}
	\Big] &\geq  0 \\
	\iff \sum_j \delta[z-j] \Big[ 
		- \mnorm{\mathbf{n}} 
		- 2\minnprod{\mathbf{w}_j}{\mathbf{n}}
	\Big] &\geq  0\;.
\end{align}
This gives the $d$-D JG-MM NEM condition 
\begin{equation}
	\sum_j \delta[z-j] \Big[ \mnorm{\mathbf{n}} + 2\minnprod{\mathbf{w}_j}{\mathbf{n}} \Big] \leq  0 \;.
	\label{eq:d-Dim-JGMMNEM}
\end{equation}
A sub-case of (\ref{eq:d-Dim-JGMMNEM}) satisfies the condition by  ensuring that each summand is negative $[ \mnorm{\mathbf{n}} + 2\minnprod{\mathbf{w}_j}{\mathbf{n}} ] \leq  0$. This sub-case eliminates the need for the summation in (\ref{eq:d-Dim-JGMMNEM}) and gives the more conservative JG-MM NEM condition:
\begin{equation}
	\mnorm{\mathbf{n}} 
	+ 2\minnprod{(\mathbf{y}-\boldsymbol{\mu}_j)}{\mathbf{n}}
	\leq  0
	\quad \forall \;\; j
\end{equation}
since $\mathbf{w}_j=(\mathbf{y}-\boldsymbol{\mu}_j)$.
\end{proof} 

This is a direct analogue and generalization of the 1-D condition with one important difference: all estimated parameters, $(\mu_j, \Sigma_j)$, occur in the condition. So the condition is not as useful as the specialized GMM--NEM condition for diagonal covariance matrices. Diagonal covariance matrices allow us to eliminate variances from the NEM condition and thus get \emph{exact} NEM noise benefit conditions for variance estimation. The condition in (\ref{eq:JG-MM-NEM}) provides a basis for \emph{approximate} NEM noise benefit conditions at best.

Figure~\ref{fg:GMM-Geom-ellipse} shows a sample NEM set for a 2-dimensional jointly Gaussian mixture with two clusters. The NEM sets for JG-MMs are intersections of ellipsoids. The component-wise $d$-Dim NEM condition (\ref{eq:NEMGeom-eq}) is a sub-case of (\ref{eq:JG-MM-NEM}) when the covariance matrices are diagonal\footnote{
	The inner products reduce to sums of $1$-D quadratic terms which we bound separately to get the component-wise NEM condition in (\ref{eq:NEMGeom-eq}).
}. Thus the NEM set for (\ref{eq:NEMGeom-eq}) (the green box) is a subset of the more general JG-MM NEM set (the blue overlapping region) for the same sample $\mathbf{y}$ and data model. The general theme for JG and Gaussian mixture models is that the roots of second-order polynomials determine the boundaries of NEM sets.
\begin{figure}[h!]
	\centerline{ \includegraphics[width=0.9\textwidth]{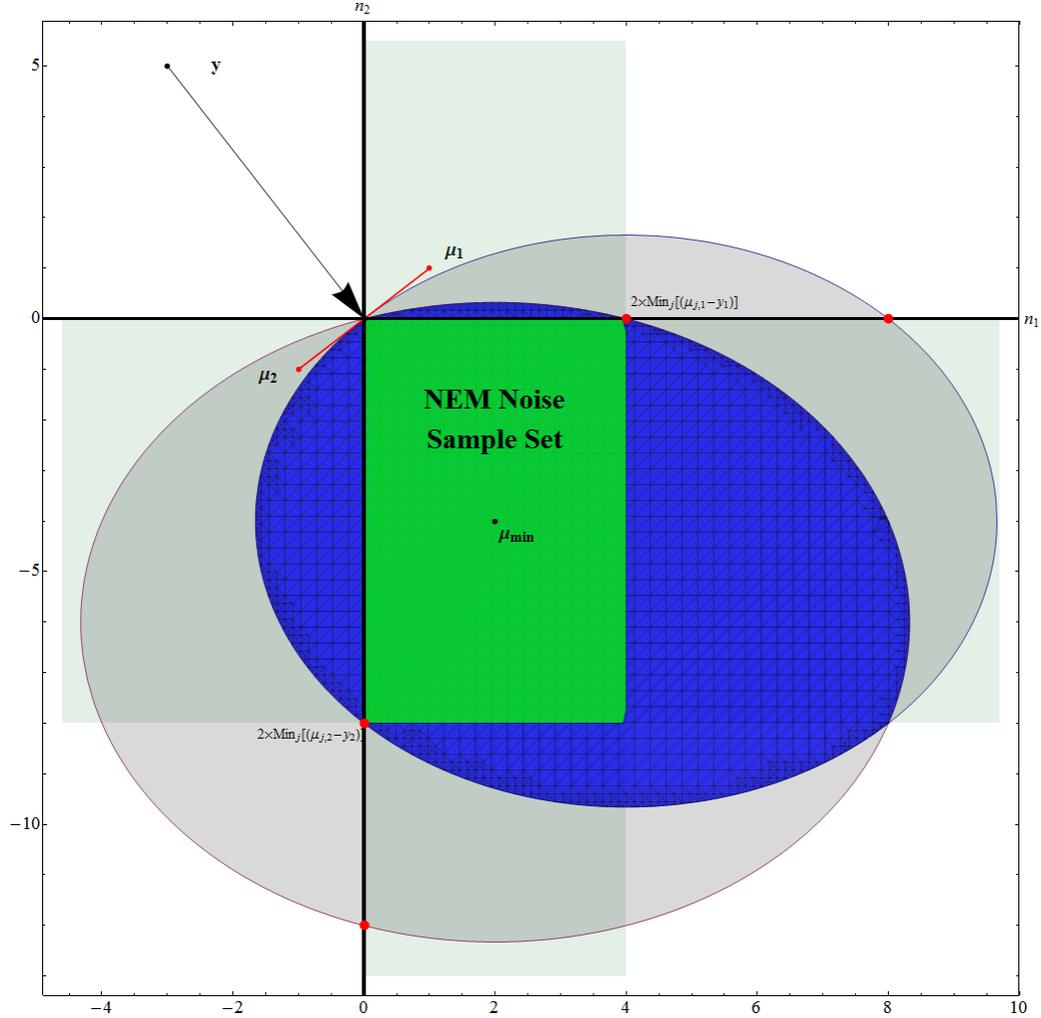} }
	\caption[An illustration of a sample NEM set for a mixture of two $2$-D jointly Gaussian populations]{
		An illustration of a sample NEM set for a mixture of two $2$-D jointly Gaussian populations. NEM noise samples must come from the overlapping region in blue. This region is the intersection of two ellipses. This is the JG-MM NEM set for the same sample and data model in Figure~\ref{fg:NEM-Geom}. The JG-MM NEM set is a superset of the product GMM--NEM set in Figure~\ref{fg:NEM-Geom} (the green box here) that comes from manipulating each dimension separately.
	}
\label{fg:GMM-Geom-ellipse}
\end{figure}

Intersections of ellipsoids do not factor into component dimensions the way products of intervals do. So noise sampling from the JG-MM NEM set requires the use of complicated joint distributions. This emphasizes the need for identifying simpler interior NEM set (like the green box in Figure~\ref{fg:GMM-Geom-ellipse}) for noise sampling. A study of the geometry of NEM sets may lead to more intelligent noise sampling schemes for NEM. Ease of noise sampling becomes increasingly important as sample dimension increases.

The NEM theorem produces better average performance than EM algorithm for any EM estimation problem. This may seem to violate Wolpert and Macready's ``No Free Lunch'' (NFL) theorems~\parencite{wolpert-macready1997} which asserts that this sort of superior algorithmic performance is not possible over the whole space of all optimization problems. The apparent violation is an illusion. The NFL theorems apply to algorithms with static objective functions. The objective functions in NEM algorithms are dynamic; noise injection perturbs the likelihood/objective function at each iteration. And the NEM theorem specifically steers the likelihood function evolution towards favorable solutions.

But the use of evolving objective functions comes at a price: intelligent objective function evolution raises the algorithm's computational complexity. The noise sampling step accounts for the increased computational complexity in the NEM algorithm. Noise sampling complexity can sometimes lead to higher raw computation time even when the NEM algorithm converges in fewer iterations than the EM algorithm. This problem highlights the need for efficient noise sampling routines. Simple NEM conditions (e.g. GMM--NEM and CMM--NEM) can keep noise sampling cost down. But the ideal case would inject unscreened noise and still produce noise benefits. \Sec\ref{sec:samp-sz-effects} below discusses this ideal case.

\subsection{NEM for Models with Log-Convex Densities}
EM algorithms can satisfy the positivity condition (\ref{eq:Goal}) if they use the proper noise $N$. They can also satisfy the condition if the data model has an amenable complete data pdf $f(x|\theta)$. Inequalities (\ref{eq:FinMixtureCond}) and (\ref{eq:FinCauchyMixtureCond}) can sculpt the noise $N$ to satisfy (\ref{eq:Goal}) for Gaussian and Cauchy mixture models. The next corollary shows how the complete data pdf can induce a noise benefit. The corollary states that a log-convex complete pdf satisfies (\ref{eq:Goal}) when the noise is zero-mean. The corollary applies to data models with more general complete random variables $X$. These include models whose complete random variables $X$ do not decompose into the direct product $X=(Y,Z)$. Examples include censored models that use the unobserved complete random variable as the latent random variable $Z=X$ \parencite{gupta-chen2010, chauveau1995, tan-tian-ng2010}.

\begin{cor}{\bf{[NEM Condition for Log-Convex pdfs]:}}\label{cor:lgcnvx-NEM} \\
Suppose that $f(x|\theta)$ is log-convex in $x$  and {$N$ is independent of $X$}. Suppose also that $E_N\left[N\right]=0$.
Then
\begin{align}
	\E_{X,N|\theta^*} \left[ \ln \frac{f(X+N|\theta_k)}{ f(X|\theta_k)} \right] &\geq 0 \;. \label{eq:condin2}
\end{align}
\end{cor}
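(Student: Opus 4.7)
The plan is to reduce the positivity condition~(\ref{eq:condin2}) to a pointwise application of Jensen's inequality. Log-convexity of $f(x|\theta_k)$ in $x$ means that $x \mapsto \ln f(x|\theta_k)$ is a convex function on the support of $X$. Independence of $N$ from $X$, together with $\E_N[N]=0$, provides exactly the structure that Jensen's inequality exploits in this convex setting.

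The key step is to condition on $X=x$ and apply Jensen's inequality to the inner expectation over $N$. Since $\ln f(\cdot|\theta_k)$ is convex,
\begin{equation}
\E_N\left[\ln f(x + N|\theta_k)\right] \;\geq\; \ln f\left(x + \E_N[N] \,\big|\, \theta_k\right) \;=\; \ln f(x|\theta_k),
\end{equation}
where the equality uses $\E_N[N]=0$. Rearranging gives $\E_N[\ln f(x+N|\theta_k) - \ln f(x|\theta_k)] \geq 0$ pointwise in $x$. Next I would take the outer expectation over $X$ under the pdf $f(x|\theta^*)$. Independence of $N$ and $X$ ensures that the conditional law of $N$ given $X=x$ is just the marginal law of $N$, so the Jensen step applies to the right measure, and Fubini's theorem (justified by the standing finite-entropy assumption used in the proof of Theorem~\ref{thm:NEM}) lets me swap the order of integration. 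This yields
\begin{equation}
\E_{X,N|\theta^*}\left[\ln\frac{f(X+N|\theta_k)}{f(X|\theta_k)}\right] \;\geq\; 0,
\end{equation}
which is the generalized NEM positivity condition~(\ref{eq:X-Goal}) specialized to complete random variables $X$ with log-convex pdf.

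The substantive content is entirely in the single Jensen step; the main obstacle is bookkeeping rather than mathematics. Specifically, I need to ensure that Jensen's inequality is applied against the marginal law of $N$ (which requires independence of $N$ and $X$ to coincide with the conditional law given $X=x$), and that $\ln f(\cdot|\theta_k)$ is regular enough on the support that both the pointwise convex-function inequality and its integrated form are well-defined. All of this is covered by the standing finite-entropy and support-preservation assumptions introduced before Theorem~\ref{thm:NEM}, so no additional hypotheses are needed beyond those stated in the corollary. Note also that nothing in the argument requires $N$ to be screened, which is why log-convex complete-data models admit a noise benefit from unscreened zero-mean noise.
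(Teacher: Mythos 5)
Your proposal is correct and follows essentially the same route as the paper's own proof: both arguments condition on the data value, apply Jensen's inequality to the convex function $\ln f(\cdot\,|\theta_k)$ using $\E_N[N]=0$ so that the right-hand side collapses to $\ln f(x|\theta_k)$, and then take the outer expectation. Your added remarks about independence fixing the conditional law of $N$ and Fubini justifying the interchange are sound and merely make explicit what the paper leaves implicit.
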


\begin{proof}
$f(y,z|\theta)$ is log-convex in $y$ and $E_N\left[y+N\right]=y$. So
\begin{align}
	E_N\left[\ln f(y+N,z|\theta_k)\right] \geq \ln f(E_N\left[y+N\right],z|\theta_k)\;.
\end{align}
The right-hand side becomes
\begin{align}
	\ln f(E_N\left[y+N\right],z|\theta_k) &~=~  \ln f(y+E_N\left[N\right],z|\theta_k) \\
	&~=~ \ln f(y,z|\theta_k)
\end{align}
because $E[N]=0$. So
\begin{align}
	E_N\left[\ln f(y+N,z|\theta_k)\right] &~\geq~ \ln f(y,z|\theta_k) \\
	\text{iff} \quad \left( E_N\left[\ln f(y+N,z|\theta_k)\right] - \ln f(y,z|\theta_k) \right) &~\geq~ 0 \\
	\text{iff} \quad \left( E_N\left[\ln f(y+N,z|\theta_k) - \ln f(y,z|\theta_k) \right) \right] &~\geq~ 0 \\
	\text{iff} \quad \E_{Y,Z|\theta^*} \left[ \E_{N} \left[ \ln f(Y+N,Z|\theta_k) - \ln f(Y,Z|\theta_k) \right] \right] &~\geq ~0 
\end{align}
\begin{equation}
	\text{iff} ~~
	\quad \E_{Y,Z,N|\theta^*} \left[ \ln \frac{f(Y+N,Z|\theta_k)}{ f(Y,Z|\theta_k)} \right] ~\geq~ 0 \;. \label{eq:End}
\end{equation}
Inequality (\ref{eq:End}) follows because $N$ is independent of $\theta^*$. 
\end{proof}

The right-censored gamma data model gives a log-convex data model when the $\alpha$-parameter of its complete pdf lies in the interval $(0,1)$. This holds because the gamma pdf is log-convex when $0<\alpha<1$.  Log-convex densities often model data with decreasing hazard rates in survival analysis applications \cite{proschan1963,dahiya-gurland1972,bagnoli-bergstrom2005}. \Sec\ref{subsec:Exp-EM} describes the gamma data model and EM algorithm. Figure~\ref{fg:LogConvex} shows a simulation instance of noise benefits for a log-convex model. The simulation estimates the $\theta$ parameter from right-censored samples of a $\gamma(0.65,4)$ pdf. Samples are censored to values below a threshold of \Tlgcx. The average optimal NEM speed-up over the noiseless EM algorithm is about \lgcx.

\begin{figure}[ht!]
\centerline{\includegraphics[width=0.75\textwidth]{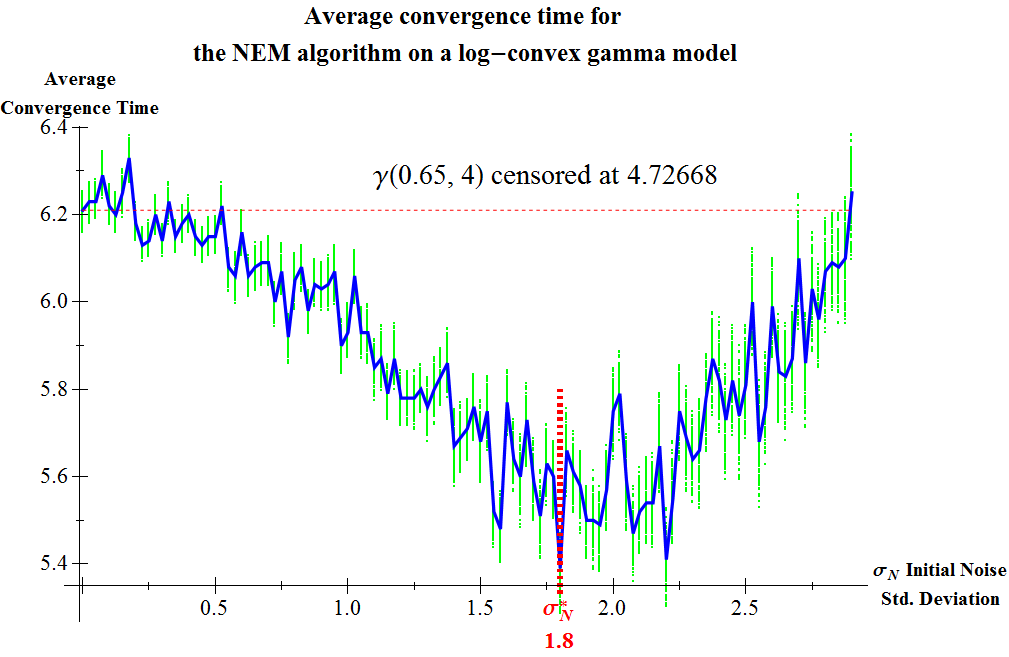} }
\caption[EM noise benefit for log-convex censored gamma model]{
	EM noise benefit for log-convex censored gamma model. This plot uses the annealed-noise NEM algorithm. The average optimal NEM speed-up over the noiseless EM algorithm is about \lgcx. Low intensity initial noise decreases convergence time while higher intensity starting noise increases it. This NEM procedure adds cooled i.i.d. normal noise that is \emph{independent} of the data. The noise cools at an inverse-square rate. The log-convex gamma distribution is a $\gamma(\alpha, \theta)$ distribution with $\alpha < 1$. The censored gamma EM estimates the $\theta$ parameter. The model uses $375$ censored gamma samples. Each sampled point on the curve is the mean of $100$ trials. The vertical bars are $95\%$ bootstrap confidence intervals for the mean convergence time at each noise level.
}
\label{fg:LogConvex}
\end{figure}

A modification of Corollary~\ref{cor:lgcnvx-NEM} predicts a similar noise benefit if we replace the zero-mean additive noise with unit-mean multiplicative noise. The noise is also independent of the data.

\section{The Noisy Expectation-Maximization Algorithm} \label{sec:NEM-Alg}

The NEM Theorem and its corollaries give a general method for modifying the noiseless EM algorithm. The NEM Theorem also implies that on average these NEM variants outperform the noiseless EM algorithm. Algorithm \ref{algo:NEM} below gives the Noisy Expectation-Maximization algorithm schema. The operation \textsc{NEMNoiseSample($\mathbf{y}$) } generates noise samples that satisfy the NEM condition for the current data model. The noise sampling distribution depends on the data vector $\mathbf{y}$ in the Gaussian and Cauchy mixture models.

\begin{algorithm}[H]
\DontPrintSemicolon
\SetKwInOut{Input}{Input}
\SetKwInOut{Output}{Output}
\SetKwFunction{NEMNoiseSample}{NEMNoiseSample}
\Input{$\mathbf{y} = \left( y_1,\ldots, y_M \right)$ : vector of observed incomplete data}
\Output{ $\hat{\theta}_{NEM}$ : NEM estimate of parameter $\theta$}
\BlankLine
\While{($\|\theta_k-\theta_{k-1}\| \geq 10^{-tol}$)}{
$\mathbf{N_S}${\bf -Step:} $\mathbf{n} \leftarrow k^{-\tau}  \times$ \NEMNoiseSample{$\mathbf{y}$} \;
$\mathbf{N_A}${\bf -Step:} $\mathbf{y}_\dagger \leftarrow \mathbf{y} + \mathbf{n}$ \;
{\bf E-Step:} $Q \left( \theta |\theta_k \right) \leftarrow\E_{\mathbf{Z}|\mathbf{y},\theta_k}  \left[ \ln f(\mathbf{y}_\dagger , \mathbf{Z}|\theta) \right] $ \;
{\bf M-Step:} $\theta_{k+1} \leftarrow \argmax{\theta} \left\{ Q\left( \theta |\theta_k \right) \right\}$\;
$k \leftarrow k+1$\;
}
$\hat{\theta}_{NEM} \leftarrow \theta_k$
\caption{The NEM Algorithm}\label{algo:NEM}
\end{algorithm}

The Maximum a Posteriori (MAP) variant of the NEM algorithm applies MAP-EM modification to the NEM algorithm: it replaces the EM $Q$-function with the NEM $Q_N$-function:
\begin{align}
	Q_N(\theta|\theta_k) = \E_{Z|y, \theta_k}[\ln f(y+N,Z|\theta)] + P(\theta) \;.
\end{align}
\begin{algorithm}[H]
{\bf E-Step:} $Q_N \left( \theta |\theta_k \right) \leftarrow\E_{\mathbf{Z}|\mathbf{y},\theta_k}  \left[ \ln f(\mathbf{y}_\dagger , \mathbf{Z}|\theta) \right]  + P(\theta)$ \;
\caption{Modified {\bf E-Step:} for Noisy MAP-EM}\label{algo:NMAP-EM}
\end{algorithm}

\noindent The E-Step in both cases takes the conditional expectation of a function of the noisy data $\mathbf{y}_\dagger$ given the noiseless data $\mathbf{y}$.

A deterministic decay factor $k^{-\tau}$ scales the noise on the $k^{th}$ iteration. $\tau$ is the noise decay rate. The decay factor $ k^{-\tau} $ reduces the noise at each new iteration. This factor drives the noise $N_k$ to zero as the iteration step $k$ increases. The simulations in this paper use $\tau = 2$ for demonstration. {Values between $\tau = 1$ and $\tau = 3$ also work}. $N_k$ still needs to satisfy the NEM condition for the data model. The cooling factor $k^{-\tau}$ must not cause the noise samples to violate the NEM condition. This usually means that $0 < k^{-\tau}  \leq 1$ and that the NEM condition solution set is closed with respect to contractions.

The decay factor reduces the NEM estimator's jitter around its final value. This is important because the EM algorithm converges to fixed-points. So excessive estimator jitter prolongs convergence time even when the jitter occurs near the final solution. The simulations in this paper use polynomial decay factors instead of logarithmic cooling schedules found in annealing applications \parencite{kirkpatrick-gelatt-vecchi1983, cerny1985, geman-hwang1986, hajek1988, kosko-nnfs}.

The NEM algorithm inherits some variants from the classical EM algorithm schema. A NEM adaptation to the Generalized Expectation Maximization (GEM) algorithm is one of the simpler variations. The GEM algorithm replaces the EM maximization step with a gradient ascent step. The Noisy Generalized Expectation Maximization (NGEM) algorithm uses the same M-step:

\vspace{6pt}

\begin{algorithm}[H]
{\bf M-Step:} $\theta_{k+1} \leftarrow \tilde{\theta}$  such that $Q(\tilde{\theta}|\theta_k) \geq  Q\left( \theta_k |\theta_k \right) $
\caption{Modified {\bf M-Step:} for NGEM:}\label{algo:NGEM}
\end{algorithm}

\vspace{6pt}

The NEM algorithm schema also allows for some variations outside the scope of the EM algorithm. These involve modifications to the noise sampling step $\mathbf{N_S}${\bf -Step} or to the noise addition step $\mathbf{N_A}${\bf -Step}

One such modification does not require an additive noise term $n_i$ for each $y_i$. This is useful when the NEM condition is stringent because then noise sampling can be time intensive. This variant changes the $\mathbf{N_S}${\bf -Step} by picking a random or deterministic sub-selection of $\mathbf{y}$ to modify. Then it samples the noise subject to the NEM condition for those sub-selected samples. This is the Partial Noise Addition NEM (PNA-NEM).

\vspace{6pt}

\begin{algorithm}[H]
\DontPrintSemicolon
$\mathcal{I} \leftarrow \left\{1 \dots  M\right\}$\;
$\mathcal{J} \leftarrow$ SubSelection($\mathcal{I}$)\;
\For{$j \in \mathcal{J}$}{
$n_\iota \leftarrow k^{-\tau}  \times$ NEMNoiseSample($y_\iota$) 
}
\caption{Modified  $\mathbf{N_S}${\bf -Step} for PNA-NEM}
\end{algorithm} 

\vspace{6pt}

{The NEM algorithm and its variants need a NEM noise generating procedure \textsc{NEMNoiseSample($\mathbf{y}$)}. The procedure returns a NEM--compliant noise sample $n_j$ at the desired noise level $\sigma_N$ for the current data sample. This procedure will change with the EM data model. The noise generating procedure for the GMM and CMM models derive from Corollaries $2$ and $3$. The $1$-D noise-generating procedure for the GMM simulations is as follows:}

\vspace{6pt}

\begin{algorithm}[H]
\DontPrintSemicolon
\SetKwInOut{Input}{Input}
\SetKwInOut{Output}{Output}
\Input{$y$ and $\sigma_N$ : current data sample and noise level}
\Output{$n$ : noise sample satisfying NEM condition}
\BlankLine
$N(y) \leftarrow \bigcap_j N_j(y)$ \;
$n \leftarrow $ a sample from distribution $TN(0,\sigma_N | N(y))$\;
\caption{ NEMNoiseSample for GMM-- and CMM--NEM}
\end{algorithm} 

\vspace{6pt}

\noindent { where $TN(0,\sigma_N | N(y))$ is the normal distribution $N(0,\sigma_N)$ truncated to the support set $N(y)$. The set $N(y)$ is the interval intersection from ({\ref{eq:intersect-nem-sets}}). Multi-dimensional versions of the generator can apply the procedure component-wise.}

\subsection{NEM via Deterministic Interference}

The original formulation of the NEM theorem and the NEM conditions address the use of random perturbations in the observed data. But the NEM conditions do not preclude the use of deterministic perturbations in the data. Such nonrandom perturbation or \emph{deterministic interference} falls under the ambit of the NEM theorem since any (deterministic) constant is also a random variable with a degenerate distribution.

This interpretation of the NEM theorem leads to another notable variant of the NEM algorithm(Algorithm~\ref{algo:DIEM}): the \emph{deterministic interference EM} (DIEM) algorithm. This algorithm adds deterministic sample-dependent perturbations to the data using only perturbations that satisfy the NEM condition for the data model. DIEM also applies a cooling schedule like NEM. 

\vspace{4pt}

\begin{algorithm}[H]
\DontPrintSemicolon
\SetKwInOut{Input}{Input}
\SetKwInOut{Output}{Output}
\SetKwFunction{SamplePertubation}{SamplePertubation}
\SetKwFunction{NEMSet}{NEMSet}
\Input{$\mathbf{y} = \left( y_1,\ldots, y_M \right)$ : vector of observed incomplete data}
\Output{ $\hat{\theta}_{DIEM}$ : DIEM estimate of parameter $\theta$}
\BlankLine
\While{($\|\theta_k-\theta_{k-1}\| \geq 10^{-tol}$)}{
$\mathbf{N_S}${\bf -Step:} $\mathbf{n} \leftarrow k^{-\tau}  \times$ \SamplePertubation{$\mathbf{y}$} \;
where \SamplePertubation{$\mathbf{y}$} $\in$ $N_j(\mathbf{y})$ \;
$\mathbf{N_A}${\bf -Step:} $\mathbf{y}_\dagger \leftarrow \mathbf{y} + \mathbf{n}$ \;
{\bf E-Step:} $Q \left( \theta |\theta_k \right) \leftarrow\E_{\mathbf{Z}|\mathbf{y},\theta_k}  \left[ \ln f(\mathbf{y}_\dagger , \mathbf{Z}|\theta) \right] $ \;
{\bf M-Step:} $\theta_{k+1} \leftarrow \argmax{\theta} \left\{ Q\left( \theta |\theta_k \right) \right\}$\;
$k \leftarrow k+1$\;
}
$\hat{\theta}_{DIEM} \leftarrow \theta_k$
\caption{The Deterministic Interference EM Algorithm}\label{algo:DIEM}
\end{algorithm}

\vspace{6pt}

\noindent The only difference is in the $\mathbf{N_S}$-Step, the ``noise'' sampling step.

\vspace{4pt}

\begin{algorithm}[H]
\DontPrintSemicolon
\SetKwInOut{Input}{Input}
\SetKwInOut{Output}{Output}
\Input{$y$ and $s_N$ : current data sample and perturbation scale $(s_N\in[0,1])$}
\Output{$n$ : sample perturbation in the NEM set for $y$}
\BlankLine
$N(y) \leftarrow \bigcap_j N_j(y)$ \;
$n \leftarrow s_N\times \textrm{Centroid}[N(y)]$\;
\caption{ SamplePertubation for DIEM}
\end{algorithm} 

\vspace{4pt}

Figure~\ref{fg:GaussDIEM} below shows how the DIEM algorithm performs on the same EM estimation problem in Figure~\ref{fg:GaussNEM}. The confidence bars here measure variability caused by averaging convergence times across multiple instances of the data model. This simulation added deterministic perturbations starting at a value located somewhere on the line between the origin and the center of the NEM set for the current data sample. The initial perturbation factor $s_N$ controls how far from the origin the perturbations start. These perturbations then decay to zero at an inverse-squared cooling rate as iteration count increases.

\begin{figure}[ht!]
\centerline{  \includegraphics[width=0.75\textwidth]{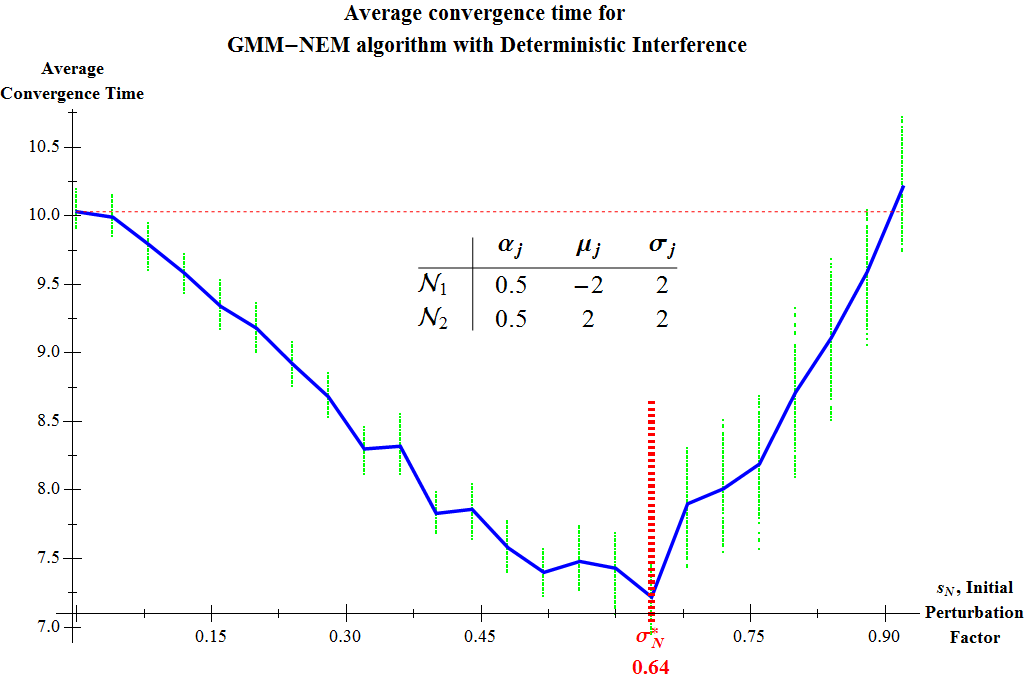} }
\caption[GMM-NEM using deterministic interference on the data samples instead of random noise]{
	Plot showing effects of GMM-NEM when using deterministic interference on the data samples instead of random noise. The data model and experimental setup is the same as in Figure~\ref{fg:GaussNEM}. This DIEM algorithm perturbs the data samples with deterministic values in valid NEM sets. The scale of the perturbations decreases with iteration count just like in the NEM case. This GMM-DIEM algorithm converges faster than the regular EM algorithm on average. The improvement is roughly equivalent to the improvement in the random-noise GMM-NEM---about a $28\%$ improvement over the baseline EM convergence time.
}
\label{fg:GaussDIEM}
\end{figure}

A chaotic dynamical system~\parencite{ott2002, alligood-sauer-yorke1997} can also provide the deterministic interference for the DIEM algorithm. This is the Chaotic EM (CEM) algorithm. Figure~\ref{fg:ChaosNEM} shows an example of a GMM-CEM using deterministic interference from the chaotic logistic map
\begin{equation}
	z_{t+1} = 4z_t(1-z_t)
	\label{eq:logisticMap}
\end{equation}
with initial value $z_0=0.123456789$~\parencite{mitaim-kosko1998SR,ippen-lindner-ditto93}. This logistic dynamical system starting at $z_0$ produces seemingly random samples from the unit interval $[0,1]$. The GMM-CEM uses the scale parameter $A_N$ to fit logistic map samples to the NEM sets for each data sample. The effective injective noise has the form 
\begin{equation}
	n_t = A_N z_t\;.
\end{equation}
$A_N$ cools at an inverse-square rate.
\begin{figure}[ht!]
\centerline{  \includegraphics[width=0.75\textwidth]{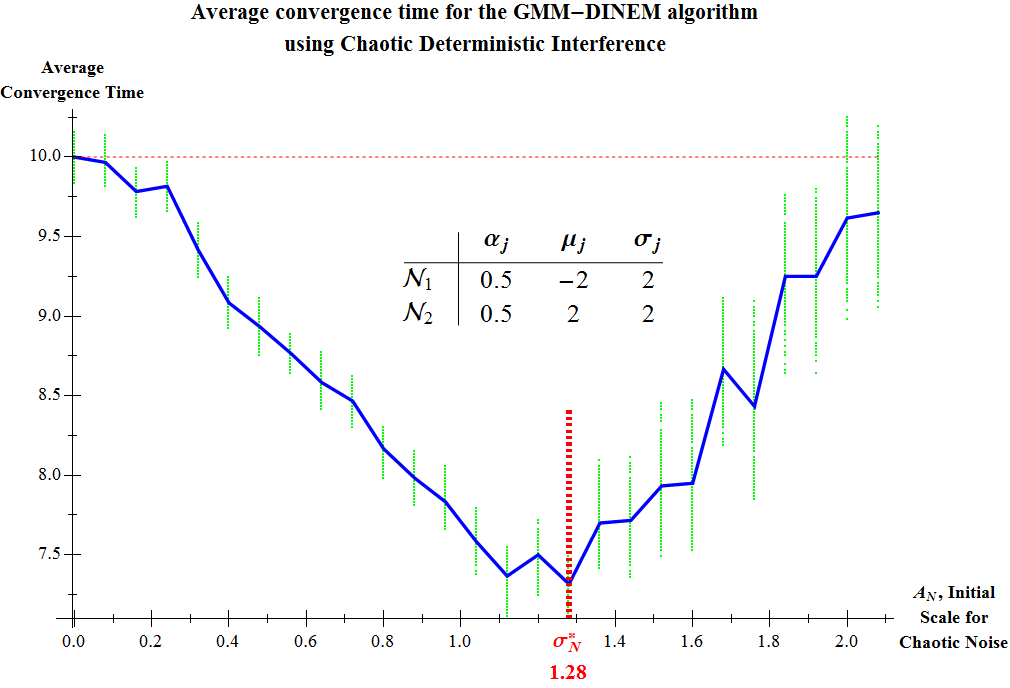} }
\caption[GMM-CEM using chaotic deterministic interference on the data samples instead of random noise]{
	Plot showing effects of GMM-CEM using chaotic deterministic interference on the data samples instead of random noise. The data model and experimental setup is the same as in Figure~\ref{fg:GaussNEM}. This CEM algorithm perturbs the data samples with samples from  a chaotic logistic map scaled to fit in valid NEM sets. The scale of the chaotic perturbations decreases with iteration. This GMM-CEM algorithm converges faster than the regular EM algorithm on average. The improvement is roughly equivalent to the improvement in the random-noise GMM-NEM---about a $26\%$ improvement over the baseline EM convergence time.
}
\label{fg:ChaosNEM}
\end{figure}

\section{Sample Size Effects in the NEM algorithm} \label{sec:samp-sz-effects}

The noise-benefit effect depends on the size of the GMM data set. Analysis of this effect depends on the probabilistic event that the noise satisfies the GMM--NEM condition for the entire sample set. This analysis also applies to the Cauchy mixture model because its NEM condition is the same as the GMM's. Define $A_k$ as the event that the noise $N$ satisfies the GMM--NEM condition for the $k^{th}$ data sample:
\begin{equation}
	A_k = \{N^2 \leq 2 N(\mu_j - y_k)|\; \forall j \} \;. \label{eq:sub-A_M-Event}
\end{equation}
Then define the event $A_M$ that noise random variable $N$ satisfies the GMM--NEM condition for each data sample as
\begin{align}
	A_M =& \bigcap_k^{M} A_k \\
	= & \left\{ N^2 \leq 2N(\mu_j -y_k)|\; \forall j \text{ and } \forall k \right\} \;. \label{eq:AM-Event}
\end{align}
This construction is useful for analyzing NEM when we use \emph{independent and identically distributed (i.i.d.)} noise $N_k \stackrel{d}{=} N$ for all $y_k$ while still enforcing the NEM condition.

\subsection{Large Sample Size Effects}
The next theorem shows that the set $A_M$ shrinks to the singleton set $\{0\}$ as the number $M$ of samples in the data set grows. So the probability of satisfying the NEM condition for i.i.d. noise samples goes to zero as $M \rightarrow \infty$ with probability one.

\begin{thm}{\bf{[Large Sample GMM-- and CMM--NEM]:}}\label{thm:lg-sampCGMM}\\
Assume that the noise random variables are i.i.d. The set i.i.d. noise values 
\begin{equation}
	A_M = \left\{ N^2 \leq 2N(\mu_j -y_k)|\; \forall j \text{ and } \forall k \right\}
\end{equation}
that satisfy the Gaussian (Cauchy) NEM condition for all data samples $y_k$ decreases with probability one to the set $\{0\}$ as $M \rightarrow \infty$:
\begin{equation}
	P\left(\lim_{M \rightarrow \infty}A_M  =  \{0\} \right) = 1\;.
\end{equation}
\end{thm}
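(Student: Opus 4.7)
The plan is to exploit the geometric characterization of the per-sample NEM set developed in Section~\ref{subsec:NEM-Geom} together with a continuity-of-probability argument on the i.i.d.\ data samples.

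First I would analyze the structure of the single-sample admissible set. From Corollary~\ref{cor:GMM-NEM} (or Corollary~\ref{cor:CMM-NEM}), the NEM condition for data sample $y_k$ picks out $A_k = \bigcap_j N_j(y_k)$, where each $N_j(y_k)$ is an interval with one endpoint at $0$ and the other at $2(\mu_j - y_k)$. Set $\mu_{\min} = \min_j \mu_j$ and $\mu_{\max} = \max_j \mu_j$. A case split on the sign of $\mu_j - y_k$ gives: if $y_k < \mu_{\min}$ then every $\mu_j - y_k > 0$ and $A_k = [0,\,2(\mu_{\min}-y_k)]$; if $y_k > \mu_{\max}$ then every $\mu_j - y_k < 0$ and $A_k = [2(\mu_{\max}-y_k),\,0]$; and if $\mu_{\min} \leq y_k \leq \mu_{\max}$ then $\mu_j - y_k$ takes both nonnegative and nonpositive values across $j$, forcing $A_k = \{0\}$.

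Next I would use that $A_M = \bigcap_{k=1}^{M} A_k$ is monotonically decreasing in $M$, so $\lim_{M\to\infty} A_M = \bigcap_{k=1}^{\infty} A_k$. By the previous step it suffices to show that, with probability one, at least one $y_k$ lands in $[\mu_{\min}, \mu_{\max}]$. Since the Gaussian (respectively Cauchy) mixture density is strictly positive on all of $\mathbb{R}$, $p := P(\mu_{\min} \leq Y \leq \mu_{\max}) > 0$. Independence of the samples then yields
\begin{equation}
P\Bigl(\,y_k \notin [\mu_{\min}, \mu_{\max}] \text{ for all } k \leq M\,\Bigr) \;=\; (1-p)^M \;\xrightarrow{M\to\infty}\; 0.
\end{equation}
Continuity of probability along this decreasing sequence of events gives $P(\forall k:\, y_k \notin [\mu_{\min}, \mu_{\max}]) = 0$, so almost surely some $y_k$ lies in $[\mu_{\min}, \mu_{\max}]$. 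That single $k$ collapses $A_k$ to $\{0\}$ and hence $\lim_M A_M = \{0\}$ with probability one.

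The main obstacle is mostly bookkeeping: making the notion of set-limit precise (here handled cleanly by the monotone nesting $A_{M+1} \subseteq A_M$) and covering the boundary case $y_k \in \{\mu_{\min}, \mu_{\max}\}$, which already collapses the corresponding $N_j(y_k)$ to $\{0\}$ and so requires no separate treatment. The argument is identical for the Cauchy case since it uses only positivity of the mixture density on $\mathbb{R}$ and the common quadratic form of the GMM--NEM and CMM--NEM conditions.
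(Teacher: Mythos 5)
Your proof is correct and follows essentially the same route as the paper's: the same case analysis of the per-sample sets $A_k=\bigcap_j N_j(y_k)$, the same reduction to showing that some sample almost surely lands in $[\min_j\mu_j,\max_j\mu_j]$, and the same conclusion via the nested intersection $A_M=\bigcap_k A_k$. The only difference is in how the almost-sure existence of such a sample is established: the paper invokes Borel's law of large numbers (Lemma~\ref{lem:borel-lln}), while you use the more elementary computation $(1-p)^M\to 0$ together with continuity of probability, both resting on the same fact that $P\left(Y\in[\min_j\mu_j,\max_j\mu_j]\right)>0$.
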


\begin{proof}
Define the NEM-condition event $A_k$ for a single sample $y_k$ as
\begin{equation}
	A_k = \{N^2 \leq 2 N(\mu_j - y_k)|\; \forall j \} \;.
\end{equation}
Then $N^2 \leq 2N (\mu_j - y_k)$ for all $j$ if $N$ satisfies the NEM condition. So
\begin{align}
	N^2 - 2N (\mu_j -y_k) &\leq 0  \quad \text{for all} \; j \\
	\textrm{and} \quad N(N - 2 (\mu_j -y_k)) &\leq 0  \quad \text{for all} \; j \; .
\end{align}
This quadratic inequality's solution set $(a_j, b_j)$ for $j$ is
\begin{equation}
	I_j = [a_j,b_j] = 
	\begin{cases} 
		\left[ 0,2 (\mu_j -y_k) \right] & \text{if} \quad y_k<\mu_j\\
		\left[ 2 (\mu_j -y_k), 0 \right] & \text{if} \quad y_k>\mu_j \\
		\{0\} & \text{if} \quad y_k \in \left[\min \mu_j, \max \mu_j \right]
	\end{cases} \; .
\end{equation}
Define $b^+_k$ and $b^-_k$ as $b^+_k = 2\min_j (\mu_j - y_k)$ and $b^-_k = 2\max_j (\mu_j - y_k)$. Then the maximal solution set $A_k = [a,b]$ over all $j$ is

\begin{align}
	A_k = \bigcap_j^J I_j  =
	\begin{cases} 
		\left[ 0,b^+_k \right] & \text{if} \quad y_k<\mu_j \; \forall j \\
		\left[ b^-_k, 0 \right] & \text{if} \quad y_k>\mu_j \; \forall j \\
		\{0\} & \text{if} \quad y_k \in \left[\min \mu_j, \max \mu_j \right]
	\end{cases} \label{eq:Ak-Expand}
\end{align}
where $J$ is the number of sub-populations in the mixture density. There is a sorting such that the $I_j$ are nested for each sub-case in (\ref{eq:Ak-Expand}). So the nested interval theorem \parencite{spivak2006} (or Cantor's intersection theorem \parencite{rudin1976}) implies that $A_k$ is not empty because it is the intersection of nested bounded closed intervals.

$A_k = \{0\}$ holds if the NEM condition fails for that value of $y_k$ This happens when some $I_j$ sets are positive and other $I_j$ sets are negative. The positive and negative $I_j$ sets intersect only at zero. No non-zero value of $N$ will produce a positive average noise benefit. The additive noise $N$ must be zero. 

Write $A_M$ as the intersection of the $A_k$ sub-events:
\begin{align}
	A_M &= \left\{ N^2 \leq 2N(\mu_j -y_k)|\; \forall j \text{ and } \forall k \right\} \\
	 &= \bigcap_k^M A_k \\
	 &=
	\begin{cases} 
		\left[ 0, \min_k b^+_{k} \right] & \text{if} \quad y_k<\mu_j \; \forall j,k \\
		\left[ \max_k b^-_{k}, 0 \right] & \text{if} \quad y_k>\mu_j \; \forall j,k \\
		\{0\} & \text{if} \quad \exists k: y_k \in \left[\min \mu_j, \max \mu_j \right] 
	\end{cases} \; . \label{eq:Set-Spec}
\end{align}
Thus a second application of the nested interval property implies that $A_M$ is not empty.

We now characterize the asymptotic behavior of the set $A_M$. $A_M$ depends on the locations of the samples $y_k$ relative to the sub-population means $\mu_j$. 
Then  $A_M = \{0\}$ if there exists some $k_0$ such that $\min \mu_j \leq y_{k_0} \leq \max \mu_j$. Define the set $S = [\min \mu_j, \max \mu_j]$. Then by Lemma \ref{lem:borel-lln} below $\lim_{M \rightarrow \infty} \#_M(Y_k \in S) > 0$ holds with probability one. So there exists with probability one a $k_0 \in \{1 \dots M\} $ such that $y_{k_0} \in S$ as $M \rightarrow \infty$. Then $A_{k_0}=\{0\}$ by equation (\ref{eq:Set-Spec}). Then with probability one:
\begin{align} 
	\lim_{M \rightarrow \infty} A_M & = A_{k_0} \cap \lim_{M \rightarrow \infty} \bigcap^M_{k \neq k_0} A_k \\
	&= \{0\} \cap \lim_{M \rightarrow \infty}  \bigcap^M_{k \neq k_0} A_k  \; .
\end{align}
\noindent So
\begin{equation}
	\lim_{M \rightarrow \infty} A_M = \{0\} \quad \text{with probability one}
\end{equation}
since $A_M$ is not empty by the nested intervals property and since $0 \in A_k$ for all $k$.
\end{proof}

\begin{lem}{\bf{[Borel's Law of Large Numbers]:}}\label{lem:borel-lln} \\
\noindent Suppose that $S \subset \mathbb{R}$ is Borel-measurable and that $\mathbb{R}$ is the support of the pdf of the random variable $Y$. Let $M$ be the number of random samples of $Y$. Then as $M \rightarrow \infty$:
\begin{equation}
	\frac{\#_M(Y_k \in S)}{M} {\rightarrow}  P(Y \in S) \; \text{with probability one.}
\end{equation}
where $\#_M(Y_k \in S)$ is of the random samples $y_1, \ldots,y_M$ of $Y$ that fall in $S$.
\end{lem}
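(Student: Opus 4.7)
The plan is to recognize this statement as a direct application of the Strong Law of Large Numbers (SLLN) to indicator random variables, so the proof reduces to reformulating the count $\#_M(Y_k \in S)$ as a sum of i.i.d.\ Bernoulli random variables.

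First I would define the indicator random variables $X_k = \mathbf{1}_S(Y_k)$ for $k=1,2,\ldots,M$. Because $S$ is Borel-measurable and $Y$ has a pdf supported on $\mathbb{R}$, the probability $p = P(Y \in S) = \int_S f_Y(y)\,dy$ is well-defined, so each $X_k$ is a Bernoulli random variable with $E[X_k]=p$ and variance $p(1-p) \leq 1/4$. The $Y_k$ are i.i.d.\ copies of $Y$, so the $X_k$ inherit independence and identical distribution. By construction
\begin{equation}
\#_M(Y_k \in S) \;=\; \sum_{k=1}^{M} X_k,
\end{equation}
so the lemma's claim is equivalent to asserting that $\frac{1}{M}\sum_{k=1}^{M} X_k \to p$ with probability one.

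Next I would invoke Kolmogorov's SLLN \parencite{durrett2010}: any sequence of i.i.d.\ random variables with finite mean satisfies $\frac{1}{M}\sum_{k=1}^M X_k \to E[X_1]$ almost surely. Since $E[X_1]=p = P(Y \in S)$ and $|X_k|\leq 1$, the hypotheses hold trivially, giving the conclusion. If instead I wanted a self-contained proof in the original Borel spirit, I would bound the fourth central moment
\begin{equation}
E\!\left[\Big(\tfrac{1}{M}\textstyle\sum_{k=1}^M (X_k - p)\Big)^4\right] = O(M^{-2})
\end{equation}
using the independence and boundedness of the $X_k$, then apply Markov's inequality followed by the first Borel--Cantelli lemma to each event $\{|M^{-1}\sum_k(X_k-p)| > \epsilon\}$, yielding almost-sure convergence.

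There is no real obstacle here because the statement is essentially a textbook result; the only subtlety is confirming that the measurability of $S$ guarantees $X_k$ is a bona fide random variable and that $p$ is well-defined, both of which follow immediately from $S$ being Borel and $Y$ having a pdf. The lemma then slots directly into the proof of Theorem~\ref{thm:lg-sampCGMM}, where it is applied to the set $S=[\min_j \mu_j, \max_j \mu_j]$ which has positive probability under any pdf supported on $\mathbb{R}$, ensuring that $\#_M(Y_k \in S) \to \infty$ almost surely.
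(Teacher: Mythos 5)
Your proposal is correct and follows essentially the same route as the paper: the paper likewise defines the indicator random variable $\mathbb{I}_S(Y)$, writes $\#_M(Y_k \in S)/M$ as the sample mean of i.i.d.\ indicators, and applies the strong law of large numbers to conclude convergence to $\E[\mathbb{I}_S] = P(Y \in S)$ with probability one. Your extra fourth-moment/Borel--Cantelli sketch is a valid self-contained alternative but is not needed, and your closing remark about how the lemma feeds into Theorem~\ref{thm:lg-sampCGMM} matches the paper's use of it.
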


\begin{proof} 
Define the indicator function random variable $\mathbb{I}_S(Y)$ as
\begin{align}
\mathbb{I}_S(Y) = 
\begin{cases}
1 \quad Y \in S \\
0 \quad Y \notin S
\end{cases}\;.
\end{align}
The strong law of large numbers implies that the sample mean $\overline{\mathbb{I}}_S $
\begin{align}
\overline{\mathbb{I}}_S = \frac{\sum_k^M \mathbb{I}_S(Y_k) }{M} = \frac{\#_M(Y_k \in S)}{M} 
\end{align}
converges to $\E [\mathbb{I}_S] $ with probability one. Here $\#_M(Y_k \in S)$ is the number of random samples $ Y_1, \ldots,Y_M$ that fall in the set $S$. But $\E [\mathbb{I}_S] =  P(Y \in S)$. So with probability one:
\begin{equation}
\frac{\#_M(Y_k \in S)}{M} {\rightarrow}  P(Y \in S)
\end{equation}
as claimed.

\noindent Then $P(S) > 0$ implies that
\begin{equation}
\lim_{M \rightarrow \infty} \frac{\#_M(Y_k \in S)}{M} > 0 
\end{equation}
and $\lim_{M \rightarrow \infty} {\#_M(Y_k \in S)} > 0$ with probability one since $M>0$.
\end{proof} 

The proof shows that larger sample sizes $M$ place tighter bounds on the size of $A_M$ with probability one. The bounds shrink $A_M$ all the way down to the singleton set $\{0\}$ as $M \rightarrow \infty$. $A_M$ is the set of values that identically distributed noise $N$ can take to satisfy the NEM condition for all $y_k$. $A_M = \{0\}$ means that $N_k$ must be zero for all $k$ because the $N_k$ are \emph{identically} distributed. This corresponds to cases where the NEM Theorem cannot guarantee improvement over the regular EM using just i.i.d. noise. So identically distributed noise has limited use in the GMM- and CMM-NEM framework.

Theorem 2 is an ``probability-one'' result. But it also implies the following convergence-in-probability result. Suppose $\tilde N$ is an arbitrary \emph{continuous} random variable. Then the probability $P(\tilde N \in A_M)$ that $\tilde N$ satisfies the NEM condition for all samples falls to $P(\tilde N \in \{0\}) = 0$ as $M \rightarrow \infty$. Figure \ref{fg:NEM-SetOccupation} shows a Monte Carlo simulation of how $P(\tilde N \in A_M)$ varies with $M$.

\begin{figure}[ht!]
\centerline{ 
\includegraphics[width=0.75\textwidth]{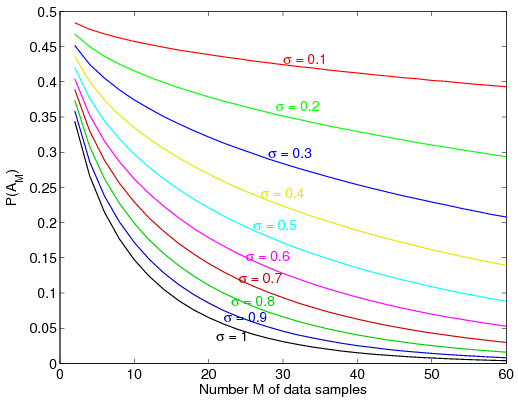} }
\caption[Probability of satisfying the NEM sufficient condition with different sample sizes $M$ and at different noise standard deviations $\sigma_N$]{
	Probability of satisfying the NEM sufficient condition with different sample sizes $M$ and at different noise standard deviations $\sigma_N$. The Gaussian mixture density has mean $\mu= [0, 1]$, standard deviations $\sigma_N = [1, 1]$, and weights $\alpha = [0.5, 0.5]$. The number $M$ of data samples varies from $M = 1$ to $M = 60$. Noise standard deviation varies from $\sigma_N = 0.1$ (top curve) to $\sigma_N = 1.0$ (bottom curve) at $0.1$ incremental step. Monte Carlo simulation computed the probability $P(A_M)$ in equation (\ref{eq:AM-Event}) from $10^6$ samples.
}
\label{fg:NEM-SetOccupation}
\end{figure}

Using non-identically distributed noise $N_k$ avoids the reduction in the probability of satisfying the NEM-condition for large $M$. The NEM condition still holds when $N_k \in A_k$ for each $k$ even if $N_k \notin A_M = \bigcap_k A_k$. This noise sampling model adapts the $k^{th}$ noise random variable $N_k$ to the $k^{th}$ data sample $y_k$. This is the general \emph{NEM noise model}. Figures \ref{fg:GaussNEM} and \ref{fg:LogConvex} use the NEM noise model. This model is equivalent to defining the global NEM event $\tilde A_M$ as a Cartesian product of sub-events $\tilde A_M = \prod_k^M A_k $ instead of the intersection of sub-events $A_M = \bigcap_k A_k$. Thus the bounds of $\tilde A_M$ and its coordinate projections no longer depend on sample size $M$.

Figures \ref{fg:GaussNEM} and \ref{fg:LogConvex} use the Cartesian product noise event. This is standard for NEM algorithms.

Figure~{\ref{fg:NEMCompare-varyM}} and~{\ref{fg:NEMCompare-SingleM}} compare the performance of the NEM algorithm with a pseudo--variant of simulated annealing on the EM algorithm. This version of EM adds annealed i.i.d. noise to data samples $y$ without screening the noise through the NEM condition. Thus we call it \emph{blind} noise injection\footnote{
	Blind noise injection differs from both standard simulated annealing and NEM. Standard simulated annealing (SA) injects independent noise into the optimization variable $\theta$. While blind noise injection (and NEM) injects noise into the data $\mathbf{y}$. Blind noise injection also differs from NEM because NEM noise depends on the data via the NEM condition. While blind noise injection uses noise that is independent of the data (hence the term ``blind''). Thus while blind noise injection borrows characteristics from both NEM and SA, it is very different from both noise injection methods.
}.
Figure {\ref{fg:NEMCompare-varyM}} shows that the NEM outperforms blind noise injection at all tested sample sizes $M$. The average convergence time is about $15\%$ lower for the NEM noise model than for the blind noise model at large values of $M$. The two methods are close in performance only at small sample sizes. This is a corollary effect of Theorem 2 from \Sec\ref{subsec:small-effects}. Figure {\ref{fg:NEMCompare-SingleM}} shows that NEM outperforms blind noise injection at a single sample size $M=225$. But it also shows that blind noise injection may fail to give \emph{any} benefit even when NEM achieves faster average EM convergence for the same set of samples. Thus blind noise injection (or simple simulated annealing) performs worse than NEM and sometimes performs worse than EM itself.

\begin{figure}[ht!]
\centerline{ 
\includegraphics[width=0.75\textwidth]{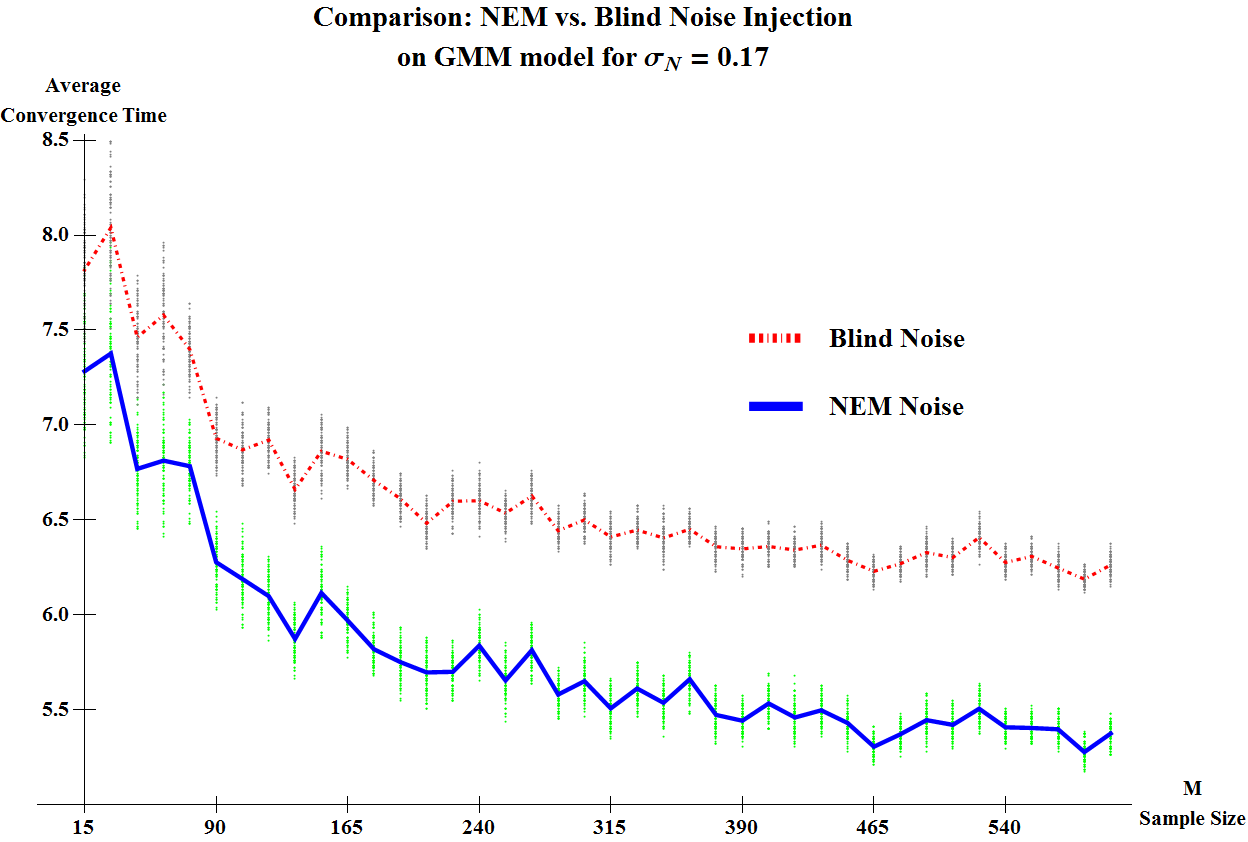} }
\caption[Comparing the effect of the NEM noise sampling model on GMM--EM at different sample sizes $M$]{
	Comparing the effect of the NEM noise sampling model on GMM--EM at different sample sizes $M$. The dependent noise model uses the NEM condition. The independent noise model does not check the NEM condition. So independent noise model has a lower probability of satisfying the NEM condition for all values of $M$. The plot shows that the dependent noise model outperforms the independent noise model at all sample sizes $M$. The dependent noise model converges in about $15\%$ fewer steps than the independent noise model for large $M$. This Gaussian mixture density has sub-population means $\mu= \{0, 1\}$, standard deviations $\sigma = \{1, 1\}$, and weights $\alpha =\{0.5, 0.5\}$. The NEM procedure uses the annealed Gaussian noise with initial noise power at $\sigma_N = 0.17$.
}
\label{fg:NEMCompare-varyM}
\end{figure}

\begin{figure}[ht!]
\centerline{ 
\includegraphics[width=0.75\textwidth]{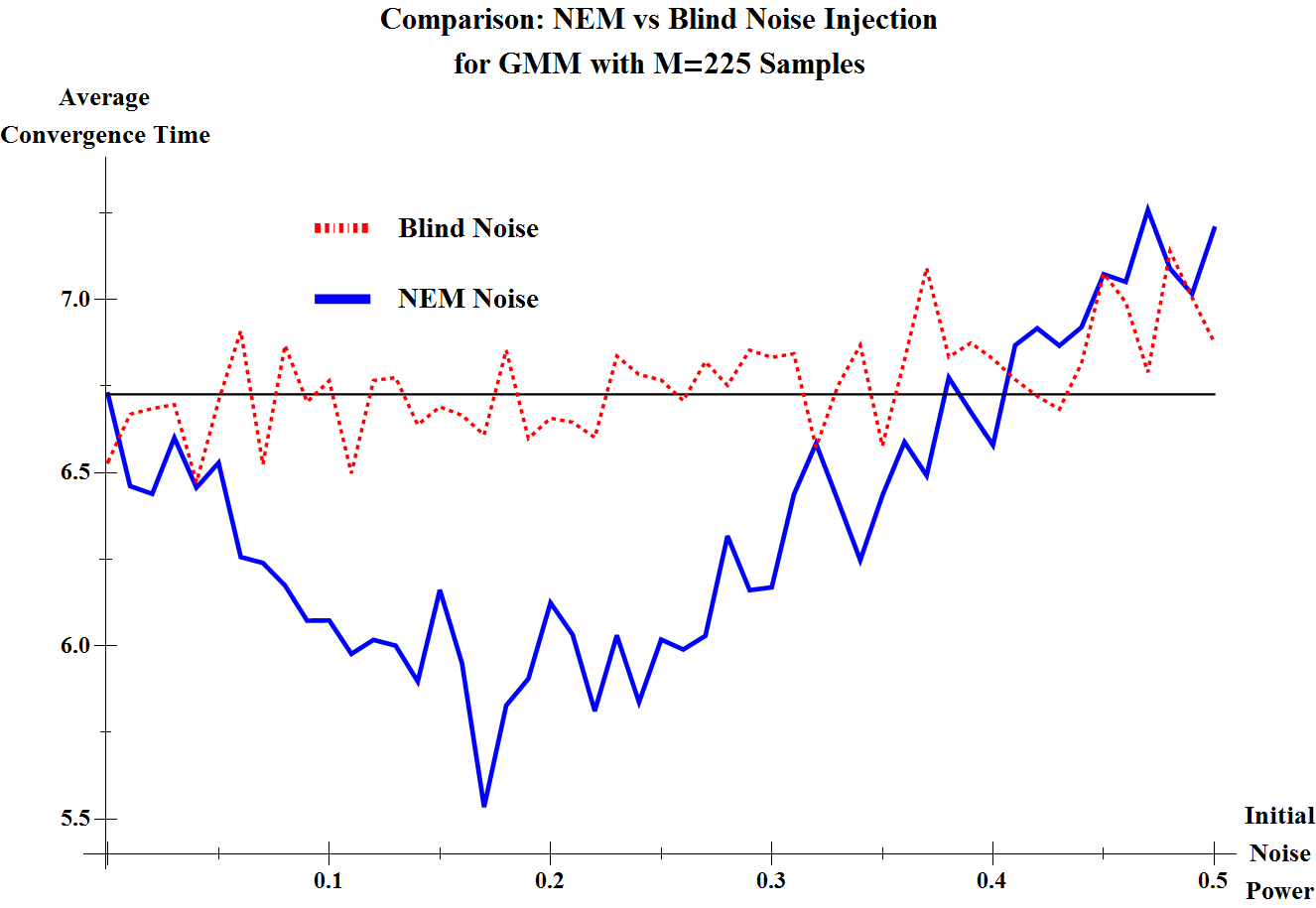} }
\caption[Comparing the effects of noise injection via simulated annealing vs. noise injection via NEM on GMM--EM]{
	Comparing the effects of noise injection via simulated annealing vs. noise injection via NEM on GMM--EM for a single sample size of $M=225$. The dependent noise model uses the NEM condition. The independent noise model adds annealed noise without checking the NEM condition. So independent noise model has a lower probability of satisfying the NEM condition by Theorem \ref{thm:lg-sampCGMM}. The plot shows that NEM noise injection outperforms the simulated annealing noise injection. NEM converges up to about $20\%$ faster than the simulated annealing for this model. And simulated annealing shows no reduction in average convergence time.The Gaussian mixture density has mean $\mu= \{0, 1\}$, standard deviations $\sigma_N = \{1, 1\}$, and weights $\alpha = \{0.5, 0.5\}$ with $M = 225$ samples.
}
\label{fg:NEMCompare-SingleM}
\end{figure}

\subsection{Small Sample Size: Sparsity Effect}\label{subsec:small-effects}

The i.i.d noise model in Theorem~\ref{thm:lg-sampCGMM} has an important corollary effect for sparse data sets. The size of $A_M$ decreases monotonically with $M$ because $A_M = \bigcap^M_k A_k$. Then for $M_0 < M_1$:
\begin{align} 
	P(N \in A_{M_0}) &\geq P(N \in A_{M_1})
\end{align}
since $M_0 < M_1$ implies that $A_{M_1} \subset A_{M_0}$.  Thus arbitrary noise $N$ (i.i.d \emph{and} independent of $Y_k$) is more likely to satisfy the NEM condition and produce a noise benefit for smaller samples sizes $M_0$ than for larger samples sizes $M_1$. The probability that $N \in A_M$ falls to zero as $M \rightarrow \infty$. So the strength of the i.i.d. noise benefit falls as $M \rightarrow \infty$. {Figure \ref{fg:SparseNEM}} shows this sparsity effect. The improvement of relative entropy $D(f*||f_{NEM})$ decreases as the number of samples increases: the noise-benefit effect is more pronounced when the data is sparse.

\begin{figure}[ht!]
\centerline{ \includegraphics[width=0.75\textwidth]{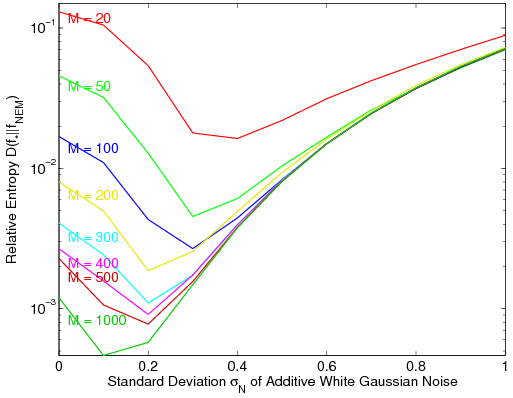} }
\caption[Noise benefits and sparsity effects in the Gaussian mixture NEM at different sample sizes $M$]{Noise benefits and sparsity effects in the Gaussian mixture NEM at different sample sizes $M$. The Gaussian mixture density consists of two sub-populations with  mean $\mu_1 = 0$ and $\mu_2 = 1$ and standard deviations $\sigma_1 = \sigma_2 = 1$. The number $M$ of data samples varies from $M = 20$ (top curve) to $M = 1000$ (bottom curve). The noise standard deviation $\sigma$ varies from $\sigma_N = 0$ (no noise or standard EM) to $\sigma_N = 1$ at 0.1 incremental steps. The plot shows the average relative entropy $D(f_*||f_{\rm NEM})$ over 50 trials for each noise standard deviation $\sigma_N$. $f_* = f(x|\theta)$ is the true pdf and $f_{\rm NEM} = f(x|\theta_{\rm NEM})$ is the pdf of NEM-estimated parameters. }
\label{fg:SparseNEM}
\end{figure}

\subsection{Asymptotic NEM Analysis}
We show last how the NEM noise benefit arises by way of the strong law of large numbers and the central limit theorem.  This asymptotic analysis uses the sample mean $\overline{W}_M$:
\begin{equation}
	\overline{W}_M = \frac{1}{M} \sum_{k=1}^M W_k\;. \label{eq:WBar-Defn}
\end{equation}
The $M$ i.i.d. terms $W_k$ have the logarithmic form
\begin{equation}
	W_k = \ln \frac{f(Y_k+N_k,Z_k|\theta_t)}{ f(Y_k,Z_k|\theta_t)}\;. \label{eq:Wk-Defn}
\end{equation}

The $W_k$ terms are independent because functions of independent random variables are independent.  The random sampling framework of the EM algorithm just means that the underlying random variables are themselves i.i.d.  Each $W_k$ term gives a sampling version of the left-hand side of (\ref{eq:LogCondn}) and thus of the condition that the added noise makes the signal value more probable.

We observe first that either the strong or weak law of large numbers \parencite{billingsley1995} applies to the sample mean $\overline{W}_M$.  The i.i.d. terms $W_k$ have population mean $\mu_W = \E[W]$ and finite population variance $\sigma^2_W = V[W]$.  Then the strong (weak) law of large numbers states that the sample mean $\overline{W}_M$ converges to the population mean $\mu_W$: \vspace{-0.1in}
\begin{equation}
	\overline{W}_M \rightarrow \mu_W
\end{equation}
with probability one (in probability) \parencite{fellerII,billingsley1995,durrett2010}.

The population mean $\mu_W$ differs from $\mu_W^*$ in general for a given $t$ because $\theta_t$ need not equal $\theta_*$ until convergence.  This difference arises because the expectation $\mu_W$ integrates against the pdf $ f(y,z,n|\theta_t)$ while the expectation $\mu_W^*$ integrates against the pdf $f(y,z,n|\theta_*)$.  But $\mu_W \rightarrow \mu_W^*$ as $\theta_t \rightarrow \theta_* $.  So the law of large numbers implies that \vspace{-0.06in}
\begin{equation} \overline{W}_M \rightarrow \mu_W^* \end{equation} 
with probability one (in probability). So the sample mean converges to the expectation in the positivity condition (\ref{eq:Goal}).

The central limit theorem (CLT) applies to the sample mean $\overline{W}_M$ for large sample size $M$.  The CLT states that the standardized sample mean of i.i.d. random variables with finite variance converges in distribution to a standard normal random variable $Z \sim N(0, 1)$ \parencite{billingsley1995}.  A noise benefit occurs when the noise makes the signal more probable and thus when $\overline{W}_M>0$. Then standardizing $\overline{W}_M$ gives the following approximation for large sample size $M$: 
\begin{align}
	P(\overline{W}_M>0) &= P\left(\frac{\overline{W}_M - \mu_W}{\sigma_W/\sqrt{M}} >  -\frac{\mu_W}{\sigma_W/\sqrt{M}} \right) \\
	& \approx P\left(Z > -\frac{\sqrt{M} \mu_W}{\sigma_W}\right) \textrm{ by the CLT} \\
	&=   \Phi\left( \frac{\sqrt{M} \mu_W}{\sigma_W}\right) \label{eq:PhiW}
\end{align}
where $\Phi$ is the cumulative distribution function of the standard normal random variable $Z$.  So $P(\overline{W}_M>0) >\frac{1}{2}$ if $\mu_W>0$ and $P(\overline{W}_M>0) <\frac{1}{2}$ if $\mu_W<0$. Suppose the positivity condition (\ref{eq:E-LogCondn}) holds  such that $\mu_W^* > 0$. Then this probability $P(\overline{W}_M>0)$ goes to one as the sample size $M$ goes to infinity and as $\theta_k$ converges to $\theta_*$:
\begin{equation}
	\lim_{M \rightarrow \infty} P(\overline{W}_M>0) =1 \;.     
\end{equation}
The same argument and (\ref{eq:PhiW}) show that
\begin{equation}
	\lim_{M \rightarrow \infty} P(\overline{W}_M>0) =0      
\end{equation}
if the positivity condition (\ref{eq:E-LogCondn}) fails such that $\mu_W^* < 0$.

This analysis suggests a sample--mean version of the NEM condition in (\ref{eq:Goal}):
\begin{equation}
	\overline{W}_M \geq 0 \label{eq:SampMean-NEM}
\end{equation}
where 
\begin{equation}
	\overline{W}_M = \frac{1}{M} \sum_{k=1}^M \ln \frac{f(Y_k+N_k,Z_k|\theta_t)}{ f(Y_k,Z_k|\theta_t)} \;.
\end{equation}
The sample mean NEM condition (\ref{eq:SampMean-NEM}) matches the NEM condition (\ref{eq:Goal}) asymptotically as the number of samples $M \longrightarrow 0$. So the sample mean NEM condition can be a surrogate NEM sufficient condition for ``large'' sample-size data sets. This result stands in counterpoint to the discussion on NEM in the small--sample regime~(\Sec\ref{subsec:small-effects}): independent additive noise can produce EM noise-benefits for small data sets. While the sample--mean NEM condition can produce noise--benefits for large data sets.

The sample mean version may also be useful when there is no analytic NEM condition available for the data model. Such scenarios may occur with analytically intractable likelihoods or with empirical or nonparametric likelihoods~\parencite{owen1988, owen2010}.

\section{Conclusion}
Careful noise injection can speed up the average convergence time of the EM algorithm. The various sufficient conditions for such a noise benefit involve a direct or average effect where the noise makes the signal data more probable.  Special cases include mixture density models and log-convex probability density models. Noise injection for the Gaussian and Cauchy mixture models improves the average EM convergence speed when the noise satisfies a simple quadratic condition. Even blind noise injection can benefit these systems when the data set is sparse. But NEM noise injection still outperforms blind noise injection in all data models tested. An asymptotic argument also shows that the sample-mean version of the EM noise benefit condition obeys a similar positivity condition.  

Future research should address theories and methods for finding optimal noise levels for NEM algorithms. The current NEM implementations use a rudimentary search to find good noise levels. The noise-benefit argument depends on the behavior of the log-likelihood. The NEM condition depends on local variability in the observed log-likelihood surface. This suggests that the Fisher information, a normalized measure of the likelihood's average rate of change, controls the size and location of favorable noise injection regimes. 

Another open research question concerns the optimal \emph{shape} of the additive noise distributions for the NEM algorithm. Most simulations in this dissertation sample scalar noise from a single distribution with varying noise power. Other simulations show that different noise distributions families also cause faster average EM convergence speed as long as the noise satisfies the NEM condition. But there has been no exploration of the relative performance of different noise distributions for noisy EM. An untested conjecture suggests that impulsive noise distributions like alpha-stable distributions~\parencite{zolotarev1986, samorodnitsky-taqqu1994, nikias-shao1995} may help NEM find global ML estimates more easily. Such noise distribution-dependent benefits provide a level of flexibility that is unavailable to the regular EM algorithm.

Future research may also study the comparative effects of random noise versus deterministic interference for EM algorithms. This includes the study of chaotic systems for deterministic interference.

The NEM theorem and algorithms are general. They apply to many data models. The next three chapters demonstrate NEM benefits in three important incomplete data models: mixture models for clustering, hidden Markov models, and feedforward neural networks with hidden layers. The EM tuning algorithms for these models are very popular algorithms: $k$-means clustering, the Baum-Welch algorithm, and backpropagation respectively. We show that NEM produces in speed improvements for these algorithms.

\clearpage


\LinesNotNumbered 

\chapter{NEM Application: Clustering and Competitive Learning Algorithms} \label{ch:NEM-CNBT}

Clustering algorithms feature prominently in large-scale commercial recommendation systems like Google News (news articles)~\parencite{Das-et-Google2007}, Netflix (movies)~\parencite{koren-netflix2008,koren-bell-volinsky2009}, and Amazon (products)~\parencite{linden-et-amazon2003}. Such recommendations often rely on centroid-based clustering algorithms to classify costumers and produce relevant recommendations. These clustering algorithms tend to be computationally expensive and slow~\parencite{linden-et-amazon2003}.

This chapter shows that noise can provably speed up convergence in many centroid-based clustering algorithms. This includes the popular $k$--means clustering algorithm. The clustering noise benefit follows is a direct consequence of the general noise benefit for the EM algorithm (Theorem \ref{thm:NEM}) because many clustering algorithms (including the $k$-means algorithm) are special cases of the EM algorithm\parencite{celeux-govaert1992, xu-wunsch2005}.

The noise benefit for clustering algorithms is a classification accuracy improvement for NEM-based classifiers compared to EM-based classifiers at the same number of training iterations. EM-based clustering algorithms use EM to train underlying data models and then use the optimized data models to classify samples into their best clusters. The data model is usually a mixture model. Both NEM and EM algorithms find locally optimal model parameters in the convergence limit. But faster NEM convergence means that the NEM algorithm gives higher likelihood pre-converged parameters on average than the EM algorithm for fixed iteration limits. The \emph{Clustering Noise Benefit Theorem} (Theorem \ref{thm:cnbt}) below formalizes this classification accuracy noise benefit. 

Figure \ref{fig:Misclass-NEMC} shows a simulation instance of the corollary clustering noise benefit of the NEM Theorem for a two-dimensional GMM with three Gaussian data clusters. Theorem \ref{thm:cnbt} below states that such a noise benefit will occur. Each point on the curve reports how much two classifiers disagree on the same data set. The first classifier is the EM-classifier with fully converged EM-parameters. This is the reference classifier. The second classifier is the same EM-classifier with only partially converged EM-parameters. The two classifiers agree eventually if we let the second classifier's EM-parameters converge. But the figure shows that they agree faster with some noise than with no noise. 

We call the normalized number of disagreements the \emph{misclassification rate}. The misclassification rate falls as the Gaussian noise power increases from zero.  It reaches a minimum for additive white noise with standard deviation $0.3$.  More energetic noise does not reduce misclassification rates beyond this point.  The optimal noise reduces misclassification by almost $30\%$.

\begin{figure}[!ht]
\centerline{ \includegraphics[width=\textwidth]{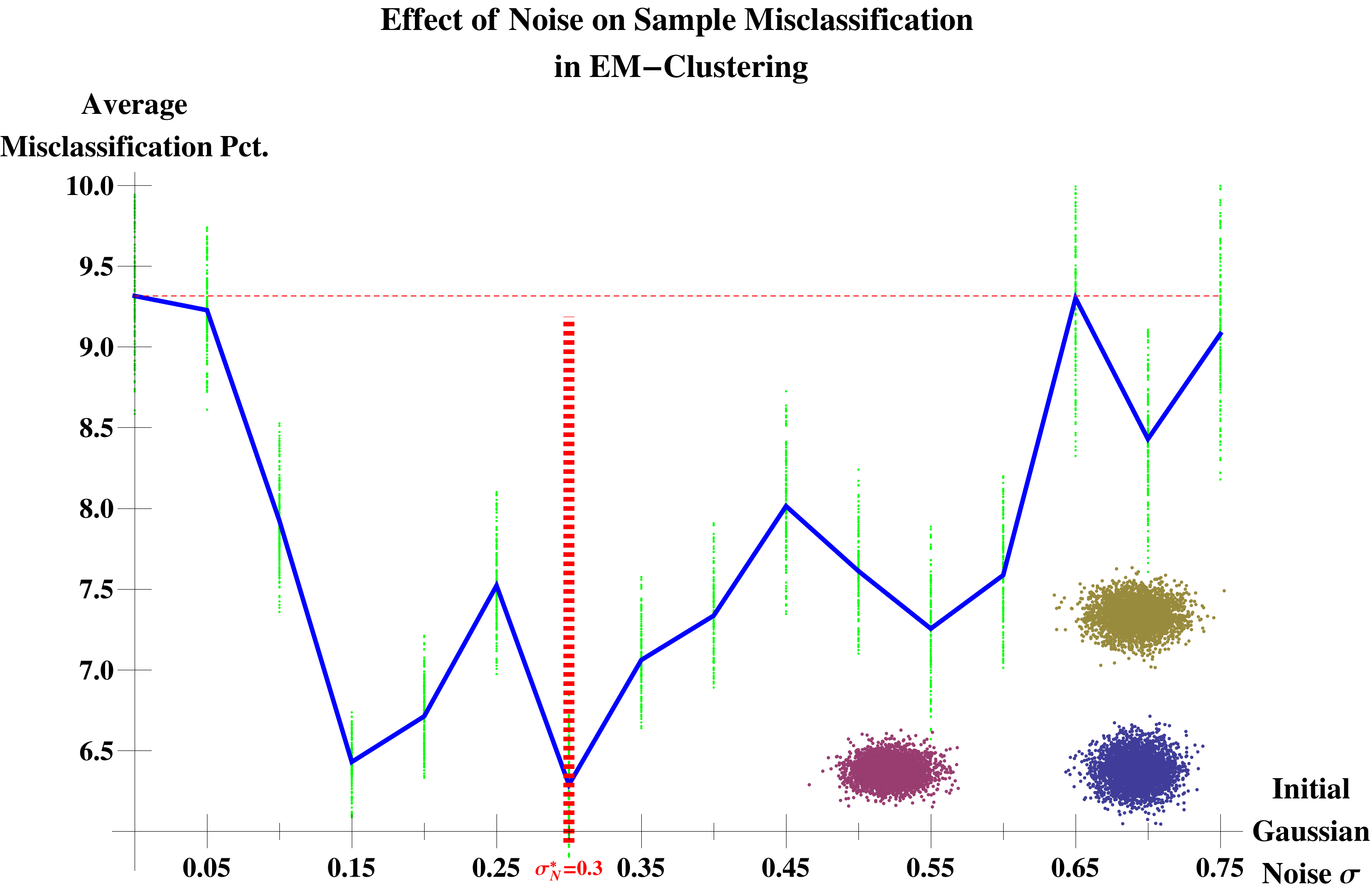} }
\caption[Noise Benefit for Classification Accuracy on a GMM-EM Model]{
	Noise benefit based on the misclassification rate for the Noisy Expectation--Maximization (NEM) clustering procedure on a $2$-D Gaussian mixture model with three Gaussian data clusters (inset) where each has a different covariance matrix. The plot shows that the misclassification rate falls as the additive noise power increases. The classification error rises if the noise power increases too much. The misclassification rate measures the mismatch between a NEM classifier with unconverged parameters $\Theta_k$ and the optimal NEM classifier with converged parameters $\Theta_*$. The unconverged NEM classifier's NEM procedure stops a quarter of the way to convergence.The dashed horizontal line indicates the misclassification rate for regular EM classification without noise. The red dashed vertical line shows the optimum noise standard deviation for NEM classification. The optimum noise has a standard deviation of $0.3$.}
\label{fig:Misclass-NEMC}
\end{figure}

Figure \ref{fig:KmeansFig} shows a similar noise benefit in the simpler $k$--means clustering algorithm on 3-dimensional Gaussian mixture data. The $k$--means algorithm is a special case of the EM algorithm as we show below in Theorem 2. So the EM noise benefit extends to the k-means algorithm. The figure plots the average convergence time for noise--injected $k$--means routines at different initial noise levels. The figure shows an instance where decaying noise helps the algorithm converge about $22\%$ faster than without noise.

\begin{figure}[!ht]
\centerline{ \includegraphics[width=\textwidth]{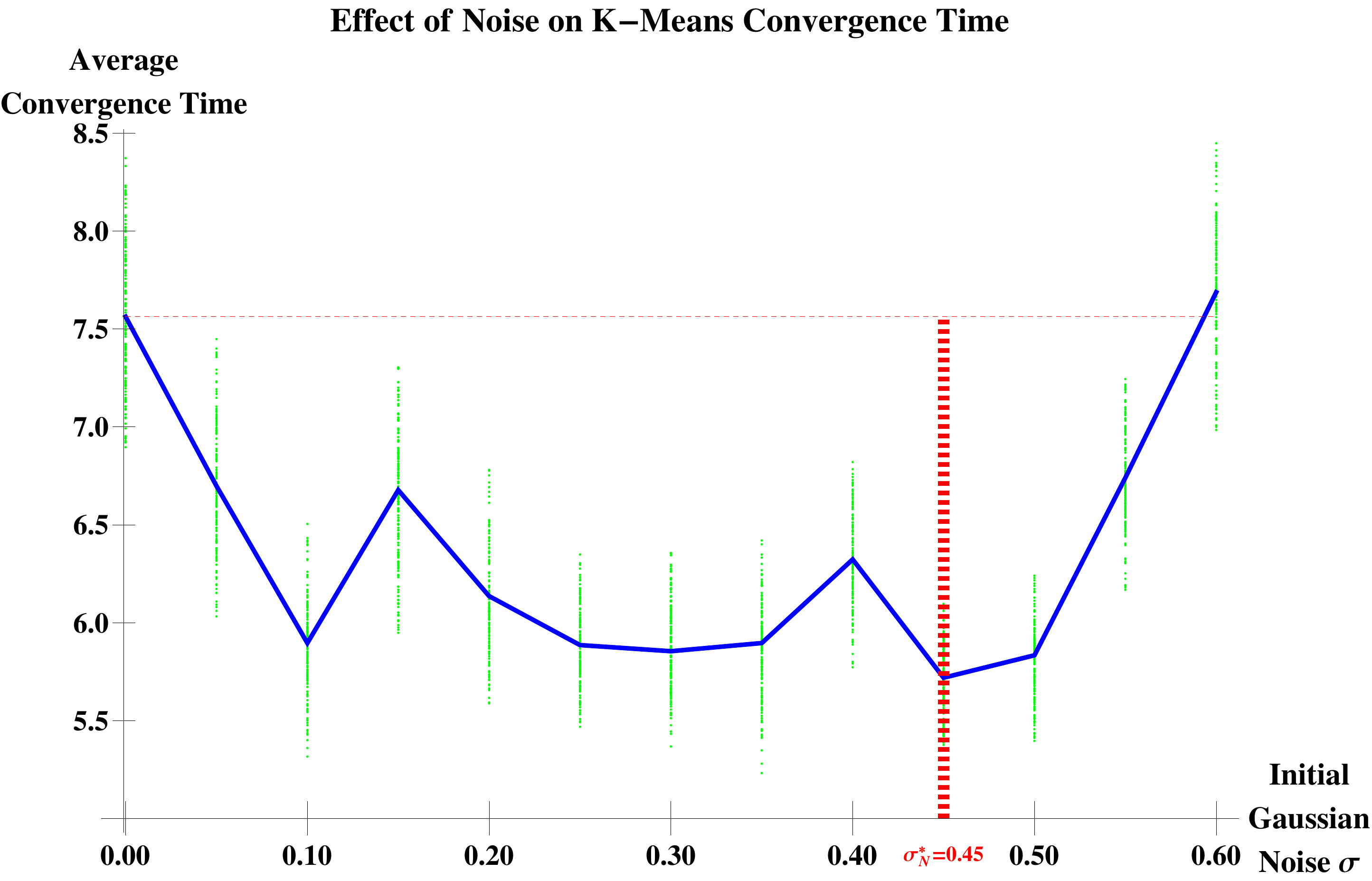} }
\caption[Noise Benefit for the Convergence Speed of a $k$-clustering Procedure]{Noise benefit in $k$--means clustering procedure on $2500$ samples of a $3$-D Gaussian mixture model with four clusters. The plot shows that the convergence time falls as additive white Gaussian noise power increases. The noise decays at an inverse square rate with each iteration. Convergence time rises if the noise power increases too much. The dashed horizontal line indicates the convergence time for regular $k$--means clustering without noise. The red dashed vertical line shows the optimum noise standard deviation for noisy $k$--means clustering. The optimum noise has a standard deviation of $0.45$: the convergence time falls by about $22\%$.}
\label{fig:KmeansFig}
\end{figure}

Simulations also show that noise also speeds up convergence in stochastic unsupervised competitive learning (UCL), supervised competitive learning (SCL), and differential competitive learning (DCL). These competitive learning (CL) algorithms are not fully within the ambit of the NEM theorem. But they are generalized neural-network versions of the $k$-means algorithm. Thus these simulations hint that a related noise benefit principle may apply to CL algorithms.

\section{Clustering}

Clustering algorithms divide data sets into clusters based on similarity measures between samples~\parencite{duda-hart-stork2001, jain2009, xu-wunsch2005, xu-wunsch2009}. The similarity measure attempts to quantify how samples differ statistically. Many algorithms use the Euclidean distance or Mahalanobis similarity measure. Clustering algorithms assign similar samples to the same cluster. Centroid-based clustering algorithms assign samples to the cluster with the closest centroid $\mu_1,...,\mu_k$. 

This clustering framework is an attempt to solve an NP-hard optimization problem. The algorithms define data clusters that minimize the total \emph{within-cluster} deviation from the centroids.  Suppose $y_i$ are samples of a data set on a sample space $D$. Centroid-based clustering  partitions $D$ into the $k$ decision classes $D_1,...,D_k$ of $D$. The algorithms look for optimal cluster parameters that minimize an objective function. The $k$--means clustering method \parencite{macqueen1967} minimizes the total sum of squared Euclidean within-cluster distances \parencite{xu-wunsch2005, celeux-govaert1992}:
\begin{equation}
\sum_{j=1}^K \sum_{i=1}^N  \| y_i - \mu_j \|^2 \mathbb{I}_{D_j}(y_i)
\end{equation} where $\mathbb{I}_{D_j}$ is the indicator function that indicates the presence or absence of pattern $y$ in ${D_j}$:
\begin{equation}
\mathbb{I}_{D_j}(y) = \begin{cases}
1 \quad \text{ if } y \in {D_j}\\
0 \quad \text{ if } y \notin {D_j}\;.
\end{cases}
\end{equation}

There are many approaches to clustering~\parencite{duda-hart-stork2001, xu-wunsch2005, jain2009}. Cluster algorithms come from fields that include nonlinear optimization, probabilistic clustering, neural networks-based clustering \parencite{kosko-nnfs}, fuzzy clustering \parencite{hopner-klawonn-kruse1999, bezdek-ehrlich-full1984}, graph-theoretic clustering \parencite{hartuv-shamir2000,cherng-lo2001}, agglomerative clustering \parencite{zhang-ramakrishnan-livny1996}, and bio-mimetic clustering \parencite{hall-ozyurt-bezdek1999,fogel1994}.

\subsection{Noisy Expectation-Maximization for Clustering}
Probabilistic clustering algorithms may model the true data distribution as a mixture of sub-populations. This mixture model assumption converts the clustering problem into a two-fold problem: density estimation for the underlying mixture model and data-classification based on the estimated mixture model. The naive Bayes classifier is one simple approach for discriminating between sub-populations. There are other more involved classifiers that may have better statistical properties e.g. ensemble classifiers or boosted classifiers. The EM algorithm is a standard method for estimating parametric mixture densities.

The parametric density estimation part of the clustering procedure can benefit from noise injection. This noise benefit derives from the application of the Noisy Expectation Maximization (NEM) theorem to EM in the clustering framework. A common mixture model in EM clustering methods is the Gaussian mixture model (GMM). We can apply the NEM Theorem to clustering algorithms that assume Gaussian sub-populations in the mixture. The NEM positivity condition
\begin{equation}
\E_{Y,Z,N|\theta^*} \left[ \ln\left( \frac{f(Y+N,Z|\theta_k)}{f(Y,Z|\theta_k)} \right) \right] \geq 0 
\label{eq:cnbt-Goal}
\end{equation}
gives a sufficient condition under which noise speeds up the EM algorithm's convergence to local optima. This condition implies that suitably noisy EM algorithm estimates the EM estimate $\theta_*$  in fewer steps on average than does the corresponding noiseless EM algorithm. 

The positivity condition reduces to a much simpler algebraic condition for GMMs. The model satisfies the positivity condition (\ref{eq:cnbt-Goal}) when the additive noise samples $n = (n_1,...n_d)$ satisfy the following algebraic condition~\parencite{osoba-mitaim-kosko2011, osoba-mitaim-kosko2012} (misstated in \cite{osoba-kosko2013} but corrected in \cite{osoba-kosko2013-Err}):
\begin{align}
n_i \left[n_i - 2\left(\mu_{j_i}-y_i\right) \right] \leq 0 \quad \textrm{ for all } j \;. \label{eq:gmm-nemcond-cnbt}
\end{align}
This condition applies to the variance update in the EM algorithm. It needs the current estimate of the centroids $\mu_j$. The NEM algorithm also anneals the additive noise by multiplying the noise power $\sigma_N$ by constants that decay with the iteration count. We found that the best application of the algorithm uses inverse--square decaying constants $k^{-2}$ to scale the noise $N$ \parencite{osoba-mitaim-kosko2012}:

\subsection{GMM-EM for Clustering}
We use the finite mixture model notation from \textbf{\S}\ref{subsec:FMM-EM}. The $Q$-function for EM on a finite mixture model with $K$ sub-populations is:
\begin{align}
Q(\Theta|\Theta(t)) &= \sum_{j=1}^K \ln[\alpha_j f(y|j,\theta_j)] ~p_Z(j|y,\Theta(t))
\label{eq:Q-Mixture}
\end{align}
where
\begin{equation}
p_Z(j|y,\Theta(t)) = \frac{\alpha_j ~f(y|Z=j,\theta_j(t))}{f(y|\Theta(t))}\;.
\end{equation}
Equation (\ref{eq:Q-Mixture}) gives the E-step for the mixture model. The GMM uses the above $Q$-function with Gaussian pdfs for $f(y|j,\theta_j)$.

Suppose there are $N$ data samples of the GMM distributions. The EM algorithm estimates the mixing probabilities $\alpha_j$, the sub-population means $\mu_j$, and the sub-population covariance $\Sigma_j$. The current estimate of the GMM parameters is $\Theta(t) = \{ \alpha_1(t),\cdots,\alpha_K(t), \mu_1(t),\cdots,\mu_K(t), \Sigma_1(t),\cdots,\Sigma_K(t) \}$. The iterations of the GMM--EM reduce to the following update equations:
\begin{align}
\alpha_j(t+1) &= \frac{1}{N} \sum_{i=1}^N p_Z(j|y_i,\Theta(t)) \label{eq:alpha-Update}\\
\mu_j(t+1) &= \sum_{i=1}^N y_i ~ \beta_j(t|y_i)  \label{eq:mu-Update} \\
\Sigma_j(t+1) &= \sum_{i=1}^N \beta_j(t|y_i) ~(y_i-\mu_j(t))(y_i-\mu_j(t))^T  \;. \label{eq:Sigma-Update}
\end{align}
where
\begin{equation}
\beta_j(t|y_i) = \frac{p_Z(j|y_i,\Theta(t))}{\sum_{i=1}^N p_Z(j|y_i,\Theta(t))} \;.
\label{eq:beta-defn} 
\end{equation}

These equations update the parameters $\alpha_j$, $\mu_j$, and $\Sigma_j$ with coordinate values that maximize the $Q$ function in (\ref{eq:Q-Mixture}) \parencite{xu-jordan1996,duda-hart-stork2001}. The updates combine both the E--steps and  M--steps of the EM procedure.

\subsection{Naive Bayes Classifier on GMMs}
GMM-EM clustering uses the membership probability density function $ p_Z(j|y,\Theta_{EM})$ as a maximum a posteriori classifier for each sample $y$. The classifier assigns $y$ to the $j^{th}$ cluster if $ p_Z(j|y,\Theta_{EM}) \geq p_Z(k|y,\Theta_{EM})$ for all $k \neq j$. Thus
\begin{equation}
EMclass(y) = \argmax{j} \;\, p_Z(j|y,\Theta_{EM}) \;.
\end{equation}
This is the \emph{naive Bayes classifier} \parencite{domingos-pazzani1997, rish2001} based on the EM-optimal GMM parameters for the data. NEM clustering uses the same classifier but with the NEM-optimal GMM parameters for the data:
\begin{equation}
NEMclass(y) = \argmax{j} \;\, p_Z(j|y,\Theta_{NEM}) \;.
\end{equation}

\subsection{The Clustering Noise Benefit Theorem}
The next theorem shows that the noise--benefit of the NEM Theorem extend to EM-clustering. The noise benefit occurs in misclassification relative to the EM-optimal classifier. The theorem uses the following notation: 
\begin{itemize}
\item $class_{opt}(Y)= \argmax \;\;\, p_Z(j|Y,\Theta_*)$:{ EM-optimal classifier. It uses the optimal model parameters $\Theta_*$}

\vspace{4pt}

\item $P_M[k] = P\left(EMclass_k(Y)\neq class_{opt}(Y) \right)$:{ Probability of EM-clustering misclassification relative to $class_{opt}$ using $k^{th}$ iteration parameters}

\vspace{4pt}

\item $P_{M_N}[k] = P\left(NEMclass_k(Y)\neq class_{opt}(Y) \right)$:{ Probability of NEM-clustering misclassification relative to $class_{opt}$ using $k^{th}$ iteration parameters}
\end{itemize}

\begin{thm}{\bf [Clustering Noise Benefit Theorem (CNBT)]} \label{thm:cnbt} \\
	Consider the NEM and EM iterations at the $k^{th}$ step. Then the NEM  misclassification probability $P_{M_N}[k]$ is less than the noise-free EM misclassification probability $P_M[k]$:
	\begin{equation} 
		P_{M_N}[k] \leq P_M[k]
	\end{equation} when the additive noise $N$ in the NEM-clustering procedure satisfies the NEM Theorem condition from (\ref{eq:cnbt-Goal}):
	\begin{equation}
		\E_{Y,Z,N|\theta^*} \left[ \ln\left( \frac{f(Y+N,Z|\theta_k)}{f(Y,Z|\theta_k)} \right) \right] \geq 0\;. \label{eq:nemcond}
	\end{equation}
	This positivity condition (\ref{eq:nemcond}) in the GMM-NEM model reduces to the simple algebraic condition (\ref{eq:gmm-nemcond-cnbt}) \parencite{osoba-mitaim-kosko2011, osoba-mitaim-kosko2012} for each coordinate $i$:
	\begin{align}
		n_i \left[n_i - 2\left(\mu_{j_i}-y_i\right) \right] \leq 0 ~~ \textrm{ for all } j \;. \nonumber
	\end{align}
\end{thm}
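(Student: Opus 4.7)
The plan is to leverage the NEM Theorem (Theorem~\ref{thm:NEM}) to compare the parameter estimates at iteration $k$ and then transfer that comparison through the naive Bayes classifier. First, I would recall that under the NEM positivity condition the relative-entropy inequality $c_k \geq \E_N[c_k(N)]$ from the proof of Theorem~\ref{thm:NEM} holds at each step $k$. In the GMM setting, $\theta_k$ enters the relative-entropy pseudo-distance $c_k = D(f(y,z|\Theta_*)\Vert f(y,z|\Theta_k))$, so the NEM iterates $\Theta_k^{NEM}$ are on average closer in relative entropy to $\Theta_*$ than the noiseless iterates $\Theta_k^{EM}$. This is the stepwise version of the noise benefit that is already established in the excerpt.

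Next, I would move from closeness-in-$Q$ to closeness of the posterior membership function $p_Z(j|y,\Theta)$. Because $p_Z(j|y,\Theta)$ is a rational function of the Gaussian component densities, it is continuous in $\Theta$ on the viable parameter set, so parameters $\Theta_k^{NEM}$ that are closer to $\Theta_*$ in relative entropy produce membership probabilities that are closer to $p_Z(j|y,\Theta_*)$ pointwise in $y$. I would then define, for each data point $y$, the ``margin'' $m(y) = p_Z(j_*|y,\Theta_*) - \max_{j \neq j_*} p_Z(j|y,\Theta_*)$, where $j_* = class_{opt}(y)$. A sample $y$ is misclassified relative to $class_{opt}$ if and only if the perturbation from using $\Theta_k$ instead of $\Theta_*$ exceeds the margin at $y$. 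Since NEM produces smaller perturbations on average, the $\Theta$-dependent set of samples for which the margin is violated is smaller under NEM than under EM, which yields the inequality $P_{M_N}[k] \leq P_M[k]$ after integrating over $Y$.

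Finally, I would verify the GMM-specific simplification: substituting the Gaussian pdf (\ref{eq:gmm-compl-pdf}) into the NEM positivity condition (\ref{eq:nemcond}) and applying the component-wise dominance argument from Corollary~\ref{cor:GMM-NEM} reduces the condition to the algebraic quadratic inequality $n_i[n_i - 2(\mu_{j_i} - y_i)] \leq 0$ for all $j$ and each coordinate $i$, exactly as in (\ref{eq:d-Dim-IndepCondition}). This part is essentially a citation of Corollary~\ref{cor:GMM-NEM} applied componentwise.

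The main obstacle will be the second step: the $\argmax$ operation is discontinuous, so going from ``parameters closer to $\Theta_*$ in relative entropy'' to ``classifier agrees with $class_{opt}$ more often'' requires care near the decision boundaries where the margin $m(y)$ is small. I expect to handle this by partitioning the sample space $\mathcal{Y}$ into a set where $m(y) > \epsilon$ (on which small parameter perturbations provably do not change the $\argmax$) and a boundary set whose probability mass vanishes in an appropriate limit, or alternatively by arguing monotonicity directly: if $\Theta_k^{NEM}$ is closer to $\Theta_*$ than $\Theta_k^{EM}$ in the sense induced by $Q$, then the event $\{NEMclass_k(Y) \neq class_{opt}(Y)\}$ is contained (up to a set of probability zero) in the event $\{EMclass_k(Y) \neq class_{opt}(Y)\}$, giving the desired inequality on misclassification probabilities.
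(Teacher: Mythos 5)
Your proposal takes essentially the same route as the paper's proof: invoke the stepwise NEM noise benefit to conclude that the NEM iterate $\Theta_{NEM}[k]$ is on average closer to $\Theta_*$ than the noiseless iterate $\Theta_{EM}[k]$, transfer that closeness through the naive Bayes classifier $p_Z(j|y,\Theta)$ to bound the misclassification probability relative to the converged classifier, and obtain the quadratic GMM condition as a direct application of Corollary~\ref{cor:GMM-NEM}. The only substantive difference is that you flag and attempt to repair the weak link that the paper's own proof asserts without justification --- namely that the discontinuous $\operatorname{argmax}$ mismatch shrinks as the parameters approach $\Theta_*$ (the paper also states the closeness in the Euclidean norm $\|\Theta_{NEM}[k]-\Theta_*\|$ rather than in the relative-entropy or $Q$-function sense that the NEM theorem actually delivers), so your margin-based partition of $\mathcal{Y}$ is a refinement of, not a departure from, the published argument.
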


\begin{proof}
\noindent Misclassification is a mismatch in argument maximizations:
\begin{multline}
EMclass_k(Y) \neq class_{opt}(Y) \quad \text{if and only if  }  \\ 
 \argmax \;\;\, p_Z(j|Y,\Theta_{EM}[k])  \neq \argmax \;\;\, p_Z(j|Y,\Theta_*) \;.
\end{multline}
This mismatch disappears as $\Theta_{EM}$ converges to $\Theta_*$. Thus
\begin{equation}
\argmax \;\;\, p_Z(j|Y,\Theta_{EM}[k]) \text{ converges to } \argmax \;\;\, p_Z(j|Y,\Theta_*)  \nonumber
\end{equation}
since
\begin{equation}
\lim_{k \rightarrow \infty} \| \Theta_{EM}[k] - \Theta_*\| = 0 
\end{equation}
by definition of the EM algorithm. So the argument maximization mismatch decreases as the EM estimates get closer to the optimum parameter $\Theta_*$. But the NEM condition (\ref{eq:nemcond}) implies that the following inequality holds on average at the $k^{th}$ iteration:
\begin{equation}
\|\Theta_{NEM}[k] - \Theta_*\| \leq  \| \Theta_{EM}[k] - \Theta_*\| \;.
\end{equation} 
Thus for a fixed iteration count $k$: 
\begin{align} P\left(NEMclass_k(Y) \neq class_{opt}(Y)\right) \leq P\left(EMclass_k(Y) \neq class_{opt}(Y)\right) \end{align} on average. 
So
\begin{equation}
P_{M_N}[k] \leq P_M[k]
\end{equation} on average.
Thus noise reduces the probability of EM clustering misclassification relative to the EM-optimal classifier on average when the noise satisfies the NEM condition. This means that an unconverged NEM--classifier performs closer to the fully converged classifier than does an unconverged noise-less EM--classifier on average.
\end{proof}

We next state the noise-enhanced EM GMM algorithm in $1$-D.

\vspace{6pt}

\begin{algorithm}[H]
\DontPrintSemicolon
\SetKwInOut{Input}{Input}
\SetKwInOut{Output}{Output}
\SetKwFunction{NEMNoiseSample}{NEMNoiseSample}

\Input{$y_1, \ldots, y_N$ GMM data samples}
\Output{ $\hat{\theta}_{NEM}$ : NEM estimate of parameter $\theta$}
\BlankLine
\While{($\|\theta_k-\theta_{k-1}\| \geq 10^{-tol}$)}{
{\bf N-Step:} $z_i = y_i + n_i$\\ where $n_i$ is a sample of the truncated Gaussian $\sim N(0,\frac{\sigma_N}{k^2})$ such that $n_i \left[n_i - 2\left(\mu_{j_i}-y_i\right) \right] \leq 0 \quad \textrm{ for all } i,j$ \;
{\bf E-Step:} $Q \left( \theta |\theta_k \right) \leftarrow \sum_{i=1}^N \sum_{j=1}^K \ln[\alpha_j f(z_i|j,\theta_j)] p_Z(j|y,\Theta(t)) $ \;
{\bf M-Step:} $\theta_{k+1} \leftarrow \argmax{\theta} \left\{ Q\left( \theta |\theta_k \right) \right\}$\;
$k \leftarrow k+1$\;
}
$\hat{\theta}_{NEM} \leftarrow \theta_k$
\caption{{\bf Noisy GMM--EM Algorithm (1-D)}}\label{algo:cnbt-NEM}
\end{algorithm}

\vspace{8pt}

\noindent The $D$-dimensional GMM-EM algorithm runs the N--Step component-wise for each data dimension.

Figure \ref{fig:Misclass-NEMC} shows a simulation instance of the predicted GMM noise benefit for $2$-$D$ cluster--parameter estimation. The figure shows that the optimum noise reduces GMM--cluster misclassification by almost $30\%$.

\section{The \emph{k}-Means Clustering Algorithm}
$k$--means clustering is a non-parametric procedure for partitioning data samples into clusters \parencite{macqueen1967, xu-wunsch2005}. Suppose the data space $\mathbb{R}^d$ has $K$ centroids $\mu_1,...,\mu_K$. The procedure tries to find $K$ partitions $D_1,...,D_K$ with centroids $\mu_1,...,\mu_K$ that minimize the within-cluster Euclidean distance from the cluster centroids:
\begin{equation}
\argmin{D_1,...D_K} \sum_{j=1}^K \sum_{i=1}^N  \| y_i - \mu_j \|^2 \mathbb{I}_{D_j}(y_i)
\end{equation}
for $N$ pattern samples $y_1, ..., y_N$. The class indicator functions $\mathbb{I}_{D_1},...,\mathbb{I}_{D_K}$ arise from the nearest--neighbor classification in (\ref{eq:k-Assign}) below. Each indicator function $\mathbb{I}_{D_j}$ indicates the presence or absence of pattern $y$ in ${D_j}$:
\begin{equation}
\mathbb{I}_{D_j}(y) = \begin{cases}
1 \quad \text{ if } y \in {D_j}\\
0 \quad \text{ if } y \notin {D_j}\;.
\end{cases}
\end{equation}
The $k$--means procedure finds local optima for this objective function. $k$--means clustering works in the following two steps:

\vspace{6pt}

\begin{algorithm}[H]
\DontPrintSemicolon
{\bf Assign Samples to Partitions:}\;
\begin{equation}
y_i \in D_j(t) ~~\text{ if }~~ \| y_i - \mu_j(t) \| \leq \| y_i - \mu_k(t) \| \quad k \neq j \label{eq:k-Assign}
\end{equation} \;
\vspace{-9pt}
{\bf Update Centroids:}\;
\vspace{-12pt}
\begin{equation}
\mu_j(t+1) = \frac{1}{|D_j(t)|} \sum_{i=1}^N y_i \mathbb{I}_{D_j(t)}(y_i)\;. \label{eq:k-Update}
\end{equation} \;
\vspace{-9pt}
\caption{{\bf K-Means Clustering Algorithm}}\label{algo:k-means}
\end{algorithm}

\vspace{6pt}

\subsection{\emph{k}-Means Clustering as a GMM--EM Procedure}
$k$--means clustering is a special case of the GMM--EM model \parencite{hathaway1986, celeux-govaert1992}. The key to this subsumption is the ``degree of membership'' function or ``cluster-membership measure'' $m(j|y)$ \parencite{xu-wunsch2010, hamerly-elkan2002}. It is a fuzzy measure of how much the sample $y_i$ belongs to the $j^{th}$ sub-population or cluster. The GMM--EM model uses Bayes theorem to derive a soft cluster-membership function:
\begin{equation}
m(j|y) =  p_Z(j|y,\Theta) = \frac{\alpha_j f(y|Z=j,\theta_j)}{f(y|\Theta)}\;.
\end{equation}
$k$--means clustering assumes a hard cluster-membership \parencite{xu-wunsch2010, hamerly-elkan2002, kearns-mansour-ng1997}:
\begin{equation}
m(j|y) = \mathbb{I}_{D_j}(y)
\end{equation} where $D_j$ is the partition region whose centroid is closest to $y$. The $k$--means assignment step redefines the cluster regions $D_j$ to modify this membership function. The procedure does not estimate the covariance matrices in the GMM--EM formulation. 

Noise also benefits the $k$--means procedure as Figure \ref{fig:KmeansFig} shows since $k$--means is an EM-procedure. 

\begin{thm}{\bf{[$k$-means is a sub-case of the EM algorithm]:}}\label{thm:kmeans-EM}\\

Suppose that the sub-populations have known spherical covariance matrices $\Sigma_j$ and known mixing proportions $\alpha_j$.  Suppose further that the cluster-membership function is hard: 
\begin{equation}
m(j|y) = \mathbb{I}_{D_j}(y) \;.
\end{equation}
Then GMM--EM reduces to $k$-Means clustering:
\begin{equation}
\frac{\sum_{i=1}^N p_Z(j|y_i,\Theta(t)) y_i}{\sum_{i=1}^N p_Z(j|y_i,\Theta(t))} = \frac{1}{|D_j(t)|} \sum_{i=1}^N y_i \mathbb{I}_{D_j(t)}(y_i)\;.
\end{equation}
\end{thm}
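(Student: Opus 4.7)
The plan is to treat this as a direct substitution argument: once the soft posterior $p_Z(j|y_i,\Theta(t))$ in the GMM--EM centroid update is replaced by the hard membership $\mathbb{I}_{D_j(t)}(y_i)$, the M-step formula collapses line-for-line onto the $k$-means centroid update. The spherical-covariance and known-mixing-proportion assumptions play a separate, complementary role: they guarantee that the natural (MAP) hardening of the GMM E-step coincides with the nearest-centroid assignment (\ref{eq:k-Assign}) that defines the partitions $D_j(t)$ in the first place.

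First I would tackle the M-step. Start from the GMM--EM centroid update $\mu_j(t+1) = \sum_i y_i\,\beta_j(t|y_i)$ with $\beta_j(t|y_i) = p_Z(j|y_i,\Theta(t))/\sum_i p_Z(j|y_i,\Theta(t))$ from (\ref{eq:mu-Update})--(\ref{eq:beta-defn}). Substituting the hard cluster-membership $m(j|y_i) = \mathbb{I}_{D_j(t)}(y_i)$ for $p_Z(j|y_i,\Theta(t))$, the denominator collapses to $\sum_i \mathbb{I}_{D_j(t)}(y_i) = |D_j(t)|$ and the numerator becomes $\sum_i y_i\,\mathbb{I}_{D_j(t)}(y_i)$. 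This yields
$$\mu_j(t+1) \;=\; \frac{1}{|D_j(t)|}\sum_{i=1}^N y_i\,\mathbb{I}_{D_j(t)}(y_i),$$
which is precisely the $k$-means centroid update (\ref{eq:k-Update}) and the target equality displayed in the theorem. The covariance updates (\ref{eq:Sigma-Update}) and mixing-weight updates (\ref{eq:alpha-Update}) drop out of the EM iteration because $\Sigma_j$ and $\alpha_j$ are assumed known.

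Next I would verify that this reduction is consistent with the partitions $D_j(t)$ used by $k$-means. With $\Sigma_j = \sigma^2 I$ and equal $\alpha_j = 1/K$, the Gaussian sub-population pdf gives $p_Z(j|y,\Theta) \;\propto\; \exp(-\|y-\mu_j\|^2/(2\sigma^2))$, so $\argmax{j}\; p_Z(j|y,\Theta) = \argmin{j}\; \|y-\mu_j\|^2$. Hence the MAP-hardened E-step produces exactly the Voronoi partitions $D_j(t)$ defined by (\ref{eq:k-Assign}), closing the loop and certifying that \emph{both} steps of $k$-means coincide with the corresponding (hardened) steps of GMM--EM under the stated assumptions.

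The main conceptual obstacle, rather than a computational one, is that hardening the E-step is not an ordinary EM move: the true E-step of GMM--EM produces fractional responsibilities, and replacing them with $0/1$ indicators yields what is usually called \textit{Classification EM}, which optimizes the classification likelihood rather than the marginal likelihood. The theorem should therefore be read as an \emph{update-formula} subsumption rather than as a claim that $k$-means inherits every convergence property of EM. Under that reading the proof is essentially the two substitutions above, and no further argument is needed.
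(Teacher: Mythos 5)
Your proof is correct and takes essentially the same route as the paper's: the paper's entire argument is the M-step substitution you give first (constant $\Sigma_j$ and $\alpha_j$ kill updates (\ref{eq:alpha-Update}) and (\ref{eq:Sigma-Update}), and the hard membership collapses the centroid update's denominator to $|D_j(t)|$). Your additional E-step consistency check is a reasonable complement but goes beyond what the theorem asserts --- the paper treats the hard membership as a hypothesis rather than deriving it from MAP hardening, and note that your derivation of the Voronoi partition quietly needs equal mixing proportions and a common $\sigma$ across clusters, which is stronger than the stated ``known $\alpha_j$'' and ``spherical $\Sigma_j$.''
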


\begin{proof}
The covariance matrices $\Sigma_j$ and mixing proportions $\alpha_j$ are constant. So the update equations (\ref{eq:alpha-Update}) and (\ref{eq:Sigma-Update}) do not apply in the GMM--EM procedure. The mean (or centroid) update equation in the GMM--EM procedure becomes
\begin{align}
\mu_j(t+1) &= \frac{\sum_{i=1}^N p_Z(j|y_i,\Theta(t)) y_i}{\sum_{i=1}^N p_Z(j|y_i,\Theta(t))}\;.
\end{align} 
The hard cluster-membership function
\begin{equation}
m_t(j|y) = \mathbb{I}_{D_j(t)}(y)
\end{equation}
changes the $t^{th}$ iteration's mean update to 
\begin{align}
\mu_j(t+1) &= \frac{\sum_{i=1}^N y_i m_t(j|y_i) }{\sum_{i=1}^N m_t(j|y_i)}\;.
\end{align}
The sum of the hard cluster-membership function reduces to
\begin{equation}
\sum_{i=1}^N m_t(j|y_i) = N_j = |D_j(t)|
\end{equation}
where $N_j$ is the number of samples in the $j^{th}$ partition.
Thus the mean update is
\begin{equation}
\mu_j(t+1) = \frac{1}{|D_j(t)|} \sum_{i=1}^N y_i \mathbb{I}_{D_j(t)}(y_i) \;.
\end{equation}
Then the EM mean update equals the $k$--means centroid update:
\begin{equation}
\frac{\sum_{i=1}^N p_Z(j|y_i,\Theta(t)) y_i}{\sum_{i=1}^N p_Z(j|y_i,\Theta(t))} = \frac{1}{|D_j(t)|} \sum_{i=1}^N y_i \mathbb{I}_{D_j(t)}(y_i) \;.
\end{equation}
\end{proof}

The known diagonal covariance matrices $\Sigma_j$ and mixing proportions $\alpha_j$ can arise from prior knowledge or previous optimizations. Estimates of the mixing proportions (\ref{eq:alpha-Update}) get collateral updates as learning changes the size of the clusters. 

Approximately hard cluster membership can occur in the regular EM algorithm when the sub-populations are well separated. An EM--optimal parameter estimate $\Theta^*$ will result in very low posterior probabilities $p_Z(j|y,\Theta^*)$ if $y$ is not in the $j^{th}$ cluster. The posterior probability is close to one for the correct cluster. Celeux and Govaert proved a similar result by showing an equivalence between the objective functions for EM and $k$--means clustering \parencite{celeux-govaert1992, xu-wunsch2005}.  Noise injection simulations confirmed the predicted noise benefit in the $k$--means clustering algorithm.

\subsection{\emph{k}--Means Clustering and Adaptive Resonance Theory}

$k$--means clustering resembles Adaptive Resonance Theory (ART) \parencite{carpenter-grossberg1987, xu-wunsch2005, kosko-nnfs}. And so ART should also benefit from noise. $k$--means clustering learns clusters from input data without supervision. ART performs similar unsupervised learning on input data using neural circuits.

ART uses interactions between two fields of neurons: the comparison neuron field (or bottom--up activation) and the recognition neuron field (or  top--down activation). The comparison field matches against the input data. The recognition field forms internal representations of learned categories. ART uses bidirectional ``resonance'' as a substitute for supervision. Resonance refers to the coherence between recognition and comparison neuron fields. The system is stable when the input signals match the recognition field categories. But the ART system can learn a new pattern or update an existing category if the input signal fails to match any recognition category to within a specified  level of ``vigilance'' or degree of match.

ART systems are more flexible than regular $k$--means systems because ART systems do not need a pre--specified cluster count $k$ to learn the data clusters. ART systems can also update the cluster count on the fly if the input data characteristics change. Extensions to the ART framework include ARTMAP \parencite{carpenter-grossberg-reynolds1991} for supervised classification learning and Fuzzy ART for fuzzy clustering \parencite{carpenter-grossberg-rosen1991}. An open research question is whether NEM-like noise injection will provably benefit ART systems.

\section{Competitive Learning Algorithms}
Competitive learning algorithms learn centroidal patterns from streams of input data by adjusting the weights of only those units that win a distance-based competition or comparison \parencite{kosko-nnfs,kosko-SCL1991, kohonen2001,grossberg1987}. Stochastic competitive learning behaves as a form of \emph{adaptive quantization} because the trained synaptic fan-in vectors (centroids) tend to distribute themselves in the pattern space so as to minimize the mean-squared-error of vector quantization \parencite{kosko-nnfs}. Such a quantization vector also converges with probability one to the centroid of its nearest-neighbor class \parencite{kosko-SCL1991}. We will show that most competitive learning systems benefit from noise. This further suggests that a noise benefit holds for ART systems because they use competitive learning to form learned pattern categories.

Unsupervised competitive learning (UCL) is a blind clustering algorithm that tends to cluster like patterns together.  It uses the implied topology of a two-layer neural network.  The first layer is just the data layer for the input patterns $y$ of dimension $d$.  There are $K$-many competing neurons in the second layer.  The synaptic fan-in vectors to these neurons define the local centroids or quantization vectors $\mu_1,...,\mu_K$.  Simple distance matching approximates the complex nonlinear dynamics of the second-layer neurons competing for activation in an on-center/off-surround winner-take-all connection topology \parencite{kosko-nnfs} as in a an ART system.  Each incoming pattern stimulates a new competition.  The winning $j^{th}$ neuron modifies its fan-in of synapses while the losing neurons do not change their synaptic fan-ins. Nearest-neighbor matching picks the winning neuron by finding the synaptic fan-in vector closest to the current input pattern.  Then the UCL learning law moves the winner's synaptic fan-in centroid or quantizing vector a little closer to the incoming pattern.

We first write the UCL algorithm as a two-step process of distance-based ``winning'' and synaptic-vector update. The first step is the same as the assignment step in $k$--means clustering. This equivalence alone argues for a noise benefit. But the second step differs in the learning increment. So UCL differs from $k$--means clustering despite their similarity.  This difference prevents a direct subsumption of UCL from the EM algorithm.  It thus prevents a direct proof of a UCL noise--benefit based on the NEM Theorem. 

We also assume in all simulations that the initial $K$ centroid or quantization vectors equal the first $K$ \emph{random} pattern samples:  \linebreak $\mu_1(1) = y(1),..., \mu_K(K) = y(K)$. Other initialization schemes could identify the first $K$ quantizing vectors with any $K$ other pattern samples so long as they are random samples.  Setting all initial quantizing vectors to the same value can distort the learning process.  All competitive learning simulations used linearly decaying learning coefficients $c_j(t) = 0.3(1-t/1500)$.

\vspace{6pt}

\begin{algorithm}[H]
\DontPrintSemicolon
{\bf Pick the Winner:} \;
The $j^{th}$ neuron wins at $t$ if \;
\begin{equation}
\| y(t) - \mu_j(t) \| \leq \| y(t) - \mu_k(t) \| \quad k \neq j \;. \label{eq:ucl-PickWin}
\end{equation}\;
{\bf Update the Winning Quantization Vector:}\;
\begin{align}
\mu_j(t+1) &= \mu_j(t) + c_t  \left[y(t)-\mu_j(t)\right] \label{eq:ucl-Update}
\end{align} for a decreasing sequence of learning coefficients $\{c_t\}$.
\caption{{\bf Unsupervised Competitive Learning (UCL) Algorithm}}\label{alg:ucl}
\end{algorithm}

\vspace{6pt}

A similar stochastic difference equation can update the covariance matrix $\Sigma_j$ of the winning quantization vector:
\begin{align}
\Sigma_j(t + 1)  \,  =  \,   \Sigma_j(t)  \, + 
c_t \left[ (y(t) - \mu_j(t))^T (y(t) - \mu_j(t))  -  \Sigma_j(t) \right] \;.
\end{align}
A modified version can update the pseudo-covariations of alpha-stable random vectors that have no higher-order moments \parencite{kim-kosko1996}.  The simulations in this paper do not adapt the covariance matrix.

We can rewrite the two UCL steps (\ref{eq:ucl-PickWin}) and (\ref{eq:ucl-Update}) into a single stochastic difference equation.  This rewrite requires that the distance-based indicator function $\mathbb{I}_{D_j}$ replace the pick-the-winner step (\ref{eq:ucl-PickWin}) just as it does for the assign-samples step (\ref{eq:k-Assign}) of $k$--means clustering:
\begin{equation}
\mu_j(t+1) = \mu_j(t) + c_t\; \mathbb{I}_{D_j}(y(t)) \;\left[y(t)-\mu_j(t) \right] \;. \label{eq:ucl-short}
\end{equation}

The one-equation version of UCL in (\ref{eq:ucl-short}) more closely resembles Grossberg's original deterministic differential-equation form of competitive learning in neural modeling \parencite{grossberg1982, kosko-nnfs}:
\begin{equation}
\dot{m}_{i j} =   S_j(y_j) \left[ S_i(x_i) - {m}_{i j} \right] \label{eq:GrossbergDiffEq}
\end{equation}
where $m_{i j}$ is the synaptic memory trace from the $i^{th}$ neuron in the input field to the $j^{th}$ neuron in the output or competitive field.  The $i^{th}$ input neuron has a real-valued activation $x_i$ that feeds into a bounded nonlinear signal function (often a sigmoid) $S_i$.  The $j^{th}$ competitive neuron likewise has a real-valued scalar activation $y_j$ that feeds into a bounded nonlinear signal function $S_j$.  But competition requires that the output signal function $S_j$ approximate a zero-one decision function. This gives rise to the approximation $S_j \approx \mathbb{I}_{D_j}$.

The two-step UCL algorithm is the same as Kohonen's ``self-organizing map'' algorithm \parencite{kohonen1990, kohonen2001} if the self-organizing map updates only a single winner.  Both algorithms can update direct or graded subsets of neurons near the winner.  These near-neighbor beneficiaries can result from an implied connection topology of competing neurons if the square $K$-by-$K$ connection matrix has a positive diagonal band with other entries negative.

Supervised competitive learning (SCL) punishes the winner for misclassifications.  This requires a teacher or supervisor who knows the class membership $D_j$ of each input pattern $y$ and who knows the classes that the other synaptic fan-in vectors represent.  The SCL algorithm moves the winner's synaptic fan-in vector $\mu_j$ away from the current input pattern $y$ if the pattern $y$ does not belong to the winner's class $D_j$.  So the learning increment gets a minus sign rather than the plus sign that UCL would use.  This process amounts to inserting a \emph{reinforcement function} $r$ into the winner's learning increment as follows:
\begin{align}
\mu_j(t+1) \; =& \; \mu_j(t) + c_t r_j(y) \left[y-\mu_j(t)\right] \\
r_j(y) \;  =& \; \mathbb{I}_{D_j}(y) - \sum_{i \neq j} \mathbb{I}_{D_i}(y) \;.
\end{align}
Russian learning theorist Ya Tsypkin appears the first to have arrived at the SCL algorithm.  He did so in 1973 in the context of an adaptive Bayesian classifier \parencite{tsypkin1973}.

Differential Competitive Learning (DCL) is a hybrid learning algorithm \parencite{kong-kosko1991, kosko-nnfs}.  It replaces the win-lose competitive learning term $S_j$ in (\ref{eq:GrossbergDiffEq}) with the \emph{rate} of winning $\dot{S}_j$.  The rate or differential structure comes from the differential Hebbian law \parencite{kosko-DHL1986}:
\begin{equation}
\dot{m}_{i j} =   -m_{i j}   +  \dot{S}_i  \dot{S}_j
\end{equation}
using the above notation for synapses $m_{i j}$ and signal functions $S_i$ and $S_j$. The traditional Hebbian learning law just correlates neuron activations rather than their velocities. The result is the DCL differential equation:
\begin{equation} 
\dot{m}_{i j} = \dot{S}_j(y_j) \left[ S_i(x_i)  -  m_{i j} \right]    \;.
\end{equation}
Then the synapse learns only if the $j^{th}$ competitive neuron changes its win-loss status. The synapse learns in competitive learning only if the $j^{th}$ neuron itself wins the competition for activation.  The time derivative in DCL allows for both positive and negative reinforcement of the learning increment. This polarity resembles the plus-minus reinforcement of SCL even though DCL is a blind or unsupervised learning law.  Unsupervised DCL compares favorably with SCL in some simulation tests \parencite{kong-kosko1991}.

We simulate DCL with the following stochastic difference equation:
\begin{align}
\mu_j(t+1)&= \mu_j(t) + c_t \Delta S_j(z_j) \left[S(y)-\mu_j(t)\right]\\
\mu_i(t+1) &= \mu_i(t) \quad \text{if } i\neq j \;.
\end{align}when the $j^{th}$ synaptic vector wins the metrical competition as in UCL. $\Delta S_j(z_j)$ is the time-derivative of the $j^{th}$ output neuron activation. We approximate it as the signum function of time difference of the training sample $z$ \parencite{kong-kosko1991,dickerson-kosko1994}:
\begin{align}
\Delta S_j(z_j) = sgn\left[ z_j(t+1) - z_j(t) \right] \;.
\end{align}

The competitive learning simulations in Figure \ref{fig:UCL} used noisy versions of the competitive learning algorithms just as the clustering simulations used noisy versions.  The noise was additive white Gaussian vector noise $n$ with decreasing variance (annealed noise).  We added the noise $n$ to the pattern data $y$ to produce the training sample $z$: $z  =  y + n$  where $n \sim N(\mathbf{0}, \Sigma_\sigma(t))$. The noise covariance matrix $\Sigma_\sigma(t)$ was just the scaled identity matrix $(t^{-2}\sigma) I$  for standard deviation or noise level  $\sigma > 0$.  This allows the scalar $\sigma$ to control the noise intensity for the entire vector learning process. We annealed or decreased the variance as $\Sigma_\sigma(t) = (t^{-2}\sigma) I$ as in \parencite{osoba-mitaim-kosko2011,osoba-mitaim-kosko2012}.  So the noise vector random sequence $n(1), n(2),...$ is an independent (white) sequence of similarly distributed Gaussian random vectors. We state for completeness the three-step noisy UCL algorithm. 

\vspace{6pt}

\begin{algorithm}[H]
\DontPrintSemicolon
{\bf Noise Injection:} \;

Define \begin{equation}z(t)   =   y(t)   +  n(t)  \label{eq:nucl-n} \end{equation}
For $n(t)  \sim  N(\mathbf{0}, \Sigma_\sigma(t)) $ and annealing schedule $\Sigma_\sigma(t) = \frac{\sigma}{t^{2}} I $.\;
\BlankLine
{\bf Pick the Noisy Winner:} \;
The $j^{th}$ neuron wins at $t$ if \;
\begin{equation}
 \| z(t) - \mu_j(t) \| \leq \| z(t) - \mu_k(t) \| \quad k \neq j \;. \label{eq:nucl-PickWin}
\end{equation}\;

{\bf Update the Winning Quantization Vector:}\;
\begin{align}
\mu_j(t+1) &= \mu_j(t) + c_t  \left[z(t)-\mu_j(t)\right] \label{eq:nucl-Update}
\end{align} for a decreasing sequence of learning coefficients $\{c_t\}$
\caption{\bf Noisy UCL Algorithm}
\end{algorithm}

\vspace{6pt}

We can define similar noise-perturbed versions the SCL and DCL algorithms.

\vspace{6pt}

\begin{algorithm}[H]
\DontPrintSemicolon
{\bf Noise Injection:} \;
Define \begin{equation}z(t)   =   y(t)   +  n(t)  \label{eq:nscl-n} \end{equation}
For $n(t)  \sim  N(\mathbf{0}, \Sigma_\sigma(t)) $ and annealing schedule $\Sigma_\sigma(t) = \frac{\sigma}{t^{2}} I $.\;
\BlankLine
{\bf Pick the Noisy Winner:} \;
The $j^{th}$ neuron wins at $t$ if 
\begin{align}
 \| z(t) - \mu_j(t) \| \leq \| z(t) - \mu_k(t) \| \quad k \neq j \;. \label{eq:nscl-PickWin}
\end{align}\;
{\bf Update the Winning Quantization Vector:}
\begin{align}
\mu_j(t+1) &= \mu_j(t) + c_t ~ r_j(z) \left[z(t)-\mu_j(t)\right] \label{eq:nscl-Update}
\end{align}
\vspace{-16pt}
where
\begin{align}
r_j(z) ~~  =& ~~ \mathbb{I}_{D_j}(z) - \sum_{i \neq j} \mathbb{I}_{D_i}(z)
\end{align}
and $\{c_t\}$ is a decreasing sequence of learning coefficients.
\caption{\bf Noisy SCL Algorithm}
\end{algorithm}

\vspace{6pt}

\begin{algorithm}[H]
\DontPrintSemicolon
{\bf Noise Injection:} \;
Define \begin{equation}z(t)   =   y(t)   +  n(t)  \label{eq:ndcl-n} \end{equation}
For $n(t)  \sim  N(\mathbf{0}, \Sigma_\sigma(t)) $ and annealing schedule $\Sigma_\sigma(t) = \frac{\sigma}{t^{2}} I $.\;
\BlankLine
{\bf Pick the Noisy Winner:} \;
The $j^{th}$ neuron wins at $t$ if 
\begin{align}
 \| z(t) - \mu_j(t) \| \leq \| z(t) - \mu_k(t) \| \quad k \neq j \;. \label{eq:ndcl-PickWin}
\end{align}\;
{\bf Update the Winning Quantization Vector:}
\begin{align}
\mu_j(t+1)&= \mu_j(t) + c_t ~ \Delta S_j(z_j) ~ \left[S(x)-\mu_j(t)\right] \label{eq:ndcl-Update}
\end{align}
where
\vspace{-16pt}
\begin{align}
\Delta S_j(z_j) = sgn\left[ z_j(t+1) - z_j(t) \right] 
\end{align}
and $\{c_t\}$ is a decreasing sequence of learning coefficients.
\caption{\bf Noisy DCL Algorithm}
\end{algorithm}

\vspace{8pt}
 
Figure \ref{fig:UCL} shows that noise injection sped up UCL convergence by about $25\%$. The simulation used the four-cluster Gaussian data model shown in the inset of figure \ref{fig:UCL}. Noise perturbation gave the biggest relative improvement for the UCL algorithm in most of our simulations.

\begin{figure}[!ht]
\centerline{ \includegraphics[width=0.9\textwidth]{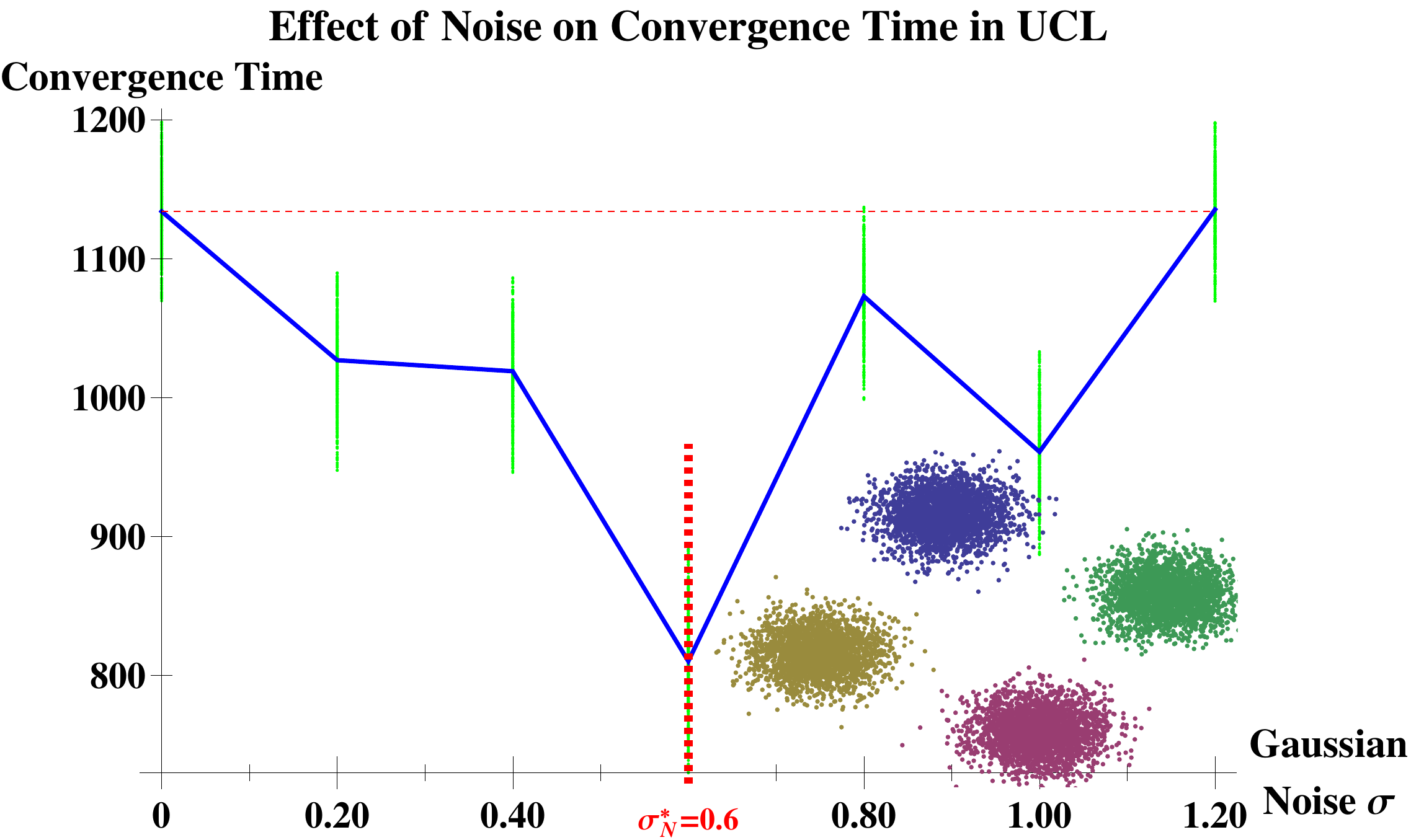} } 

\caption[Noise--benefit in the convergence time of Unsupervised Competitive Learning (UCL)]{Noise--benefit in the convergence time of Unsupervised Competitive Learning (UCL). The inset shows the four Gaussian data clusters with the same covariance matrix. The convergence time is the number of learning iterations before the synaptic weights stayed within $25\%$ of the final converged synaptic weights. The dashed horizontal line shows the convergence time for UCL without additive noise. The figure shows that a small amount of noise can reduce convergence time by about $25\%$. The procedure adapts to noisy samples from a Gaussian mixture of four sub-populations. The sub-populations have centroids on the vertices of the rotated square of side--length $24$ centered at the origin as the inset figure shows. The additive noise is zero-mean Gaussian.
}
\label{fig:UCL}
\end{figure}

\LinesNumbered

\section{Conclusion}
Noise can speed convergence in expectation-maximization clustering and in some types of competitive learning algorithms under fairly general conditions. This suggests that noise should improve the performance of other clustering and competitive-learning algorithms.  An open research question is whether the Noisy Expectation-Maximization Theorem or some other mathematical result can guarantee the observed noise benefits for competitive learning and for similar clustering algorithms.

Future work may also extend EM noise benefits to co-clustering methods~\parencite{hartigan1972,mirkin1996,dhillon-mallela-modha2003, jain2009} (also known as biclustering or two-mode clustering). Co-clustering methods cluster both data samples and data features simultaneously. For example, a co-clustering method for a movie recommendation system would simultaneously cluster the users \emph{and} the movies in the database. An important model for co-clustering algorithms is the Probabilistic Latent Semantic Analysis (PLSA)~\parencite{hofmann1999,hofmann2001,hofmann2004} used in collaborative filtering~\parencite{Das-et-Google2007,hofmann2004} and document analysis~\parencite{hofmann1999}. PLSA makes use of an EM algorithm to train a hierarchical multinomial model. EM noise benefits may apply here. EM noise benefits may also apply to other co-clustering methods based on generalized k-means or generalized mixture-model approaches.

\clearpage


\LinesNotNumbered 

\chapter{NEM Application: Baum-Welch Algorithm for Training Hidden Markov Models} \label{ch:NEM-HMM}

\blfootnote{
	This chapter features work done in collaboration with Kartik Audhkhasi and first published in \parencite{audhkhasi-osoba-kosko-HMM2013}.
}

This chapter shows that careful noise injection can speed up iterative ML parameter estimation for hidden Markov models (HMMs). The proper noise appears to help the training process explore less probable regions of the parameter space. This new \emph{noisy HMM} (NHMM)~\parencite{audhkhasi-osoba-kosko-HMM2013} is a special case of NEM~\parencite{osoba-mitaim-kosko2011, osoba-mitaim-kosko2012}. The NEM algorithm gives rise to the NHMM because the Baum-Welch algorithm \parencite{baum-petrie1966, baum-eagon1967, baum-et-al1970} that trains the HMM parameters is itself a special case of the EM algorithm~\parencite{welch2003}. The NEM theorem gives a sufficient positivity condition for an average noise boost in an EM algorithm. The positivity condition~(\ref{eq:nhmm-Goal}) below states the corresponding sufficient condition for a noise boost in the Baum-Welch algorithm when the HMM uses a Gaussian mixture model at each state. 

Figure~\ref{fig:nhmm_block_diag} describes the NHMM architecture based on the NEM algorithm. Simulations show that a noisy HMM converges faster than a noiseless HMM on the TIMIT data set. Figure~\ref{fig:per_red_iter} shows that noise produces a $37\%$ reduction in the number of iterations that it takes to converge to the maximum-likelihood estimate.

The simulations below confirm the theoretical prediction that proper injection of noise can improve speech recognition. This appears to be the first deliberate use of noise injection in the speech data itself.  Earlier efforts~\parencite{krogh1994hidden, eddy1995multiple} used annealed noise to perturb the model parameters and to pick an alignment path between HMM states and the observed speech data. These earlier efforts neither added noise to the speech data nor found any theoretical guarantee of a noise--benefit.

The next section (\Sec\ref{sec:hmm_rev}) reviews HMMs and the Baum-Welch algorithm that tunes them. The section shows that the Baum-Welch algorithm is an EM algorithm as Welch~\parencite*{welch2003} pointed out. \Sec\ref{sec:single_nhmm} presents the sufficient condition for a noise boost in HMMs. \Sec\ref{sec:hmm-sim} tests the new NHMM algorithm for training monophone models on the TIMIT corpus.

\begin{figure}[ht]
\centering \includegraphics[width=\textwidth]{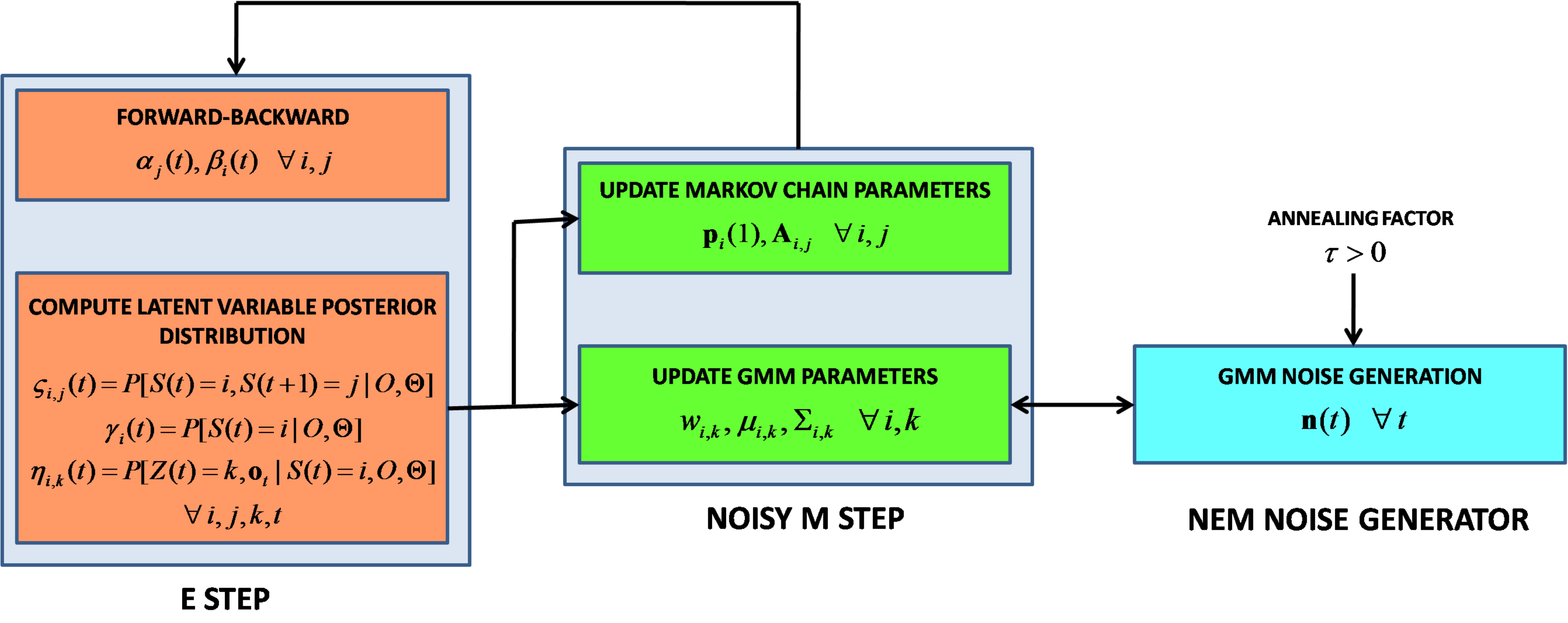}
\caption[Training a Noisy Hidden Markov Model]{The training process of the NHMM: The NHMM algorithm adds annealed noise to the observations during the M-step in the EM algorithm if the noise satisfies the NEM positivity condition. This noise changes the GMM covariance estimate in the M-step.}
\label{fig:nhmm_block_diag}
\end{figure}

\section{Hidden Markov Models}\label{sec:hmm_rev}
An HMM~\parencite{rabiner1989} is a popular probabilistic latent variable models for multivariate time series data. Its many applications include speech recognition~\parencite{rabiner1989, levinson1986continuously, wilpon1990automatic}, computational biology~\parencite{eddy1998profile, karplus1998hidden, krogh1994hidden}, computer vision~\parencite{yamato1992recognizing, brand1997coupled}, wavelet-based signal processing~\parencite{crouse1998wavelet}, and control theory~\parencite{elliott1994hidden}. HMMs are especially widespread in speech processing and recognition. All popular speech recognition toolkits use HMMs:  Hidden Markov Model Toolkit (HTK)~\parencite{young2002htk}, Sphinx~\parencite{walker2004sphinx}, SONIC~\parencite{pellom2003recent}, RASR~\parencite{rybach2009rwth}, Kaldi~\parencite{povey2011kaldi}, Attila~\parencite{soltau2010ibm}, BYBLOS~\parencite{chow1987byblos}, and Watson~\parencite{goffin2005t}.

An HMM consists of a time-homogeneous Markov chain with $M$ states and a single-step transition matrix $\mathbf{A}$. Let $S:\mathbb{Z}^+ \rightarrow \mathbb{Z}_M$ denote a function that maps time to state indices. Then
\begin{align}
 \mathbf{A}_{i,j} &= P[S(t+1) = j | S(t) = i]
\end{align}
$\text{ for }\forall t \in \mathbb{Z}^+ \text{ and } \forall i,j \in \mathbb{Z}_M$. Each state contains a probability density function (pdf) of the multivariate observations. A GMM is a common choice for this purpose~\parencite{rabiner1993fundamentals}. The pdf $f_i$ of an observation $\mathbf{o} \in \mathbb{R}^D$ at state $i$ is
\begin{align}\label{eq:gmm}
 f_i(\mathbf{o}) &= \sum_{k=1}^K w_{i,k} \mathcal{N}(\mathbf{o};\boldsymbol{\mu}_{i,k},\boldsymbol{\Sigma}_{i,k})
\end{align}
where $w_{i,1},\ldots,w_{i,K}$ are convex coefficients and $\mathcal{N}(\mathbf{o};\boldsymbol{\mu}_{i,k},\boldsymbol{\Sigma}_{i,k})$ denotes a multivariate Gaussian pdf with population mean $\boldsymbol{\mu}_{i,k}$ and covariance matrix $\boldsymbol{\Sigma}_{i,k}$.

\subsection{The Baum-Welch Algorithm for HMM Parameter Estimation}
The Baum-Welch algorithm~\parencite{baum-et-al1970} is an EM algorithm for tuning HMM parameters. Let $\mathcal{O} = (\mathbf{o}_1,\ldots,\mathbf{o}_T)$ denote a multivariate time series of length $T$. Let $\mathcal{S} = (S(1),\ldots,S(T))$ and $\mathcal{Z} = (Z(1),\ldots,Z(T))$ be the respective latent state and Gaussian index sequences. Then the ML estimate $\Theta^*$ of the HMM parameters is
\begin{align}
	\Theta^* &= \argmax{\Theta} \log \sum_{\mathcal{S},\mathcal{Z}} P[\mathcal{O},\mathcal{S},\mathcal{Z}|\Theta] \;.
	\label{eq:ml_est}
\end{align}
The sum over latent variables makes it difficult to directly maximize the objective function (\ref{eq:ml_est}). EM uses Jensen's inequality~\parencite{boyd-vandenberghe2004} and the concavity of the logarithm to obtain the following lower-bound on the observed data log-likelihood $\log P[\mathcal{O}|\Theta]$ at the current parameter estimate $\Theta^{(n)}$:
\begin{align}
 \log P[\mathcal{O}|\Theta] &\geq \mathbb{E}_{P[\mathcal{S},\mathcal{Z} | \mathcal{O}, \Theta^{(n)}]} \log P[\mathcal{O},\mathcal{S},\mathcal{Z}|\Theta] \nonumber \\
 &= Q(\Theta|\Theta^{(n)}) 
\end{align}
The complete data log-likelihood for an HMM factors is
\begin{multline}
\log P[\mathcal{O},\mathcal{S},\mathcal{Z}|\Theta] = \sum_{i=1}^M I(S(1) = i)\log \mathbf{p}_i(1) ~ +  \\
  \sum_{t=1}^T\sum_{i=1}^M\sum_{k=1}^K I(S(t) = i, Z(t) = k) \Big{\{} \log w_{i,k} + 
\log\mathcal{N}(\mathbf{o}_t|\boldsymbol{\mu}_{i,k},\boldsymbol{\Sigma}_{i,k}) \Big{\}} + \\
 \sum_{t=1}^{T-1}\sum_{i=1}^M\sum_{j=1}^M I(S(t+1) = j,S(t) = i) \log\mathbf{A}_{i,j}
\end{multline}
where $I(.)$ is an indicator function and $\mathbf{p}_i(1) = P[S(1) = i]$. The $Q$-function requires computing the following sets of variables:
\begin{align}
 \gamma^{(n)}_i(1) &= P[S(1) = i | \mathcal{O},\Theta^{(n)}] \\
 \eta^{(n)}_{i,k}(t) &= P[S(t) = i, Z(t) = k | \mathcal{O},\Theta^{(n)}] \\
 \zeta^{(n)}_{i,j}(t) &= P[S(t+1) = j, S(t) = i|\mathcal{O},\Theta^{(n)}]
\end{align}
for $\forall t \in \{1,\ldots,T\}$, $i,j \in \{1,\ldots,M\}$, and $k \in \{1,\ldots,K\}$. The Forward-Backward algorithm is a dynamic programming approach that efficiently computes these variables~\parencite{rabiner1989}. The resulting $Q$-function is
\begin{multline}
 Q(\Theta|\Theta^{(n)}) = \sum_{i=1}^M \gamma^{(n)}_i(1) \log\mathbf{p}_i(1) + \\
 \sum_{t=1}^T\sum_{i=1}^M\sum_{k=1}^K \eta^{(n)}_{i,k}(t) \Big{\{} \log w_{i,k}  
  + \log\mathcal{N}(\mathbf{o}_t|\boldsymbol{\mu}_{i,k},\boldsymbol{\Sigma}_{i,k}) \Big{\}} + \\
 \sum_{t=1}^{T-1}\sum_{i=1}^M \sum_{j=1}^M \zeta^{(n)}_{i,j}(t) \log\mathbf{A}_{i,j} .
\end{multline}
Maximizing the auxiliary function $Q(\Theta|\Theta^{(n)})$ with respect to the parameters $\Theta$ subject to sum-to-one constraints leads to the re-estimation equations for the M-step at iteration $n$:
\begin{align}
 \mathbf{p}^{(n)}_i(1) &= \gamma^{(n)}_i(1) \\
 \mathbf{A}^{(n)}_{i,j} &= \frac{\sum_{t=1}^{T-1}\zeta^{(n)}_{i,j}(t)}{\sum_{t=1}^{T-1}\gamma^{(n)}_{i}(t)} \\
 w^{(n)}_{i,k} &= \frac{\sum_{t=1}^{T}\eta^{(n)}_{i,k}(t)}{\sum_{t=1}^{T}\gamma^{(n)}_{i}(t)} \\
 \boldsymbol{\mu}^{(n)}_{i,k} &= \frac{\sum_{t=1}^{T}\eta^{(n)}_{i,k}(t)\mathbf{o}_t}{\sum_{t=1}^{T}\gamma^{(n)}_{i}(t)} \label{eq:BW-mu}\\
 \boldsymbol{\Sigma}^{(n)}_{i,k} &= \frac{\sum_{t=1}^{T}\eta^{(n)}_{i,k}(t)(\mathbf{o}_t-\boldsymbol{\mu}^{(n)}_{i,k})(\mathbf{o}_t-\boldsymbol{\mu}^{(n)}_{i,k})^T}{\sum_{t=1}^{T}\gamma^{(n)}_{i}(t)}. \label{eq:BW-covar}
\end{align}

The next section restates the NEM theorem and algorithm in the HMM context.

\section{NEM for HMMs: The Noise-Enhanced HMM (NHMM)}\label{sec:single_nhmm}

A restatement of the NEM condition \parencite{osoba-mitaim-kosko2011,osoba-mitaim-kosko2012} in the HMM context requires the following definitions: The noise random variable $\mathbf{N}$ has pdf  $f(\mathbf{n}|\mathbf{o})$. So the noise $\mathbf{N}$ can depend on the observed data $\mathcal{O}$. $\mathcal{L}$ are the latent variables in the model. $\{\Theta^{(n)}\}$  is  a sequence of EM estimates for $\Theta$. $\Theta^*$ is the converged EM estimate for $\Theta$: $\Theta^*= \lim_{n \to \infty} \Theta^{(n)}$. Define the noisy $Q_N$ function $ Q_\mathbf{N}( \Theta |\Theta^{(n)}) = \E_{\mathcal{L}|\mathcal{O},\Theta^{(n)}}   \left[ \ln f(\mathbf{o}+\mathbf{N},\mathcal{L}| \Theta)  \right]$. Then the NEM positivity condition for Baum-Welch training algorithm is:
\begin{align}
\E_{\mathcal{O,L},\mathbf{N}|\Theta^*} &\left[ \ln\left( \frac{f(\mathcal{O}+\mathbf{N},\mathcal{L}|\Theta^{(n)})}{f(\mathcal{O,L}|\Theta^{(n)})} \right) \right] \geq 0\;. \label{eq:nhmm-Goal}
\end{align}

The HMM uses GMMs at each state. So the HMM satisfies the NEM condition when we enforce the simplified GMM-NEM positivity condition for each GMM. The HMM-EM positivity condition (\ref{eq:nhmm-Goal}) holds when the additive noise sample $\mathbf{N}= (N_1,...,N_D)$ for each observation vector $\mathbf{o}=(o_1,...,o_D)$ satisfies the following quadratic constraint:
\begin{align}
N_d \left[N_d - 2\left(\mu_{i,k,d}-o_d\right) \right] &\leq 0 \quad \textrm{ for all } k \;. \label{eq:gmm-nemcond}
\end{align}

The state sequence $\mathcal{S}$ and the Gaussian index $\mathcal{Z}$ are the latent variables $\mathcal{L}$ for an HMM. The noisy $Q$-function for the NHMM is
\begin{align}\label{eq:hmm_noisyQ}
 Q_N(\Theta|\Theta^{(n)}) = \sum_{i=1}^M \gamma^{(n)}_i(1) \log\mathbf{p}_i(1) &+ 
 \sum_{t=1}^T\sum_{i=1}^M\sum_{k=1}^K \eta^{(n)}_{i,k}(t) \nonumber \\ 
  \Big{\{} \log w_{i,k}  + \log\mathcal{N}(\mathbf{o}_t &+ \mathbf{n}_t|\boldsymbol{\mu}_{i,k},\boldsymbol{\Sigma}_{i,k}) \Big{\}} \nonumber \\
+ \sum_{t=1}^{T-1}\sum_{i=1}^M\sum_{j=1}^M& \zeta^{(n)}_{i,j}(t) \log\mathbf{A}_{i,j}
\end{align}
where $\mathbf{n}_t \in \mathbf{R}^D$ is the noise vector for the observation $\mathbf{o}_t$. Then the $d^{th}$ element $n_{t,d}$ of this noise vector satisfies the following positivity constraint:
\begin{align}\label{eq:gmm-nemcond-hmm}
 n_{t,d} [n_{t,d} -2( \mu^{(n-1)}_{i,k,d} - o_{t,d})] &\leq 0 \quad \textrm{ for all } k
\end{align}
where $\boldsymbol{\mu}^{(n-1)}_{i,k}$ is the mean estimate at iteration $n-1$.

Maximizing the noisy $Q$-function (\ref{eq:hmm_noisyQ}) gives the update equations for the M-step. Only the GMM mean and covariance update equations differ from the noiseless EM because the noise enters the noisy $Q$-function (\ref{eq:hmm_noisyQ}) only through the Gaussian pdf. But the NEM algorithm requires modifying only the covariance update equation (\ref{eq:BW-covar}) because it uses the noiseless mean estimates (\ref{eq:BW-mu}) to check the positivity condition (\ref{eq:gmm-nemcond-hmm}). Then the NEM covariance estimate is
\begin{align}\label{eq:cov_update_nhmm}
\boldsymbol{\Sigma}^{(n)}_{i,k} &= \frac{\sum_{t=1}^{T}\eta^{(n)}_{i,k}(t)(\mathbf{o}_t + \mathbf{n}_t-\boldsymbol{\mu}^{(n)}_{i,k})(\mathbf{o}_t + \mathbf{n}_t-\boldsymbol{\mu}^{(n)}_{i,k})^T}{\sum_{t=1}^{T}\gamma^{(n)}_{i}(t)}.
\end{align}

\begin{figure}[ht!]
 \centering
\includegraphics[width=0.85\textwidth]{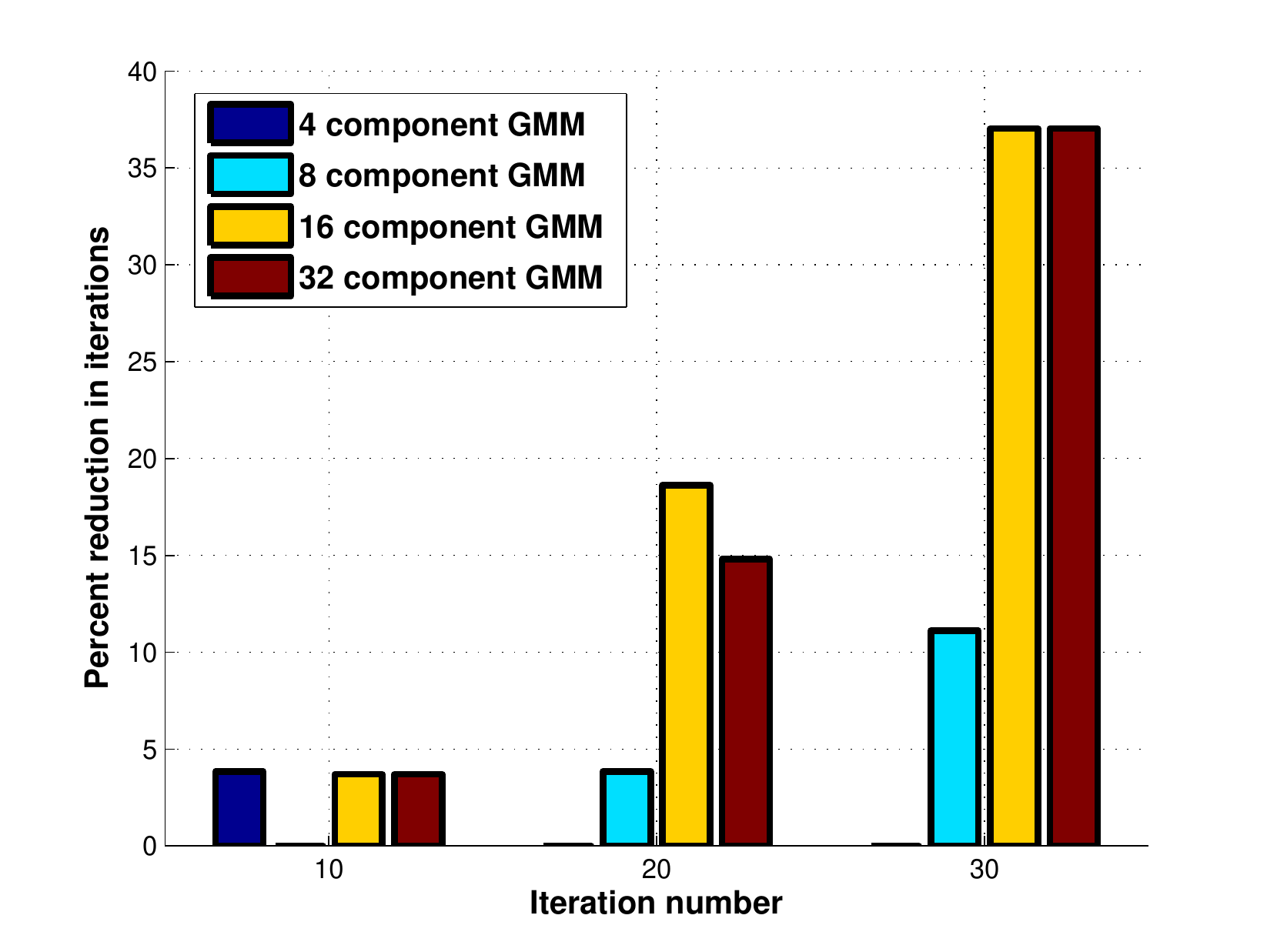}
 \caption[Noisy Hidden Markov Model training converges in fewer iterations than regular hidden Markov Model training]{
	NHMM training converges in fewer iterations than regular HMM training: The bar graph shows the percent reduction in the number of Baum-Welch iterations with respect to the HMM log-likelihood at iterations $10$, $20$, and $30$. Noise significantly reduces the number of iterations for $8$-, $16$-, and $32$-component GMMs. Noise also produces greater reduction for iterations $20$ and $30$ due to the compounding effect of the log-likelihood improvement for the NHMM at each iteration. Noise produces only a marginal reduction for the $4$-component GMM case at $10$ iterations and no improvement for $20$ and $30$ iterations. This pattern of decreasing noise benefits comports with the data sparsity analysis in~\parencite{osoba-mitaim-kosko2012}. The probability of satisfying the NEM sufficient condition increases with fewer data samples for ML estimation.
}
 \label{fig:per_red_iter}
\end{figure}


\begin{algorithm}[H]
\DontPrintSemicolon
\SetKwInOut{Input}{Input}
\SetKwInOut{Output}{Output}
\SetKwFunction{GenerateNoise}{GenerateNoise}

{\bf Initialize parameters:} $\Theta^{(1)} \leftarrow \Theta_\text{init}$
\For{$n = 1 \to n_\text{max}$}{
	{\bf E-Step($\mathcal{O},\Theta^{(n)}$)}{
	 	\For{$t = 1 \to T$, $i,j = 1 \to M$, and $k = 1 \to K$}{
			$\gamma^{(n)}_i(1) \leftarrow P[S(1) = i | \mathcal{O},\Theta^{(n)}]$ \;
 			$\eta^{(n)}_{i,k}(t) \leftarrow P[S(t) = i, Z(t) = k | \mathcal{O},\Theta^{(n)}]$ \;
    			$\zeta^{(n)}_{i,j}(t) \leftarrow P[S(t+1) = j, S(t) = i|\mathcal{O},\Theta^{(n)}]$ \;
		}
	}
	{\bf M-Step($\mathcal{O},\gamma,\eta,\zeta,\tau$)}{
		\For{$i,j = 1 \to M$ and $k = 1 \to K$}{
			$\mathbf{p}^{(n)}_i(1) \gets \gamma^{(n)}_i(1)$\;
			$\mathbf{A}^{(n)}_{i,j} \gets \frac{\sum_{t=1}^{T-1}\zeta^{(n)}_{i,j}(t)}{\sum_{t=1}^{T-1}\gamma^{(n)}_{i}(t)}$\;
			$w^{(n)}_{i,k} \gets \frac{\sum_{t=1}^{T}\eta^{(n)}_{i,k}(t)}{\sum_{t=1}^{T}\gamma^{(n)}_{i}(t)}$\; 
			$\boldsymbol{\mu}^{(n)}_{i,k} \gets \frac{\sum_{t=1}^{T}\eta^{(n)}_{i,k}(t)\mathbf{o}_t}{\sum_{t=1}^{T}\gamma^{(n)}_{i}(t)}$\;
      			$\mathbf{n}_t \gets $ \GenerateNoise{$\boldsymbol{\mu}^{(n)}_{i,k},\mathbf{o}_t,n^{-\tau}\sigma_N^2$}\;
      			$\boldsymbol{\Sigma}^{(n)}_{i,k} = \frac{\sum_{t=1}^{T}\eta^{(n)}_{i,k}(t)(\mathbf{o}_t + \mathbf{n}_t-\boldsymbol{\mu}^{(n)}_{i,k})(\mathbf{o}_t + \mathbf{n}_t-\boldsymbol{\mu}^{(n)}_{i,k})^T}{\sum_{t=1}^{T}\gamma^{(n)}_{i}(t)}$\;
    		}
	}
	{\bf \GenerateNoise{$\boldsymbol{\mu}^{(n)}_{i,k},\mathbf{o}_t,\sigma^2$} }{
		$\mathbf{n}_t \gets \mathcal{N}(0,\sigma^2)$\;
		\For{$d = 1 \to D$}{
			\If{$n_{t,d} [n_{t,d} -2( \mu^{(n-1)}_{i,k,d} - o_{t,d})] > 0 \textrm{ for some } k$}{
				 $n_{t,d} = 0$\;
			}
		}
		{\bf Return} $\mathbf{n}_t$\;
	}
}
\caption{Noise Injection Algorithm for Training NHMMs}\label{alg:nhmm_train}
\end{algorithm}

\section{Simulation Results}\label{sec:hmm-sim}
We modified the Hidden Markov Model Toolkit (HTK)~\parencite{young2002htk} to train the NHMM. HTK provides a tool called ``HERest'' that performs embedded Baum-Welch training for an HMM. This tool first creates a large HMM for each training speech utterance. It concatenates the HMMs for the sub-word units. The Baum-Welch algorithm tunes the parameters of this large HMM.

The NHMM algorithm used (\ref{eq:cov_update_nhmm}) to modify covariance matrices in HERest. We sampled from a suitably truncated Gaussian pdf to produce noise that satisfied the NEM positivity condition (\ref{eq:gmm-nemcond-hmm}). We used noise variances in $\{0.001, 0.01, 0.1, 1\}$. A deterministic annealing factor $n^{-\tau}$ scaled the noise variance at iteration $n$. The noise decay rate was $\tau > 0$. We used $\tau \in \{1,\ldots,10\}$. We then added the noise vector to the observations during the update of the covariance matrices (\ref{eq:cov_update_nhmm}).

The simulations used the TIMIT speech dataset~\parencite{garofolo1993timit} with the standard setup in~\parencite{halberstadt1997heterogeneous}. We parameterized the speech signal with $12$ Mel-Frequency Cepstral Coefficients (MFCC) computed over $20$-msec Hamming windows with a $10$-msec shift. We also appended the first- and second-order finite differences of the MFCC vector with the energies of all three vectors. We used $3$-state left-to-right HMMs to model each phoneme with a $K$-component GMM at each state. We varied $K$ over $\{1,4,8,16,32\}$ for the experiments and used two performance metrics to compare NHMM with HMM.  The first metric was the percent reduction in EM iterations for the NHMM to achieve the same per-frame log-likelihood as does the noiseless HMM at iterations $10, 20$, and $30$. The second metric was the median improvement in per-frame log-likelihood over $30$ training iterations.

Figure~\ref{fig:per_red_iter} shows the percent reduction in the number of training iterations for the NHMM compared to the HMM log-likelihood at iterations $10$, $20$, and $30$. Noise substantially reduced the number of iterations for $16$- and $32$-component GMMs. But it only marginally improved the other cases. This holds because the noise is more likely to satisfy the NEM positivity condition when the number of data samples is small relative to the number of parameters~\parencite{osoba-mitaim-kosko2012}.

\LinesNumbered

\section{Conclusion}\label{sec:hmm-concl}
Careful addition of noise can speed the average convergence of iterative ML estimation for HMMs. The NEM theorem gives a sufficient condition for generating such noise. This condition reduces to a simple quadratic constraint in the case of HMMs with a GMM at each state. Experiments on the TIMIT data set show a significant improvement in per-frame log-likelihood and in time to convergence for the NHMM as compared with the HMM. Future work should develop algorithms to find the optimal noise variance and annealing decay factor. It should also explore noise benefits at other stages of EM training in an HMM.

\clearpage

\chapter{NEM Application: Backpropagation for Training Feedforward Neural Networks}\label{ch:NEM-BP}

\blfootnote{
	This chapter features work done in collaboration with Kartik Audhkhasi and first published in \parencite{audhkhasi-osoba-kosko-BP2013}.
}

An artificial neural network is a biomimetic mathematical model consisting of a network of artificial neurons. Artificial neurons are simplified models of biological neurons. These neurons transform input signals using an activation function (usually a sigmoid or squashing function). The connecting edges of the neural network (NN) simulate biological synapses between biological neurons. They amplify or inhibit signal transmission between neurons. Figure~\ref{fig:FF-ANN} is an example of an artificial neural network with three layers or \emph{fields} of neurons: an input, hidden, and an output layer. It is an example of a multilayer feedforward neural network. The term ``feedforward'' refers to the absence of backward or self connections (or feedback) between neurons. 

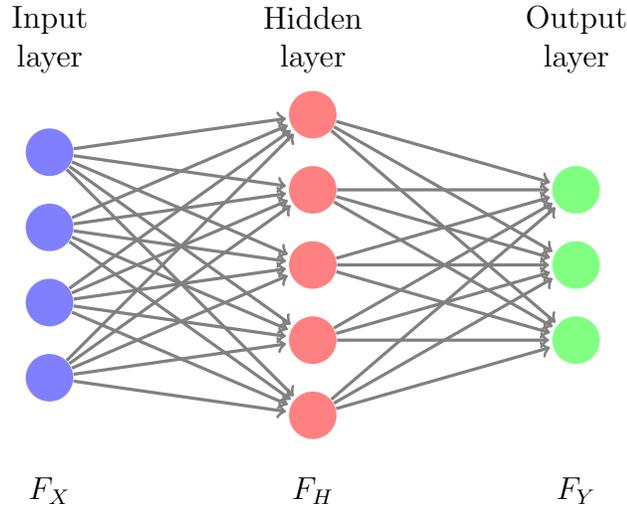
\begin{figure}[ht!]
	\centering
	\begin{tikzpicture}[shorten >=1pt,->,draw=black!50, node distance=\layersep]
		\tikzstyle{every pin edge}=[<-,shorten <=2pt]
		\tikzstyle{neuron}=[circle,fill=black!25,minimum size=18pt,inner sep=0pt]
		\tikzstyle{output neuron}=[neuron, fill=green!50];
		\tikzstyle{hidden neuron}=[neuron, fill=red!50];
		\tikzstyle{input neuron}=[neuron, fill=blue!50];
		\tikzstyle{annot} = [text width=4em, text centered]
		\foreach \name / \y in {1,...,4}
			\node[input neuron] (I-\name) at (0,-\y) {};
		\foreach \name / \y in {1,...,5}
			\path[yshift=0.5cm]
				node[hidden neuron] (H-\name) at (\layersep,-\y cm) {};
		\foreach \name / \y in {2,...,4}
			\node[output neuron, right of=H-\y] (O-\name) {};
		\foreach \source in {1,...,4}
			\foreach \dest in {1,...,5}
				\draw[->, very thick] (I-\source) edge (H-\dest);
		\foreach \source in {1,...,5}
			\foreach \dest in {2,...,4}
					\draw[->, very thick] (H-\source) edge (O-\dest);
		\node[annot,above of=H-1, node distance=1cm] (hl) {Hidden layer};
		\node[annot,left of=hl] {Input layer};
		\node[annot,right of=hl] {Output layer};
		\node[annot,below of=H-5, node distance=1cm] (hll) {$F_H$};
		\node[annot,left of=hll] {$F_X$};
		\node[annot,right of=hll] {$F_Y$};
	\end{tikzpicture}
	\caption[A $3$-Layer Feedforward Artificial Neural Network]{
		A Feedforward Neural Network with three layers of neurons. The nodes represent artificial neurons. The edges represent synaptic connections between neurons. The backpropagation algorithm trains the network by tuning the strength of these synapses. Feedforward neural networks have no feedback or recurrent synaptic connections.}
	\label{fig:FF-ANN}
\end{figure}

Feedforward neural networks are a popular computational model for many pattern recognition and signal processing problems. Their applications include speech recognition~\parencite{mohamed2009deep,mohamed2010investigation,mohamed2012acoustic,seide2011conversational,dahl2010phone,sainath2011making,mohamed2011deep}, machine translation of text~\parencite{deselaers2009deep}, audio processing~\parencite{hamel2010learning}, artificial intelligence~\parencite{bengio2009learning}, computer vision~\parencite{ciresan2010deep,nair20093d,susskind2008generating}, and medicine~\parencite{hu2013artificial}.

Neural networks learn to respond to input stimuli in the same way the brain learns to process sensory stimuli. Learning occurs in both artificial and biological neural networks when the network parameters change~\parencite{kosko-nnfs}. We train neural networks via a sequence of adaptations until the network produces acceptable responses to input stimuli. The process of learning the correct input-output response is the same as learning to approximate an arbitrary function. Halbert White~\parencite{hornik-white1989, hornik-white1990} proved that multilayer feedforward networks can approximate any Borel-measurable function to arbitrary accuracy.

\section{Backpropagation Algorithm for NN Training}\label{subsec:bp-overview}
Backpropagation (BP)~\parencite{werbos1974,rumelhart-hinton-williams1986,werbos1990} is the standard supervised training method for multilayer feedforward neural networks~\parencite{kosko-nnfs}. The goal of the BP algorithm is to train a feedforward network (FF-NN) to approximate an input-output mapping by learning from multiple examples of said mapping. The algorithm adapts the neural network by tuning the hidden synaptic weights of the FF-NN to minimize an error function over the example data set. The error function is an aggregated cost function for approximation errors over the training set. The BP algorithm uses gradient descent to minimize the error function.

Feedforward neural networks and other ``connectionist'' learning architectures act as black-box approximators: they offer no insight about \emph{how} the units cooperate to perform the approximation task. This lack of explanatory power complicates the training process. How does a training algorithm determine how much each unit in the network contributes to overall network failure (or success)? How does a training algorithm fairly distribute blame for errors among the many hidden units in the network? This is a key problem that plagues multi-layered connectionist learning architectures. Minsky~\parencite{minsky1961, kosko-nnfs} called this problem the \emph{credit assignment problem}. 

Backpropagation's solution to the credit assignment problem is to propagate the global error signal backwards to each local hidden unit via recursive applications of the chain rule. The application of the chain rule gives a measure the rate of change of the global error with respect to changes in each hidden network parameter. Thus the training algorithm can tune hidden local parameters to perform gradient descent on the global error function.

\subsection{Summary of NEM Results for Backpropagation}
The main finding of this chapter is that the backpropagation algorithm for training feedforward neural networks is a special case of the Generalized Expectation-Maximization algorithm (Theorem \ref{thm:bp_gem_equiv})~\parencite{audhkhasi-osoba-kosko-BP2013}. This subsumption is consistent with the EM algorithm's missing information theme. BP estimates hidden parameters to match observed data. While EM estimates hidden variables or functions thereof to produce a high-likelihood fits for observed data. 

The EM subsumption result means that noise can speed up the convergence of the BP algorithm according the NEM theorem. The NEM positivity condition provides the template for sufficient conditions under which injecting noise into training data speeds up BP training time of feedforward neural networks. Figures \ref{fig:bp_em_sqrerr} and \ref{fig:bp_em_crossent} show that the backpropagation error function falls faster with NEM noise than without NEM noise. Thus NEM noise injection leads to faster backpropagation convergence time. 

\begin{figure}[ht!]
	\centering
	\includegraphics[width=0.85\textwidth]{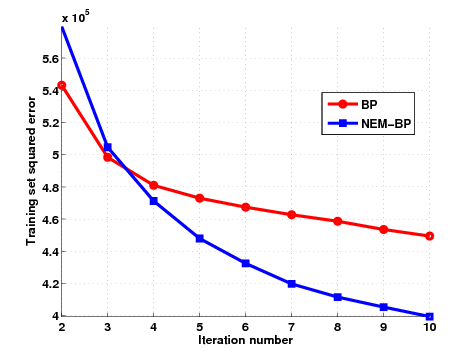}
	\caption[Comparison between backpropagation and NEM-BP using the squared-error cost function]{
		This figure shows the training-set squared error for backpropagation and NEM-backpropagation (NEM-BP) training of an auto-encoder neural network on the MNIST digit classification data set. There is a $5.3\%$ median decrease in the squared error per iteration for NEM-BP when compared with backpropagation training. We added annealed independent and identically-distributed (i.i.d.) Gaussian noise to the target variables. The noise had mean $\mathbf{a}^t-\mathbf{t}$ and a variance that decayed with the training epochs as $\{0.1, 0.1/2, 0.1/3,\ldots\}$ where $\mathbf{a}^t$ is the vector of activations of the output layer and $\mathbf{t}$ is the vector of target values. The network used three logistic (sigmoidal) hidden layers with 20 neurons each. The output layer used 784 logistic neurons.
	}
	\label{fig:bp_em_sqrerr}
\end{figure}

Matsuoka~\parencite*{matsuoka1992noise} and Bishop~\parencite*{bishop1995} hypothesized that injecting noise into the input field may act as a regularizer during BP training and improve generalization performance. G. An~\parencite*{an1996} re-examined Matsuoka's and Bishop's claims with a stochastic gradient descent analysis~\parencite{bottou1991}. He showed that the regularization claim is invalid. But input and synaptic weight noise may indeed improve the network's generalization performance. These previous works used white noise and focused on generalization performance. We instead add non-white noise using a sufficient condition which depends on the neural network parameters and output activations. And our goal is to reduce training time.

The next section (\Sec\ref{sec:backprop}) reviews the details of the backpropagation algorithm and recasts the algorithm as an MLE method. \Sec\ref{sec:bp_em} presents the backpropagation algorithm as an EM algorithm for neural network training. \Sec\ref{sec:nem_backprop} discusses the NEM sufficient conditions in the backpropagation context. \Sec\ref{sec:bp-sim} shows simulation results comparing BP to NEM for NN training.

\section{Backpropagation as Maximum Likelihood Estimation} \label{sec:backprop}

Backpropagation performs ML estimation of a neural network's parameters. We use a 3-layer neural network for notational convenience. The results in this paper extend to deeper networks with more hidden layers. $\mathbf{x}$ are the neuron values at the input layer consisting of $I$ neurons. $\mathbf{a}^h$ is the vector of hidden neuron sigmoidal activations whose $j^{th}$ element is
\begin{align}
	a^h_j & = \frac{1}{1+\exp\Big{(}-\sum_{i=1}^I w_{ji}x_i\Big{)}} = \sigma\Big{(}\sum_{i=1}^I w_{ji}x_i\Big{)} \;,\label{eq:ahj}
\end{align}
where $w_{ji}$ is the weight of the link connecting the $i^{th}$ visible and $j^{th}$ hidden neuron. $y$ represents the $K$-valued target variable and $\mathbf{t}$ is its $1$-in$K$ encoding. $t_k$ is the $k^{th}$ output neuron's value with activation
\begin{align}
	a^t_k &= \frac{\exp\Big{(}\sum_{j=1}^J u_{kj}a^h_j\Big{)}}{\sum_{k_1=1}^K \exp\Big{(}\sum_{j=1}^J u_{k_1j}a^h_j\Big{)}} \label{eq:gibbs_act}\\
	&= p(y = k | \mathbf{x},\Theta)\;,
\end{align}
where $u_{kj}$ is the weight of the link connecting the $j^{th}$ hidden and $k^{th}$ target neuron. $a^t_k$ depends on input $\mathbf{x}$ and parameter matrices $\mathbf{U}$ and $\mathbf{W}$. 

Backpropagation minimizes the following cross entropy:
\begin{align}
	E &= -\sum_{k=1}^K t_k \ln(a^t_k) \;.
\end{align}
The cross-entropy is equal to the negative conditional log-likelihood of the targets given the inputs because
\begin{align}
	E &= -\ln\Big{[}\prod_{k=1}^K (a^t_k)^{t_k} \Big{]} \\
	&= -\ln\Big{[}\prod_{k=1}^K p(y = k|\mathbf{x},\Theta)^{t_k}\Big{]} \\
	&= -\ln p(y|\mathbf{x},\Theta) = -L \;.
\end{align}
Backpropagation updates the network parameters $\Theta$ using gradient ascent to maximize the log likelihood $\ln p(y|\mathbf{x},\Theta)$. The partial derivative of this log-likelihood with respect to $u_{kj}$ is
\begin{align}\label{eq:partial_u_final}
	\frac{\partial L}{\partial u_{kj}} &= (t_k - a^t_k) a^h_j \;,
\end{align}
and with respect to $w_{ji}$ is
\begin{align}
	\frac{\partial L}{\partial w_{ji}} 
	&= a^h_j(1-a^h_j)x_i\sum_{k=1}^K (t_k - a^t_k)u_{kj} \label{eq:partial_w_final}\;.
\end{align}
(\ref{eq:partial_u_final}) and (\ref{eq:partial_w_final}) give the partial derivatives to perform gradient ascent on the log-likelihood $L$.

Original formulations of backpropagation~\parencite{werbos1974, rumelhart-hinton-williams1986, werbos1990} used the squared error between the observed and target network output as the error function. This setup is sometimes called \emph{regression} since it is equivalent to generalized least-squares regression. Hidden layers in this configuration correspond to higher order interactions in the regression. Later theoretical developments~\parencite{solla-levin-fleisher1988, holt-semnani1990, vanooyen-nienhuis1992} in backpropagation showed that the cross-entropy error function leads to better convergence properties.

The linear activation function
\begin{align}
	a^t_k &= \sum_{j=1}^J u_{kj} a^h_j
\end{align}
often replaces the Gibbs function at the output layer for a regression NN. The target values $\mathbf{t}$ of the output neuron layer are free to assume any real values for regression. Backpropagation then minimizes the following squared error function:
\begin{align}
	E &= \frac{1}{2}\sum_{k=1}^K (t_k - a_k^t)^2 \;.
\end{align}
We assume that the estimation error $\mathbf{e} = \mathbf{t} - \mathbf{a}^t$ is Gaussian with mean $\mathbf{0}$ and identity covariance matrix $\mathbf{I}$ in keeping with standard assumptions for least-squares regression. Least-squares regression estimates optimal regression parameters $\mathbf{U}$ and $\mathbf{W}$ that minimize the least-squares error. These least-squares estimates are the maximum likelihood estimate under the Gaussian assumption.  So backpropagation also maximizes the following log-likelihood function:
\begin{align}
	L &= \log p(\mathbf{t} | \mathbf{x},\Theta) = \log \mathcal{N}(\mathbf{t};\mathbf{a}^t,\mathbf{I})
\end{align}
for
\begin{align}
	\mathcal{N}(\mathbf{t};\mathbf{a}^t,\mathbf{I}) &= \frac{1}{(2\pi)^{d/2}} \exp\Bigg{\{}-\frac{1}{2}\sum_{k=1}^K (t_k - a_k^t)^2\Bigg{\}} \;.
\end{align}
And thus the gradient partial derivatives of this log-likelihood function are the same as those for the $K$-class classification case in (\ref{eq:partial_u_final}) and (\ref{eq:partial_w_final}).

\section{Backpropagation as an EM Algorithm}\label{sec:bp_em}
Both backpropagation and the EM algorithm seek ML estimates of a neural network's parameters. The next theorem shows that backpropagation is a generalized EM algorithm. 

\begin{thm}{\bf{[Backpropagation is a GEM Algorithm]:}} \label{thm:bp_gem_equiv}\\
The backpropagation update equation for a differentiable likelihood function $p(y|\mathbf{x},\Theta)$ at epoch $n$ is
\begin{align}
	\Theta^{n+1} &= \Theta^{n} + \eta \nabla_{\Theta} \ln p(y|\mathbf{x},\Theta)\Big{|}_{\Theta = \Theta^{n}}
\end{align}
equals the GEM update equation at epoch $n$
\begin{align}
	\Theta^{n+1} &= \Theta^{n} + \eta  \nabla_{\Theta} Q(\Theta|\Theta^{n})\Big{|}_{\Theta = \Theta^{n}} \;,
\end{align}
where the Q-function used in GEM is
\begin{align}\label{eq:qfun}
	Q(\Theta|\Theta^n) &= \mathbb{E}_{p(\mathbf{h}|\mathbf{x},y,\Theta^n)}\Big{\{}\ln p(y,\mathbf{h}|\mathbf{x},\Theta)\Big{\}} \;. 
\end{align}
\end{thm}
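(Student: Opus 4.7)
The plan is to prove this by invoking the standard EM likelihood decomposition together with Oakes' identity, which the chapter has already noted (in the MM-subsumption discussion around equation~(\ref{eq:LQH-intro})) as a key structural property of EM algorithms. First I would identify the hidden variable $\mathbf{h}$ with the vector of hidden-layer neuron activations of the network, so that $p(y,\mathbf{h}|\mathbf{x},\Theta)$ is the complete-data likelihood and $p(y|\mathbf{x},\Theta) = \sum_{\mathbf{h}} p(y,\mathbf{h}|\mathbf{x},\Theta)$ is the observed-data likelihood that backpropagation maximizes. This identifies the neural-network MLE problem as a genuine incomplete-data problem, so the EM machinery applies.

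Next I would apply Bayes' rule to the conditional $p(\mathbf{h}|\mathbf{x},y,\Theta)$ to obtain the decomposition
\begin{equation}
\ln p(y|\mathbf{x},\Theta) \;=\; Q(\Theta|\Theta^n) \;-\; H(\Theta|\Theta^n),
\end{equation}
where $Q$ is as defined in~(\ref{eq:qfun}) and $H(\Theta|\Theta^n)=\mathbb{E}_{p(\mathbf{h}|\mathbf{x},y,\Theta^n)}[\ln p(\mathbf{h}|\mathbf{x},y,\Theta)]$. This is exactly the $L=Q-H$ decomposition of Proposition~\ref{thm:ascent} applied in the conditional setting. Differentiating both sides with respect to $\Theta$ and evaluating at $\Theta=\Theta^n$ then gives
\begin{equation}
\nabla_\Theta \ln p(y|\mathbf{x},\Theta)\big|_{\Theta=\Theta^n} \;=\; \nabla_\Theta Q(\Theta|\Theta^n)\big|_{\Theta=\Theta^n} \;-\; \nabla_\Theta H(\Theta|\Theta^n)\big|_{\Theta=\Theta^n}.
\end{equation}

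The crux of the proof is to show that the last term vanishes. This is Oakes' result: $H(\Theta|\Theta^n)$ is maximized in $\Theta$ at $\Theta=\Theta^n$ because $H(\Theta^n|\Theta^n)-H(\Theta|\Theta^n)$ equals the Kullback--Leibler divergence $D(p(\mathbf{h}|\mathbf{x},y,\Theta^n)\,\|\,p(\mathbf{h}|\mathbf{x},y,\Theta))\geq 0$, and this divergence attains its minimum of zero precisely at $\Theta=\Theta^n$. Hence $\Theta^n$ is an interior stationary point of $H(\,\cdot\,|\Theta^n)$ and its gradient there is zero. I expect this to be the only delicate step, and it is routine once one recognizes the KL-divergence structure; the mild regularity needed is differentiability under the integral/summation sign, which holds for the sigmoidal/softmax likelihoods used in the network.

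Conclude by combining the two gradient identities:
\begin{equation}
\nabla_\Theta Q(\Theta|\Theta^n)\big|_{\Theta=\Theta^n} \;=\; \nabla_\Theta \ln p(y|\mathbf{x},\Theta)\big|_{\Theta=\Theta^n}.
\end{equation}
Therefore the backpropagation update $\Theta^{n+1}=\Theta^n+\eta\,\nabla_\Theta\ln p(y|\mathbf{x},\Theta)|_{\Theta^n}$ is identical to the gradient-ascent step $\Theta^{n+1}=\Theta^n+\eta\,\nabla_\Theta Q(\Theta|\Theta^n)|_{\Theta^n}$. For sufficiently small learning rate $\eta$ this step increases $Q$ (since it moves along its gradient at $\Theta^n$), so it satisfies the relaxed M-step condition $Q(\Theta^{n+1}|\Theta^n)\geq Q(\Theta^n|\Theta^n)$ that defines a GEM algorithm (Algorithm~\ref{algo:GEM}). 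Thus backpropagation is a GEM algorithm, which is the content of the theorem. The NEM theorem then applies immediately to give a noise-benefit sufficient condition for backpropagation, to be developed in~\Sec\ref{sec:nem_backprop}.
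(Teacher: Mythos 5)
Your proposal is correct and follows essentially the same route as the paper's proof: both rest on the $\ln p(y|\mathbf{x},\Theta) = Q \mp H$ decomposition and the observation that the Kullback--Leibler nonnegativity forces $\nabla_\Theta H(\Theta|\Theta^n)$ to vanish at $\Theta=\Theta^n$ (Oakes' identity), so the gradients of the observed log-likelihood and of $Q$ coincide there; the only difference is a sign convention in how $H$ is defined. Your closing remark that the small-$\eta$ gradient step satisfies $Q(\Theta^{n+1}|\Theta^n)\geq Q(\Theta^n|\Theta^n)$ makes the GEM subsumption slightly more explicit than the paper does, but it is the same argument.
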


\begin{proof}
We know that~\parencite{bishop2006, oakes1999}
\begin{align}
	\log p(y|\mathbf{x},\Theta) &= Q(\Theta|\Theta^n) + H(\Theta|\Theta^n)
\end{align}
if $H(\Theta|\Theta^n)$ is the following cross entropy~\parencite{cover-thomas91}:
\begin{align}
	H(\Theta|\Theta^{n}) = -\int \ln p(\mathbf{h}|\mathbf{x},y,\Theta) ~\mathrm{d}p(\mathbf{h}|\mathbf{x},y,\Theta^{n}) \;.
\end{align}
Hence
\begin{align}\label{eq:ce_qfun_relation}
	 H(\Theta|\Theta^{n}) &= \log p(y|\mathbf{x},\Theta) - Q(\Theta|\Theta^{n}) \;.
\end{align}
Now expand the relative entropy:
\begin{align}
	D_\text{KL}(\Theta^n||\Theta) &= \int \ln \Bigg(\frac{p(\mathbf{h}|\mathbf{x},y,\Theta^n)}{p(\mathbf{h}|\mathbf{x},y,\Theta)}\Bigg) ~\mathrm{d}p(\mathbf{h}|\mathbf{x},y,\Theta^n) \\
	&= \int \ln p(\mathbf{h}|\mathbf{x},y,\Theta^n) ~\mathrm{d}p(\mathbf{h}|\mathbf{x},y,\Theta^n) \nonumber \\
	&- \int \ln p(\mathbf{h}|\mathbf{x},y,\Theta) ~\mathrm{d}p(\mathbf{h}|\mathbf{x},y,\Theta^n) \\
	&= -H(\Theta^n|\Theta^n) + H(\Theta|\Theta^n) \;.
\end{align}
So $H(\Theta|\Theta^n)$ $\geq$ $H(\Theta^n|\Theta^n)$ for all $\Theta$ because $D_\text{KL}(\Theta^n||\Theta)$ $\geq$ $0$. Thus $\Theta^n$ minimizes $H(\Theta|\Theta^n)$ and hence $\nabla_{\Theta} H(\Theta|\Theta^n) = 0$ at $\Theta = \Theta^n$. Putting this in (\ref{eq:ce_qfun_relation}) gives
\begin{align}
	\nabla_{\Theta} \log p(y|\mathbf{x},\Theta)\Big{|}_{\Theta = \Theta^{n}} &= \nabla_{\Theta} Q(\Theta|\Theta^{n})\Big{|}_{\Theta = \Theta^{n}} \;.
\end{align}
Hence the backpropagation and GEM update equations are identical.
\end{proof}

This makes sense because backpropagation is a greedy algorithm for optimizing the likelihood function as the last section showed. The basic intuition is that any greedy algorithm for likelihood--function optimization is a generalized EM algorithm if there is hidden or incomplete data involved in the estimation.

The GEM algorithm involves a probabilistic description of the hidden layer neurons. We assume that the hidden layer neurons are Bernoulli random variables. Their activation is thus the following conditional probability:
\begin{align}
	a^h_j &= p(h_j = 1 | \mathbf{x},\Theta) \;.
\end{align}

We can now formulate an EM algorithm for ML estimation of a feedforward neural network's parameters. The E-step computes the Q-function in (\ref{eq:qfun}). Computing the expectation in (\ref{eq:qfun}) requires $2^J$ values of $p(\mathbf{h}|\mathbf{x},y,\Theta^n)$. This is expensive for large values of $J$. So we thus resort to Monte Carlo sampling to approximate the above $Q$-function. The strong law of large numbers ensures that this Monte Carlo approximation converges almost surely to the true Q-function. $p(\mathbf{h}|\mathbf{x},y,\Theta^n)$ becomes the following using Bayes theorem:
\begin{align}
	p(\mathbf{h}|\mathbf{x},y,\Theta^n) &= \frac{p(\mathbf{h}|\mathbf{x},\Theta^n)p(y|\mathbf{h},\Theta^n)}{\sum_\mathbf{h} p(\mathbf{h}|\mathbf{x},\Theta^n)p(y|\mathbf{h},\Theta^n)} \;.
\end{align}
$p(\mathbf{h}|\mathbf{x},\Theta^n)$ is easier to sample from because $h_j$s are independent given $\mathbf{x}$. We replace $p(\mathbf{h}|\mathbf{x},\Theta^n)$ by its Monte Carlo approximation using $M$ independent and identically-distributed (IID) samples:
\begin{align}
	p(\mathbf{h}|\mathbf{x},\Theta^n) &\approx \frac{1}{M} \sum_{m=1}^M \delta_K(\mathbf{h} - \mathbf{h}^m) \;,
\end{align}
where $\delta_K$ is the $J$-dimensional Kronecker delta function. The Monte Carlo approximation of the hidden data conditional PDF becomes
\begin{align}
	p(\mathbf{h}|\mathbf{x},y,\Theta^n) &\approx \frac{\sum_{m=1}^M \delta_K(\mathbf{h} - \mathbf{h}^m) p(y|\mathbf{h},\Theta^n)}{\sum_\mathbf{h} \sum_{m_1=1}^M \delta_K(\mathbf{h} - \mathbf{h}^{m_1})p(y|\mathbf{h},\Theta^n)} \\
	&= \frac{\sum_{m=1}^M \delta_K(\mathbf{h} - \mathbf{h}^m) p(y|\mathbf{h}^m,\Theta^n)}{\sum_{m_1=1}^M p(y|\mathbf{h}^{m_1},\Theta^n)} \\
	&= \sum_{m=1}^M \delta_K(\mathbf{h} - \mathbf{h}^m) \gamma^m \;, \label{eq:wgt_mc_approx}
\end{align}
where
\begin{align}
	\gamma^m &= \frac{p(y|\mathbf{h}^m,\Theta^n)}{\sum_{m_1=1}^M p(y|\mathbf{h}^{m_1},\Theta^n)}
\end{align}
is the ``importance'' of $\mathbf{h}^m$. (\ref{eq:wgt_mc_approx}) gives an importance-sampled approximation of $p(\mathbf{h}|\mathbf{x},y,\Theta^n)$ where each sample $\mathbf{h}^m$ is given weight $\gamma^m$. We can now approximate the Q-function as:
\begin{align}
	Q(\Theta|\Theta^n) &\approx \sum_{\mathbf{h}} \sum_{m=1}^M \gamma^m \delta_K(\mathbf{h} - \mathbf{h}^m) \ln p(y,\mathbf{h}|\mathbf{x},\Theta) \\
	&= \sum_{m=1}^M \gamma^m \ln p(y,\mathbf{h}^m|\mathbf{x},\Theta) \\
	&= \sum_{m=1}^M \gamma^m \Big{[}\ln p(\mathbf{h}^m|\mathbf{x},\Theta) + \ln p(y|\mathbf{h}^m,\Theta)\Big{]} \;,\label{eq:qfun-nn}
\end{align}
where
\begin{align}
	\ln p(\mathbf{h}^m|\mathbf{x},\Theta) &= \sum_{j=1}^J \Big{[} h^m_j \ln a^h_j + (1-h^m_j)\ln (1-a^h_j) \Big{]} \;,
\end{align}
for sigmoidal hidden layer neurons. Gibbs activation neurons at the output layer give
\begin{align}
	\ln p(y|\mathbf{h}^m,\Theta) &= \sum_{k=1}^K t_k \ln a^{mt}_k \;,
\end{align}
where $a^h_j$ is given in (\ref{eq:ahj}) and
\begin{align}
	a^{mt}_k &= \frac{\exp\Big{(}\sum_{j=1}^J u_{kj}a^{mh}_j\Big{)}}{\sum_{k_1=1}^K \exp\Big{(}\sum_{j=1}^J u_{k_1j}a^{mh}_j\Big{)}}
\end{align}
Gaussian output layer neurons give
\begin{align}
	\ln p(y|\mathbf{h}^m,\Theta) &= \frac{1}{2}\sum_{k=1}^K (t_k-a^{mt}_k)^2  \;.
\end{align}

The Q-function in (\ref{eq:qfun-nn}) is equal to a sum of log-likelihood functions for two 2-layer neural networks between the visible-hidden and hidden-output layers. The M-step maximizes this Q-function by gradient ascent. It is equivalent to two disjoint backpropagation steps performed on these two 2-layer neural networks.

\section{NEM for Backpropagation Training}\label{sec:nem_backprop}
Theorem \ref{thm:bp_gem_equiv} recasts the backpropagation training algorithm as a GEM algorithm. Thus the NEM theorem provides conditions under which BP training converges faster to high likelihood network parameters. 

We restate the NEM positivity condition \parencite{osoba-mitaim-kosko2011,osoba-mitaim-kosko2012} in the neural network training context. We use the following notation: The noise random variable $\mathbf{N}$ has pdf  $p(\mathbf{n}|\mathbf{x})$. So the noise $\mathbf{N}$ can depend on the data $\mathbf{x}$.  $\mathbf{h}$ are the latent variables in the model. $\{\Theta^{(n)}\}$  is  a sequence of EM estimates for $\Theta$. $\Theta_*= \lim_{n \to \infty} \Theta^{(n)}$ is the converged EM estimate for $\Theta$. Define the noisy $Q$ function $ Q_\mathbf{N}( \Theta |\Theta^{(n)}) =  \E_{\mathbf{h}|\mathbf{x},\Theta_k}   \left[ \ln p(\mathbf{x}+\mathbf{N},\mathbf{h}| \theta)  \right]$. Assume that the differential entropy of all random variables is finite. Assume further that the additive noise keeps the data in the likelihood function's support. Then the NEM positivity condition for neural network training algorithm is:
\begin{align}
	\E_{\mathbf{x,h},\mathbf{N}|\Theta^*} \left[ \ln\left( \frac{p(\mathbf{x}+\mathbf{N},\mathbf{h}|\Theta_k)}{p(\mathbf{x,h}|\Theta_k)} \right) \right] \geq 0\;.
\end{align}

The exact form of the noise--benefit condition depends on the activation function of the output neurons.

\subsection{NEM Conditions for Neural Network ML Estimation}
Consider adding noise $\mathbf{n}$ to the $1$-in-$K$ encoding $\mathbf{t}$ of the target variable $y$. We first present the noise--benefit sufficient condition for Gibbs activation output neurons used in $K$-class classification.
\begin{thm}{\bf{[Forbidden Hyperplane Noise--Benefit Condition]:}} \label{thm:hyp-ndnn-bp}\\
The NEM positivity condition holds for ML training of feedforward neural network with Gibbs activation output neurons if
\begin{align}
 \mathbb{E}_{\mathbf{t},\mathbf{h},\mathbf{n}|\mathbf{x},\Theta^*}& \Big{\{} \mathbf{n}^T \log(\mathbf{a}^t) \Big{\}} \geq 0 \;. \label{eq:NEM-LogisticOut}
\end{align}
\end{thm}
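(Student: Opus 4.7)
The plan is to specialize the general NEM positivity condition (Theorem~\ref{thm:NEM}) to the case where noise $\mathbf{n}$ is added to the target encoding $\mathbf{t}$ (rather than to the input $\mathbf{x}$) and where the output layer uses the Gibbs/softmax activation in~(\ref{eq:gibbs_act}). In this neural-network setting the ``observed data'' that the noise perturbs is $\mathbf{t}$ and the latent variable is $\mathbf{h}$, so the NEM condition to verify is
\begin{equation*}
    \mathbb{E}_{\mathbf{t},\mathbf{h},\mathbf{n}\mid\mathbf{x},\Theta^*}\!\left[\ln\frac{p(\mathbf{t}+\mathbf{n},\mathbf{h}\mid\mathbf{x},\Theta_k)}{p(\mathbf{t},\mathbf{h}\mid\mathbf{x},\Theta_k)}\right]\;\geq\;0.
\end{equation*}

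First, I would factor the complete-data likelihood as $p(\mathbf{t},\mathbf{h}\mid\mathbf{x},\Theta)=p(\mathbf{h}\mid\mathbf{x},\Theta)\,p(\mathbf{t}\mid\mathbf{h},\Theta)$, exactly as in the decomposition used for the $Q$-function in~(\ref{eq:qfun-nn}). Because the hidden-layer factor $p(\mathbf{h}\mid\mathbf{x},\Theta)$ does not depend on $\mathbf{t}$, it cancels from the log-ratio, leaving only $\ln[p(\mathbf{t}+\mathbf{n}\mid\mathbf{h},\Theta_k)/p(\mathbf{t}\mid\mathbf{h},\Theta_k)]$. This is the step that makes target-side noise tractable in the neural network even though $p(\mathbf{h}\mid\mathbf{x},\Theta)$ is a complicated Bernoulli product over sigmoidal activations.

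Next, I would plug in the Gibbs-activation form. Under softmax outputs, $p(y=k\mid\mathbf{h},\Theta)=a^t_k$, so $p(\mathbf{t}\mid\mathbf{h},\Theta)=\prod_k (a^t_k)^{t_k}$ and $\ln p(\mathbf{t}\mid\mathbf{h},\Theta)=\mathbf{t}^T\ln\mathbf{a}^t$. Extending this expression to the perturbed target $\mathbf{t}+\mathbf{n}$ in the same exponent form yields $\ln p(\mathbf{t}+\mathbf{n}\mid\mathbf{h},\Theta_k)=(\mathbf{t}+\mathbf{n})^T\ln\mathbf{a}^t$. Subtracting gives the clean identity
\begin{equation*}
    \ln\frac{p(\mathbf{t}+\mathbf{n},\mathbf{h}\mid\mathbf{x},\Theta_k)}{p(\mathbf{t},\mathbf{h}\mid\mathbf{x},\Theta_k)} \;=\; \mathbf{n}^T\ln\mathbf{a}^t.
\end{equation*}
Taking the joint expectation over $\mathbf{t},\mathbf{h},\mathbf{n}\mid\mathbf{x},\Theta^*$ then turns the NEM positivity condition into the hyperplane inequality~(\ref{eq:NEM-LogisticOut}), which completes the proof.

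The one subtlety I expect to need care is the interpretation of $p(\mathbf{t}+\mathbf{n}\mid\mathbf{h},\Theta)$ when $\mathbf{t}+\mathbf{n}$ is no longer a $1$-in-$K$ indicator; the argument treats the softmax likelihood as the exponential-family form $\exp((\mathbf{t}+\mathbf{n})^T\ln\mathbf{a}^t-A(\Theta))$ and relies on the normalizer $A(\Theta)$ being independent of the target argument, so that it cancels in the ratio and only the $\mathbf{n}^T\ln\mathbf{a}^t$ term survives. Once this extension is justified, the geometry is transparent: since every component of $\ln\mathbf{a}^t$ is negative (because $a^t_k\in(0,1)$), the admissible noise vectors lie in the half-space $\{\mathbf{n}:\mathbf{n}^T\ln\mathbf{a}^t\ge 0\}$ bounded by the hyperplane normal to $\ln\mathbf{a}^t$, which motivates the ``forbidden hyperplane'' terminology.
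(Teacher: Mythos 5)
Your proposal is correct and follows essentially the same route as the paper's proof: factor the complete-data likelihood so that $p(\mathbf{h}\mid\mathbf{x},\Theta)$ cancels in the ratio, write the perturbed softmax likelihood as $\prod_k (a_k^t)^{t_k+n_k}$ so the ratio collapses to $\prod_k (a_k^t)^{n_k}$, and take logs and the expectation. Your extra remark on interpreting $p(\mathbf{t}+\mathbf{n}\mid\mathbf{h},\Theta)$ for non-indicator arguments makes explicit a step the paper leaves implicit, but it does not change the argument.
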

\begin{proof}
We add noise to the target $1$-in-$K$ encoding $\mathbf{t}$. The likelihood ratio in the NEM sufficient condition becomes
\begin{align}
	\frac{p(\mathbf{t}+\mathbf{n},\mathbf{h}|\mathbf{x},\Theta)}{p(\mathbf{t},\mathbf{h}|\mathbf{x},\Theta)} &= \frac{p(\mathbf{t}+\mathbf{n}|\mathbf{h},\Theta)p(\mathbf{h}|\mathbf{x},\Theta)}{p(\mathbf{t}|\mathbf{h},\Theta)p(\mathbf{h}|\mathbf{x},\Theta)} \\
	&= \frac{p(\mathbf{t}+\mathbf{n}|\mathbf{h},\Theta)}{p(\mathbf{t}|\mathbf{h},\Theta)} \\
	&= \prod_{k=1}^K \frac{(a_k^t)^{t_k+n_k}}{(a_k^t)^{t_k}} = \prod_{k=1}^K (a_k^t)^{n_k} \;.
\end{align}
So the NEM positivity condition becomes
\begin{align}
	\mathbb{E}_{\mathbf{t},\mathbf{h},\mathbf{n}|\mathbf{x},\Theta^*}\Big{\{}\log \prod_{k=1}^K (a_k^t)^{n_k} \Big{\}} &\geq 0 \;.
\end{align}
This condition is equivalent to
\begin{align}
	\mathbb{E}_{\mathbf{t},\mathbf{h},\mathbf{n}|\mathbf{x},\Theta^*} \Big{\{}\sum_{k=1}^K n_k \log (a_k^t)\Big{\}} &\geq 0 \;.
\end{align}
We can rewrite this positivity condition as the following matrix inequality:
\begin{align}
	\mathbb{E}_{\mathbf{t},\mathbf{h},\mathbf{n}|\mathbf{x},\Theta^*}\{\mathbf{n}^T \log(\mathbf{a}^t)\} &\geq 0
\end{align}
where $\log(\mathbf{a}^t)$ is the vector of output neuron log-activations.
\end{proof}

The above sufficient condition requires that the noise $\mathbf{n}$ lie above a hyperplane with normal $\log(\mathbf{a}^t)$. Figure~\ref{fg:Xent-BP-demo} illustrates this geometry.
\begin{figure}[ht!]
	\centerline{\bf{Geometry of NEM Noise for Cross-Entropy Backpropagation}}
	\centerline{\includegraphics[width=0.75\textwidth]{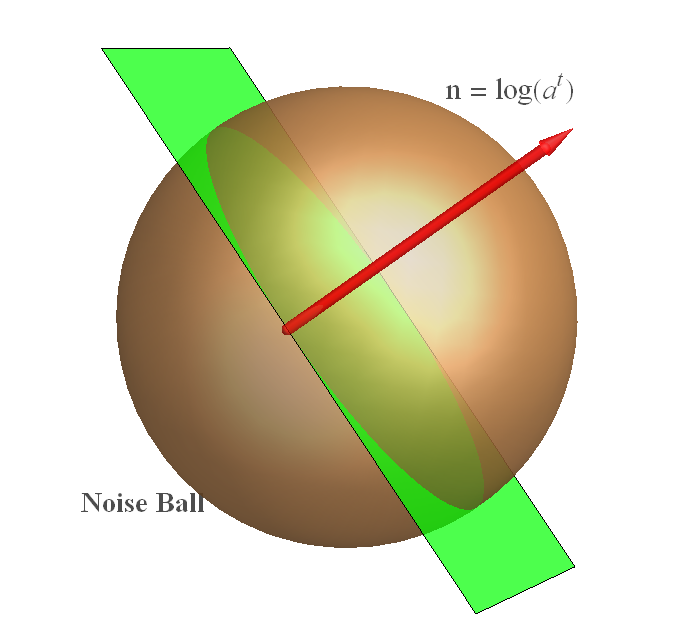}}
	\caption[Geometry of NEM Noise for Cross-Entropy Backpropagation]{
		NEM noise for faster backpropagation using logistic output neurons. NEM noise must fall above a hyperplane through the origin in the noise space. The output activation signal $\mathbf{a}^t$ controls the normal vector $\mathbf{n}$ of the slicing hyperplane. The hyperplane changes on each iteration.
	}
\label{fg:Xent-BP-demo}
\end{figure}

The next theorem gives a sufficient condition for a noise--benefit in the case of Gaussian output neurons.
\begin{thm}{\bf{[Forbidden Sphere Noise--Benefit Condition]:}}\label{thm:sph-ndnn-bp}\\
The NEM positivity condition holds for ML training of a feedforward neural network with Gaussian output neurons if
\begin{align}
 \mathbb{E}_{\mathbf{t},\mathbf{h},\mathbf{n}|,\mathbf{x},\Theta^*}& \Big{\{} \Big{|}\Big{|}\mathbf{n} - \mathbf{a}^t + \mathbf{t} \Big{|}\Big{|}^2 - \Big{|}\Big{|}\mathbf{a}^t - \mathbf{t} \Big{|}\Big{|}^2 \Big{\}} \leq 0  \label{eq:NEM-GaussOut}
\end{align}
where $||.||$ is the $L_2$ vector norm.
\end{thm}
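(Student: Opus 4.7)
The plan is to mirror the proof of Theorem~\ref{thm:hyp-ndnn-bp} step by step, replacing the Gibbs output likelihood with a Gaussian output likelihood. First I would start from the NEM positivity condition and use the same conditional factorization $p(\mathbf{t},\mathbf{h}\mid\mathbf{x},\Theta) = p(\mathbf{t}\mid\mathbf{h},\Theta)\, p(\mathbf{h}\mid\mathbf{x},\Theta)$ that is implicit in the architecture of Figure~\ref{fig:FF-ANN}: given the hidden layer $\mathbf{h}$, the output $\mathbf{t}$ is conditionally independent of the input $\mathbf{x}$. Injecting noise only into $\mathbf{t}$ then makes the hidden-layer factor $p(\mathbf{h}\mid\mathbf{x},\Theta)$ cancel in the ratio $p(\mathbf{t}+\mathbf{n},\mathbf{h}\mid\mathbf{x},\Theta)/p(\mathbf{t},\mathbf{h}\mid\mathbf{x},\Theta)$, leaving only the output likelihood ratio $p(\mathbf{t}+\mathbf{n}\mid\mathbf{h},\Theta)/p(\mathbf{t}\mid\mathbf{h},\Theta)$.

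Next I would substitute the Gaussian output likelihood $p(\mathbf{t}\mid\mathbf{h},\Theta) = (2\pi)^{-K/2}\exp\!\bigl(-\tfrac{1}{2}\|\mathbf{t}-\mathbf{a}^t\|^2\bigr)$, which is the same pdf used in Section~\ref{sec:backprop} to recast regression backpropagation as maximum likelihood estimation. The normalizing constants cancel in the ratio, so the log-ratio collapses to a difference of two squared norms:
\begin{equation*}
\ln\frac{p(\mathbf{t}+\mathbf{n}\mid\mathbf{h},\Theta)}{p(\mathbf{t}\mid\mathbf{h},\Theta)} = -\tfrac{1}{2}\bigl(\|\mathbf{t}+\mathbf{n}-\mathbf{a}^t\|^2 - \|\mathbf{t}-\mathbf{a}^t\|^2\bigr).
\end{equation*}
Rewriting $\mathbf{t}+\mathbf{n}-\mathbf{a}^t = \mathbf{n} - (\mathbf{a}^t - \mathbf{t})$ and using $\|\mathbf{a}^t-\mathbf{t}\| = \|\mathbf{t}-\mathbf{a}^t\|$ puts the log-ratio into the form $-\tfrac{1}{2}\bigl(\|\mathbf{n}-\mathbf{a}^t+\mathbf{t}\|^2 - \|\mathbf{a}^t-\mathbf{t}\|^2\bigr)$.

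Finally I would take the expectation over $\mathbf{t},\mathbf{h},\mathbf{n}$ conditioned on $\mathbf{x},\Theta^*$ and invoke the NEM positivity condition of Theorem~\ref{thm:NEM}. The requirement $\E[\ln(\text{ratio})]\geq 0$ flips sign when we strip the $-\tfrac{1}{2}$ factor, yielding exactly the sphere inequality~(\ref{eq:NEM-GaussOut}). Geometrically this says the noise must lie (on average) inside the closed ball of radius $r=\|\mathbf{a}^t-\mathbf{t}\|$ centered at $\mathbf{c}=\mathbf{a}^t-\mathbf{t}$, which is the backpropagation error sphere shown in Figure~\ref{fg:BP-demo}. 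There is no real obstacle here; the work is essentially the Gaussian analog of the Corollary~\ref{cor:GMM-NEM} calculation, and the only thing to be careful about is bookkeeping on the signs so that the $\leq 0$ direction in~(\ref{eq:NEM-GaussOut}) matches the $\geq 0$ direction of the NEM condition after pulling out the $-\tfrac{1}{2}$ from the Gaussian exponent.
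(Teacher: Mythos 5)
Your proposal is correct and follows essentially the same route as the paper's own proof: factor the joint pdf so that $p(\mathbf{h}\mid\mathbf{x},\Theta)$ cancels in the likelihood ratio, substitute the unit-covariance Gaussian output pdf so the log-ratio becomes $\tfrac{1}{2}\bigl(\|\mathbf{t}-\mathbf{a}^t\|^2 - \|\mathbf{t}+\mathbf{n}-\mathbf{a}^t\|^2\bigr)$, and flip the inequality sign when restating the NEM positivity condition. The only cosmetic difference is your identification of the ball's center as $\mathbf{a}^t-\mathbf{t}$, which in fact matches the algebra of the stated inequality more closely than the figure caption's $\mathbf{t}-\mathbf{a}^t$; this does not affect the argument.
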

\begin{proof}
	We add noise $\mathbf{n}$ to the $K$ output neuron values $\mathbf{t}$. The log-likelihood in the NEM sufficient condition becomes
\begin{align}
	\frac{p(\mathbf{t}+\mathbf{n},\mathbf{h}|\mathbf{x},\Theta)}{p(\mathbf{t},\mathbf{h}|\mathbf{x},\Theta)} &= \frac{p(\mathbf{t}+\mathbf{n}|\mathbf{h},\Theta)p(\mathbf{h}|\mathbf{x},\Theta)}{p(\mathbf{t}|\mathbf{h},\Theta)p(\mathbf{h}|\mathbf{x},\Theta)} \\
	&= \frac{\mathcal{N}(\mathbf{t}+\mathbf{n};\mathbf{a}^t,\mathbf{I})}{\mathcal{N}(\mathbf{t};\mathbf{a}^t,\mathbf{I})} \\
	= \exp\Big{(}\frac{1}{2}&\Big{[}\Big{|}\Big{|}\mathbf{t} - \mathbf{a}^t\Big{|}\Big{|}^2 - \Big{|}\Big{|}\mathbf{t} + \mathbf{n} - \mathbf{a}^t\Big{|}\Big{|}^2\Big{]} \Big{)} \;.
\end{align}
So the NEM sufficient condition becomes
\begin{align}
 \mathbb{E}_{\mathbf{t},\mathbf{h},\mathbf{n}|,\mathbf{x}\Theta^*}& \Big{\{} \Big{|}\Big{|}\mathbf{n} - \mathbf{a}^t + \mathbf{t} \Big{|}\Big{|}^2 - \Big{|}\Big{|}\mathbf{a}^t - \mathbf{t} \Big{|}\Big{|}^2 \Big{\}} \leq 0 \;. 
\end{align}
\end{proof}

The above sufficient condition defines a forbidden noise region outside a sphere with center $\mathbf{t} - \mathbf{a}^t$ and radius $\|\mathbf{t} - \mathbf{a}^t\|$. All noise inside this sphere speeds convergence of ML estimation in the neural network on average. Figure~\ref{fg:LS-BP-demo} illustrates this geometry.
\begin{figure}[ht!]
	\centerline{\bf{Geometry of NEM Noise for Least-Squares Backpropagation}}
	\centerline{\includegraphics[width=0.75\textwidth]{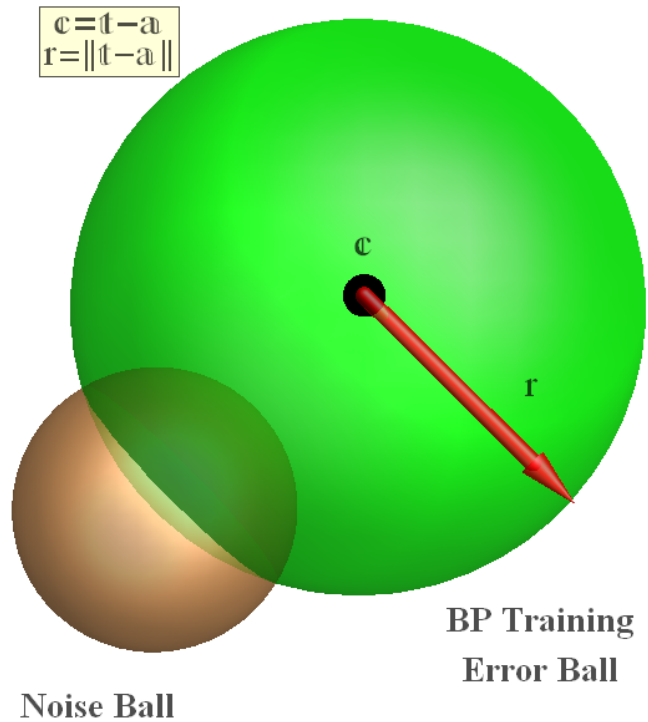}}
	\caption[Geometry of NEM Noise for Least-Squares Backpropagation]{
		NEM noise for faster backpropagation using Gaussian output neurons. The NEM noise must fall inside the backpropagation ``training mismatch'' sphere. This is the sphere with center $\mathbf{c} = \mathbf{t}-\mathbf{a}^t$ (the difference between the target output $\mathbf{t}$ and the actual output layer activation $\mathbf{a}^t$) with radius $r=\|\mathbf{c}\|$. Noise from the noise ball section that intersects with the mismatch sphere will speed up backpropagation training according to the NEM theorem. The mismatch ball changes at each training iteration.
	}
\label{fg:LS-BP-demo}
\end{figure}

The actual implementation of the BP as a GEM uses a Monte Carlo approximation for the E-step. So we are really applying the NEM condition to a \emph{Generalized Monte Carlo EM} algorithm. The quality of the result depends on the approximation quality of the Monte Carlo substitution. This approximation quality degrades quickly as the number of hidden neurons $J$ increases.

\section{Simulation Results}\label{sec:bp-sim}
We modified the Matlab code available in \parencite{hinton_matlab_tool} to inject noise during EM-backpropagation training of a neural network. We used 10,000 training instances from the training set of the MNIST digit classification data set. Each image in the data set had $28 \times 28$ pixels with each pixel value lying between $0$ and $1$. We fed each pixel into the input neuron of a neural network. We used a 5-layer neural network with 20 neurons in each of the three hidden layers and 10 neurons in the output layer for classifying the 10 digits. We also trained an auto-encoder neural network with 20 neurons in each of the three hidden layers and $784$ neurons in the output layer for estimating the pixels of a digit's image.

\begin{figure}[ht!]
	\centering
	\includegraphics[width=0.85\textwidth]{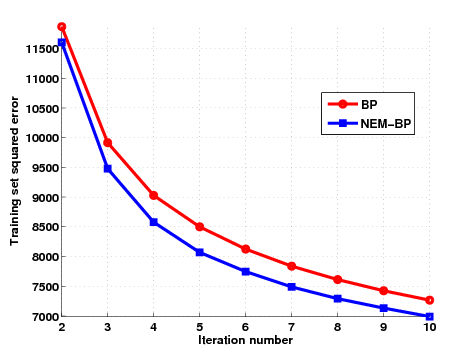}
	\caption[Comparison between backpropagation and NEM-BP using a Cross-Entropy cost function]{
		Training-set cross entropy for backpropagation and NEM-BP training of a 10-class classification neural network on the MNIST digit classification data set. There is a $4.2\%$ median decrease in the cross entropy per iteration for NEM-BP when compared with backpropagation training. We added annealed i.i.d. Gaussian noise to the target variables. The noise had mean $0$ and a variance that decayed with training epochs as $\{0.2, 0.2/2, 0.2/3,\ldots\}$. The network used three logistic (sigmoidal) hidden layers with 20 neurons each. The output layer used 10 neurons with the Gibbs activation function in (\ref{eq:gibbs_act}).
	}
	\label{fig:bp_em_crossent}
\end{figure}

The output layer used the Gibbs activation function for the 10-class classification network and logistic activation function for the auto-encoder. We used logistic activation functions in all other layers. Simulations used 10 Monte Carlo samples for approximating the Q-function in the 10-class classification network and 5 Monte Carlo samples for the auto-encoder. Figure~\ref{fig:bp_em_sqrerr} shows the training-set squared error for the auto-encoder neural network for backpropagation and NEM-backpropagation when we added annealed Gaussian noise with mean $\mathbf{a}^t - \mathbf{t}$ and variance $0.1$ epoch$^{-1}$. Figure~\ref{fig:bp_em_crossent} shows the training-set cross entropy for the two cases when we added annealed Gaussian noise with mean $0$ and variance $0.2$ epoch$^{-1}$. We used $10$ Monte Carlo samples to approximate the Q-function. We observed a $5.3\%$ median decrease in squared error and $4.2\%$ median decrease in cross entropy per iteration for the NEM-backpropagation algorithm compared to standard backpropagation.

\section{Conclusion}\label{sec:bp-concl}
This chapter showed that the backpropagation algorithm is a generalized EM algorithm. This allows us to apply the NEM theorem to develop noise--benefit sufficient conditions for speeding up convergence EM-backpropagation. Simulations on the MNIST digit recognition data set show that NEM noise injection reduces squared-error and cross-entropy in NN training by backpropagation. 

Feedforward neural networks are extremely popular in machine learning and data--mining applications. Most applications of feedforward NNs use backpropagation for training. So the backpropagation noise benefits in this chapter are available for these applications and can lead to shorter training times. Such training time reductions are important for large-scale NN applications where training may take weeks.

Stacked layers of stochastic neural networks (or ``deep'' neural networks) may also benefit from the NEM algorithm. \Sec\ref{subsec:deepNN} presents some theorems predicting noise benefits for the pre-training such deep networks.

\clearpage

\chapter{Bayesian Statistics}\label{ch:Bayes}
\section{Introduction: The Bayesian \& The Frequentist }
Bayesian inference methods subsume ML estimation methods described in previous chapters. The defining feature of Bayesian inference is the use of Bayes theorem to revise prior beliefs based on new observed data. Rev. Thomas Bayes first introduced his eponymous theorem in a posthumous letter~\parencite{bayes1763} to the Royal Society of London in 1763. Bayes theorem is now at the heart of many statistical applications including spam filtering, evidence-based medicine, and semantic web search.

The ``Bayesian approach'' to statistics sees probabilities as statements of beliefs about the state of a random parameter. The opposed ``frequentist'' view sees probabilities as long-run frequencies of the outcome of experiments involving a fixed underlying parameter. The difference between the two approaches does not affect the basic Kolmogorov theory of probability. But acceptable statistical inference methods differ based on which view the statistician espouses. The Bayesian (e.g.  de Finetti, Lindley, Savage) argues that his approach takes account of all available information (prior information and the data itself) when making an inference or a decision. The frequentist (e.g. Fisher, Student, E.S. Pearson) argues that inference should use only information provided by the data and should be free of subjective input\footnote{
	Fisher even argued \parencite{fisher1934} that Rev. Bayes withheld publication of his work because he was wary of the dangers involved in injecting subjective information into inferences on data.
}. Modern statistics tends to blend both approaches to fit individual applications.

The statistical problem of point estimation highlights this Bayesian vs frequentist schism. The goal of point estimation is to find the best estimate for parameters underlying the data distribution. MLE is a very popular method for point estimation because of its simplicity, its beautiful properties, and its intuitive interpretation. It is frequentist in conception and in spirit; it uses only the data for inference. Bayesian point estimation techniques are more general but they can be complex. They may also give different point estimates depending on available prior information and the cost of choosing bad estimates. The Bayesian framework subsumes the frequentist ML estimate as a possible solution when there is no prior information. Thus Bayesian point estimation is more powerful in cases when there is authoritative prior information. 

The rest of the chapter gives a detailed introduction to Bayesian inference. This sets the foundation for the next chapter which deals with the effects of model and data corruption in Bayesian inference. \textbf{\S}\ref{sec:B-PEstimatn} shows where point estimation and MLE (including E--M) fits in the Bayesian inference framework. Thus any subsequent results also apply to previously discussed ML estimation frameworks.

\section{Bayesian Inference}

Bayesian inference models learning as computing a conditional probability based both on new evidence or data and on prior probabilistic beliefs.  It builds on the simple Bayes theorem that shows how set-theoretic evidence should update competing prior probabilistic beliefs or hypotheses.  The theorem gives the posterior conditional probability $P(H_j|E)$ that the $j^{th}$ hypothesis $H_j$ occurs given that evidence $E$ occurs. The posterior depends on all the converse conditional probabilities $P(E|H_k)$ that $E$ occurs given $H_k$ and on all the unconditional prior probabilities $P(H_k)$ of the disjoint and exhaustive hypotheses $\{H_k\}$: 
\begin{align}
P(H_j|E) \,=\, \frac{P(E|H_j)P(H_j)}{P(E)} \,=\, \frac{P(E|H_j)P(H_j)}{\sum_{k} P(E|H_k)P(H_k)}.
\label{eq:TextBayes}
\end{align}
The result follows from the definition of conditional probability $P(B|A) = P(A\cap B)/P(A)$ for $P(A) > 0$ when the set hypotheses $H_j$ partition the state space of the probability measure $P$ \parencite{ross2005,kosko2004}. $P(H_j|E)$ is a measure of the degree to which the data or evidence $E$ supports each of the competing hypothesis $H_k$. This represents the data-informed update of prior beliefs about competing hypotheses $\{H_k\}$. More accurate beliefs allow for better discrimination between competing hypothesis.

Bayesian inference usually works with a continuous version of (\ref{eq:TextBayes}). Now the parameter value $\theta$ corresponds to the hypothesis of interest and the evidence corresponds to the sample values $x$ from a random variable $X$ that depends on $\theta$:
\begin{align}
f(\theta|x) ~=~ \frac{g(x|\theta)h(\theta)}{\int g(x|u)h(u)du}~\propto~ g(x|\theta)h(\theta)
\label{eq:TextBayesPosterior}
\end{align}
where we follow convention and drop the normalizing term that does not depend on $\theta$ as we always can if $\theta$ has a sufficient statistic \parencite{hogg-mckean-craig2005,hogg-tanis2006}.  The model (\ref{eq:TextBayesPosterior}) assumes that random variable $X$ conditioned on $\theta$ admits the random sample $X_1,\ldots,X_n$ with observed realizations $x_1,\ldots,x_n$.  So again the posterior pdf $f(\theta|x)$ depends on the converse likelihood $g(x|\theta)$ and on the prior pdf $h(\theta)$.  The posterior $f(\theta|x)$ contains the complete probabilistic description of $\theta$ given observed data $x$. Its maximization is a standard optimality criterion in statistical decision making  \parencite{bickel-doksum2001,carlin-louis2009,degroot1970,duda-hart-stork2001}. 

The Bayesian inference structure in (\ref{eq:TextBayesPosterior}) involves a radical abstraction.  The set or event hypothesis $H_j$ in (\ref{eq:TextBayes}) has become the measurable function or {\it random variable} $\Theta$ that takes on realizations  $\theta$ according to the prior pdf $h(\theta): \Theta \sim h(\theta)$.  The pdf $h(\theta)$ can make or break the accuracy of the posterior pdf $f(\theta|x)$  because it scales the data pdf $g(x|\theta)$ in (\ref{eq:TextBayesPosterior}).  Statisticians can elicit priors from an expert \parencite{kadane-wolfson1998,garthwaite-kadane-ohagan2005}. Such elicited priors are thus ``subjective'' because they are ultimately opinions or guesses.  Or the prior in ``empirical Bayes'' \parencite{carlin-louis2009,hogg-mckean-craig2005} can come from ``objective'' data or from statistical hypothesis tests such as chi-squared or Kolmogorov-Smirnov tests for a candidate pdf \parencite{hogg-tanis2006}.

\subsection{Conjugacy}
The prior pdf $h(\theta)$ is the most subjective part of the Bayesian inference framework. The application determines the sampling pdf $g(x|\theta)$. But the prior comes from preconceptions about the parameter $\theta$. These preconceptions could be in the form of information from experts or from collateral data about about $\theta$. It is not always easy to articulate these sources of information into accurate pdfs for $\Theta$. Thus most Bayesian applications resort to simplifications. They restrict themselves to a limited set of closed form pdfs for $\Theta$. Many applications limit themselves to an even smaller subset of pdfs called ``conjugate priors''. 

Conjugate priors produce not only closed-form posterior pdfs but posteriors that come from the same family as the prior \parencite{bickel-doksum2001,degroot1970,hogg-mckean-craig2005,raiffa-schlaifer2000}. The three most common conjugate priors in the literature are the beta, the gamma, and the normal.  Table \ref{tab:Conjugate-Priors} displays these three conjugacy relationships. The posterior $f(\theta|x)$ is beta if the prior $h(\theta)$ is beta and if the data or likelihood $g(x|\theta)$ is binomial or has a dichotomous Bernoulli structure.  The posterior is gamma if the prior is gamma and if the data is Poisson or has a counting structure.  The posterior is normal if the prior and data are normal.  

\begin{table}[htb]
\begin{center}
\scalebox{0.875}{
\begin{tabular}{|c|c||c|}
			\hline
			\textsc{Prior $h(\theta)$} & \textsc{Likelihood $g(x|\theta)$} & \textsc{ Posterior $f(\theta|x)$} \\
			\hline \hline	
			 Beta&  Binomial&  Beta$'$\\	
			 $B(\alpha, \beta)$ &  $\text{bin}(n,\theta)$ & $B(\alpha+x,~\beta+n-x)$\\	
  		 $~~\frac{\Gamma(\alpha+\beta)}{\Gamma(\alpha)\Gamma(\beta)} ~ \theta^{\alpha-1}(1-\theta)^{\beta-1}$  &
		   $~~\binom{n}{x} \theta^x (1 - \theta)^{n-x}$ & 
			 {\small $~~\frac{\Gamma(\alpha+\beta+n )}{\Gamma(\alpha+x)\Gamma(\beta+n-x)} ~ \theta^{\alpha+x-1}(1-\theta)^{\beta+n-x-1}$}\\
			\hline
			 Gamma  &  Poisson &  Gamma$'$\\
			 $\Gamma(\alpha, \beta)$ &  $p(\theta)$ &  $\Gamma(\alpha+x, \frac{\beta}{1+\beta})$\\
			 $\frac{\theta^{\alpha-1}\exp \left({-\theta/\beta}\right)}{\Gamma(\alpha)\beta^\alpha}$ &
			 $e^{-\theta}~\frac{\theta^x}{x!}$ & $~~\frac{(\theta+\theta\beta)^{\alpha+x}}{\theta~\Gamma(\alpha+x)~\beta^{\alpha+x}}  \exp \left(\frac{-\theta (1+\beta)}{\beta} \right)$\\
			\hline
			 Normal&  Normal$'$ &  Normal$''$\\
			 $N(\mu, \tau^2)$ &  $N(\theta| \sigma^2)$ &  $N\left(\frac{\mu \tau^2 + x \sigma^2}{\tau^2 + \sigma^2}, \frac{\tau^2 \sigma^2}{\tau^2 +\sigma^2}\right)$\\
			\hline			
		\end{tabular}
}
\end{center}
\caption{Conjugacy relationships in Bayesian inference.  A prior pdf of one type combines with its conjugate likelihood to produce a posterior pdf of the same type.}

\label{tab:Conjugate-Priors}
\end{table}

Consider first the beta prior on the unit interval:
\begin{equation}
\Theta \sim \beta(\alpha,\beta):~~ h(\theta) = \frac{\Gamma(\alpha+\beta)}{\Gamma(\alpha)\Gamma(\beta)}\theta^{\alpha-1}(1-\theta)^{\beta-1}
\label{eq:betapdf}
\end{equation}
if $0 < \theta < 1$ for parameters $\alpha > 0$  and $\beta > 0$.  Here $\Gamma$ is the gamma function $\Gamma(\alpha) = \int_{0}^\infty x^{\alpha-1}e^{-x} dx$.  Then $\Theta$ has population mean or expectation $E[\Theta] = \alpha/(\alpha+\beta)$. The beta pdf reduces to the uniform pdf if $\alpha = \beta = 1$. A beta prior is a natural choice when the unknown parameter $\theta$ is the success probability for binomial data such as coin flips or other Bernoulli trials because the beta's support is the unit interval (0, 1) and because the user can adjust the $\alpha$ and $\beta$ parameters to shape the beta pdf over the interval.

A beta prior is conjugate to binomial data with likelihood $g(x_1,\ldots,x_n|\theta)$.  This means that a beta prior $h(\theta)$  combines with binomial sample data to produce a new beta posterior:           
\begin{align}
f(\theta|x) = \frac{\Gamma(n+\alpha+\beta)}{\Gamma(\alpha+x)\Gamma(n+\beta-x)}\theta^{x+\alpha-1}(1-\theta)^{n-x+\beta-1}
\label{eq:betaposterior}
\end{align}
Here $x$ is the observed sum of $n$ Bernoulli trials and hence is an observed sufficient statistic for $\theta$ \parencite{hogg-tanis2006}.  So $g(x_1,\ldots,x_n|\theta) = g(x|\theta)$. This beta posterior $f(\theta|x)$ gives the mean-square optimal estimator as the conditional mean $E[\Theta|X=x] = (\alpha+x)/(\alpha+\beta+n)$ if the loss function is squared-error \parencite{hogg-tanis2006}.  A beta conjugate relation still holds when negative-binomial or geometric data replaces the binomial data or likelihood. The conjugacy result also extends to the vector case for the Dirichlet or multidimensional beta pdf.  A Dirichlet prior is conjugate to multinomial data \parencite{degroot1970,neapolitan2004}.

Gamma priors are conjugate to Poisson data. The gamma pdf generalizes many right-sided pdfs such as the exponential and chi-square pdfs.  The generalized (three-parameter) gamma further generalizes the Weibull and lognormal pdfs.  A gamma prior is right-sided and has the form
\begin{align}
\Theta \sim \gamma(\alpha,\beta):~~ h(\theta) = \frac{\theta^{\alpha-1}e^{-\theta/\beta}}{\Gamma(\alpha)\beta^\alpha}
\label{eq:gammapdf}
\quad \mbox{if $\theta > 0$.}
\end{align}
The gamma random variable $\Theta$ has population mean $E[\Theta] = \alpha\beta$ and variance $V[\Theta] = \alpha\beta^2$.  

The Poisson sample data $x_1,\ldots, x_n$ comes from the likelihood \begin{align}
g(x_1\ldots,x_n|\theta) = \frac{\theta^{x_1}e^{-\theta}}{x_1!}\cdots \frac{\theta^{x_n}e^{-\theta}}{x_n!} \;.
\end{align}
The observed Poisson sum $x = x_1 + \cdots + x_n$ is an observed sufficient statistic for $\theta$ because the Poisson pdf also comes from an exponential family \parencite{bickel-doksum2001,hogg-mckean-craig2005}. The gamma prior $h(\theta)$ combines with the Poisson likelihood $g(x|\theta)$ to produce a new gamma posterior $f(\theta|x)$ \parencite{hogg-tanis2006}:
\begin{align}
f(\theta|x) = 
\frac{\theta^{(\sum_{k=1}^n x_k+\alpha-1)}e^{-\theta/[\beta/(n\beta+1)]}}
{\Gamma(\sum_{k=1}^n x_k + \alpha)[\beta/(n\beta+1)]^{( \sum_{k=1}^n x_k+\alpha)}}.
\label{eq:gammaposterior}
\end{align}
So $E[\Theta| X = x] = (\alpha + x) \beta / (1 + \beta)$  and $V[\Theta| X = x] =  (\alpha + x) \beta^2 / (1+\beta)^2$.

A normal prior is self-conjugate because a normal prior is conjugate to normal data.  A normal prior pdf has the whole real line as its domain and has the form \parencite{hogg-tanis2006}
\begin{align}
\Theta \sim N(\theta_0,\sigma_0^2):~~ h(\theta) = \frac{1}{\sqrt{2\pi}\sigma_0}e^{-(\theta-\theta_0)^2/2\sigma_0^2}
\label{eq:normalpdf}
\end{align}
for known population mean $\theta_0$ and known population variance $\sigma_0^2$.  The normal prior $h(\theta)$ combines with normal sample data from $g(x|\theta) = N(\theta| \sigma^2/n)$ given an observed realization $x$ of the sample-mean sufficient statistic $\overline{X}_n$. This gives the normal posterior pdf $f(\theta|x) = N(\mu_n,\sigma_n^2)$.  Here $\mu_n$ is the weighted-sum conditional mean
\begin{equation}
E[\Theta|X=x] = \left(\frac{\sigma_0^2}{\sigma_0^2+\sigma^2/n}\right)x + \left(\frac{\sigma^2/n}{\sigma_0^2+\sigma^2/n}\right)\theta_0
\end{equation}
and 
\begin{equation}\sigma_n^2 = \left(\frac{\sigma^2/n}{\sigma_0^2+\sigma^2/n}\right)\sigma_0^2 \;.
\end{equation}
A hierarchical Bayes model \parencite{carlin-louis2009,hogg-mckean-craig2005} would write any of the these priors as a function of still other random variables and their pdfs.

Conjugate priors permit easy iterative or sequential Bayesian learning because the previous posterior pdf $f_{\rm old}(\theta|x)$ becomes the new prior pdf $h_{\rm new}(\theta)$ for the next experiment based on a fresh random sample: $h_{\rm new}(\theta) = f_{\rm old}(\theta|x)$. Such conjugacy relations greatly simplify iterative convergence schemes such as Gibbs sampling in Markov chain Monte Carlo estimation of posterior pdfs \parencite{carlin-louis2009,hogg-mckean-craig2005}

\section{Bayesian Point Estimation} \label{sec:B-PEstimatn}

Many statistical decision problems involve selecting an ``optimal'' point estimates for model parameters $\theta$. Bayesian inference solves the hard question about how to update beliefs about the data-model's parameters given new observed data. It produces the posterior pdf $f(\theta|x)$ which is a measure of the spread of probable values of the model parameter $\theta$ based on observed data. But how can we use this information about the parameter spread to select an``optimal'' parameter point estimate?

The answer to this question depends on the definition of an ``optimal estimate''.  Each parameter estimate $d(X)$ represents a decision. The Bayesian point of view argues \parencite{cox1958} that the concept of the ``optimal'' point estimate is incomplete if there is no consideration given to the \emph{losses} incurred by making the wrong decisions. Every parameter estimate $d(x)$ incurs a penalty proportional to how much the estimate $d(x)$ deviates from the parameter $\theta$. The loss function $\ell(d,\theta)$ models these losses. The parameter $\theta$ and thus $\ell(d,\theta)$ are random. The magnitude of the \emph{average} loss function can be a measure of estimate optimality -- higher average loss being less desirable. The average loss is the \emph{Bayes risk} $R(d(X), \theta)$ associated with the estimate $d(X)$. The posterior pdf enables us to calculate this risk subject to observed data
\begin{equation} R(d(X), \theta) =  \E_{\theta|X}  \left[ \ell(d(X),\theta) | X   \right] = \int_\Theta \ell(d(X),\theta) f(\theta|x) ~dx \;. \end{equation}
The Bayesian point estimation defines the optimal estimate for $\theta$ as one that minimizes the Bayes risk. This estimate is the \emph{Bayes estimate} $\hat{\theta}_{Bayes}(X)$:
\begin{equation}
\hat{\theta}_{Bayes}(X) = \argmin{d(X) \in \Theta} \, R( d(X), \theta) \;. \label{eq:BayesEst-defn}
\end{equation}

A \emph{utility function} $\mathfrak{u}(d,\theta)$ (with sign opposite the loss function's) can model rewards for good estimates. Then the Bayes estimate becomes a maximization of the expected utility.
\begin{align}
\hat{\theta}_{Bayes}(X) &= \argmax{d \in \Theta} \, \E_\theta \left[ \mathfrak{u}(d(X),\theta) | X \right].
\end{align}
The utility function formulation is more typical in classical decision theory \parencite{wald1950, savage1951}, game theory, and theories about rational economic choice. For example, a rational economic choice $d^*(X)$ is a choice that maximizes an economic agent's expected utility\parencite{neumann-morgenstern1947}.

Bayesian point estimation is a posterior-based variant of the statistical decision framework of Wald, Neyman, and Pearson\parencite{neyman-pearson1933, wald1950}. The basic theme is to treat inference problems (including point estimation and hypothesis testing) as special cases of decision problems \parencite{cox1958, bernardo-smith2009}.

\subsection{Bayes Estimates for Different Loss Functions}
The loss function determines the Bayes estimate. The loss function ideally mirrors estimation error penalties from the application domain. Some common loss functions in engineering applications are: the squared error, absolute error, and the 0-1 loss functions. 
The loss function and their corresponding Bayes estimates are:
\begin{table}[htb]
\begin{center}
    \begin{tabular}{ | c || c |c |}
    \hline
    Loss Function & $\ell(d,\theta)$ & $\hat{\theta}_{Bayes}(X)$  \\ \hline \hline
    squared error loss &  $c\, (d-\theta)$ & $\E_{\theta|X} \left[ \Theta | X \right]$ \\ \hline
    absolute error loss & $c\, |d-\theta|$ &  $\text{Median} \left( f(\theta | X) \right)$ \\ \hline
    0-1 loss &  $1 - \delta(d-\theta)$ & $\text{Mode}\left(f(\theta | X) \right)$ \\  \hline    
    \end{tabular}
\end{center}
\label{tb:BayesEstimators}
\caption{Three Loss Functions and Their Corresponding Bayes Estimates.}
\end{table} 
The Bayes estimates are solutions the optimization problem in equation \ref{eq:BayesEst-defn}. The three loss functions (squared, absolute, and zero-one) are conceptually similar to $\ell_2$, $\ell_1$, and $\ell_0$ minimization respectively. Bayes estimation with the 0-1 loss function is equivalent to Maximum A Posteriori (MAP) estimation. The MAP estimate is the mode of the posterior pdf: \begin{equation} \hat{\theta}_{MAP} =  \argmax{\theta} f(\theta | x) = \argmax{\theta} g(x|\theta) h(\theta).  \end{equation} If the prior distribution is uniform (and possibly improper) then the MAP estimate is equivalent to the Maximum Likelihood (ML) estimate.
\begin{equation} 
\hat{\theta}_{ML} \equiv  \argmax{\theta} g(x | \theta ).
\end{equation} 
$\Theta$ has a uniform distribution. So the prior $h(\theta)$ is constant. Thus:
\begin{align} 
h(\theta) &= c \\
\argmax{\theta}  g(x|\theta) h(\theta) &= \argmax{\theta} \left[c \, g(x|\theta) \right]\\
\argmax{\theta}  g(x|\theta) h(\theta) &= \argmax{\theta}  g(x|\theta) 
\end{align}
since argument maximization is invariant under scalar multiplication. Therefore
\begin{equation}
\hat{\theta}_{MAP} =  \hat{\theta}_{ML} .
\end{equation}
This reduction from MAP to MLE is valid when $\theta$ takes values from a bounded subset of the parameter space. The same reduction holds for unbounded domains of $\theta$. This requires the use of improper prior pdfs i.e. prior pdfs that are not integrable \parencite{carlin-louis2009}. Thus MAP and ML estimation fit into the Bayesian estimation framework.

All the Bayes estimates above minimize risk functions. There is an alternative decision strategy that addresses worst-case scenarios: we can define an estimate $d^*(x)$ that minimizes the worst case risk 
\begin{equation} 
d* = \argmin{d} \{sup_\theta R(\theta,d) \}.
\end{equation}
This is the \emph{minimax} estimator. This is a deeply conservative estimator that is typically \emph{inadmissible}\footnote{A decision rule (point estimate) $d(X)$ is \emph{inadmissible} \parencite{degroot1970, bernardo-smith2009} in the statistical sense if there exists an alternate decision rule (point estimate) $d^*(X)$ with lower Bayes risk for all values of the parameter $\theta$ i.e.$R(d^*(X),\theta) \leq R(d(X), \theta)$ for all values of $\theta$ with strict inequality for some values of $\theta$.}
 having higher risk compared to \emph{any} admissible estimator\parencite{carlin-louis2009, bernardo-smith2009}. The minimax approach to rational decision makes the most sense in zero-sum game-theoretic scenarios \parencite{osborne-rubinstein1994}. 
Wald (\cite*[][pp. 24--27]{wald1950}) showed that statistical decision problems have the same form as zero-sum two-person games between Nature and the experimenter. The minimax estimators represent minimax strategies for the experimenter. However minimax estimators are often too conservative for Bayesian statistics applications.

\subsection{Measures of Uncertainty for Bayes Estimates}
Point estimates need appropriate measures of uncertainty. The full posterior pdf is the most complete Bayesian description of uncertainty about the parameter $\theta$. We can also specify more succinct measures of parameter variability depending on the type of Bayes estimate in use. Such measures lack the full generality of the posterior but they are simpler to use for fixed loss functions. The conditional variance $Var_\theta[\Theta | X] $ measures variability around the conditional mean Bayes estimate $\E_\theta \left[ \Theta | X \right]$. The inter-quartile range measures variability around the median Bayes estimate $\text{Median} \left\{ f(\theta | X) \right\}$. The \emph{highest posterior density (HPD) credible interval} \parencite{bernardo-smith2009, carlin-louis2009} measures variability around the mode Bayes estimate $\text{Mode}\left\{f(\theta | X) \right\}$.

The credible interval is most akin to the more familiar confidence interval $CI(\alpha)$ in frequentist statistical inference. The credible and confidence intervals are both subsets of the parameter space $\Theta$ that highlight the characteristic spread of the point estimate $\hat{\theta}=d(X)$. The $(1-\alpha)$-level confidence interval is the \emph{random set} $CI(\alpha)$ specified by the test statistic $\hat{\theta}$ such that 
\begin{equation}
(1-\alpha) = P\left(\theta \in CI(\alpha)\right).
\end{equation} 
While a $1-\alpha$ credible intervals is a connected set  $\mathcal{C}(\alpha)$ such that 
\begin{equation}
	(1-\alpha) = P\left(\theta \in \mathcal{C}(\alpha) | x\right) = \int_{\mathcal{C}(\alpha)} f\left(\theta|x\right) \;. 
\end{equation}
The key difference is that the confidence interval measures probabilities via a distribution on random sets with $\theta$ constant but unknown. While the credible interval measures probabilities via a posterior distribution on the random parameter $\theta$. The two intervals have different interpretations and are generally not equivalent except under very special conditions \parencite{welch-peers1963, peers1965, welch1965, peers1968}.

Credible intervals are not unique. Any Bayes estimate can belong to a continuum of different credible intervals. But the HPD credible interval is optimal \parencite{box-tiao1965} in the sense that it is the minimum-volume credible interval and it always contains the posterior mode.

\section{Conclusion}
This chapter highlighted the differences between the frequentist and Bayesian approach to statistical inference and point estimation in particular. It also showed how the Bayesian framework subsumes frequentist ML point estimation in theory. The rest of this work addresses some issues with the Bayesian inference framework. The subsumption of MLE methods under Bayesian methods implies that these issues are also relevant to ML point estimation.

The exposition so far assumes that the data model functions (the priors pdfs and likelihood functions) are accurate. The Bayesian inference framework works well when this assumption is true. What happens when this assumption fails? Is Bayesian inference robust to corruption caused by incorrect model functions? The next chapter addresses these questions.

\clearpage

\chapter{Bayesian Inference with Fuzzy Function Approximators}\label{ch:BAT}

\blfootnote{
	This chapter features work done in collaboration with Prof. Sanya Mitaim and first presented in \parencite{osoba-mitaim-kosko-SMC2011,osoba-mitaim-kosko2009}.
}

The key assumptions in the Bayesian inference scheme are: (1) models for the observable data are accurate and (2) the source of confusion is the randomness of the data and the model parameters. Many applications of Bayesian inference involve inaccurate data models and possibly other forms of model corruption. This raises the questions: how reliable are Bayesian statistical estimates when the analytic data model does not match the true data model? Are these estimates useful when we only have approximations of the data model? This chapter addresses these questions by analyzing the effect of approximate model functions on Bayes theorem. 

The main result of this analysis is the \emph{Bayesian Approximation Theorem} (BAT) for posterior pdfs: applying Bayes theorem to separate uniform approximators for the prior pdf and the likelihood function (the model functions) results in a uniform approximator for the posterior pdf. This theorem guarantees that good model function approximators produce good approximate posterior pdf. The BAT also applies to any type of uniform function approximator.

We demonstrate this result with fuzzy rule-based uniform approximators for the model functions.  Fuzzy approximation techniques have two main advantages over other approximation techniques for Bayesian inference. First, they allow users to express prior or likelihood descriptions in words rather than as closed-form probability density functions. Learning algorithms can tune approximators based on expert linguistic rules or just grow them from sample data. Second, they can \emph{represent} any bounded closed-form model function exactly.  Furthermore, the learning laws and fuzzy approximators have a tractable form because of the convex-sum structure of additive fuzzy systems. This convex-sum structure carries over to the fuzzy posterior approximator (see Theorem \ref{thm:Doubly-Posterior-SAM}). We also show that fuzzy approximators are robust to noise in the data (see figure \ref{fig:EmpiricalCasesCombined}).

Simulations demonstrate this fuzzy approximation scheme on the priors and posteriors for the three most common conjugate models (see Figures \ref{fig:BetaASAM}-\ref{fig:NormalASAM}): the beta-binomial, gamma-Poisson and normal-normal conjugate models. Fuzzy approximators can also approximate non-conjugate priors and likelihoods as well as approximate hyperpriors in hierarchical Bayesian inference. We later extend this approimation scheme to more general hierarchical Bayesian models in Chapter \ref{ch:ExBAT}. Most of the approximation qualities carry over to this more general case.

We use the notation from Chapter \ref{ch:Bayes}: $f(\theta|x)$ is the posterior pdf. $f(\theta|x)$ is the result of the application of Bayes theorem to the prior $h(\theta)$ and the likelihood $g(x|\theta)$:
\begin{align}
f(\theta|x) ~=~ \frac{g(x|\theta)h(\theta)}{\int g(x|u)h(u)du} ~\propto~ g(x|\theta)h(\theta)
\label{eq:BayesPosterior}
\end{align}
The probabilistic graphical model (PGM) in Figure \ref{fig:BAT-PGM} represents this data model succinctly.
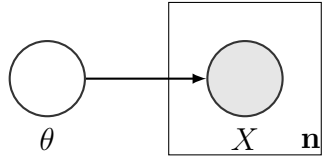
\begin{figure}
	\centering
	\begin{tikzpicture}
	\tikzstyle{main}=[circle, minimum size = 10mm, thick, draw =black!80, node distance = 16mm]
	\tikzstyle{connect}=[-latex, thick]
	\tikzstyle{box}=[rectangle, draw=black!100]
	  \node[main] (theta) [label=below:$\theta$] { };
	  \node[main, fill = black!10] (x) [right=of theta, label=below:$X$] { };
	  \path (theta) edge [connect] (x);
	  \node[rectangle, inner sep=0mm, fit=(x),label=below right:$\mathbf{n}$, xshift=0.75mm,yshift=-0.75mm] {};
	  \node[rectangle, inner sep=5mm,draw=black!100, fit= (x)] {};
	\end{tikzpicture}
	\caption[Probabilistic graphical model for Bayesian data models in Chapter \ref{ch:BAT}.]{
		Probabilistic graphical model for all Bayesian data models in this chapter. We observe $\mathbf{n}$ samples of the data $X$ which depends on a hidden random parameter $\theta$. The likelihood function $g(x|\theta)$ captures this dependence. The prior $h(\theta)$ describes the distribution of the hidden parameter $\theta$.
	}
\label{fig:BAT-PGM}
\end{figure}

\section{Bayesian Inference with Fuzzy Systems}\label{sec:1}

Additive fuzzy systems can extend Bayesian inference because they allow users to express prior or likelihood knowledge in the form of if-then rules.   Fuzzy systems can approximate prior or likelihood probability density functions (pdfs) and thereby approximate posterior pdfs.  This allows a user to describe priors with fuzzy if-then rules rather than with closed-form pdfs.  The user can also train the fuzzy system with collateral data to adaptively grow or tune the fuzzy rules and thus to approximate the prior or likelihood.  A simple two-rule system can also exactly represent a bounded prior pdf if such a closed-form pdf is available. So fuzzy rules extend the range of knowledge that prior or likelihood can capture and they do so in an expressive linguistic framework based on multivalued or fuzzy sets \parencite{zadeh1965}.

Figure 1 shows how five tuned fuzzy rules approximate the skewed beta prior pdf $\beta(8, 5)$.  Learning has sculpted the five if-part and then-part fuzzy sets so that the approximation is almost exact.  Users will not in general have access to such training data because they do not know the functional form of the prior pdf.  They can instead use any noisy sample data at hand or just state simple rules of thumb in terms of fuzzy sets and thus implicitly define a fuzzy system approximator $F$.  The following prior rules define such an implied skewed prior that maps fuzzy-set descriptions of the parameter random variable $\Theta$ to fuzzy descriptions $F(\Theta)$ of the occurrence probability:

\vspace{4pt}

\hspace{0.5in} 
Rule 1:  If $\Theta$ is {\it much smaller} than $\frac{1}{2}$   \,then  $F(\Theta)$ is {\it very small}
            
\hspace{0.5in} 
Rule 2:  If $\Theta$ is {\it smaller} than $\frac{1}{2}$  \,then $F(\Theta)$ is {\it small}
            
\hspace{0.5in} 
Rule 3:  If $\Theta$ is {\it approximately} $\frac{1}{2}$  then  $F(\Theta)$ is {\it large}

\hspace{0.5in} 
Rule 4:  If $\Theta$ is {\it larger} than $\frac{1}{2}$  then $F(\Theta)$ is {\it medium}
            
\hspace{0.5in} 
Rule 5:  If $\Theta$ is {\it much larger} than $\frac{1}{2}$  then $F(\Theta)$ is {\it small}

\vspace{4pt}

\noindent Learning shifts and scales the Cauchy bell curves that define the if-part fuzzy sets in Figure 1.  The tuned bell curve in the third rule has shifted far to the right of the equi-probable value $\frac{1}{2}$.  Different prior rules and fuzzy sets will define different priors just as will different sets of sample data.  The simulations results in Figures \ref{fig:BetaASAM}--\ref{fig:NormalASAM} show that such fuzzy rules can quickly learn an implicit prior if the fuzzy system has access to data that reflects the prior.  These simulations give probative evidence that an informed expert can use fuzzy sets to express reasonably accurate priors in Bayesian inference even when no training data is available. The uniform fuzzy approximation theorem in \parencite{kosko-fat1994,kosko-fuzeng} gives a theoretical basis for such rule-based approximations of priors or likelihoods.  Theorem 2 below further shows that such uniform fuzzy approximation of priors or likelihoods leads in general to the uniform fuzzy approximation of the corresponding Bayesian posterior.

\renewcommand{\baselinestretch}{1.0}
\begin{figure}[thb]
\centerline{
	\includegraphics[width=0.75\textwidth]{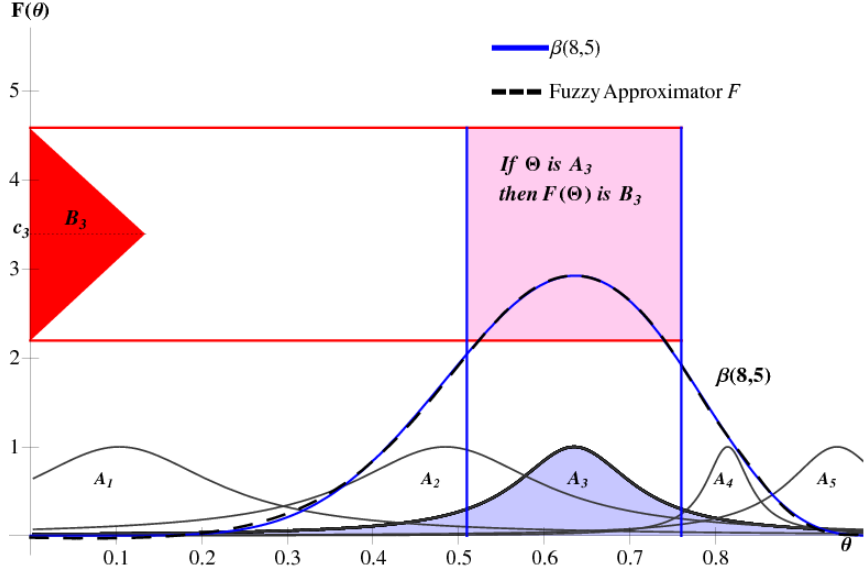}
}
\caption[Five fuzzy if-then rules approximate the beta prior $h(\theta) = \beta(8, 5)$]{
	Five fuzzy if-then rules approximate the beta prior $h(\theta) = \beta(8, 5)$.  The five if-part fuzzy sets are truncated Cauchy bell curves.  An adaptive Cauchy SAM (standard additive model) fuzzy system tuned the sets' location and dispersion parameters to give a nearly exact approximation of the beta prior.  Each fuzzy rule defines a patch or 3-D surface above the input-output planar state space.  The third rule has the form ``If $\Theta = A_3$ then $B_3$'' where then-part set $B_3$ is a fuzzy number centered at centroid $c_3$.  This rule might have the linguistic form ``If $\Theta$ is {\it approximately} $\frac{1}{2}$ then $F(\Theta)$ is {\it large}.'' The training data came from 500 uniform samples of $\beta(8, 5)$.  The adaptive fuzzy system cycled through each training sample 6,000 times.  The fuzzy approximator converged in fewer than 200 iterations.  The adaptive system also tuned the centroids and areas of all five then-part sets (not pictured).    
}
\label{fg:1}
\end{figure}

Bayesian inference itself has a key strength and a key weakness.  The key strength is that it computes the posterior pdf $f(\theta|x)$ of a parameter $\theta$ given the observed data $x$.  The posterior pdf gives all probabilistic information about the parameter given the available evidence.  The key weakness is that this process requires that the user produce a prior pdf $h(\theta)$ that describes the unknown parameter.  The prior pdf can inject ``subjective'' information into the inference process because it can be little more than a guess from the user or from some consulted expert or other source of authority.  Priors can also capture ``objective'' information from a collateral source of data.

Additive fuzzy systems use if-then rules to map inputs to outputs and thus to model priors or likelihoods.  A fuzzy system with enough rules can uniformly approximate any continuous function on a compact domain.  Statistical learning algorithms can grow rules from unsupervised clusters in the input-output data or from supervised gradient descent.  Fuzzy systems also allow users to add or delete knowledge by simply adding or deleting if-then rules.  So they can directly model prior pdfs and approximate them from sample data if it is available.  Inverse algorithms can likewise find fuzzy rules that maximize the posterior pdf or functionals based on it.  These applications of adaptive fuzzy approximators to Bayesian inference do not involve unrelated efforts to {\it fuzzify} Bayes Theorem  \parencite{kosko-fuzzyentropy,terano-asai-sugeno1987}. The use of adaptive fuzzy systems allows for more accurate prior pdf and likelihood function estimation thus improving the versatility and accuracy of {\it classical} Bayesian applications.

\textbf{\S}\ref{sec:AdaptFAT} reviews the theory behind fuzzy function approximation. We show this fuzzy approximation scheme with the three well-known conjugate priors and their corresponding posterior approximations (Figures \ref{fig:BetaASAM}, \ref{fig:GammaASAM}, and \ref{fig:NormalASAM}). The scheme works well even with non-conjugate data models (Figures \ref{fig:NonConjugatePrior} and \ref{fig:NonConjugate-Posterior-x=6}). \textbf{\S}\ref{sec:doubly-fuzzy} further extends the fuzzy approach to doubly fuzzy Bayesian inference where separate fuzzy systems approximate the prior and the likelihood. This section also states and proves what we call the Bayesian Approximation Theorem:  Uniform approximation of the prior and likelihood results in uniform approximation of the posterior.

\section{Adaptive Fuzzy Function Approximation}\label{sec:AdaptFAT}

Additive fuzzy systems can uniformly approximate continuous functions on compact sets \parencite{kosko-nnfs,kosko-fat1994,kosko-fuzeng}.  Hence the set of additive fuzzy systems is dense in the space of such functions.  A scalar fuzzy system is the map $F: R^n \to R$ that stores $m$ if-then rules and maps vector inputs $x$ to scalar outputs $F(x)$. The prior and likelihood simulations below map not $R^n$ but a compact real interval $[a,~b]$ into reals.  So these systems also satisfy the approximation theorem but at the expense of truncating the domain of pdfs such as the gamma and the normal. Truncation still leaves a proper posterior pdf through the normalization in (\ref{eq:BayesPosterior}).

\subsection{SAM Fuzzy Systems}

A standard additive model (SAM) fuzzy system computes the output $F(x)$ by taking the centroid of the sum of the ``fired'' or scaled then-part sets:  
$F(x) = Centroid(w_1 a_1(x) B_1 + \cdots + w_m a_m(x) B_m)$.  Then the SAM Theorem states that the output $F(x)$ is a simple convex-weighted sum of the then-part set centroids $c_j$ \parencite{kosko-nnfs,kosko-fat1994,kosko-fuzeng,mitaim-kosko2001ASAM}:  
\begin{align}
F(x) &= 
\frac{\sum_{j=1}^m w_j a_j(x) V_j c_j}
{\sum_{j=1}^m w_j a_j(x) V_j}
~=~ \displaystyle \sum_{j=1}^m p_j(x)c_j.
\label{eq:SAM}
\end{align}
Here $V_j$ is the finite area of then-part set $B_j$ in the rule ``If $X = A_j$ then $Y = B_j$'' and $c_j$ is the centroid of $B_j$.  The convex weights $p_1(x),\ldots,p_m(x)$ have the form $p_j(x) = \frac{w_j a_j(x) V_j}{\sum_{i=1}^m w_i a_i(x) V_i}$.  The convex coefficients $p_j(x)$ change with each input $x$.   The positive rule weights $w_j$ give the relative importance of the $j$th rule.  They drop out in our case because they are all equal.  

The scalar set function $a_j: R \to [0, 1]$ measures the degree to which input $x \in R$ belongs to the fuzzy or multivalued set $A_j$:  $a_j(x) = Degree(x \in A_j)$.  The sinc set functions below map into the augmented range $[-.217,\,1]$ and so require some care in simulations.  The fuzzy membership value $a_j(x)$ ``fires'' the rule ``If $X = A_j$ then $Y = B_j$'' in a SAM by scaling the then-part set $B_j$ to give $a_j(x) B_j$.  The if-part sets can in theory have any shape but in practice they are parametrized pdf-like sets such as those we use below: sinc, Gaussian, triangle, Cauchy, Laplace, and generalized hyperbolic tangent.  The if-part sets control the function approximation and involve the most computation in adaptation.  Extensive simulations in \parencite{mitaim-kosko2001ASAM} show that the sinc function (in 1-D and 2-D) tends to perform best among all six sets in terms of sum of squared approximation error.  Users define a fuzzy system by giving the $m$ corresponding pairs of if-part $A_j$ and then-part $B_j$ fuzzy sets.   Many fuzzy systems in practice work with simple then-part fuzzy sets such as congruent triangles or rectangles.

SAMs define ``model-free'' statistical estimators in the following sense \parencite{kosko-fuzeng,lee-kosko-anderson2005,mitaim-kosko2001ASAM}:
\begin{align}
E[Y | X = x]  &=   F(x)  =  \sum_{j=1}^m p_j(x) c_j
\label{eq:SAMcondMean}\\
V[Y | X = x]  &= \sum_{j=1}^m p_j(x)\sigma_{B_j}^2 +  \sum_{j=1}^m p_j(x)[c_j-F(x)]^2.
\label{eq:SAMcondVar}
\end{align}
The then-part set variance $\sigma_{B_j}^2$ is $\sigma_{B_j}^2 = \int_{-\infty}^\infty (y-c_j)^2p_{B_j}(y)dy$. Then $p_{B_j}(y) = b_j(y)/V_j$ is an integrable pdf if $b_j: R \to [0,~1]$ is the integrable set function of then-part set $B_j$.  The conditional variance $V[Y|X=x]$ gives a direct measure of the uncertainty in the SAM output $F(x)$ based on the inherent uncertainty in the stored then-part rules.  This defines a type of confidence surface for the fuzzy system \parencite{lee-kosko-anderson2005}.  The first term in the conditional variance (\ref{eq:SAMcondVar}) measures the inherent uncertainty in the then-part sets given the current rule firings.  The second term is an interpolation penalty because the rule ``patches'' $A_j \times B_j$  cover different regions of the input-output product space. The shape of the then-part sets affects the conditional variance of the fuzzy system but affects the output $F(x)$ only to the extent that the then-part sets $B_j$ have different centroids $c_j$ or areas $V_j$.  The adaptive function approximations below tune only these two parameters of each then-part set.  The conditional mean (\ref{eq:SAMcondMean}) and variance (\ref{eq:SAMcondVar}) depend on the realization $X = x$ and so generalize the corresponding unconditional mean and variance of mixture densities \parencite{hogg-mckean-craig2005}.

\begin{figure}[h]
\centerline{\includegraphics[width=3in]{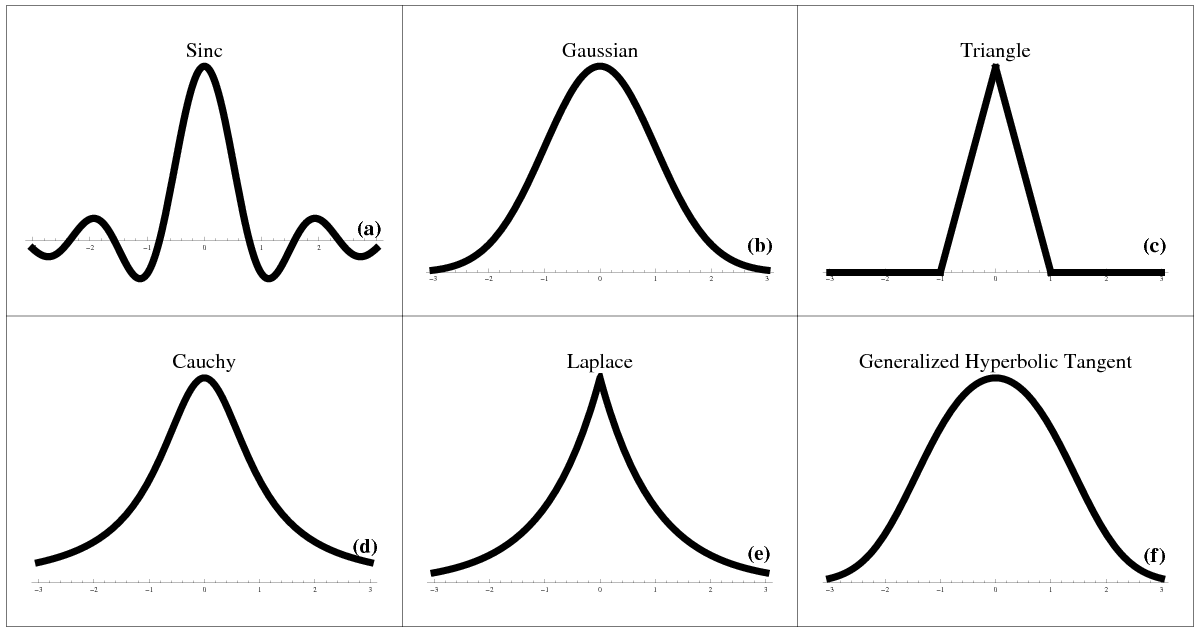} }
\caption[Six types of if-part fuzzy sets in conjugate prior approximations]{
Six types of if-part fuzzy sets in conjugate prior approximations.  Each type of set produces its own adaptive SAM learning law for tuning its location and dispersion parameters:  (a) sinc set, (b) Gaussian set, (c) triangle set, (d) Cauchy set, (e) Laplace set, and (f) a generalized hyperbolic-tangent set. The sinc shape performed best in most approximations of conjugate priors and the corresponding fuzzy-based posteriors.}
\label{fg:2}
\end{figure}

A SAM fuzzy system $F$ can always approximate a function $f$ or $F \approx f$ if the fuzzy system contains enough rules.  But multidimensional fuzzy systems $F:R^n \to R$  suffer exponential rule explosion in general \parencite{jin2000,Kosko1995opt}.  Optimal rules tend to reside at the extrema or turning points of the approximand $f$ and so optimal fuzzy rules ``patch the bumps'' \parencite{Kosko1995opt}.  Learning tends to quickly move rules to these extrema and to fill in with extra rules between the extremum-covering rules.  The supervised learning algorithms can  involve extensive computation in higher dimensions \parencite{mitaim-kosko1998Agent,mitaim-kosko2001ASAM}.  Our fuzzy prior approximations did not need many rules or extensive computation time because the fuzzy systems were 1-dimensional $( R \to R )$.  But iterative Bayesian inference can produce its own rule explosion (chapter \ref{ch:ExBAT}, \parencite{osoba-mitaim-kosko-FODM2012}).

\subsection{The Watkins Representation Theorem}
Fuzzy systems can exactly represent a bounded pdf with a known closed form. Watkins has shown that in many cases a SAM system $F$ can exactly represent a function $f$ in the sense that $F=f$. The Watkins Representation Theorem \parencite{watkins1995, watkins1994} states that  $F=f$ if $f$ is bounded and if we know the closed form of $f$. The results is stronger that this because the SAM system $F$ exactly represents $f$ with just \emph{two} rules with equal weights $w_1=w_2$ and equal then-part set volumes $V_1=V_2$:
\begin{align}
F(x)&=\frac{\sum^2_{j=1} w_j a_j (x) V_j c_j}{\sum^2_{j=1} w_j a_j (x) V_j} \\
&=\frac{ a_1 (x) c_1 +a_2(x) c_2}{a_1 (x) + a_2(x)} \\
&= f(x)
\end{align}
if $a_1(x)=\frac{\sup f - f(x)}{\sup f - \inf f}$, $a_2(x)=1 - a_1(x) = \frac{f(x) - \inf f}{\sup f - \inf f}$, $c_1=\inf f$, $c_2 =\sup  f$.

The representation technique builds $f$ directly into the structure of the two if-then rules. Let $h(\theta)$ be any bounded prior pdf such as the $\beta(8,5)$ pdf in the simulations below. Then $F(\theta)=h(\theta)$ holds for the all realizations of $\theta$ if the SAM's two rules have the form ``If $\Theta=A$ then $Y=B_1$''  and ``If $\Theta=\textrm{not-}A$ then $Y=B_2$'' for the if-part set function
\begin{equation}
a(\theta)= \frac{\sup h - h(x)}{\sup h -\inf h} = 1-\frac{11^{11}}{7^7 4^4}\theta^7 (1-\theta)^4
\end{equation}
The not-$A$ if-part set function is $1-a(\theta) = \frac{11^{11}}{7^7 4^4}\theta^7 (1-\theta)^4$. Then-part sets $B_1$ and $B_2$ can have any shape from rectangles to Gaussians so long as $0<V_1=V_2<\infty$ with centroids $c_1 = \inf h = 0$ and $c_2 = \sup h=\frac{\Gamma(13)}{\Gamma(8) \Gamma(5)}(\frac{7}{11})^7 (\frac{4}{11})^4$. So the Watkins Representation Theorem lets a SAM fuzzy system directly absorb a closed-form bounded prior $h(\theta)$ if it is available. The same holds for a bounded likelihood or posterior pdf.

\subsection{ASAM Learning Laws}\label{sec:asam-laws}

An adaptive SAM (ASAM) $F$ can quickly approximate a prior $h(\theta)$ (or likelihood) if the following supervised learning laws have access to adequate samples $h(\theta_1), h(\theta_2),\ldots$ from the prior.  This may mean in practice that the ASAM trains on the same numerical data that a user would use to conduct a chi-squared or Kolmogorov-Smirnov hypothesis test for a candidate pdf. Figure \ref{fig:EmpiricalCasesCombined} shows that an ASAM can learn the prior pdf even from noisy random samples drawn from the pdf.  Unsupervised clustering techniques can also train an ASAM if there is sufficient cluster data \parencite{kosko-nnfs,kosko-fuzeng,xu-wunsch2009}.  The ASAM prior simulations in the next section show how $F$ approximates $h(\theta)$ when the ASAM trains on random samples from the prior.  These approximations bolster the case that ASAMs will in practice learn the appropriate prior that corresponds to the available collateral data.

\begin{figure}[!ht]
\centerline{\includegraphics[width=\textwidth]{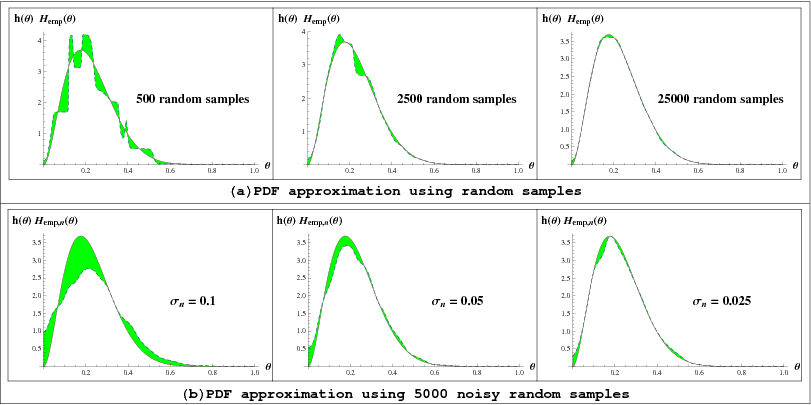} }
\caption[ASAMs can use a limited number of random samples or noisy random samples to estimate the sampling pdf]{ASAMs can use a limited number of random samples or noisy random samples to estimate the sampling pdf.  The ASAMs for these examples use the tanh set function with 15 rules and they run for 6000 iterations. The ASAMs approximate empirical pdfs from the different sets of random samples. The shaded regions represent the approximation error between the ASAM estimate and the sampling pdf.	Part (a) compares the $\beta(3,10.4)$ pdf with ASAM approximations for some $\beta(3,10.4)$ empirical pdfs. Each empirical pdf is a scaled histogram for a set of $N$ random samples. The figure shows comparisons for the cases $N=500,\, 2500,\, 25000$. 	Part (b) compares the $\beta(3,10.4)$ pdf with ASAM approximations of 3 $\beta(3,10.4)$ random sample sets corrupted by independent noise. Each set has $5000$ random samples. The noise is zero-mean additive white Gaussian noise. The standard deviations $\sigma_n$ of the additive noise are $0.1$, $0.05$, and $0.025$. The plots show that the ASAM estimate gets better as the number of samples increases. The ASAM has difficulty estimating tail probabilities when the additive noise variance gets large.}
\label{fig:EmpiricalCasesCombined}
\end{figure}

ASAM supervised learning uses gradient descent to tune the parameters of the set functions $a_j$ as well as the then-part areas $V_j$ (and weights $w_j$) and centroids $c_j$.  The learning laws follow from the SAM's convex-sum structure (8) and the chain-rule decomposition $\frac{\partial E}{\partial m_j} = \frac{\partial E}{\partial F} \frac{\partial F}{\partial a_j} \frac{\partial a_j}{\partial m_j}$  for SAM parameter $m_j$ and error $E$ in the generic gradient-descent algorithm \parencite{kosko-fuzeng,mitaim-kosko2001ASAM}
\begin{equation}
m_j(t+1) = m_j(t) - \mu_t\frac{\partial E}{\partial m_j}
\end{equation}
where $\mu_t$ is a learning rate at iteration $t$.  We seek to minimize the squared error
\begin{equation}
E(\theta) = \frac{1}{2}(f(\theta)-F(\theta))^2 = \frac{1}{2}\varepsilon(\theta)^2
\end{equation}
of the function approximation.  Let $m_j$ denote any parameter in the set function $a_j$.  Then the chain rule gives the gradient of the error function with respect to the respective if-part set parameter $m_j$, the centroid $c_j$, and the volume $V_j$ :
\begin{align}
\frac{\partial E}{\partial m_j} = \frac{\partial E}{\partial F}
\frac{\partial F}{\partial a_j}\frac{\partial a_j}{\partial m_j}
\label{eq:dajdmj}, ~~~
\frac{\partial E}{\partial c_j} = \frac{\partial E}{\partial F}
\frac{\partial F}{\partial c_j},
~~~\mbox{and}~~~
\frac{\partial E}{\partial V_j} = \frac{\partial E}{\partial F}
\frac{\partial F}{\partial V_j}
\end{align}
with partial derivatives \parencite{kosko-fuzeng,mitaim-kosko2001ASAM}
\begin{align}
\frac{\partial E}{\partial F} = -(f(\theta)-F(\theta)) = -\varepsilon(\theta)~~~\mbox{and}~~~
\frac{\partial F}{\partial a_j} = [c_j-F(\theta)]\frac{p_j(\theta)}{a_j(\theta)}.
\end{align}
The SAM ratio (\ref{eq:SAM}) with equal rule weights $w_1 = \cdots = w_m$ gives
\parencite{kosko-fuzeng,mitaim-kosko2001ASAM}
\begin{equation}
\frac{\partial F}{\partial c_j} = \frac{a_j(\theta)V_j}{\sum_{i=1}^m a_i(\theta)V_i} = p_j(\theta)
\end{equation}
\begin{equation}
\frac{\partial F}{\partial V_j} = \frac{a_j(\theta)[c_j-F(\theta)]}{\sum_{i=1}^m a_i(x)V_i} = [c_j-F(\theta)]\frac{p_j(\theta)}{V_j}.
\end{equation}
Then the learning laws for the then-part set centroids $c_j$ and volume $V_j$ have the final form
\begin{align}
c_j(t+1) &= c_j(t) + \mu_t\varepsilon(\theta)p_j(\theta)\\
V_j(t+1) &= V_j(t) + \mu_t\varepsilon(\theta)[c_j-F(\theta)]\frac{p_j(\theta)}{V_j}.
\end{align}
The learning laws for the if-part set parameters follow in like manner by expanding
$\frac{\partial a_j}{\partial m_j}$ in (\ref{eq:dajdmj}).

\begin{figure}[!ht]
\centerline{ \includegraphics[height=4in]{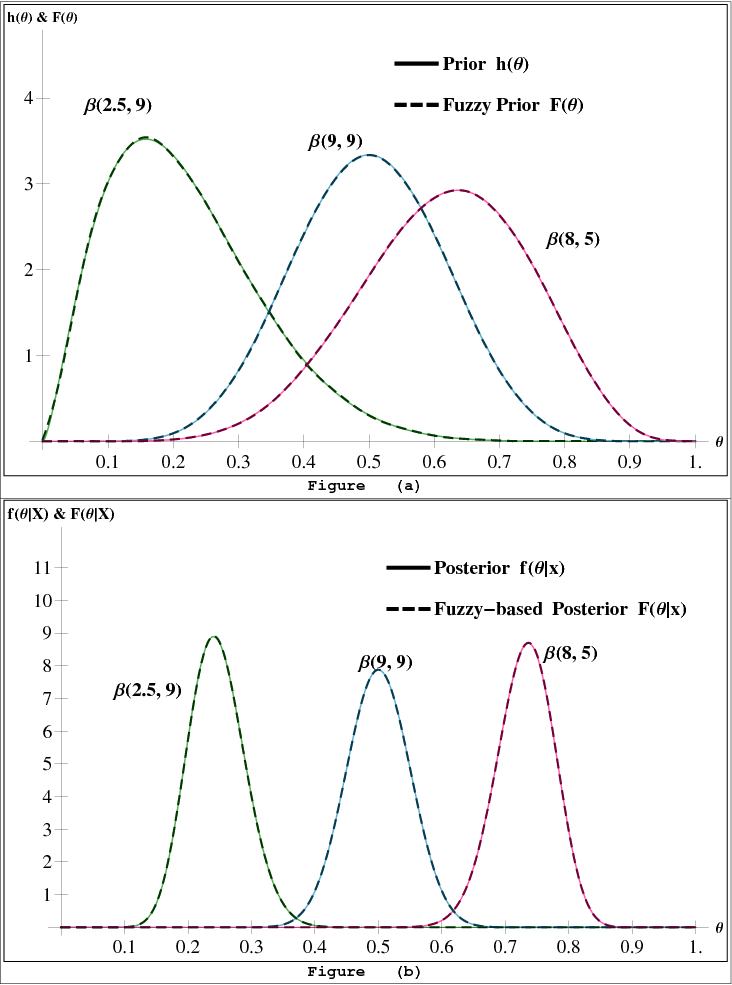} }
\caption[Comparison of conjugate beta priors and posteriors with their fuzzy approximators]{Comparison of conjugate beta priors and posteriors with their fuzzy approximators.  (a) an adapted sinc-SAM fuzzy system $F(\theta)$ with 15 rules approximates the three conjugate beta priors $h(\theta)$: $\beta(2.5 , 9)$, $\beta(9 , 9)$, and $\beta(8 , 5)$.  (b) the sinc-SAM fuzzy priors $F(\theta)$ in (a) produce the SAM-based approximators $F(\theta|x)$ of the three corresponding beta posteriors $f(\theta|x)$ for the three corresponding binomial likelihood $g(x|\theta)$ with $n = 80$:  $bin(20, 80)$, $bin(40, 80)$, and $bin(60, 80)$ where $g(x|\theta)= bin(x, 80) = \frac{80!}{x!(80-x)!} \theta^x (1 - \theta)^{80-x}$. So $X \sim bin(x, 80)$ and $X = 20$ mean that there were 20 successes out of 80 trials in an experiment where the probability of success was $\theta$.  Each of the three fuzzy approximations cycled 6,000 times through 500 uniform training samples from the corresponding beta priors.
}
\label{fig:BetaASAM}
\end{figure}

\begin{figure}[!ht]
\centerline{\includegraphics[height=4.5in]{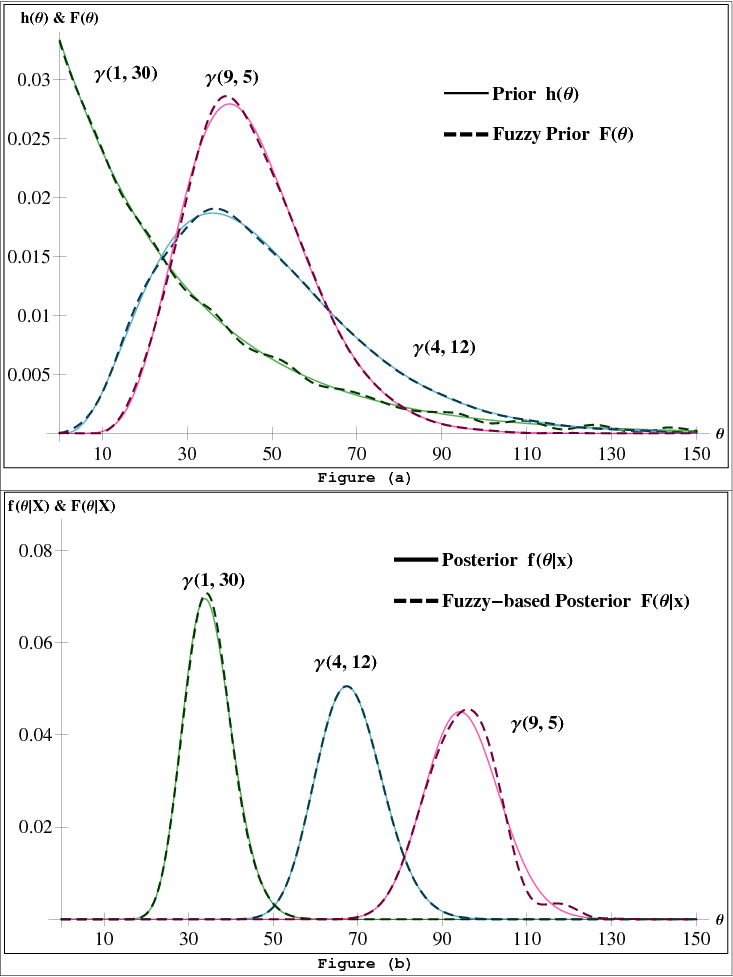} }
\caption[Comparison of conjugate gamma priors and posteriors with their fuzzy approximators]{Comparison of conjugate gamma priors and posteriors with their fuzzy approximators.  (a) an adapted sinc-SAM fuzzy system $F(\theta)$ with 15 rules approximates the three conjugate gamma priors $h(\theta)$: $\gamma(1 , 30)$, $\gamma(4, 12)$, and $\gamma(9 , 5)$.  (b) the sinc-SAM fuzzy priors $F(\theta)$ in (a) produce the SAM-based approximators $F(\theta|x)$ of the three corresponding gamma posteriors $f(\theta|x)$ for the three corresponding Poisson likelihoods $g(x|\theta)$:  $p(35)$, $p(70)$, and $p(105)$ where $g(x|\theta) =  p(x) =  \theta^x  e^{-\theta} / x!$.  Each of the three fuzzy approximations cycled 6,000 times through 1,125 uniform training samples from the corresponding gamma priors.}
\label{fig:GammaASAM}
\end{figure}

\begin{figure}[!ht]
\centerline{
\includegraphics[width=4in]{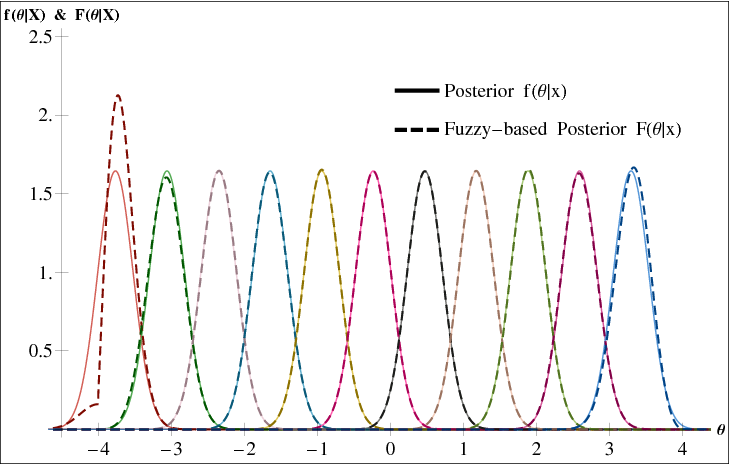}
}
\caption[Comparison of $11$ conjugate normal posteriors with their fuzzy-based approximators based on a standard normal prior and $11$ different normal likelihoods]{Comparison of 11 conjugate normal posteriors with their fuzzy-based approximators based on a standard normal prior and 11 different normal likelihoods. An adapted sinc-SAM approximator with 15 rules first approximates the standard normal prior $h(\theta) = N(0, 1)$ and then combines with the likelihood $g(x|\theta) = N(\theta, \sigma^2=\frac{1}{16})$.  The variance is $1/16$ because $x$ is the observed sample mean of 16 standard-normal random samples $X_k \sim N(0, 1)$.  The 11 priors correspond to the 11 likelihoods $g(x|\theta)$ with $x = -4$, $-3.25$, $-2.5$, $-1.75$, $-1$, $-0.25$, 0.5, 1.25, 2, 2.75, and 3.5. The fuzzy approximation cycled 6,000 times through 500 uniform training samples from the standard-normal prior.
}
\label{fig:NormalASAM}
\end{figure}

The simulations below tune the location $m_j$ and dispersion $d_j$ parameters of the if-part set functions $a_j$ for sinc, Gaussian, triangle, Cauchy, Laplace, and generalized hyperbolic tangent if-part sets.  Figure \ref{fg:2} shows an example of each of these six fuzzy sets with the following learning laws.

\subsubsection{Sinc ASAM learning law}
The sinc set function $a_j$ has the form 
\begin{equation}
a_j(\theta) = \sin\left(\frac{\theta-m_j}{d_j}\right)\Big/\left(\frac{\theta-m_j}{d_j}\right)
\end{equation}
with parameter learning laws \parencite{kosko-fuzeng,mitaim-kosko2001ASAM}
\begin{align}
m_j(t+1) &= m_j(t)+\mu_t\varepsilon(\theta)[c_j-F(\theta)]\frac{p_j(\theta)}{a_j(\theta)}\left(a_j(\theta)-\cos\left(\frac{\theta-m_j}{d_j}\right)\right)\frac{1}{\theta-m_j} \nonumber \\
d_j(t+1) &= d_j(t)+\mu_t\varepsilon(\theta)[c_j-F(\theta)]\frac{p_j(\theta)}{a_j(\theta)}\left(a_j(\theta)-\cos\left(\frac{\theta-m_j}{d_j}\right)\right)\frac{1}{d_j}. \nonumber
\end{align}

\subsubsection{Gaussian ASAM learning law} 
The Gaussian set function $a_j$ has the form
\begin{equation}
a_j(\theta) = \exp\left\{-\left(\frac{\theta-m_j}{d_j}\right)^2\right\}
\end{equation}
with parameter learning laws
\begin{align}
m_j(t+1) &= m_j(t)+\mu_t\varepsilon(\theta)p_j(\theta)[c_j-F(\theta)]\frac{\theta-m_j}{d_j^2}
\label{eq:GaussLearnmj}\\
d_j(t+1) &= d_j(t)+\mu_t\varepsilon(\theta)p_j(\theta)[c_j-F(\theta)]\frac{(\theta-m_j)^2}{d_j^3}.
\label{eq:GaussLearndj}
\end{align}

\subsubsection{Triangle ASAM learning law} 
The triangle set function has the form
\begin{equation}
a_j(\theta) = \left\{\begin{array}{ll}
1 - \frac{m_j-\theta}{l_j} ~~& \mbox{if $m_j-l_j \leq \theta \leq m_j$}\\
1 - \frac{\theta-m_j}{r_j} ~~& \mbox{if $m_j \leq \theta \leq m_j+r_j$}\\
0 ~~& \mbox{else}
\end{array}\right.
\end{equation}
with parameter learning laws
\begin{align}
m_j(t+1) &= \left\{\begin{array}{l}
m_j(t)-\mu_t\varepsilon(\theta)[c_j-F(\theta)]\frac{p_j(\theta)}{a_j(\theta)}\frac{1}{l_j}
~~~~~~~~~~\mbox{if $m_j-l_j < \theta < m_j$}\\
m_j(t)+\mu_t\varepsilon(\theta)[c_j-F(\theta)]\frac{p_j(\theta)}{a_j(\theta)}\frac{1}{r_j}
~~~~~~~~~~\mbox{if $m_j < \theta < m_j+r_j$}\\
m_j(t) ~~~\mbox{else}
\end{array}\right. \nonumber 
\\
l_j(t+1) &= \left\{\begin{array}{l}
l_j(t)+\mu_t\varepsilon(\theta)[c_j-F(\theta)]\frac{p_j(\theta)}{a_j(\theta)}\frac{m_j-\theta}{l_j^2}
~~~~~~~~~\mbox{if $m_j-l_j < \theta < m_j$}\\
l_j(t) ~~~\mbox{else}
\end{array}\right. \nonumber 
\\
r_j(t+1) &= \left\{\begin{array}{l}
r_j(t)+\mu_t\varepsilon(\theta)[c_j-F(\theta)]\frac{p_j(\theta)}{a_j(\theta)}\frac{\theta-m_j}{r_j^2}
~~~~~~~~~\mbox{if $m_j < \theta < m_j+r_j$}\\
r_j(t) ~~~\mbox{else}
\end{array}\right. \nonumber 
\end{align}
The Gaussian learning laws (\ref{eq:GaussLearnmj})-(\ref{eq:GaussLearndj}) can approximate the learning laws for the symmetric triangle set function $a_j(\theta) = \max\{0,1-\frac{|\theta-m_j|}{d_j}\}$.

\subsubsection{Cauchy ASAM learning law}
The Cauchy set function $a_j$ has the form
\begin{equation}
a_j(\theta) = \frac{1}{1+\left(\frac{\theta-m_j}{d_j}\right)^2}
\end{equation}
with parameter learning laws
\begin{align}
m_j(t+1) &= m_j(t)+\mu_t\varepsilon(\theta)p_j(\theta)[c_j-F(\theta)]\frac{\theta-m_j}{d_j^2}a_j(\theta)\\
d_j(t+1) &= d_j(t)+\mu_t\varepsilon(\theta)p_j(\theta)[c_j-F(\theta)]\frac{(\theta-m_j)^2}{d_j^3}a_j(\theta).
\end{align}

\subsubsection{Laplace ASAM learning law}
The Laplace or double-exponential set function $a_j$ has the form
\begin{equation}
a_j(\theta) = \exp\left\{-\frac{|\theta-m_j|}{d_j}\right\}
\end{equation}
with parameter learning laws
\begin{align}
m_j(t+1) &= m_j(t)+\mu_t\varepsilon(\theta)p_j(\theta)[c_j-F(\theta)]{\rm sign}(\theta-m_j)\frac{1}{d_j}\\
d_j(t+1) &= m_j(t)+\mu_t\varepsilon(\theta)p_j(\theta)[c_j-F(\theta)]{\rm sign}(\theta-m_j)\frac{|\theta-m_j|}{d_j^2}.
\end{align}

\subsubsection{Generalized hyperbolic tangent ASAM learning law}
The generalized hyperbolic tangent set function has the form
\begin{equation}
a_j(\theta) = 1 + \tanh\left(-\left(\frac{\theta-m_j}{d_j}\right)^2\right)
\end{equation}
with parameter learning laws
\begin{align}
m_j(t+1) &= m_j(t)+\mu_t\varepsilon(\theta)p_j(\theta)[c_j-F(\theta)](2-a_j(\theta))\frac{\theta-m_j}{d_j^2}\\
d_j(t+1) &= d_j(t)+\mu_t\varepsilon(\theta)p_j(\theta)[c_j-F(\theta)](2-a_j(\theta))\frac{(\theta-m_j)^2}{d_j^3}.
\end{align}

We can also reverse the learning process and adapt the SAM if-part and then-part set parameters by maximizing a given closed-form posterior pdf $f(\theta|x)$.  The basic Bayesian relation (\ref{eq:BayesPosterior}) above leads to the following application of the chain rule for a set parameter $m_j$:
\begin{equation}
\frac{\partial f(\theta|x)}{\partial m_j} ~\propto~ 
g(x|\theta)\frac{\partial F}{\partial m_j}
\label{eq:dBayesPosterior}
\end{equation}
since $\frac{\partial g}{\partial F} = 0$ because the likelihood $g(x|\theta)$ does not depend on the fuzzy system $F$.  The chain rule gives $\frac{\partial F}{\partial m_j} = \frac{\partial F}{\partial a_j}\frac{\partial a_j}{\partial m_j}$ and similarly for the other SAM parameters.  Then the above learning laws can eliminate the product of partial derivatives to produce a stochastic gradient ascent or maximum-a-posteriori or MAP learning law for the SAM parameters. 

\subsection{ASAM Approximation Simulations}\label{sec:ASAM-approx}

We simulated six different types of adaptive SAM fuzzy systems to approximate the three standard conjugate prior pdfs and their corresponding posterior pdfs.  The six types of ASAMs corresponded to the six if-part sets in Figure \ref{fg:2} and their learning laws above.  We combined C++ software for the ASAM approximations with Mathematica to compute the fuzzy-based  posterior $F(\theta|x)$ using (\ref{eq:BayesPosterior}). Mathematica's NIntegrate program computed the mean-squared errors between the conjugate prior $h(\theta)$ and the fuzzy-based prior $F(\theta)$ and between the posterior $f(\theta|x)$ and the fuzzy posterior $F(\theta|x)$.
            
Each ASAM simulation used uniform samples from a prior pdf $h(\theta)$.  The program evenly spaced the initial if-part sets and assigned them equal but experimental dispersion values.  The initial then-part sets had unit areas or volumes.  The initial then-part centroids corresponded to the prior pdf's value at the location parameters of the if-part sets.  A single learning iteration began with computing the approximation error at each uniformly spaced sample point.  The program cycled through all rules for each sample value and then updated each rule's if-part and then-part parameters according to the appropriate ASAM learning law.  Each adapted parameter had a harmonic-decay learning rate $\mu_t = \frac{c}{t}$ for learning iteration $t$.  Experimentation picked the numerator constants $c$ for the various parameters.

The approximation figures show representative simulation results. Figure \ref{fg:1} used Cauchy if-part sets for illustration only and not because they gave a smaller mean-squared error than sinc sets did. Figures \ref{fig:BetaASAM}-\ref{fig:NormalASAM} used sinc if-part sets even though we simulated all six types of if-part sets for all three types of conjugate priors. Simulations demonstrated that all 6 set functions produce good approximations for the prior pdfs.  The sinc ASAM usually performed best. We truncated the gamma priors at the right-side value of 150 and truncated the normal priors at $-4$ and 4 because the overlap between the truncated prior tails and the likelihoods $g(x|\theta)$ were small. The likelihood functions $g(x|\theta)$ had narrow dispersions relative to the truncated supports of the priors.  Larger truncation values or appended fall-off tails can accommodate unlikely $x$ values in other settings.  We also assumed that the priors were strictly positive. So we bounded the ASAM priors to a small positive value $(F(\theta) \geq 10^{-3})$ to keep the denominator integral in (\ref{eq:BayesPosterior}) well-behaved.

Figure \ref{fg:1} used only one fuzzy approximation. The fuzzy approximation of the $\beta(8,5)$ prior had mean-squared error $4.2\times 10^{-4}$.  The Cauchy-ASAM learning algorithm used 500 uniform samples for 6,000 iterations.  

The fuzzy approximation of the beta priors $\beta(2.5, 9)$, $\beta(9, 9)$, and $\beta(8, 5)$ in Figure \ref{fig:BetaASAM} had respective mean-squared errors $1.3\times 10^{-4}$, $2.3\times 10^{-5}$, and $1.4\times 10^{-5}$. The sinc-ASAM learning used 500 uniform samples from the unit interval for 6,000 training iterations. The corresponding conjugate beta posterior approximations had respective mean-squared errors $3.0\times 10^{-5}$, $6.9\times 10^{-6}$, and $3.8\times 10^{-5}$. 

The fuzzy approximation of the gamma priors $\gamma(1, 30)$, $\gamma(4, 12)$, and $\gamma(9, 5)$ in Figure \ref{fig:GammaASAM} had respective mean-squared errors $5.5\times 10^{-5}$, $3.6\times 10^{-6}$, and $7.9\times 10^{-6}$. The sinc-ASAM learning used 1,125 uniform samples from the truncated interval $[0, 150]$ for 6,000 training iterations. The corresponding conjugate gamma posterior approximations had mean-squared errors $2.3\times 10^{-5}$, $2.1 \times 10^{-7}$, and $2.3\times 10^{-4}$.  

The fuzzy approximation of the single standard-normal prior that underlies Figure \ref{fig:NormalASAM} had mean-squared error of $7.7\times 10^{-6}$. The sinc-ASAM learning used  500 uniform samples from the truncated interval $[-4,4]$ for 6,000 training iterations.  
\begin{table}[!h]\footnotesize
\begin{center}
\begin{tabular}{c@{\hspace{0.05in}}c}
\begin{tabular}{|c|c|}
\hline
Sample Mean&MSE \\ \hline
$-4$& $0.12$ \\ 
$-3.25$& $1.9 \times 10^{-3}$\\ 
$-2.5$& $3\times 10^{-4}$\\ 
$-1.75$& $1.5\times 10^{-4}$ \\ 
$-1$& $3.1\times 10^{-5}$\\ 
$-0.25$& $2.2\times 10^{-6}$\\ \hline
\end{tabular}
&
\begin{tabular}{|c|c|}
\hline
Sample Mean&MSE \\ \hline 
&\\
$0.5$& $1.1\times 10^{-5}$ \\
$1.25$& $6.5\times 10^{-5}$ \\
$2$& $1.6\times 10^{-4}$ \\ 
$2.75$& $3\times 10^{-4}$ \\
$3.5$&$7.6\times 10^{-3}$ \\ \hline
\end{tabular}
\end{tabular}
\caption{Mean squared errors for the $11$ normal posterior approximations}
\end{center}
\end{table}

The generalized-hyperbolic-tanh ASAMs in Figure \ref{fig:EmpiricalCasesCombined} learn the beta prior $\beta(3, 10.4)$ from both noiseless and noisy random-sample (i.i.d.) $x_1,x_2,\ldots$ draws from the ``unknown'' prior because the ASAMs use only the histogram or empirical distribution of the pdf.  The Glivenko-Cantelli Theorem \parencite{billingsley1995} ensures that the empirical distribution converges uniformly to the original distribution.  So sampling from the histogram of random samples increasingly resembles sampling directly from the unknown underlying pdf as the sample size increases.  This ASAM learning is robust in the sense that the fuzzy systems still learn the pdf if independent white noise corrupts the random-sample draws.

The simulation draws $N$ random samples $x_1,x_2,\ldots,x_N$ from the pdf $h(\theta) = \beta(3, 10.40)$ and then bins them into 50 equally spaced bins of length $\Delta\theta = 0.02$.  We generate an empirical pdf $h_{emp}(\theta)$ for the beta distribution by rescaling the histogram. The rescaling converts the histogram into a staircase approximation of the pdf $h(\theta)$:
\begin{align}
	h_{emp}(\theta) &= \sum_{m=1}^{\textrm{\scriptsize \# of bins}}{\frac{p[m] rect(\theta-\theta_{b}[m])}{N \Delta\theta}}
	\label{eq:EmpiricalPDF}
\end{align}
where $p[m]$ is the number of random samples in bin $m$ and where $\theta_{b}[m]$ is the central location of the $m^{th}$ bin. The ASAM generates an approximation $H_{emp}(\theta)$ for the empirical distribution $h_{emp}(\theta)$. Figure \ref{fig:EmpiricalCasesCombined}(a) shows comparisons between $H_{emp}(\theta)$ and $h(\theta)$.

The second example starts with $5,000$ random samples of the $\beta(3,10.4)$ distribution. We add zero-mean white Gaussian noise to the random samples. The noise is independent of the random samples. The examples use respective noise standard deviations of $0.1$, $0.05$, and $0.025$ in the three separate cases. The ASAM produces an approximation $H_{emp, n}(\theta)$ for this noise-modified function $h_{emp, n}(\theta)$.  Figure \ref{fig:EmpiricalCasesCombined}(b) shows comparisons between $H_{emp, n}(\theta)$ to $h(\theta)$. The approximands $h_{emp}$ and $h_{emp, n}$ in Figures \ref{fig:EmpiricalCasesCombined} (a) and (b) are random functions. So these functions and their ASAM approximators are sample cases.

\subsection{Approximating Non-conjugate Priors}\label{sec:ASAM-nonconj-approx}

We defined a prior pdf $h(\theta)$ as a convex bimodal mixture of normal and Maxwell pdfs: $h(\theta)=0.4N(10,1) + 0.3M(2) + 0.3M(5)$. The Maxwell pdfs have the form
\begin{align}
\theta \sim M(\sigma): \quad h(\theta) = \theta^2e^{-\frac{\theta^2}{2\sigma^2 }}~~~~ \quad \mbox{if~ $\theta > 0$.}
\end{align}
The prior pdf modeled a location parameter for the normal mixture likelihood function: $g(x|\theta) = 0.7N(\theta, 2.25) + 0.3N(\theta + 8, 1)$. The prior $h(\theta)$ is not conjugate with respect to this likelihood function $g(x|\theta)$. 

The ASAM used sinc set functions to generate a fuzzy approximator $H(\theta)$ for the prior $h(\theta)$. The ASAM used 15 rules and 6000 iterations on 500 uniform samples of $h(\theta)$. The two figures below show the quality of the prior and posterior fuzzy approximators. This example shows that fuzzy Bayesian approximation still works for non-conjugate pdfs.

\begin{figure}[ht!]
\centerline{
\includegraphics[width=\textwidth]{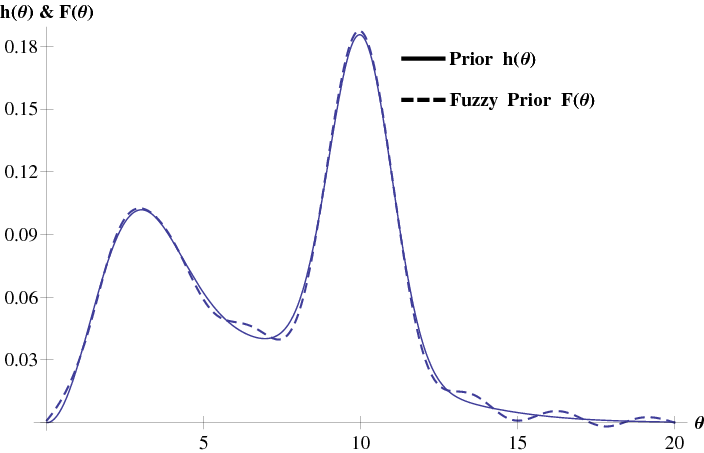}
}
\caption[Comparison of a non-conjugate prior pdf $h(\theta)$ and its fuzzy approximator $H(\theta)$]{Comparison of a non-conjugate prior pdf $h(\theta)$ and its fuzzy approximator $H(\theta)$. The pdf $h(\theta)$ is a convex mixture of normal and Maxwell pdfs: $h(\theta)=0.4N(10,1) + 0.3M(2) + 0.3M(5)$. The Maxwell pdf $M(\sigma)$ is $\theta^2e^{-\theta^2/2\sigma^2}$ for $\theta\geq0$ and $0$ for $\theta\leq0$. An adaptive sinc-SAM generated $H(\theta)$ using 15 rules and 6000 training iterations on 500 uniform samples of the $h(\theta)$.
}
\label{fig:NonConjugatePrior}
\end{figure}

\begin{figure}[ht!]
	\centerline{
\includegraphics[width=\textwidth]{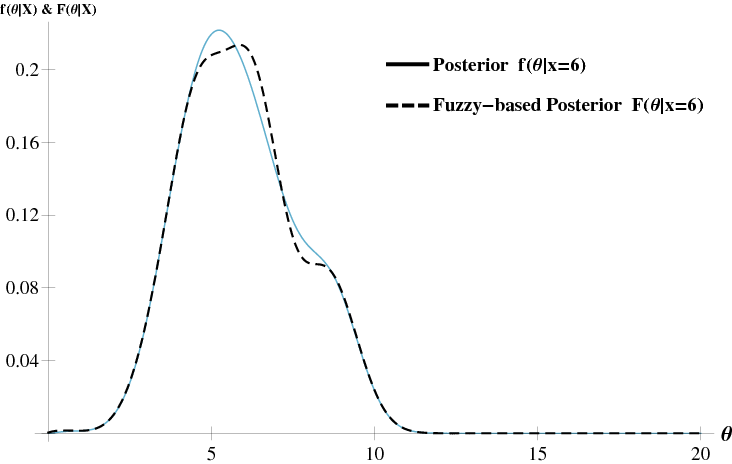}
}
\caption[Approximation of a non-conjugate posterior pdf. Comparison of a non-conjugate posterior pdf $f(\theta|x)$ and its fuzzy approximator $F(\theta|x)$]{Approximation of a non-conjugate posterior pdf. Comparison of a non-conjugate posterior pdf $f(\theta|x)$ and its fuzzy approximator $F(\theta|x)$. The fuzzy prior $H(\theta)$ and the mixture likelihood function $g(x|\theta)=0.7N(\theta|2.25)+0.3N(\theta+8, 1)$ produce the fuzzy approximator of the posterior pdf $F(\theta|x)$. The figure shows $F(\theta|x)$ for the single observation $x=6$.
}
\label{fig:NonConjugate-Posterior-x=6}
\end{figure}

\subsection{The SAM Structure of Fuzzy Posteriors}

The next theorem shows that the SAM's convex-weighted-sum structure passes over into the structure of the fuzzy-based posterior $F(\theta|x)$.  The result is a {\it generalized} SAM \parencite{kosko-fuzeng} because the then-part centroids $c_j$ are no longer constant but vary both with the observed data $x$ and the parameter value $\theta$.  This simplified structure for the posterior $F(\theta|x)$ comes at the expense in general of variable centroids that require several integrations for each observation $x$.

\begin{thm}\label{thm:Doubly-Posterior-SAM}{\bf Fuzzy Posterior Approximator is a SAM} \\
The fuzzy posterior approximator is a SAM:
\begin{align}
F(\theta|x) &=  \sum_{j=1}^m p_j(\theta) c_j'(x|\theta)
\label{eq:Sum pj cj'}
\end{align}
where the generalized then-part set centroids $c_j'(x|\theta)$ have the form
\begin{equation}
c_j'(\theta|x) = \frac{c_j g(x|\theta) }{\sum_{i=1}^m \int_D g(x|u) p_i(u)c_i\, du}
\label{eq:cj'}
~~~~~\mbox{for sample space $D$.}
\end{equation}
\end{thm}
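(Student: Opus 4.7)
The plan is to substitute the SAM representation of the fuzzy prior directly into the fuzzy analogue of Bayes theorem and then algebraically rearrange the result into the claimed convex-sum-with-variable-centroids form. Since the posterior $F(\theta|x)$ is defined as the output of Bayes theorem (\ref{eq:BayesPosterior}) with the fuzzy prior $F(\theta)$ in place of the true prior $h(\theta)$, the whole proof reduces to a manipulation of one ratio of integrals.

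First I would write
\begin{equation}
F(\theta|x) \;=\; \frac{g(x|\theta)\,F(\theta)}{\int_D g(x|u)\,F(u)\,du}
\end{equation}
and invoke the SAM Theorem (\ref{eq:SAM}) to expand $F(\theta) = \sum_{j=1}^m p_j(\theta)\,c_j$ in the numerator and $F(u) = \sum_{i=1}^m p_i(u)\,c_i$ in the denominator. Because the sums are finite and the convex weights $p_j(u)$ and the centroids $c_j$ are fixed (independent of $u$), linearity of the integral lets me pull the summation past $\int_D$ to get $\sum_{i=1}^m \int_D g(x|u)\,p_i(u)\,c_i\,du$.

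Next I would factor $p_j(\theta)$ out of the numerator sum and distribute the single scalar denominator across each summand. This yields
\begin{equation}
F(\theta|x) \;=\; \sum_{j=1}^m p_j(\theta)\,\frac{c_j\,g(x|\theta)}{\sum_{i=1}^m \int_D g(x|u)\,p_i(u)\,c_i\,du},
\end{equation}
and recognizing the fraction as $c_j'(\theta|x)$ in (\ref{eq:cj'}) completes the derivation. The convex weights $p_j(\theta)$ from the prior SAM are reused unchanged, so the SAM ``convex combination'' structure is preserved; only the centroids are promoted from constants $c_j$ to the data- and parameter-dependent quantities $c_j'(\theta|x)$.

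There is no real obstacle here beyond bookkeeping: the one item worth flagging is the interchange of sum and integral in the denominator, which is immediate for a finite sum and an integrable likelihood $g(x|\cdot)$, and the implicit assumption that the denominator is strictly positive so the ratio is well defined (this is the same positivity/normalizability assumption already used for $F(\theta)$ in \textbf{\S}\ref{sec:ASAM-approx}). I would note these two mild hypotheses explicitly but not belabor them.
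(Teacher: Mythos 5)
Your proposal is correct and follows essentially the same route as the paper's proof: substitute the SAM expansion $F(\theta)=\sum_j p_j(\theta)c_j$ into the Bayes ratio, interchange the finite sum with the integral in the denominator, and factor out $p_j(\theta)$ to expose the generalized centroids $c_j'(\theta|x)$. Your explicit flagging of the sum--integral interchange and the positivity of the denominator is a mild but welcome addition that the paper leaves implicit.
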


\begin{proof}
The proof equates the fuzzy-based posterior $F(\theta|x)$ with the right-hand side of (\ref{eq:BayesPosterior}) and then expands according to Bayes Theorem:
\begin{align}
F(\theta|x)   
&= \frac{\displaystyle g(x|\theta) F(\theta)}
{\int_D g(x|u) F(u) du}  ~~~~~~~~~~~~~~\mbox{by (\ref{eq:BayesPosterior})}\\
&= \frac{g(x|\theta) \sum_{j=1}^m p_j(\theta)c_j}
{\int_D g(x|u) \sum_{j=1}^m p_j(u)c_j\, du} ~~\,\mbox{by (\ref{eq:SAM})} \\
&=  \frac{g(x|\theta) \sum_{j=1}^m p_j(\theta)c_j}
{\sum_{j=1}^m \int_D g(x|u) p_j(u)c_j\, du}  \\
&= \sum_{j=1}^m p_j(\theta) \left(\frac{c_j g(x|\theta) }{\sum_{i=1}^m \int_D g(x|u) p_i(u)c_i\, du}\right)~~ \\
\label{eq:SAMposterior} 
&= \sum_{j=1}^m p_j(\theta) c_j'(x|\theta).
\end{align}
\end{proof}

We next prove two corollaries that hold in special cases that avoid the integration in (\ref{eq:SAMposterior}) and thus are computationally tractable.

The first special case occurs when $g(x|\theta)$ approximates a Dirac delta function centered at $x$: $g(x|\theta) \approx \delta(\theta-x)$.  This can arise when $g(x|\theta)$ concentrates on a region $D_g \subset D$ if $D_g$ is much smaller than $D_{p_j} \subset D$ and if $p_j(\theta)$ concentrates on $D_{p_j}$.  
Then
\begin{align}
\int_D p_j(u)g(x|u)\,du ~\approx~ \int_{D_{p_j}} p_j(u)\delta(u-x)\,du
~=~ p_j(x).
\label{eq:Intpg1}
\end{align}
Then (\ref{eq:SAMposterior}) becomes
\begin{align}
F(\theta|x) &\approx \frac{\displaystyle g(x|\theta) F(\theta)} { \sum_{j=1}^m p_j(x)c_j} ~~~~~~~~~~~~\mbox{by (\ref{eq:Intpg1})}\\
&= g(x|\theta)\frac{\sum_{j=1}^m p_j(\theta)c_j}{F(x)}~~~~~\mbox{by (\ref{eq:SAM})}\\
&= \sum_{j=1}^m p_j(\theta) \left(\frac{c_j g(x|\theta)}{F(x)}\right) 
\label{eq:SAMposteriorI}
~=~ \sum_{j=1}^m p_j(\theta) c_j'(x|\theta).
\end{align}
So a learning law for $F(\theta|x)$ needs to update only each then-part centroid $c_j$ by scaling it with $g(x|\theta)/F(x)$ for each observation $x$. This involves a substantially lighter computation than does the integration in (\ref{eq:SAMposterior}).

The delta-pulse approximation $g(x|\theta) \approx \delta(\theta-x)$ holds for narrow bell curves such as normal or Cauchy pdfs when their variance or dispersion is small.  It holds in the limit as the equality  $g(x|\theta) = \delta(\theta-x)$ in the much more general case of alpha-stable pdfs \parencite{kosko-mitaim2001SR,nikias-shao1995} with any shape if $x$ is the location parameter of the stable pdf and if the dispersion $\gamma$ goes to zero.  Then the characteristic function is the complex exponential $e^{i x \omega}$ and thus Fourier transformation gives the pdf $g(x|\theta)$ exactly as the Dirac delta function \parencite{kosko-mitaimNN2003}:  $\displaystyle \lim_{\gamma \to 0} g(x|\theta) = \delta(\theta - x)$.  Then $F(\theta|x)$ equals the right-hand side of (\ref{eq:SAMposteriorI}).  The approximation fails for a narrow binomial $g(x|\theta)$ unless scaling maintains unity status for the mass of $g(x|\theta)$ in (\ref{eq:Intpg1}) for a given $n$. 

The second special case occurs when the SAM system $F$ uses several narrow rules to approximate the prior $h(\theta)$ and when the data is highly uncertain in the sense that $D_g$ is large compared with $D_{p_j}$. Then we can approximate $g(x|\theta$) as the constant $g(x|m_j)$ over $D_{p_j}$:
\begin{align}
\int_D p_j(u)\,g(x|u)\,du &\approx \int_{D_{p_j}} p_j(u)\,g(x|m_j)\,du~~
~=~ g(x|m_j)\,U_{p_j}
\label{eq:IntApproxII2}
\end{align}
where $U_{p_j} = \int_{D_{p_j}} p_j(u)du$. So (\ref{eq:SAMposterior}) becomes
\begin{align}
F(\theta|x) &\approx \frac{\displaystyle g(x|\theta) F(\theta)}
{\sum_{j=1}^m g(x|m_j)U_{p_j}c_j} ~~~~\mbox{by (\ref{eq:IntApproxII2})}\\
&= \sum_{j=1}^m p_j(\theta)\left(\frac{c_j g(x|\theta)}{\sum_{i=1}^m g(x|m_i)U_{p_i}c_i}\right)
\label{eq:SAMposteriorII} 
~=~ \sum_{j=1}^m p_j(\theta) c_j'(x|\theta)~~~.
\end{align}
We can pre-compute or estimate the if-part volume $U_{p_j}$ in advance.  So (\ref{eq:SAMposteriorII}) also gives a generalized SAM structure and another tractable way to adapt the variable then-part centroids $c_j'(x|\theta)$.

This second special case holds for the normal likelihood 
\begin{equation}g(x|\theta) = \frac{1}{\sqrt{2\pi}\sigma_0} e^{-(x-\theta)^2/2\sigma_0^2}\end{equation}
 if the widths or dispersions $d_j$ of the if-part sets are small compared with $\sigma_0$ and if there are a large number $m$ of fuzzy if-then rules that jointly cover $D_g$. This occurs if $D_g=(\theta-3\sigma_0,\theta+3\sigma_0)$ with if-part dispersions $d_j = \sigma_0/m$ and locations $m_j$.  Then $p_j(\theta)$ concentrates on some $D_{p_j} = (m_j-\epsilon,m_j+\epsilon)$ where $0 < \epsilon \ll \sigma_0$ and so $p_j(\theta) \approx 0$ for $\theta \notin D_{p_j}$.  Then $\frac{x-m_j\pm \epsilon}{\sigma_0} \approx \frac{x-m_j}{\sigma_0}$ since $\epsilon \ll \sigma_0$.  So $\frac{x-\theta}{\sigma_0} \approx \frac{x-m_j}{\sigma_0}$ for all $\theta \in D_{p_j}$  and thus
\begin{equation}
g(x|\theta) \approx \frac{1}{\sqrt{2\pi}\sigma_0}e^{-(x-m_j)^2/2\sigma_0^2} = g(x|m_j)
\end{equation}
for $\theta \in D_{p_j}$. 

This special case also holds for the binomial $g(x|\theta) = \binom{n}{k} \theta^x(1-\theta)^{n-x}$ for $x = 0,1,\ldots,n$ if $n \ll m$ and thus if there are fewer Bernoulli trials $n$ than fuzzy if-then rules $m$ in the SAM system. It holds because $g(x|\theta)$ concentrates on $D_g$ and because $D_g$ is wide compared with $D_{p_j}$ when $m \gg n$. This case also holds for the Poisson $g(x|\theta) = \frac{1}{x!}\theta^x e^{-\theta}$ if the number of times $x$ that a discrete event occurs is small compared with the number $m$ of SAM rules that jointly cover  $D_g = (\frac{x}{2},\frac{3x}{2})$ because again $D_g$ is large compared with $D_{p_j}$.  So (\ref{eq:IntApproxII2}) follows.


\subsection{Other Uniform Function Approximation Methods}

Fuzzy systems lend themselves well to prior pdf approximation because a lot of subjective information exists as linguistic rules. The approximation and representation theorems of Kosko \parencite*{kosko-fat1994} and Watkins \parencite{watkins1995} make fuzzy approximators well-suited for Bayesian inference. But there are other \emph{model-free} approximation schemes for approximating pdfs and likelihood. \emph{Uniform} approximation schemes are especially useful in practice because they guarantee that users can specify and achieve a global approximation error tolerance. Examples of model-free uniform approximators include multilayer feedforward (FF) networks and algebraic rings of polynomials. There are even more approximation methods available if we do not insist on uniform approximation quality. 

Hornik et al.~\parencite*{hornik-white1989} showed that multilayer FF networks can also uniformly approximate any Borel measurable function. FF networks can learn the approximand from sample data. Back-propagation training~\parencite{werbos1974,rumelhart-hinton-williams1986,werbos1990} is the standard method for training feedforward approximators with sample data. The approach is similar to the learning procedure for fuzzy ASAMs: gradient descent on an approximation-error surface (squared error or cross-entropy). But FF networks have very opaque internal structures compared to the rule-sets for fuzzy ASAMs. So tuning a FF network with linguistic information is not as simple as in the fuzzy case. Exact function representation with FF networks is possible~\parencite{hecht-nielsen1987}. But the FF representation uses highly nonsmooth, possibly non-parametric activation functions~\parencite{girosi-poggio1989, kurkova1991, kurkova1992}. Thus function representation is much harder with FF networks than with fuzzy systems.

Polynomial rings provide another method for universal function approximation on compact domains. Polynomial rings refer to the set of polynomials with real coefficients equipped with the natural addition and multiplication binary operations. Function approximation with polynomial rings usually relies on the Weierstrass Approximation theorem and its generalizations~\parencite{stone1948, rudin1977}. The approximation theorem guarantees the existence of uniform polynomial approximator for any continuous function over a compact domain. Applying the theorem involves selecting a polynomial basis for the approximation. The set of available polynomial bases includes splines, Bernstein Polynomials, Hermite polynomials, Legendre polynomials, etc. However polynomials tend to be unstable in the norm and are thus less attractive for implementing approximators.

\section{Doubly Fuzzy Bayesian Inference: Uniform Approximation} \label{sec:doubly-fuzzy}
We will use the term {\it doubly fuzzy} to describe Bayesian inference where separate fuzzy systems $H(\theta)$ and $G(x|\theta)$ approximate the respective prior pdf $h(\theta)$ and the likelihood $g(x|\theta)$.  Theorem 3 below shows that the resulting fuzzy approximator $F$ of the posterior pdf $f(\theta|x)$ still has the convex-sum structure (8) of a SAM fuzzy system. 

The doubly fuzzy posterior approximator $F$ requires only $m_1 m_2$ rules if the fuzzy likelihood approximator $G$ uses $m_1$ rules and if the fuzzy prior approximator $H$ uses $m_2$ rules.  The $m_1m_2$ if-part sets of $F$ have a corresponding product structure as do the other fuzzy-system parameters.  Corollary 3.1 shows that using an exact 2-rule representation reduces the corresponding rule number $m_1$ or $m_2$ to two.  This is a tractable growth in  rules for a single Bayesian inference.  But the same structure leads in general to an exponential growth in posterior-approximator rules if the old posterior approximator becomes the new prior approximator in iterated Bayesian inference.

\begin{figure}[ht]
\centerline{	
\includegraphics[width=\textwidth]{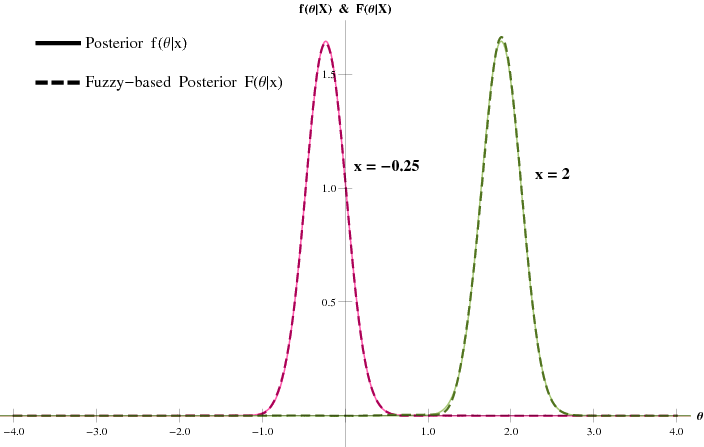}
}
\caption[Doubly fuzzy Bayesian inference: comparison of two normal posteriors and their doubly fuzzy approximators]{Doubly fuzzy Bayesian inference: comparison of two normal posteriors and their doubly fuzzy approximators. The doubly fuzzy approximations use fuzzy prior-pdf approximator $H(\theta)$ and fuzzy likelihood-pdf approximator $G(x|\theta)$. The sinc-SAM fuzzy approximator $H(\theta)$ uses 15 rules to approximate the normal prior $h(\theta) = N(0,1)$. The Gaussian-SAM fuzzy likelihood approximator $G(x|\theta)$ uses 15 rules to approximate the two likelihood functions $g(x|\theta) = N(-0.25,\frac{1}{16})$ and $g(x|\theta) = N(2, \frac{1}{16})$. The two fuzzy approximators used 6000 learning iterations based on 500 uniform sample points.
} 
\label{fig:FuzzyGauss-Fuzzylikelihood}
\end{figure}

Figure \ref{fig:FuzzyGauss-Fuzzylikelihood} shows the result of doubly fuzzy Bayesian inference for two normal posterior pdfs.  A 15-rule Gaussian SAM $G$ approximates two normal likelihoods while a 15-rule sinc SAM $H$ approximates a standard normal prior pdf. 

\subsection{The Bayesian Approximation Theorem}
We call the next theorem the Bayesian Approximation Theorem (BAT).  The BAT shows that doubly fuzzy systems can uniformly approximate posterior pdfs under some mild conditions.  The proof derives an approximation error bound for $F(\theta|x)$ that does not depend on $\theta$ or $x$. Thus $F(\theta| x)$ uniformly approximates $f(\theta|x)$.  The BAT holds in general for any uniform approximators of the prior or likelihood. Corollary \ref{cor:CentBoundPost} 
shows how the centroid and convex-sum structure of SAM fuzzy approximators $H$ and $G$ specifically bound the posterior approximator $F$. 

The statement and proof of the BAT require the following notation. Let $\mathcal{D}$ denote the set of all $\theta$ and let $X$ denote the set of all $x$.  Assume that $\mathcal{D}$ and $X$ are compact. The prior is $h(\theta)$ and the likelihood is $g(x|\theta)$. $H(\theta)$ is a 1-dimensional SAM fuzzy system that uniformly approximates $h(\theta)$ in accord with the Fuzzy Approximation Theorem \parencite{kosko-fat1994,kosko-fuzeng}. $G(x|\theta)$ is a 2-dimensional SAM that uniformly approximates $g(x|\theta)$. Define the Bayes factors as $q(x) = \int_D h(\theta) g(x|\theta) d\theta$ and $Q(x) = \int_D H(\theta) G(x| \theta) d\theta$. Assume that $q(x)>0$ so that the posterior $f(\theta|x)$ is well-defined for any sample data $x$. Let $\Delta Z$ denote the approximation error $Z-z$ for an approximator $Z$. 

\vspace{6pt}

\begin{thm}{\bf{[Bayesian Approximation Theorem]:}}\label{thm:BAT}\\
	Suppose that $h(\theta)$ and $g(x|\theta)$ are bounded and continuous and that $H(\theta)G(x|\theta) \neq 0$ almost everywhere.
	
	Then the posterior pdf approximator $F(\theta|x) = HG/Q$ uniformly approximates $f(\theta|x)$. Thus for all $\epsilon > 0:~  | F(\theta|x)  -  f(\theta|x) |  <  \epsilon$ for all $x$ and $\theta$.
\end{thm}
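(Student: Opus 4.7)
The plan is to bound the pointwise error $|F(\theta|x)-f(\theta|x)|$ by a quantity that vanishes as the uniform approximation errors $\|H-h\|_\infty$ and $\|G-g\|_\infty$ go to zero, without ever invoking $\theta$ or $x$ in the bound. First I would put the two ratios over a common denominator:
\begin{equation}
F(\theta|x)-f(\theta|x) \;=\; \frac{H(\theta)G(x|\theta)\,q(x)\;-\;h(\theta)g(x|\theta)\,Q(x)}{Q(x)\,q(x)}\,.
\end{equation}
Next I would split the numerator using the standard add-and-subtract trick,
\begin{equation}
HG\,q - hg\,Q \;=\; (H-h)\,G\,q \;+\; h\,(G-g)\,q \;+\; h\,g\,(q-Q)\,,
\end{equation}
so that each summand involves exactly one of the differences $H-h$, $G-g$, or $q-Q$. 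Applying the triangle inequality leaves three numerator terms to control.

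The numerator terms are easy: $h,g$ are bounded by hypothesis, and $G$ is uniformly bounded because it uniformly approximates the bounded $g$; hence each of $\|(H-h)Gq\|_\infty$, $\|h(G-g)q\|_\infty$ can be made small by choosing the Fuzzy Approximation Theorem tolerances on $H$ and $G$ small. For the third term I would estimate
\begin{equation}
|q(x)-Q(x)| \;\leq\; \int_{\mathcal D}\!\bigl|h(\theta)g(x|\theta)-H(\theta)G(x|\theta)\bigr|\,d\theta
\;\leq\; \bigl(\|H-h\|_\infty\,M_g + \|h\|_\infty\,\|G-g\|_\infty\bigr)\,|\mathcal D|,
\end{equation}
using compactness of $\mathcal D$, so $\|q-Q\|_\infty$ is also controlled by the two uniform approximation tolerances.

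The main obstacle is the denominator: I need a uniform lower bound $Q(x)q(x)\geq \delta > 0$ that does not depend on $x$. For $q(x)$ this follows because $q$ is continuous (by dominated convergence) and strictly positive on the compact set $X$, so $q(x)\geq q_0>0$. For $Q(x)$ I would use the bound on $\|q-Q\|_\infty$ above: by choosing the approximation tolerances so that $\|q-Q\|_\infty < q_0/2$, I get $Q(x)\geq q_0/2$ uniformly. The hypothesis $HG\neq 0$ a.e.\ guarantees $Q(x)>0$ at the outset so this tightening is consistent.

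Combining these pieces, $|F(\theta|x)-f(\theta|x)|$ is bounded by a constant (depending only on $M_h,M_g,|\mathcal D|,q_0$) times $\max(\|H-h\|_\infty,\|G-g\|_\infty)$, with no dependence on $\theta$ or $x$. Given $\epsilon>0$, invoking the Fuzzy Approximation Theorem to make both $\|H-h\|_\infty$ and $\|G-g\|_\infty$ sufficiently small (and, in particular, small enough to secure the $Q\geq q_0/2$ step) forces $|F(\theta|x)-f(\theta|x)|<\epsilon$ everywhere, which is the required uniform approximation. The only subtle point is the order of choices: one first fixes $q_0$ from the compactness argument, then shrinks the approximation tolerances to simultaneously make the denominator safe and the numerator small.
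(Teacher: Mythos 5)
Your proof is correct and follows essentially the same route as the paper's: put $F-f$ over the common denominator $Qq$, control the numerator by the uniform tolerances on $H$ and $G$ plus a compactness bound on $|q-Q|$, and secure a uniform positive lower bound on the denominator via continuity of $q$ on the compact set $X$ together with the extreme value theorem, taking care to fix $q_0$ before shrinking the tolerances. The only difference is cosmetic: you use the telescoping split $(H-h)Gq + h(G-g)q + hg(q-Q)$, which avoids the cross term $\Delta H\,\Delta G$ that the paper carries along, but the bounds obtained are equivalent.
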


\begin{proof}
Write the posterior pdf $f(\theta|x)$ as $f(\theta|x) = \frac{h(\theta) g(x|\theta)}{q(x)}$ and its approximator $F(\theta| x)$ as $F(\theta| x) = \frac{H(\theta) G(x|\theta)}{Q(x)}$. The SAM approximations for the prior and likelihood functions are uniform \parencite{kosko-fuzeng}. So they have approximation error bounds $\epsilon_h$ and $\epsilon_g$ that do not depend on $x$ or $\theta$:
\begin{align}
	|\Delta H| &<~ \epsilon_h    \label{eq:Defn-DelH}
	 ~~~\mbox{and}~~~\\
	|\Delta G| &<~ \epsilon_g 	\label{eq:Defn-DelG}
\end{align}
where $\Delta H = H(\theta)-h(\theta)$ and $\Delta G = G(x|\theta)-g(x|\theta)$. The posterior error $\Delta F$ is
\begin{equation}
	\Delta F ~=~ F-f ~=~ \frac{ H G}{Q(x)} - \frac{h g}{q(x)}.  
	\label{eq:DelF-Expr}
\end{equation}
Expand $HG$ in terms of the approximation errors to get
\begin{align}
	HG &= (\Delta H + h )(\Delta G +g)
	   ~=~ \Delta H \Delta G + \Delta H g +h \Delta G + h g.
\end{align} 
We have assumed that $HG \neq 0$ almost everywhere and so $Q \neq 0$.  We now derive an upper bound for the Bayes-factor error $\Delta Q$:
\begin{align}
\Delta Q &= Q-q ~=~ \int_\mathcal{D}(\Delta H \Delta G + \Delta H g +h \Delta G + h g - hg) ~d\theta.
\end{align}
So
\begin{align}
	|\Delta Q| &\leq\int_\mathcal{D}{|\Delta H \Delta G + \Delta H g +h \Delta G|} ~d\theta\\
	&\leq\int_\mathcal{D}\big(|\Delta H| |\Delta G| + |\Delta H| g + h |\Delta G|\big) ~d\theta\\
	&< \int_\mathcal{D} (\epsilon_h \epsilon_g + \epsilon_h g + h \epsilon_g) ~d\theta~~~~~
	\mbox{by (\ref{eq:Defn-DelH})}.
\end{align}

Parameter set $\mathcal{D}$ has finite Lebesgue measure $\displaystyle m(\mathcal{D}) = \int_\mathcal{D} ~d\theta < \infty$ because $\mathcal{D}$ is a compact subset of a metric space and thus \parencite{munkres2000} it is (totally) bounded. Then the bound on $\Delta Q$ becomes
\begin{align}
	|\Delta Q| & <  m(\mathcal{D}) \epsilon_h \epsilon_g + \epsilon_g + \epsilon_h \int_\mathcal{D}{g(x|\theta)}~d\theta
\end{align}
because $\displaystyle \int_\mathcal{D} h(\theta) d\theta = 1$.

We now invoke the extreme value theorem \parencite{folland1999}. The extreme value theorem states that a continuous function on a compact set attains both its maximum and minimum. The extreme value theorem allows us to use maxima and minima instead of suprema and infima. Now $\int_\mathcal{D}{g(x|\theta)}~d\theta$ is a continuous function of $x$ because $g(x|\theta)$ is a continuous nonnegative function. The range of $\int_\mathcal{D}{g(x|\theta)}~d\theta$ is a subset of the right half line $(0,\infty)$ and its domain is the compact set $\mathcal{D}$. So $\int_\mathcal{D}{g(x|\theta)}~d\theta$ attains a finite maximum value. Thus
\begin{equation}
	|\Delta Q| < \epsilon_q 
	\label{eq:DelQ-bound}
\end{equation}
where we define the error bound $\epsilon_q$ as
\begin{align}
\epsilon_q &=  m(\mathcal{D}) \epsilon_h \epsilon_g + \epsilon_g + \epsilon_h~\max_{x}\left\{ \int_\mathcal{D}{g(x|\theta)}~d\theta \right\}.
\end{align}
Rewrite the posterior approximation error $\Delta F$ as
\begin{multline}
\Delta F = \frac{qHG - Qhg}{q Q} =\frac{q(\Delta H \Delta G + \Delta H g +h \Delta G + hg)~ - ~ Q hg} {q(q + \Delta Q)}
\end{multline}
Inequality (\ref{eq:DelQ-bound}) implies that $-\epsilon_q < \Delta Q < \epsilon_q$ and that $(q - \epsilon_q)< (q + \Delta Q) < (q + \epsilon_q)$. Then (\ref{eq:Defn-DelH}) and (\ref{eq:Defn-DelG}) give similar inequalities for $\Delta H$ and $\Delta G$. So
\begin{multline}
 \frac{q[-\epsilon_h \epsilon_g - \text{min} (g) \epsilon_h - \text{min} (h) \epsilon_g ]~ - ~ \epsilon_q hg} {q(q - \epsilon_q)} < \Delta F \\ < \frac{q[\epsilon_h \epsilon_g + \text{max} (g) \epsilon_h + \text{max} (h) \epsilon_g ]~ + ~ \epsilon_q hg} {q(q - \epsilon_q)}.
	\label{eq:Ptwise-Bound}
\end{multline}
The extreme value theorem ensures that the maxima in (\ref{eq:Ptwise-Bound}) are finite. The bound on the approximation error $\Delta F$ does not depend on $\theta$. But $q$ still depends on the value of the data sample $x$. So (\ref{eq:Ptwise-Bound}) guarantees at best a pointwise approximation of $f(\theta|x)$ when $x$ is arbitrary. We can improve the result by finding bounds for $q$ that do not depend on $x$.  Note that $q(x)$ is a continuous function of $x \in X$ because $hg$ is continuous. So the extreme value theorem ensures that the Bayes factor $q$ has a finite upper bound and a positive lower bound. 

The term $q(x)$ attains its maximum and minimum by the extreme value theorem. The minimum of $q(x)$ is positive because we assumed $q(x)>0$ for all $x$. H\"older's inequality gives $|q| \leq \left( \int_D|h|d\theta\right) \left(\left\|g(x,\theta)\right\|_{\infty} \right) = \left\|g(x,\theta)\right\|_{\infty}$ since $h$ is a pdf. So the maximum of $q(x)$ is finite because $g$ is bounded: $0 ~<~ \min\{q(x)\} ~\leq~$ $ \max\{ q(x)\} ~<~ \infty$. Then
\begin{align}
\epsilon_{-} < \Delta F < \epsilon_{+}
\end{align}
if we define the error bounds $\epsilon_{-}$ and $\epsilon_{+}$ as
\begin{align} 
\epsilon_{-} &= \frac{\left(-\epsilon_h \epsilon_g - \min\{g\} \epsilon_h - \min\{h\} \epsilon_g\right) \min\{q\}  ~ - ~ hg \epsilon_q }{\min\{q\} \left( \min\{q\} - \epsilon_q\right) } \\
\epsilon_{+} &= \frac{\left(\epsilon_h \epsilon_g + \max\{g\} \epsilon_h + \max\{h\} \epsilon_g\right) \max\{g\}  ~ + ~ hg \epsilon_q }{\min\{q\} \left( \min\{q\} - \epsilon_q\right) }. 
	\label{eq:Final-Bound}
\end{align}

Now $\epsilon_q \rightarrow 0$ as $\epsilon_g \rightarrow 0$ and $\epsilon_h \rightarrow 0$. So $\epsilon_{-} \rightarrow 0$ and $\epsilon_{+} \rightarrow 0$. The denominator of the error bounds must be non-zero for this limiting argument. We can guarantee this when $\epsilon_q < \min\{q\}$. This condition is not restrictive because the functions $h$ and $g$ fix or determine $q$ independent of the approximators $H$ and $G$ involved and because $\epsilon_q \to 0$ when $\epsilon_h \to 0$ and $\epsilon_g \to 0$. So we can achieve arbitrarily small $\epsilon_q$ that satisfies $\epsilon_q < \min\{q\}$ by choosing appropriate $\epsilon_h$ and $\epsilon_g$. Then $\Delta F \rightarrow 0$ as $\epsilon_g \rightarrow 0$ and $\epsilon_h \rightarrow 0$. So $|\Delta F| \rightarrow 0$. 
\end{proof}

The BAT proof also shows how sequences of uniform approximators $H_n$ and $G_n$ lead to a sequence of posterior approximators $F_n$ that converges uniformly to $F$.  Suppose we have such sequences $H_n$ and $G_n$ that uniformly approximate the respective prior $h$ and likelihood $g$. Suppose $\epsilon_{h,n+1}<\epsilon_{h,n}$ and $\epsilon_{g,n+1} < \epsilon_{g,n}$ for all $n$. Define $F_n = \frac{H_n G_n}{\int{H_n G_n}}$. Then for all $\epsilon > 0$ there exists an $n_0 \in \mathbb{N}$ such that for all $n>n_0: \quad |F_n(\theta|x)-F(\theta|x)| < \epsilon$ for all $\theta$ and for all $x$. The positive integer $n_0$ is the first $n$ such that $\epsilon_{h,n}$ and $\epsilon_{g,n}$ satisfy (\ref{eq:Final-Bound}). Hence $F_n$ converges uniformly to $F$.

Corollary \ref{cor:CentBoundPost}
below reveals the fuzzy structure of the BAT's uniform approximation when the prior $H$ and likelihood $G$ are uniform SAM approximators.  The corollary shows how the convex-sum and centroidal structure of $H$ and $G$ produce centroid-based bounds on the fuzzy posterior approximator $F$.  Recall first that Theorem 1 states that $F(\theta|x) = \sum_{j=1}^m{p_j(\theta)c'_j(x|\theta)}$ where $c'_j(x|\theta) = \frac{c_j g(x|\theta)}{\sum_{i=1}^m{\int_D{g(x|u)p_i(u)c_i du}}}$. Replace the likelihood $g(x|\theta)$ with its doubly fuzzy SAM approximator $G(x|\theta)$ to obtain the posterior  
\begin{align}
F(\theta| x) &= \sum_{j=1}^m{p_j(\theta)C'_j(x|\theta)}
\end{align}
where the if-part set centroids are
\begin{align}
C'_j(x|\theta) &= \frac{c_{h,j} G(x|\theta)}{\sum_{i=1}^m{\int_D{G(x|u)p_i(u)c_{h,i} du}}}.
\label{eq:centroidsc'}
\end{align}
The $\left\{c_{h,k}\right\}_k$ are the then-part set centroids for the prior SAM approximator $H(\theta)$. $G(x|\theta)$ likewise has then-part set centroids $\left\{c_{g,j}\right\}_j$. Each SAM is a convex sum of its centroids from (\ref{eq:Sum pj cj'}). This convex-sum structure induces bounds on $H$ and $G$ that in turn produce bounds on $F$. We next let the subscripts {\it max} and {\it min} denote the respective maximal and minimal centroids. The maximal centroids are positive. But the minimal centroids may be negative even though $h$ and $g$ are non-negative functions.  We also assume that the minimal centroids are positive. So define the maximal and minimal product centroids as
\begin{multline}
c_{gh,max} = \max_{j,k} c_{g,j}c_{h,k} 
= c_{g,max} c_{h,max} \\
\mbox{and}~~~~ c_{gh,min} = \min_{j,k} c_{g,j}c_{h,k} 
= c_{g,min} c_{h,min}.  
\end{multline}
Then the BAT gives the following SAM-based bound.

\vspace{14pt}
\begin{cor}{\bf Centroid-based bounds for the doubly fuzzy posterior $F$.}\label{cor:CentBoundPost}
Suppose that the set $D$ of all $\theta$ has positive Lebesgue measure.  Then the centroids of the $H$ and $G$ then-part sets bound the posterior $F$:
\begin{equation}
\frac{c_{gh,min}}{m(D)c_{gh,max}} ~~\leq~~ F(\theta| x) ~~\leq~~ \frac{c_{gh,max} }{m(D)c_{gh,min}}.
\end{equation}
\end{cor}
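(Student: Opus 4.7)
The plan is to first collapse the expression for $F(\theta|x)$ into the compact ratio form $F(\theta|x) = \frac{H(\theta)G(x|\theta)}{\int_D H(u)G(x|u)\,du}$, and then apply elementary min/max bounds to the numerator and denominator separately. Collapsing the expression is essentially bookkeeping: starting from $F(\theta|x) = \sum_{j=1}^m p_j(\theta)C'_j(x|\theta)$ with $C'_j$ as in (\ref{eq:centroidsc'}), I pull $G(x|\theta)$ out of the $j$-sum and recognize $\sum_{j} p_j(\theta)c_{h,j} = H(\theta)$ and $\sum_i p_i(u)c_{h,i} = H(u)$ by the SAM convex-sum identity (\ref{eq:SAM}). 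This shows $F(\theta|x) = HG/Q$, as expected from the BAT.

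Next I would invoke the convex-sum structure of $H$ and $G$. Because $H(\theta)$ is a convex combination of the then-part centroids $\{c_{h,k}\}$, it is bounded pointwise by $c_{h,\min} \le H(\theta) \le c_{h,\max}$, and similarly $c_{g,\min} \le G(x|\theta) \le c_{g,\max}$ for every $\theta$ and $x$. Multiplying and using the assumed positivity of the minimal centroids gives the product bound $c_{gh,\min} \le H(\theta)G(x|\theta) \le c_{gh,\max}$ uniformly in $\theta$ and $x$.

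Then I integrate this product bound against Lebesgue measure on $D$. Since $m(D) = \int_D d\theta$ is finite and positive by hypothesis, the same two-sided inequality gives $m(D)\,c_{gh,\min} \le \int_D H(u)G(x|u)\,du \le m(D)\,c_{gh,\max}$. Combining the upper bound on the numerator with the lower bound on the denominator yields $F(\theta|x) \le c_{gh,\max}/(m(D)c_{gh,\min})$, and combining the lower bound on the numerator with the upper bound on the denominator yields $F(\theta|x) \ge c_{gh,\min}/(m(D)c_{gh,\max})$. This is exactly the claimed sandwich.

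The only real subtlety, and the thing I would flag as the ``main obstacle,'' is the positivity assumption on the minimal centroids. The corollary's statement already notes that minimal centroids may in principle be negative even though $h$ and $g$ are nonnegative, and the argument above silently uses $c_{gh,\min} > 0$ both when passing the product bound through the integral and when inverting the denominator bound. I would therefore state the positivity of $c_{h,\min}$ and $c_{g,\min}$ explicitly at the start of the proof (consistent with the hypothesis that $HG \neq 0$ a.e.\ in the BAT), so that the division and sign-preservation steps are fully justified.
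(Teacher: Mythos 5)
Your proof is correct and follows essentially the same route as the paper's: both arguments rest on the convex-sum structure forcing $c_{h,\min}\le H\le c_{h,\max}$ and $c_{g,\min}\le G\le c_{g,\max}$, the identity $\sum_i\int_D p_i(u)\,du=m(D)$, and the positivity of the minimal centroids. The only cosmetic difference is that the paper bounds each variable centroid $C'_j(x|\theta)$ first and then invokes the convex-sum structure of $F$, whereas you collapse $F$ to $HG/Q$ and bound numerator and denominator directly; your explicit flag on the positivity assumption matches the paper's own caveat.
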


\begin{proof}
The convex-sum structure constrains the values of the SAMs:
$H(\theta) \in \left[c_{h,min},c_{h,max}\right]$ for all $\theta$ and  
$G(x| \theta) \in \left[c_{g,min},c_{g,max}\right]$ for all $x$ and $\theta$.  Then (\ref{eq:centroidsc'}) implies
\begin{align}
C'_j(x|\theta) &\geq \frac{c_{gh,min}}{c_{gh,max} \sum_{i=1}^m{\int_D{p_i(u)du}}} \\
&= \frac{c_{gh,min}}{m(D)c_{gh,max}} ~~~~\mbox{for all $x$ and $\theta$}
\label{eq:C-LowerBound} 
\end{align}
since $\sum_{i=1}^m{\int_D{p_i(u)du}} = \int_D{\sum_{i=1}^m{p_i(u)}du} = \int_D{du} = m(D)$ where $m(D)$ denotes the (positive) Lebesgue measure of $D$. The same argument gives the upper bound: 
\begin{align}
C'_j(x|\theta) ~\leq~ \frac{c_{gh,max}}{m(D)c_{gh,min}} ~~~~\mbox{for all $x$ and $\theta$}. 
\label{eq:C-UpperBound}
\end{align}
Thus (\ref{eq:C-LowerBound}) and (\ref{eq:C-UpperBound}) give bounds for all centroids:
\begin{align}
\frac{c_{gh,min}}{m(D)c_{gh,max} } ~\leq~ C'_j(x|\theta) ~\leq~ \frac{c_{gh,max}}{m(D)c_{gh,min}} ~~~~\mbox{for all $x$ and $\theta$}.
\label{eq:C-Interval}
\end{align}
This bounding interval applies to $F(\theta| x)$ because the posterior approximator also has a convex-sum structure. Thus
\begin{equation}
\frac{c_{gh,min}}{m(D)c_{gh,max}} ~\leq~ F(\theta| x) ~\leq~ \frac{c_{gh,max}}{m(D)c_{gh,min}} ~~~~\mbox{for all $x$ and $\theta$}. 
\label{eq:F-Interval}
\end{equation}
\end{proof}

The size of the bounding interval depends on the size of the set $D$ and on the minimal centroids of $H$ and $G$. The lower bound is more sensitive to minimal centroids than the upper bound because dividing by a maximum is more stable than dividing by a minimum close to zero. The bounding interval becomes $[0, \infty)$ if any of the minimal centroids for $H$ or $G$ equal zero.  The infinite bounding interval $[0, \infty)$ corresponds to the least informative case.

Similar centroid bounds hold for the multidimensional case.  Suppose that the SAM-based posterior $F$ is the multidimensional approximator $F: R \to R^p$ with $p > 1$.  Then the same argument applies to the components of the centroids along each dimension. There are $p$ bounding intervals 
\begin{align}
\frac{c^s_{gh,min}}{m(D)c^s_{gh,max}} ~\leq~ F_s(\theta| x) ~\leq~ \frac{c^s_{gh,max}}{m(D)c^s_{gh,min}}
\end{align}
for each dimension $s$ of the range $R^p$. These componentwise intervals define a bounding hypercube $\prod_{s=1}^p [\frac{c^s_{gh,min}}{m(D)c^s_{gh,max}}, \frac{c^s_{gh,max}}{m(D)c^s_{gh,min}}] \subset R^p$ for $F$.

\section{Conclusion}
Fuzzy systems allow users to encode prior and likelihood information through fuzzy rules rather than through only a handful of closed-form probability densities.  Gradient-descent learning algorithms also allow fuzzy systems to learn and tune rules based on the same type of collateral data that an expert might consult or that a statistical hypothesis might use.  Different learning algorithms should produce different bounds on the fuzzy prior or likelihood approximations and those in turn should lead to different bounds on the fuzzy posterior approximation. Hierarchical Bayes systems can model hyperpriors with fuzzy approximators or with other ``intelligent'' learning systems such as neural networks or semantic networks. An open research problem is how the use of semi-conjugate rules or other techniques can reduce the exponential rule explosion that doubly fuzzy Bayesian systems face in general in Bayesian iterative inference.

\clearpage


\chapter{Hierarchical and Iterative Bayesian Inference with Function Approximators}\label{ch:ExBAT} 

\blfootnote{
	This chapter features work done in collaboration with Prof. Sanya Mitaim and first presented in \parencite{osoba-mitaim-kosko-FODM2012,osoba-mitaim-kosko-triply2011,osoba-mitaim-kosko-SMC2011}.
}

This chapter presents some extensions of the approximate Bayesian inference scheme in Chapter \ref{ch:BAT}. The first extension is a demonstration of the fuzzy approximation on hierarchical Bayesian models where the user puts a second-order prior pdf or a hyperprior on one of the uncertain parameters in the original prior pdf. Hierarchical Bayesian models are more complex and require more approximators. The next result, the \emph{Extended Bayesian Approximation Theorem} (Theorem \ref{thm:ExBAT}), shows that the increased complexity of hierarchical Bayesian models does not weaken the theoretical strength of our approximation scheme. This theorem generalizes the Bayesian Approximation Theorem (Theorem \ref{thm:BAT}) to hierarchical and multi-dimensional Bayesian models.

The final set of results addresses the computational complexity of iterative Bayesian schemes. Conjugate Bayesian models track a constant number of pdf parameters with each iteration of Bayes theorem. We propose the related idea of \emph{Semi-Conjugacy} for fuzzy approximators as a way to limit the growth of pdf parameters for iterative Bayesian schemes. Semi-conjugacy refers to functional similarity between if-part sets of the posterior and prior approximators. It is a looser condition than conjugacy. Semi-conjugacy allows users to strike a balance between parameter explosion associated with non-conjugate models and constant parameter size associated with overly restrictive conjugate models.

\section{Approximate Hierarchical Bayesian Inference}
Function approximation can also apply to nested priors or so-called {\it hierarchical Bayesian} techniques \parencite{carlin-louis2009,hogg-mckean-craig2005}.  Here the user puts a new prior or {\it hyperprior} pdf on an uncertain parameter that appears in the original prior pdf.  This new hyperprior pdf can itself have a random parameter that leads to yet another new prior or hyper-hyperprior pdf and so on up the hierarchy of prior models.  Figure \ref{fig:hierarchical_bayes} demonstrates this hierarchical approximation technique in the common case where an inverse-gamma hyperprior pdf models the uncertainty in the unknown variance of a normal prior pdf.  This is the scalar case of the conjugate inverse Wishart prior \parencite{carlin-louis2009} that often models the uncertainty in the covariance matrix of a normal random vector.

The simple Bayesian posterior pdf is $f(\theta|x) \propto g(x|\theta)\,h(\theta)$ where the likelihood is $g(x|\theta)$ and the prior pdf is $h(\theta)$. But now suppose that the prior pdf $h(\theta)$ depends on an uncertain parameter $\tau:  h(\theta|\tau)$.  We will model the uncertainty involving $\tau$ by making $\tau$ a random variable $\mathcal{T}$ with its own pdf or hyperprior pdf $\pi(\tau)$. The probabilistic graphical model (PGM) in Figure \ref{fig:ExBAT-PGM} represents this data model succinctly. 
\begin{figure}
\centering
\begin{tikzpicture}
\tikzstyle{main}=[circle, minimum size = 10mm, thick, draw =black!80, node distance = 16mm]
\tikzstyle{connect}=[-latex, thick]
\tikzstyle{box}=[rectangle, draw=black!100]
  \node[main] (tau) [label=below:$\mathcal{\tau}$] { };
  \node[main] (theta) [right=of tau, label=below:$\theta$] { };
  \node[main, fill = black!10] (x) [right=of theta, label=below:$X$] { };
  \path (theta) edge [connect] (x);
  \path (tau) edge [connect] (theta);
  \node[rectangle, inner sep=0mm, fit=(x),label=below right:$\mathbf{n}$, xshift=0.75mm,yshift=-0.75mm] {};
  \node[rectangle, inner sep=5mm,draw=black!100, fit= (x)] {};
\end{tikzpicture}
\caption[Probabilistic graphical model for Bayesian data models in Chapter \ref{ch:ExBAT}.]{Probabilistic graphical model for all Bayesian data models in this chapter. We observe $\mathbf{n}$ samples of the data $X$ which depends on a hidden random parameter $\theta$. The likelihood function $g(x|\theta)$ captures this dependence. The hidden random parameter $\theta$ itself depends on another hidden hyperparameter $\tau$ which has an unconditional hyperprior pdf $\pi(\tau)$. The conditional prior $h(\theta)$ describes the distribution of the hidden parameter $\theta$ and its dependence on the hyperparameter $\tau$.}
\label{fig:ExBAT-PGM}
\end{figure}
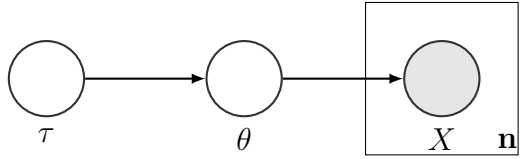

Conditioning the original prior $h(\theta)$ on $\tau$ adds complexity and a new dimension to the posterior pdf:
\begin{equation}
f(\theta, \tau| x) = \frac{g(x|\theta)\,h(\theta|\tau)\,\pi(\tau)}{\iint g(x|\theta)\,h(\theta|\tau)\,\pi(\tau) ~d\theta ~d\tau}.
\end{equation}
Marginalizing or integrating over $\tau$ removes this extra dimension and returns the posterior pdf for the parameter of interest $\theta$:
\begin{align}
f(\theta|x) &\sim \int g(x|\theta)\,h(\theta|\tau)\,\pi(\tau) \,d\tau.
\end{align}
Thus hierarchical Bayes has the benefit of working with a more flexible and descriptive prior but at the computational cost of a new integration.  The approach of empirical Bayes \parencite{carlin-louis2009,hogg-mckean-craig2005} would simply replace the random variable $\tau$ with a numerical proxy such as its most probable value. That approach is simpler to compute but ignores a lot of information in the hyperprior pdf.

Figure \ref{fig:hierarchical_bayes} shows marginal posterior approximations for a hierarchical Bayesian model. The posterior approximations use fuzzy approximators for the prior and hyperprior pdfs. The data model for the posterior pdf is a hierarchical conjugate normal model with a variance hyperparameter. The likelihood is Gaussian with unknown mean $g(x|\theta) = N\left(x=\frac{1}{16}|\theta\right)$. The prior pdf $h$ for $\theta$ is a zero-mean Gaussian with unknown variance $\tau$.  So $h(\theta|\tau)$ is $N(0,\tau)$. We model the pdf of $\tau$ with an inverse gamma (IG) hyperprior pdf: $\tau \sim IG(2,1)$ where $IG(\alpha, \beta) = \pi(\tau) = \frac{\beta^\alpha e^{-\beta/\tau}}{\Gamma(\alpha) \tau^{\alpha+1}}$. The inverse gamma prior is conjugate to the normal likelihood and so the resulting posterior is inverse gamma. Thus we have conjugacy in both the mean and variance parameters. The posterior approximator $F(\theta|x)$ uses a fuzzy approximation of the truncated hyperprior $\pi(\tau)$.  Figure \ref{fig:IG-FuzzyApprox} shows the sinc-SAM fuzzy approximator $\Pi(\tau)$ for the truncated hyperprior $\pi(\tau)$.

\begin{figure}[ht!]
\centerline{	\includegraphics[width=\textwidth]{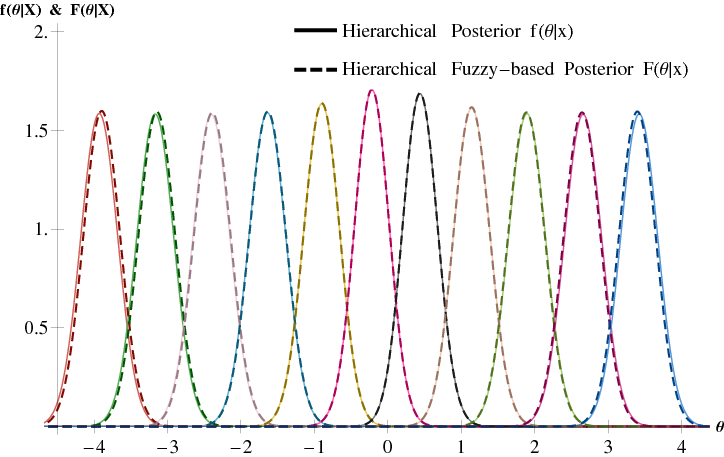} }
\caption[Hierarchical Bayes posterior pdf approximation using a fuzzy hyperprior]{Hierarchical Bayes posterior pdf approximation using a fuzzy hyperprior.  The plot shows the fuzzy approximation for 11 normal posterior pdfs.  These posterior pdfs use 2 levels of prior pdfs.  The first prior pdf $h(\theta|\tau)$ is $N(0,\tau)$ where $\tau$ is a random variance hyperparameter.  The distribution of $\tau$ is the inverse-gamma (IG) hyperprior pdf.  $\tau \sim IG(2,1)$ where $IG(\alpha, \beta) \equiv \pi(\tau) = \frac{\beta^\alpha e^{-\beta/\tau}}{\Gamma(\alpha) \tau^{\alpha+1}}$.  The likelihood function is $g(x|\theta) = N(\theta|\frac{1}{16})$.  The 11 pdfs are posteriors for the observations $x = -4, -3.25, -1.75, -1, -0.25, 0.5, 1.25, 2.75$, and 3.5.  The approximate posterior $F(\theta|x)$ uses a fuzzy approximation for the inverse-gamma hyperprior $\pi(\tau)$ (1000 uniform sample points on the support [0, 4], 15 rules, and 6000 learning iterations).  The posterior pdfs show the distribution of $\theta$ given the data $x$.}
\label{fig:hierarchical_bayes}
\end{figure}

\begin{figure}[ht]
\centerline{	\includegraphics[width=0.75\textwidth]{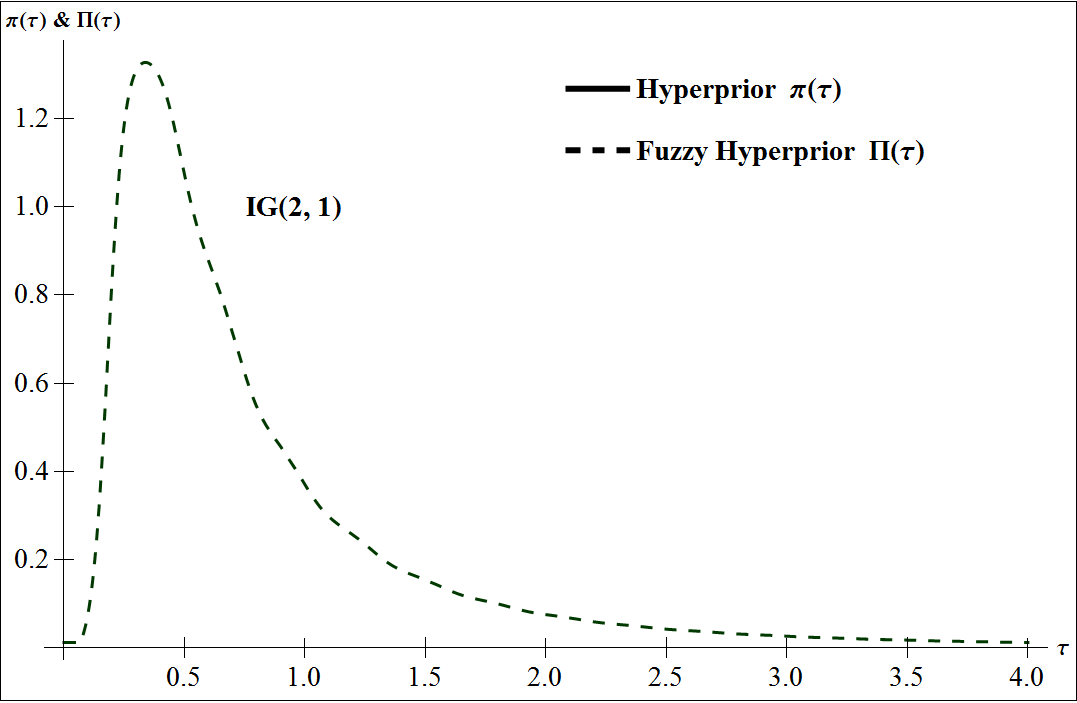}	}
\caption[Comparison between inverse-gamma (IG) hyperprior $\pi(\tau)$ and its fuzzy approximation]{Comparison between inverse-gamma (IG) hyperprior $\pi(\tau)$ and its fuzzy approximation. The hyperprior pdf is the $IG(2,1)$ pdf that describes the random parameter $\tau$ that appears as the variance in a normal prior. The approximating fuzzy hyperprior $\Pi(\tau)$ used 15 rules in a SAM with Gaussian if-part sets. The fuzzy approximator used 1000 uniform samples from $[0, 4]$ and $6000$ training iterations. 
}
\label{fig:IG-FuzzyApprox}
\end{figure}

These figures show that approximate hierarchical Bayesian inference is feasible with function approximators like fuzzy ASAMs.  The next section gives a theoretical guarantee that this kind of uniform posterior approximation is always possible.

\section{Uniform Approximation for Hierarchical Bayesian Inference}
The posterior approximation scheme for hierarchical Bayes uses uniform approximators $\Pi(\tau)$, $H(\theta|\tau)$, and $G(x|\theta)$ in place of their respective approximands: the hyperprior pdf $\pi(\tau)$, the prior pdf $h(\theta|\tau)$, and the likelihood $g(x|\theta)$. This triple function approximation scheme builds on the double function approximation scheme for non-hierarchical models in  Chapter \ref{ch:BAT}. The double function approximation scheme uses approximators for just the prior pdf $h(\theta)$ and the likelihood $g(x|\theta)$.

Theorem \ref{thm:ExBAT} below proves that the triple function approximation scheme produces uniform posterior pdf approximations when the input model-function approximators are uniform. This result extends the  Bayesian Approximation Theorem in Chapter \ref{ch:BAT}. So we call the theorem the \emph{Extended Bayesian Approximation Theorem}.  The proof of this theorem is quite general and does not depend on the structure of the uniform approximators.  So the approximators can be fuzzy systems, neural networks or polynomials or any other uniform function approximators.

\subsection{The Extended Bayesian Approximation Theorem}
The statement and proof of the Extended Bayesian approximation theorem requires the following notation.The hyperprior pdf is $\pi(\tau)$. The prior is $h(\theta|\tau)$ and the likelihood is $g(x|\theta)$. The 2-D pdf $p(\theta, \tau) = h(\theta|\tau) \pi(\tau)$ describes the dependence between $\theta$ and $\tau$. $P(\theta,\tau)$ is a 2-dimensional uniform approximator for $p(\theta, \tau) = h(\theta|\tau)\pi(\tau)$. $G(x|\theta)$ is a 2-dimensional uniform approximator for $g(x|\theta)$. Let $\mathcal{D}$ denote the set of all $(\theta,\tau)$ and let $\mathcal{X}$ denote the set of all $x$.  Assume that $\mathcal{D}$ and $\mathcal{X}$ are compact. Define the Bayes factors as $q(x) = \int_{\mathcal{D}} p(\theta, \tau) g(x|\theta) d\tau d\theta$ and $Q(x) = \int_{\mathcal{D}} P(\theta, \tau) G(x| \theta) d\tau d\theta$. Assume that $q(x)>0$ so that the posterior $f(\theta, \tau|x)$ is well-defined for any sample data $x$.

We can now state the Extended Bayesian Approximation Theorem.  The proof relies on the Extreme Value Theorem \parencite{munkres2000}.  The proof applies to \emph{any} kind of uniform approximator.  The vector structure of the proof also allows the hyperprior prior to depend on its own hyperprior and so on.

\newpage

\begin{thm}{\bf{[Extended Bayesian Approximation Theorem]:}}\label{thm:ExBAT}\\
	Suppose that $h(\theta|\tau)$, $\pi(\tau)$, and $g(x|\theta)$ are bounded and continuous. Suppose that 
	\begin{equation}
		\Pi(\tau)H(\theta|\tau)G(x|\theta)  = P(\theta, \tau)G(x|\theta) \neq 0 
	\end{equation} almost everywhere. 
	
	Then the posterior pdf approximator $F(\theta,\tau|x) = P G/Q$ uniformly approximates $f(\theta,\tau|x)$. Thus for all $\epsilon > 0:~  | F(\theta,\tau|x)  -  f(\theta,\tau|x) |  <  \epsilon$ for all $x$ and all $(\theta,\tau)$.
\end{thm}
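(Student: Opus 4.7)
The plan is to mirror the structure of the proof of Theorem~\ref{thm:BAT} but lift the argument from the one-dimensional prior domain to the two-dimensional domain $\mathcal{D}$ of $(\theta,\tau)$-pairs, treating $P(\theta,\tau)$ as a single joint uniform approximator of $p(\theta,\tau) = h(\theta|\tau)\pi(\tau)$. Define the uniform approximation errors $\Delta P = P - p$ and $\Delta G = G - g$ with bounds $|\Delta P| < \epsilon_p$ and $|\Delta G| < \epsilon_g$ that do not depend on $(\theta,\tau)$ or $x$. Since $p$ is a product of bounded continuous functions and $P$ is a uniform approximator, $\epsilon_p$ can be controlled by the separate uniform errors of $\Pi$ and $H$ together with $\sup p$, via an expansion analogous to $PG = (\Delta P + p)(\Delta G + g)$; this reduction to a single joint error is the cleanest way to avoid carrying three error terms through the rest of the argument.

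Next I would expand the posterior error
\begin{equation}
    \Delta F \;=\; F - f \;=\; \frac{PG}{Q(x)} - \frac{pg}{q(x)}
\end{equation}
exactly as in the proof of Theorem~\ref{thm:BAT}. First bound the Bayes-factor error by integrating over $\mathcal{D}$:
\begin{equation}
    |\Delta Q(x)| \;\leq\; \iint_{\mathcal{D}} \bigl(|\Delta P||\Delta G| + |\Delta P|\,g + p\,|\Delta G|\bigr)\, d\theta\, d\tau.
\end{equation}
Compactness of $\mathcal{D}$ gives finite Lebesgue measure $m(\mathcal{D}) < \infty$, and the fact that $\iint p\, d\theta\, d\tau = 1$ collapses the middle-weight term. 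The remaining term $\epsilon_p \iint g(x|\theta)\, d\theta\, d\tau$ is a continuous function of $x$ on the compact set $\mathcal{X}$, so by the extreme value theorem it attains a finite maximum. This yields a uniform bound $|\Delta Q(x)| < \epsilon_q$ with $\epsilon_q \to 0$ as $\epsilon_p, \epsilon_g \to 0$.

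Then I would rewrite the posterior error using a common denominator,
\begin{equation}
    \Delta F \;=\; \frac{q\,(\Delta P\,\Delta G + \Delta P\, g + p\, \Delta G) - \Delta Q\, pg}{q\,(q + \Delta Q)},
\end{equation}
and bound numerator and denominator separately. The numerator is controlled pointwise by the maxima of $g$, $p$, and $pg$, all finite by continuity on the compact domains $\mathcal{D}$ and $\mathcal{D}\times\mathcal{X}$ together with the extreme value theorem. The denominator requires showing that $q(x)$ is bounded away from zero uniformly in $x$: since we assumed $q(x) > 0$ and $q$ is continuous on the compact set $\mathcal{X}$, the extreme value theorem gives $0 < \min_x q(x) \leq \max_x q(x) < \infty$, and H\"older's inequality produces the upper bound via $\|g\|_\infty$. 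Choosing $\epsilon_p, \epsilon_g$ small enough to guarantee $\epsilon_q < \min_x q(x)$ keeps the denominator bounded below by a positive constant independent of $x$ and $(\theta,\tau)$.

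The main obstacle I anticipate is the bookkeeping of extreme-value-theorem applications on the enlarged domain $\mathcal{D}$ rather than the simpler single-parameter domain of Theorem~\ref{thm:BAT}, and verifying that the joint approximator assumption $P(\theta,\tau)G(x|\theta) \neq 0$ almost everywhere is strong enough to keep $Q(x)$ positive uniformly once $\epsilon_p, \epsilon_g$ are small. Modulo these verifications, combining the numerator and denominator bounds yields $\epsilon_- < \Delta F < \epsilon_+$ with both $\epsilon_\pm \to 0$ as $\epsilon_p, \epsilon_g \to 0$, which is precisely uniform approximation of $f(\theta,\tau|x)$ by $F(\theta,\tau|x)$ in both $(\theta,\tau)$ and $x$. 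The proof also generalizes immediately to deeper hierarchies since $P$ can itself be the joint approximator of an arbitrary multi-level prior, provided the joint is bounded, continuous, and uniformly approximated.
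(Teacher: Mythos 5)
Your proposal follows essentially the same route as the paper's own proof: the same reduction to a single joint uniform approximator $P$ of $p=h\pi$, the same expansion $PG = \Delta P\,\Delta G + \Delta P\,g + p\,\Delta G + pg$, the same Bayes-factor bound $|\Delta Q|<\epsilon_q$ via compactness of $\mathcal{D}$ and the extreme value theorem, and the same common-denominator bound on $\Delta F$ with $q$ bounded uniformly away from zero by continuity on the compact set $\mathcal{X}$ and the choice $\epsilon_q < \min_x q(x)$. The argument is correct as stated for approximating the joint posterior $f(\theta,\tau|x)$; the paper merely appends a uniform-integral lemma to extend the conclusion to the marginal posterior in $\theta$, which your proposal does not need for the statement as given.
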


\begin{proof}
Write the posterior pdf $f(\theta, \tau|x)$ as $f(\theta, \tau|x) = \frac{p(\theta, \tau) g(x|\theta)}{q(x)}$ and its approximator $F(\theta, \tau|x)$ as $F(\theta, \tau|x) = \frac{P(\theta, \tau) G(x|\theta)}{Q(x)}$. The SAM approximations for the prior and likelihood functions are uniform \parencite{kosko-fuzeng}. So they have approximation error bounds $\epsilon_p$ and $\epsilon_g$ that do not depend on $x$ or $\theta$:
\begin{align}
	|\Delta P| &<~ \epsilon_p \label{eq:Defn-DelP} \\
	\mbox{and}& \\
	|\Delta G| &<~ \epsilon_g	\label{eq:Defn-DelG-ExBAT}
\end{align}
where $\Delta P = P(\theta, \tau)-p(\theta, \tau)$ and $\Delta G = G(x|\theta)-g(x|\theta)$. The posterior error $\Delta F$ is
\begin{equation}
	\Delta F ~=~ F-f ~=~ \frac{ P G}{Q(x)} - \frac{p g}{q(x)}.  
	\label{eq:DelF-Expr-ExBAT}
\end{equation}
Expand $PG$ in terms of the approximation errors to get
\begin{align}
	PG &= (\Delta P + p )(\Delta G +g) \\
	   &= \Delta P \Delta G + \Delta P g +p \Delta G + p g.
\end{align} 
We have assumed that $PG \neq 0$ almost everywhere and so $Q \neq 0$.  We now derive an upper bound for the Bayes-factor error $\Delta Q = Q - q$:
\begin{align}
\Delta Q &= \int_\mathcal{D}(\Delta P \Delta G + \Delta P g +p \Delta G + p g - pg) ~d\tau ~d\theta. \hspace{.2in}
\end{align}
So
\begin{align}
	|\Delta Q| &\leq \int_\mathcal{D}{|\Delta P \Delta G + \Delta P g +p \Delta G|} ~d\tau ~d\theta\\
	&\leq \int_\mathcal{D}\big(|\Delta P| |\Delta G| + |\Delta P| g + p |\Delta G|\big) ~d\tau ~d\theta\\
	&< \int_\mathcal{D} (\epsilon_p \epsilon_g + \epsilon_p g + p \epsilon_g) ~d\tau ~d\theta~~~~~
	\mbox{by (\ref{eq:Defn-DelP})}.
\end{align}

Parameter set $\mathcal{D}$ has finite Lebesgue measure $\displaystyle m(\mathcal{D}) = \int_\mathcal{D} ~d\tau ~d\theta < \infty$ because $\mathcal{D}$ is a compact subset of a metric space and thus \parencite{munkres2000} it is (totally) bounded.
Then the bound on $\Delta Q$ becomes
\begin{align}
	|\Delta Q| & <  m(\mathcal{D}) \epsilon_p \epsilon_g + \epsilon_g + \epsilon_p \int_\mathcal{D}{g(x|\theta)} ~d\theta
\end{align}
because $\displaystyle \int_\mathcal{D} p(\theta, \tau) d\tau ~d\theta = 1$ and $g$ has no dependence on $\tau$.

We now invoke the extreme value theorem \parencite{folland1999}. The extreme value theorem states that a continuous function on a compact set attains both its maximum and minimum. The extreme value theorem allows us to use maxima and minima instead of suprema and infima. Now $\int_\mathcal{D}{g(x|\theta)} ~d\theta$ is a continuous function of $x$ because $g(x|\theta)$ is a continuous nonnegative function. The range of $\int_\mathcal{D}{g(x|\theta)}~d\theta$ is a subset of the right half line $(0,\infty)$ and its domain is the compact set $\mathcal{D}$. So $\int_\mathcal{D}{g(x|\theta)}~d\theta$ attains a finite maximum value. Thus
\begin{equation}
	|\Delta Q| < \epsilon_q 
	\label{eq:DelQ-bound-ExBAT}
\end{equation}
where we define the error bound $\epsilon_q$ as
\begin{align}
\epsilon_q &=  m(\mathcal{D}) \epsilon_p \epsilon_g + \epsilon_g + \epsilon_p~\max_{x}\left\{ \int_\mathcal{D}{g(x|\theta)} d\theta \right\}.
\end{align}
Rewrite the posterior approximation error $\Delta F$ as
\begin{align}
\Delta F & =  \frac{qPG - Qpg}{q Q}\\ 
&= \frac{q(\Delta P \Delta G + \Delta P g +p \Delta G + pg)~ - ~ Q pg} {q(q + \Delta Q)}
\end{align}
Inequality (\ref{eq:DelQ-bound-ExBAT}) implies that $-\epsilon_q < \Delta Q < \epsilon_q$ and that $(q - \epsilon_q)< (q + \Delta Q) < (q + \epsilon_q)$. Then (\ref{eq:Defn-DelP})
gives similar inequalities for $\Delta P$ and $\Delta G$. So
\begin{multline}
\lefteqn{ \frac{q[-\epsilon_p \epsilon_g - \text{min} (g) \epsilon_p - \text{min} (h) \epsilon_g ]~ - ~ \epsilon_q pg} {q(q - \epsilon_q)} < ~\Delta F~} \\
 <~ \frac{q[\epsilon_p \epsilon_g + \text{max} (g) \epsilon_p + \text{max} (h) \epsilon_g ]~ + ~ \epsilon_q pg} {q(q - \epsilon_q)} \;.	\label{eq:Ptwise-Bound-ExBAT}
\end{multline}
The extreme value theorem ensures that the maxima in (\ref{eq:Ptwise-Bound-ExBAT}) are finite. The bound on the approximation error $\Delta F$ does not depend on $\theta$. But $q$ still depends on the value of the data sample $x$. So (\ref{eq:Ptwise-Bound-ExBAT}) guarantees at best a pointwise approximation of $f(\theta, \tau|x)$ when $x$ is arbitrary. We can improve the result by finding bounds for $q$ that do not depend on $x$.  Note that $q(x)$ is a continuous function of $x \in X$ because $pg$ is continuous. So the extreme value theorem ensures that the Bayes factor $q$ has a finite upper bound and a positive lower bound. 

The term $q(x)$ attains its maximum and minimum by the extreme value theorem. The minimum of $q(x)$ is positive because we assumed $q(x)>0$ for all $x$. H\"older's inequality gives $|q| \leq \left( \int_{\mathcal{D}}|p|~d\tau ~d\theta\right) \left(\left\|g(x,\theta)\right\|_{\infty} \right) = \left\|g(x,\theta)\right\|_{\infty}$ since $p$ is a pdf. So the maximum of $q(x)$ is finite because $g$ is bounded: 
$0 ~<~ \min\{q(x)\} ~\leq~$ $ \max\{ q(x)\} ~<~ \infty$.
Then
\begin{align}
\epsilon_{-} < \Delta F < \epsilon_{+}
\end{align}
if we define the error bounds $\epsilon_{-}$ and $\epsilon_{+}$ as
\begin{align} 
\epsilon_{-} &= \frac{\left(-\epsilon_p \epsilon_g - \min\{g\} \epsilon_p - \min\{p\} \epsilon_g\right) \min\{q\}   -  pg \epsilon_q }{\min\{q\} \left( \min\{q\} - \epsilon_q\right) } \\
\epsilon_{+} &= \frac{\left(\epsilon_p \epsilon_g + \max\{g\} \epsilon_p + \max\{p\} \epsilon_g\right) \max\{g\}  +  pg \epsilon_q }{\min\{q\} \left( \min\{q\} - \epsilon_q\right) }. \hspace{.2in}
	\label{eq:Final-Bound-ExBAT}
\end{align}

Now $\epsilon_q \rightarrow 0$ as $\epsilon_g \rightarrow 0$ and $\epsilon_p \rightarrow 0$. So $\epsilon_{-} \rightarrow 0$ and $\epsilon_{+} \rightarrow 0$.
The denominator of the error bounds must be non-zero for this limiting argument. We can guarantee this when $\epsilon_q < \min\{q\}$. This condition is not restrictive because the functions $p$ and $g$ fix or determine $q$ independent of the approximators $P$ and $G$ involved and because $\epsilon_q \to 0$ when $\epsilon_p \to 0$ and $\epsilon_g \to 0$. So we can achieve arbitrarily small $\epsilon_q$ that satisfies $\epsilon_q < \min\{q\}$ by choosing appropriate $\epsilon_p$ and $\epsilon_g$.
Then $\Delta F \rightarrow 0$ as $\epsilon_g \rightarrow 0$ and $\epsilon_p \rightarrow 0$. 
So $|\Delta F| \rightarrow 0$.

Theorem \ref{thm:ExBAT} now follows from lemma \ref{eq:UniLemma} (below). Lemma (\ref{eq:UniLemma}) implies Theorem \ref{thm:ExBAT} because we have uniform approximators for $f(\theta, \tau|x)$. We can integrate the nuisance parameter $\tau$ away to get a posterior approximation in terms of $\theta$ alone. Thus $F \longrightarrow f$ uniformly implies $\int F~d\tau \longrightarrow \int f~d\tau$ uniformly.
\end{proof}
\begin{lem}{\bf{[Uniform Integral Approximation Lemma]:}}\label{eq:UniLemma}\\
If $Y$ is compact and $f_n \longrightarrow f$ uniformly then
\begin{equation}
\int_Y f_n(x,y,\vec{z})~dy \longrightarrow \int_Y f(x,y,\vec{z})~dy \quad \textrm{uniformly.} 
\end{equation}
\end{lem}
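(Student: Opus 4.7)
The plan is to reduce the claim to the standard inequality that absolute value commutes well with integration, and then apply the uniform convergence hypothesis to bound the integrand by a constant that does not depend on the ``free'' variables $(x,\vec{z})$. Formally, I would start by fixing $\epsilon>0$ and using uniform convergence of $f_n\to f$ to choose $N$ so that $|f_n(x,y,\vec{z})-f(x,y,\vec{z})|<\epsilon/(m(Y)+1)$ for all $n\geq N$ and all $(x,y,\vec{z})$ in the relevant domain. The key point, exactly as in the proof of the Extended Bayesian Approximation Theorem above, is that the threshold $N$ does not depend on any of the arguments.

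Next I would estimate the difference of integrals by moving the absolute value inside:
\begin{align*}
\left|\int_Y f_n(x,y,\vec{z})\,dy-\int_Y f(x,y,\vec{z})\,dy\right|
&\leq \int_Y |f_n(x,y,\vec{z})-f(x,y,\vec{z})|\,dy \\
&\leq \int_Y \frac{\epsilon}{m(Y)+1}\,dy \\
&= \frac{\epsilon\, m(Y)}{m(Y)+1} \\
&< \epsilon
\end{align*}
for all $n\geq N$ and all $(x,\vec{z})$. Since the same $N$ works uniformly in $(x,\vec{z})$, this is precisely the definition of uniform convergence of the integrated sequence, which proves the lemma.

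The only subtlety, which I would dispatch before the main estimate, is verifying that $m(Y)<\infty$. This follows from compactness of $Y$ exactly as in the proof of Theorem~\ref{thm:ExBAT}: a compact subset of a metric space is totally bounded and hence has finite Lebesgue measure. I would also note implicitly that the $f_n$ must be integrable over $Y$ for the statement to make sense; uniform convergence plus measurability of each $f_n$ together with finiteness of $m(Y)$ ensures that both $f_n(x,\cdot,\vec{z})$ and $f(x,\cdot,\vec{z})$ are integrable for fixed $(x,\vec{z})$.

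No serious obstacle is expected: the argument is a routine ``pull the absolute value inside, apply uniform bound, multiply by $m(Y)$'' computation, and the only ingredient from the surrounding context is the compactness of $Y$ that guarantees finite Lebesgue measure. The lemma's role in the proof of Theorem~\ref{thm:ExBAT} is then immediate, since integrating out the nuisance parameter $\tau$ over the compact set implicitly carved out of $\mathcal{D}$ preserves the uniform approximation of $f(\theta,\tau|x)$ by $F(\theta,\tau|x)$ and yields the marginal posterior approximation $\int F\,d\tau$ of $\int f\,d\tau$ uniformly in the remaining variables.
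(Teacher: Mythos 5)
Your proposal is correct and follows essentially the same route as the paper's proof: bound $|f_n - f|$ uniformly, integrate over the compact set $Y$, and use the finiteness of $m(Y)$ to conclude; the only cosmetic difference is that you divide the tolerance by $m(Y)+1$ up front while the paper keeps the bound $\epsilon\, m(Y)$ and relabels it $\epsilon'$. Your remark on integrability of $f_n$ and $f$ is a small bonus the paper leaves implicit.
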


\begin{proof}
The uniform convergence of the sequence $f_n$ to $f$ implies that for all $\epsilon >0$ there is an $n\in \mathbb{N}$ such that
\[|f_n(x,y,\vec{z})-f(x,y,\vec{z})| ~<~ \epsilon\] 
for all $(x,y,\vec{z}) \in X\!\times\!Y\!\times\!\prod Z_i$. Thus
\begin{equation}
-\epsilon ~<~ f_n(x,y,\vec{z}) - f(x,y,\vec{z}) ~<~ \epsilon \textrm{.}
\end{equation}
Thus
\[-\int_Y \epsilon ~dy ~<~ \int_Y f_n(x,y,\vec{z})  ~dy - \int_Y f(x,y,\vec{z})  ~dy \]
and
\[\int_Y f_n(x,y,\vec{z})  ~dy - \int_Y f(x,y,\vec{z})  ~dy ~<~ \int_Y \epsilon ~dy \;. \]
$Y$ has finite Lebesgue measure $m(Y) = \int_Y dy$ because $Y$ is a compact set on a finite-dimensional metric space. Define $s_n(x,\vec{z}) =\int_Y f_n(x,y,\vec{z})~dy$ and $s(x,\vec{z}) =\int_Y f(x,y,\vec{z})~dy$. Then
\begin{align}
 -\epsilon ~m(Y) ~<~ s_n(x,\vec{z}) - s(x,\vec{z}) ~<~ \epsilon ~m(Y) \;. \\
 \text {Thus} \quad |s_n(x,\vec{z}) - s(x,\vec{z})| ~<~ \epsilon ~m(Y) \;.
\end{align}
Define $\epsilon'$ as $\epsilon' = \epsilon m(Y)$. Then for all $\epsilon'>0$ there exists an $n\in \mathbb{N}$ such that \[|s_n(x,\vec{z}) - s(x,\vec{z})| ~<~ \epsilon'\] 
for all $(x,\vec{z}) \in X \times \prod Z_i$.

\noindent Therefore 
\begin{equation}
\int_Y f_n(x,y,\vec{z})~dy \longrightarrow \int_Y f(x,y,\vec{z})~dy
\end{equation}
uniformly in $x$ and $\vec{z}$. 
\end{proof}
Lemma \ref{eq:UniLemma} guarantees that uniform approximation still holds after marginalizing a multidimensional uniform approximator. 

The proofs of Theorem \ref{thm:ExBAT} and Lemma \ref{eq:UniLemma} also extend to $n$-dimensional fuzzy posterior approximators for $n$-dimensional functions $f$ i.e.  data models with more than one hyperparameter (PGMs with longer linear chains of unobserved parameters than Figure \ref{fig:ExBAT-PGM}). Thus an $n$-dimensional approximator $F$ and its marginalizations uniformly approximate the $n$-dimensional posterior $f$ and its marginal pdfs.

\subsection{Adaptive Fuzzy Systems for Hierarchical Bayesian Inference}
Hierarchical Bayesian inference uses priors with hyperparameters or multi-dimensional priors and likelihoods. These model functions are multi-dimensional and so are their approximators. Standard additive model (SAM) fuzzy systems \parencite{kosko-fat1994,Kosko1995opt,kosko-fuzeng,mitaim-kosko2001ASAM} can uniformly approximate multi-dimensional functions according to the Fuzzy Approximation Theorem \parencite{kosko-fat1994}. Theorem \ref{thm:ExBAT} guarantees that these fuzzy approximators will produce a uniform approximator for the posterior.

A prior parameter $\theta$ with one hyperparameter $\tau$ will have a conditional pdf $h(\theta|\tau)$ which has a $2$--D functional representation. A $2$--D adaptive SAM (ASAM) $H$ can learn to approximate $h$ uniformly.  $2$--D ASAMs fuzzy SAMs tune $2$--D set functions (like the $2$--D sinc and Gaussian set functions below) using the appropriate learning laws. The $2$--D sinc set function $a_j$ has the form
\vspace{-.1in}
\begin{align}
a_j(x,y) &=\textrm{sinc}\left(\frac{x-m_{x,j}}{d_{x,j}}\right)\textrm{sinc}\left(\frac{y-m_{y,j}}{d_{y,j}}\right) 
\end{align}
with parameter learning laws \parencite{kosko-fuzeng,mitaim-kosko2001ASAM} for the location $m$
\begin{multline}
\lefteqn{m_{x,j}(t+1) = m_{x,j}(t)+ \mu_t\varepsilon(x,y)[c_j-F(x,y)] \left(\frac{p_j(x,y)}{a_j(x,y)} \right) \times} \\
\left(a_j(x,y)-\cos\left(\frac{x-m_{x,j}}{d_{x,j}}\right) \textrm{sinc}\left(\frac{y-m_{y,j}}{d_{y,j}}\right) \right) \left(\frac{1}{x-m_{x,j}}\right)
\end{multline}
and the dispersion $d$
\begin{multline}
\lefteqn{d_{x,j}(t+1) ~=~ d_{x,j}(t) + \mu_t\varepsilon(x,y)[c_j-F(x,y)]\times} \\
\left(a_j(x,y)-\cos\left(\frac{x-m_{x,j}}{d_{x,j}}\right) \textrm{sinc}(\frac{y-m_{y,j}}{d_{y,j}}) \right) \frac{p_j(x,y)}{a_j(x,y)}\left(\frac{1}{d_{x,j}}\right).
\end{multline}
The Gaussian learning laws have the same functional form as its $1$--D case. We replace $a_j(\theta)$ with $a_j(x,y)$:
\begin{align}
a_j(x,y) = \exp\left[-\left(\frac{x-m_{x,j}}{d_{x,j}}\right)^2-\left(\frac{y-m_{y,j}}{d_{y,j}}\right)^2\right]
\vspace{-.1in}
\end{align}
with parameter learning laws
\vspace{-.1in}
\begin{align}
m_{x,j}(t+1) &= m_j(t)+\mu_t\varepsilon(x,y)
p_j(x,y)[c_j-F(x,y)] \frac{x-m_{x,j}}{d_{x,j}^2}
\label{eq:GaussLearnmj2D}\\
d_{x,j}(t+1) &= d_j(t)+\mu_t\varepsilon(x,y)
p_j(x,y)[c_j-F(x,y)] \frac{(x-m_{x,j})^2}{d_{x,j}^3}.
\hspace{.25in}
\label{eq:GaussLearndj2D}
\end{align}
These learning laws are the result of applying gradient descent on the squared approximation error function for the associated SAM system.

\subsection{Triply Fuzzy Bayesian Inference}
The term {\it triply fuzzy} describes the special case where $\Pi(\tau)$, $H(\theta|\tau)$, and $G(x|\theta)$ are uniform fuzzy approximators. Triply fuzzy approximations allow users to state priors and hyper-priors in words or rules as well as to adapt them from sample data.

Figure \ref{fig:2D-Post} shows a simulation instance of triply fuzzy function approximation in accord with the Extended Bayesian Approximation Theorem. It shows that the 2-D fuzzy approximator $F(\mu,\tau|x)$ approximates the posterior pdf $f(\mu,\tau|x)\propto g(x|\mu) h(\mu|\tau) \pi(\tau)$ for hierarchical Bayesian inference. The sample data $x$ is normal. A normal prior distribution $h(\mu|\tau) = N(1,\sqrt \tau)$ models the population mean $\mu$ of the data. An inverse gamma $IG(2,1)$ hyperprior models the variance $\tau$ of the prior. An inverse gamma hyperprior $\pi(\tau) = IG(\alpha,\beta)$ has the form $\pi(\tau) = e^{-\frac{\beta }{\tau}} \left(\frac{\beta }{\tau}\right)^{\alpha } \Big/ \tau \Gamma(\alpha)$ for $\tau > 0$ where $\Gamma$ is the gamma function. The posterior fuzzy approximator $F(\mu,\tau|x)$ is proportional to the triple-product approximator $G(x|\mu)H(\mu|\tau)\Pi(\tau)$. These three adaptive SAMs  separately approximate the three corresponding Bayesian pdfs. $G(x|\mu)$ approximates the 1-D likelihood $g(x|\mu)$. $H(\mu|\tau)$ approximates the 2-D conditional prior pdf $h(\mu|\tau)$. And $\Pi(\tau) $ approximates the 1-D hyperprior pdf $\pi(\tau)$. 

Figure \ref{fig:2D-Post}(a) shows the approximand or the original posterior pdf. Figure \ref{fig:2D-Post}(b) shows the adapted triply fuzzy approximator of the posterior pdf using a conditional 2-D approximator $H(\mu|\tau)$ for $h(\mu|\tau)$ and a separate 1-D approximator $\Pi(\tau)$ for $\pi(\tau)$. Figure \ref{fig:2D-Post}(c) shows a simulation instance where the posterior approximator $F(\mu,\tau|x)$ uses a single 2-D approximator $P(\mu,\tau)$ for the joint prior pdf $p(\mu, \tau)=h(\mu|\tau) \pi(\tau)$. Both fuzzy posterior approximators $F(\mu,\tau|x) \propto G(x|\mu)H(\mu|\tau)\Pi(\tau)$ and $F(\mu,\tau|x) \propto G(x|\mu)P(\mu,\tau)$ uniformly approximate the posterior pdf $f(\mu,\tau|\theta)$.

Figure \ref{fig:TripFuzz-nonconj} shows an example of triply fuzzy function approximation for an arbitrary non-conjugate Bayesian model. The likelihood is zero-mean Gaussian with unknown standard deviation $\sigma$. The prior $h(\sigma|\tau)$ for the standard deviation $\sigma$ is an arbitrary beta distribution that depends on a hyperparameter $\tau$ with a mixture-beta hyperprior $\pi(\tau)$. The 2-D fuzzy approximator $F(\sigma,\tau|x)$ uniformly approximates the posterior pdf $f(\sigma,\tau|x)$ for this arbitrary model. The quality of this arbitrary posterior approximation shows that approximation feasibility does not require conjugacy in the Bayesian data model.

\begin{figure}[ht!]
\centerline{\includegraphics[width=\textwidth]{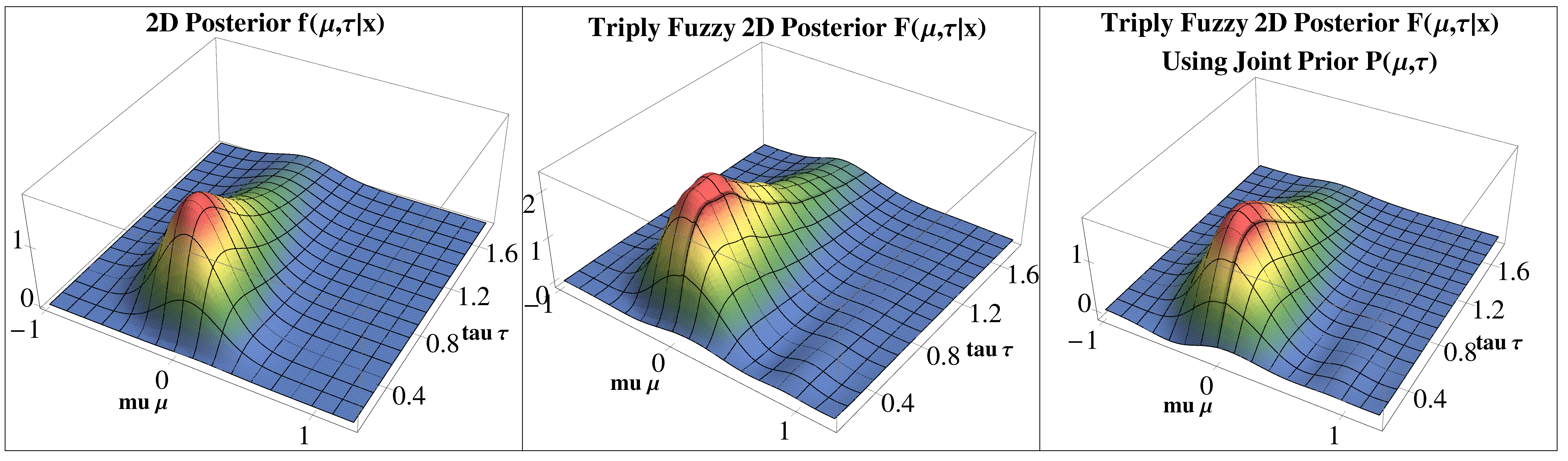}}
\centerline{(a)\hspace{1.4in}(b)\hspace{1.4in}(c)}
\caption[Triply fuzzy Bayesian inference: comparison between a 2-D posterior $f(\mu,\tau|x)$ and its triply fuzzy approximator $F(\mu,\tau|x)$]{Triply fuzzy Bayesian inference: comparison between a 2-D posterior $f(\mu,\tau|x) \propto g(x|\mu)h(\mu|\tau)\pi(\tau)$ and its triply fuzzy approximator $F(\mu,\tau|x)$. The first panel shows the approximand $f(\mu,\tau|x)$. The second panel shows a triply fuzzy approximator $F(\mu,\tau|x)$ that used a 2-D fuzzy approximation $H(\mu|\tau)$ for the conditional prior $h(\mu|\tau)$ and a 1-D fuzzy approximation $\Pi(\tau)$ for the hyperprior pdf $\pi(\tau)$ and a 1-D fuzzy likelihood-pdf approximator $G(x|\mu)$. The third panel shows a triply fuzzy approximator $F(\mu,\tau|x)$ that used a 2-D fuzzy approximation $P(\mu,\tau)=(H \times \Pi)(\mu,\tau)$ for the joint prior $p(\mu,\tau)=(h \times \pi)(\mu,\tau)$. The likelihood approximation is the same as in the second panel. The sinc-SAM fuzzy approximators $H(\mu|\tau)$ and $P(\mu,\tau)$ use 6 rules to approximate the respective 2-D pdfs $h(\mu|\tau) = N(1,\sqrt \tau)$ and $ h(\mu|\tau) \pi(\tau) = N(1,\sqrt \tau) IG(2,1)$. The hyperprior Gaussian-SAM approximator $\Pi(\tau)$ used 12 rules to approximate an inverse-gamma pdf $\pi(\tau) = IG(2,1)$. The Gaussian-SAM fuzzy likelihood approximator $G(x|\mu)$ used 15 rules to approximate the likelihood function $g(x|\mu) = N(\mu, \frac{1}{16})$ for $x = -0.25$. The 2-D conditional prior fuzzy approximator $H(\mu|\tau)$ used 15000 learning iterations based on 6000 uniform sample points. The hyperprior fuzzy approximator $\Pi(\tau)$ used 6000 iterations on 120 uniform sample points. The likelihood fuzzy approximator used 6000 iterations based on 500 uniform sample points.}
\label{fig:2D-Post}
\end{figure}

\begin{figure}[ht!]
\centerline{\includegraphics[width=0.8\textwidth]{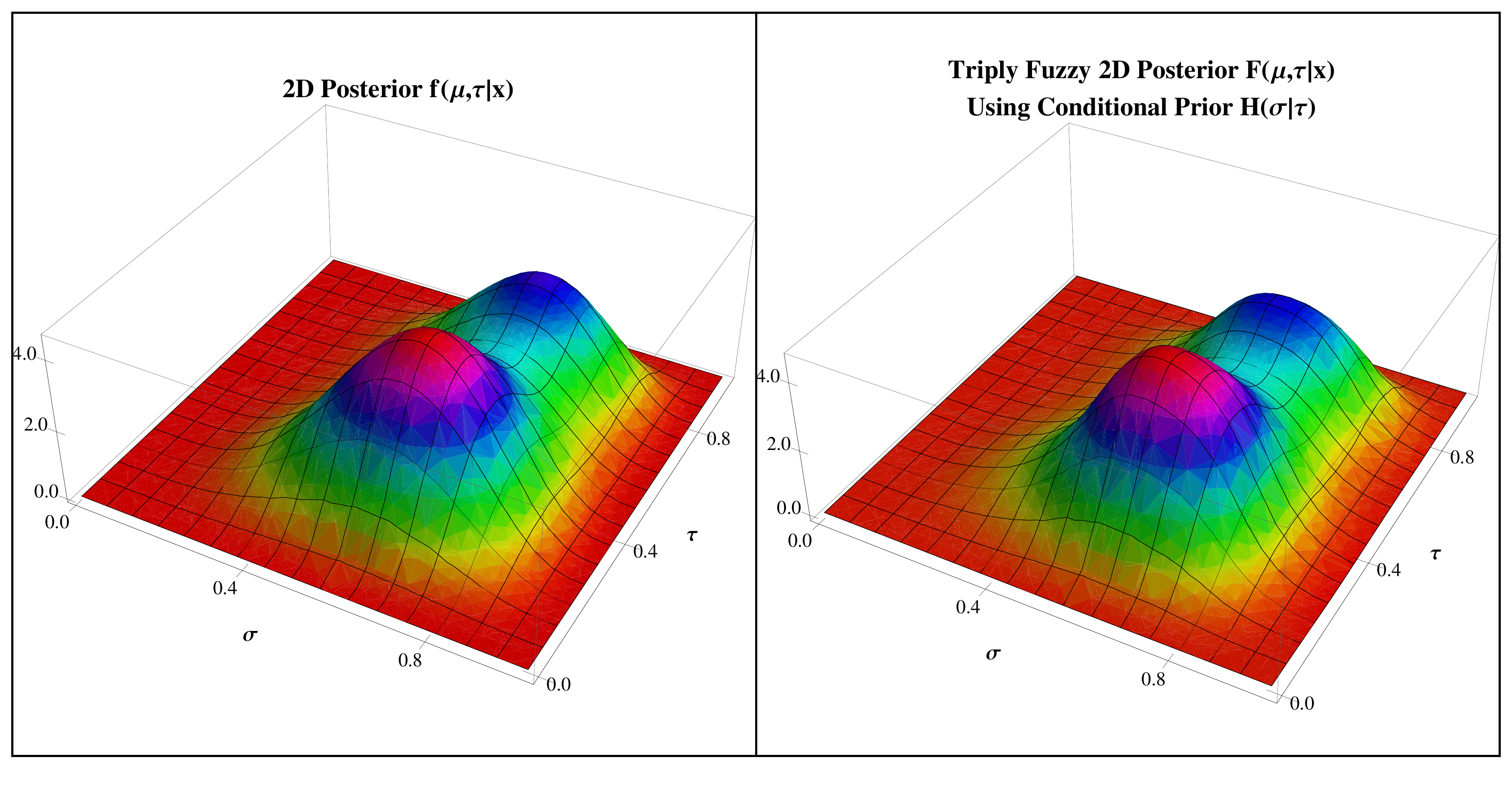}}
\centerline{(a)\hspace{2.1in}(b)}
\caption[Triply fuzzy Bayesian inference for a non-conjugate posterior: comparison between a 2-D non-conjugate posterior and its triply fuzzy approximator]{Triply fuzzy Bayesian inference: comparison between a 2-D non-conjugate posterior $f(\sigma,\tau|x) \propto g(x|\sigma)h(\sigma|\tau)\pi(\tau)$ and its triply fuzzy approximator $F(\sigma,\tau|x)$. The first panel shows the approximand $f(\sigma,\tau|x)$. The second panel shows a triply fuzzy approximator $F(\sigma,\tau|x)$ that used a 2-D fuzzy approximation $H(\sigma|\tau)$ for the conditional prior $h(\sigma|\tau)$ and a 1-D fuzzy approximation $\Pi(\tau)$ for the hyperprior pdf $\pi(\tau)$ and a 1-D fuzzy likelihood-pdf approximator $G(x|\sigma)$. The Gaussian-SAM fuzzy approximator $H(\sigma|\tau)$ used 6 rules to approximate the 2-D pdf $h(\sigma|\tau) = \beta(6+2\tau,4)$. The hyperprior Gaussian-SAM approximator $\Pi(\tau)$ used 12 rules to approximate a beta mixture pdf $\pi(\tau) = \frac{1}{3}\beta(12,4) + \frac{2}{3}\beta(4,7)$. The Gaussian-SAM fuzzy likelihood approximator $G(x|\sigma)$ used 12 rules to approximate the likelihood function $g(x|\sigma) = N(0,\sigma)$ for $x=0.25$. The 2-D conditional prior fuzzy approximator $H(\sigma|\tau)$ used $6000$ learning iterations based on $3970$ uniform sample points. The hyperprior fuzzy approximator $\Pi(\tau)$ used $15000$ iterations on $1000$ uniform sample points. The likelihood fuzzy approximator $G(x|\sigma)$ used $15000$ iterations based on $300$ uniform sample points.}
\label{fig:TripFuzz-nonconj}
\end{figure}

\section{Semi-conjugacy in Fuzzy Posterior Approximation}\label{sec:semi-conj}

Iterative fuzzy Bayesian inference can lead to rule explosion. Iterative Bayesian updates will propagate the convex SAM structure from the fuzzy approximators to the posterior approximator as Theorem \ref{thm:fuzzypost-prodrules} (and \ref{thm:Doubly-Posterior-SAM}) shows. But the updates produce exponentially increasing sets of rules and parameters. This also true for Bayesian models with more than two nested levels. 

Standard Bayesian applications avoid a similar parameter explosion by using conjugate models. Conjugacy keeps the number parameters constant with each update. We define the related idea of \emph{semi-conjugacy} for fuzzy-approximators in Bayes model. Semi-conjugacy is the property by which the if-part sets of the posterior fuzzy approximator inherit the shape of the if-part sets of the prior approximators.

Theorem \ref{thm:semi-conj} and  Corollaries \ref{cor:sconj-Gauss}---\ref{cor:sconj-Lapl} show that updates also preserve the shapes of the if-part sets (semi-conjugacy of if-part set functions) if the SAM fuzzy systems use if-part set functions that belong to conjugate families in Bayesian statistics. Figure \ref{fig:productsets_conjugacy} shows examples of such if-part sets from Corollaries \ref{cor:sconj-Gauss}---\ref{cor:sconj-Lapl}. The conjugacy of Gaussian if-part sets is straightforward.  The conjugacy of the beta, gamma, and Laplace if-part sets is only partial (semi-conjugacy) because we cannot combine the functions' exponents and because two beta set functions or two gamma set functions need not share the same supports.

\begin{figure}[ht!]
\centerline{\includegraphics[width=0.45\textwidth]{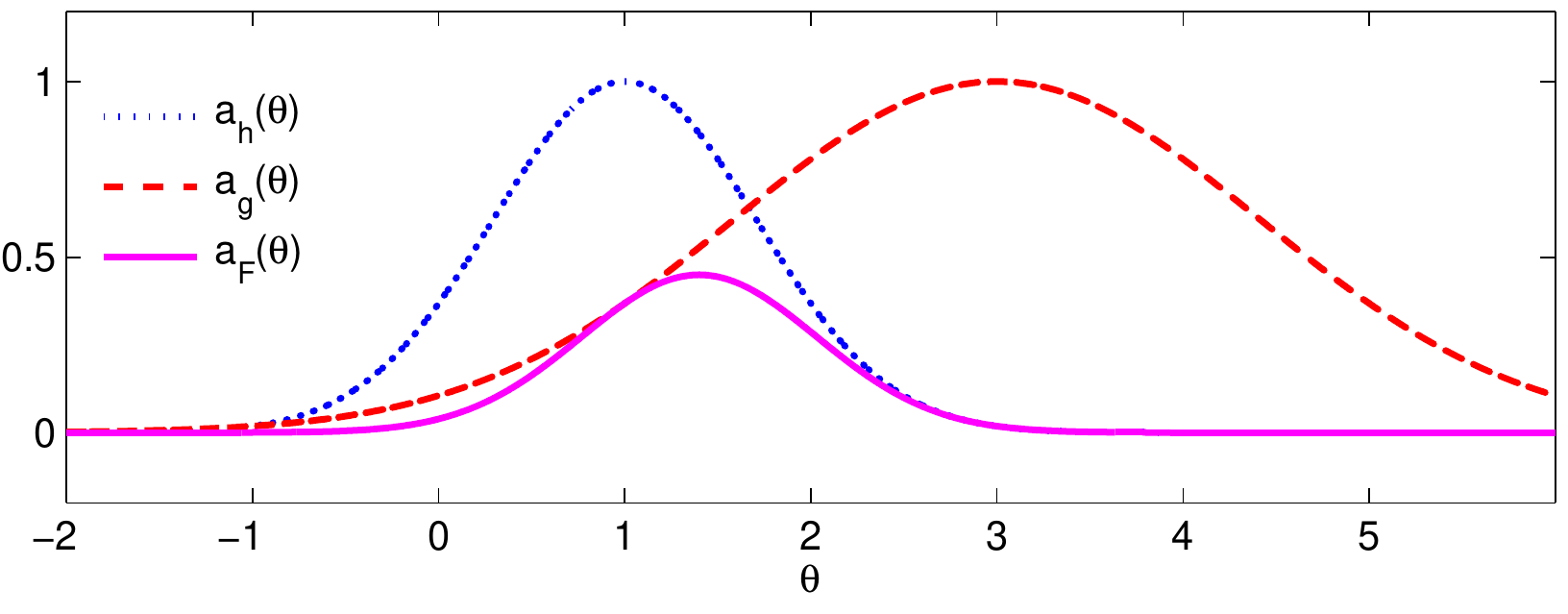}
\hspace{.15in}\includegraphics[width=0.45\textwidth]{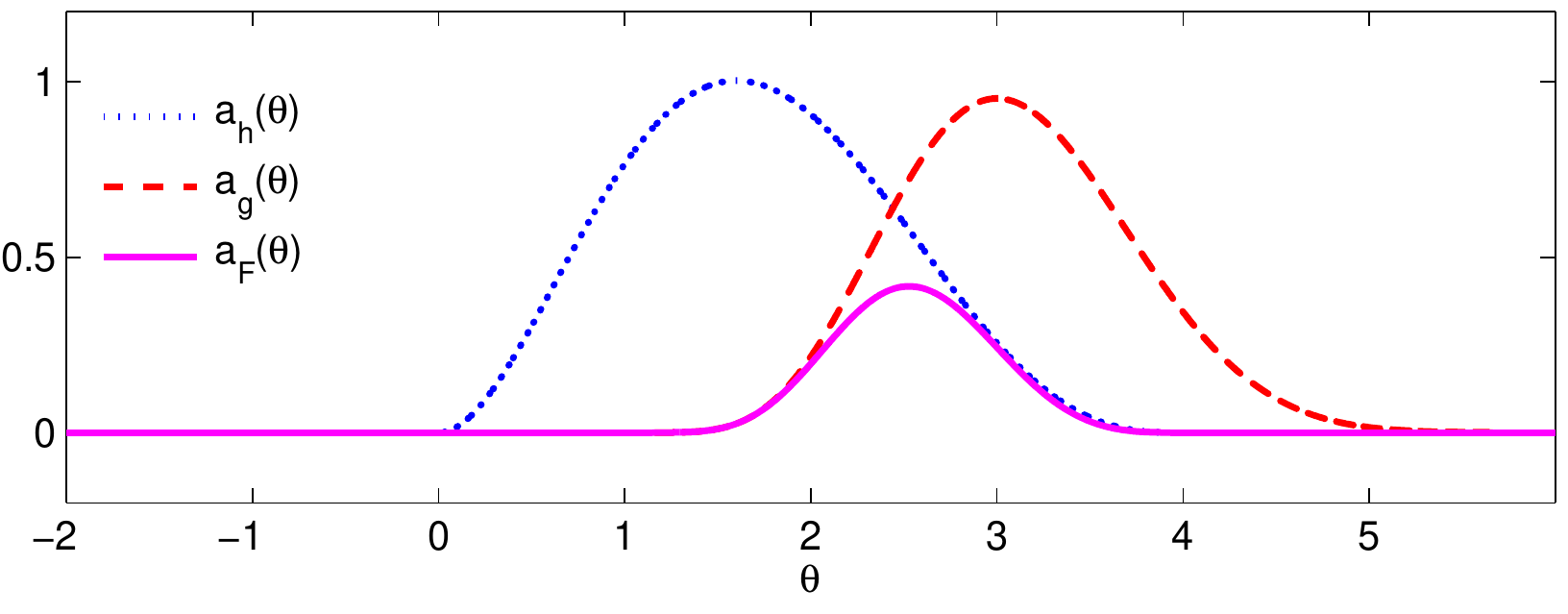}}
\vspace{-.1in}
\centerline{\small (a) Gaussian set functions \hspace{1in}(b) beta set functions }
\vspace{.2in}
\centerline{\includegraphics[width=0.452\textwidth]{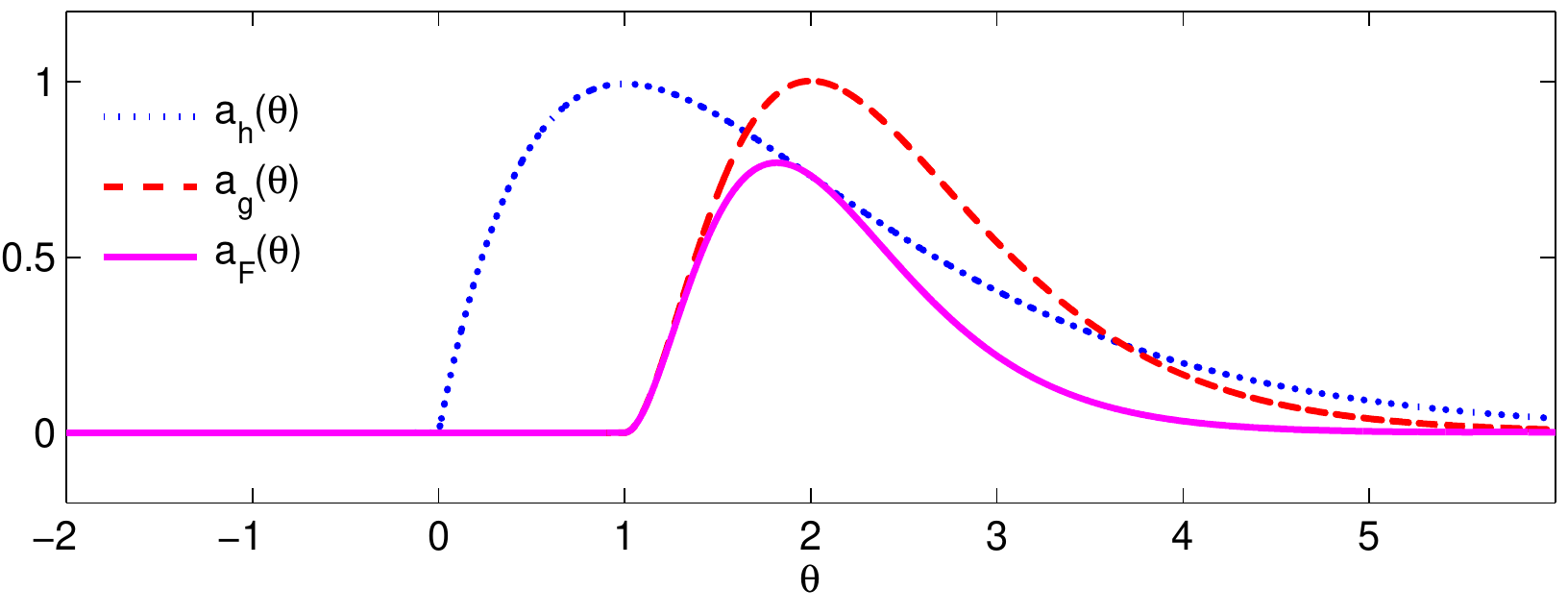}
\hspace{.15in}
\includegraphics[width=0.45\textwidth]{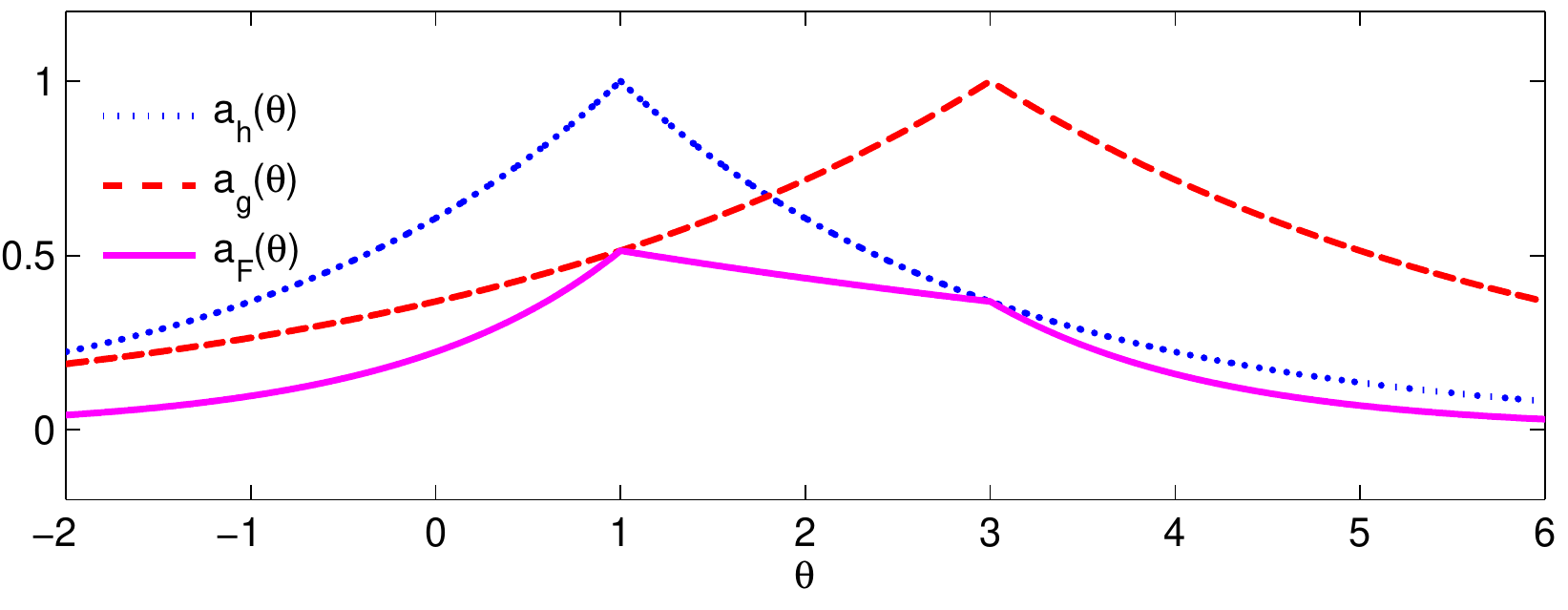}}
\vspace{-.1in}
\centerline{\small (c) gamma set functions \hspace{1in} (d) Laplace set functions}
\caption[Conjugacy and semi-conjugacy of the doubly fuzzy posterior if-part set functions]{Conjugacy and semi-conjugacy of the doubly fuzzy posterior if-part set functions $a_F(\theta) = a_h(\theta)a_g(x|\theta)$. 
(a) Gaussian if-part set functions have the form of (\ref{eq:gaussiansets}) where $a_h(\theta) = G(1,1,1;\theta)$ and $a_g(\theta) = G(3,2,1;\theta)$ give Gaussian $a_F(\theta) = G(\frac{7}{5},\frac{4}{5},e^{-4/5};\theta)$. 
(b) beta if-part set functions have the form of (\ref{eq:betasets}) where $a_h(\theta) = B(0,4,2,3,29;\theta)$ and $a_g(\theta) = B(1,6,6,12,9\times10^4;\theta)$ give semi-beta $a_F(\theta)$.
(c) gamma if-part set functions have the form of (\ref{eq:gammasets}) where $a_h = G(0,1,2,3,2.7;\theta)$ and $a_g = G(1,1,2,0.5,7.4;\theta)$ give semi-gamma $a_F(\theta)$.
(d) Laplace if-part set functions have the form of (\ref{eq:laplacesets}) where $a_h(\theta) = L(1,2;\theta)$ and $a_g(\theta) = L(3,3;\theta)$ give semi-Laplace $a_F(\theta)$.
}
\label{fig:productsets_conjugacy}
\end{figure}

\vspace{.2in}

\begin{thm}\label{thm:fuzzypost-prodrules}{\bf [Preservation of the SAM convex structure in fuzzy Bayesian inference]}

\noindent {\bf (i)} Doubly fuzzy posterior approximators are SAMs with product rules.

Suppose an $m_1$-rule SAM fuzzy system $G(x|\theta)$ approximates (or represents) a likelihood $g(x|\theta)$ and another $m_2$-rule SAM fuzzy system $H(\theta)$ approximates (or represents) a prior $h(\theta)$ pdf with $m_2$ rules:
\begin{align}
G(x|\theta) &=
\frac{\sum_{j=1}^{m_1}\,w_{g,j}a_{g,j}(\theta)V_{g,j}c_{g,j}}{\sum_{i=1}^{m_1} w_{g,j}a_{g,j}(\theta) V_{g,j}} 
= \sum_{j=1}^{m_1} p_{g,j}(\theta) c_{g,j}
\\
H(\theta) &= 
\frac{\sum_{j=1}^{m_2}\,w_{h,j}a_{h,j}(\theta)V_{h,j}c_{h,j}}{\sum_{j=1}^{m_2} w_{h,j}a_{h,j}(\theta) V_{h,j}}
=
\sum_{j=1}^{m_2} p_{h,j}(\theta) c_{h,j} \hspace{.2in}
\end{align}
where $p_{g,j}(\theta) = \frac{w_{g,j} a_{g,j}(\theta)V_{g,j}}{\sum_{i=1}^{m_1} w_{g,j}a_{g,j}(\theta) V_{g,j}}$ and $p_{h,j}(\theta) = \frac{w_{h,j} a_{h,j}(\theta)V_{h,j}}{\sum_{i=1}^{m_2} w_{h,j} a_{h,j}(\theta) V_{h,j}}$ are convex coefficients:
$\sum_{j=1}^{m_1} p_{g,j}(\theta) = 1$ and $\sum_{j=1}^{m_2} p_{h,j}(\theta) = 1$. Then (a) and (b) hold: 

\noindent {\bf (a)} The fuzzy posterior approximator $F(\theta|x)$ is a SAM system with $m = m_1 m_2$ rules:
\begin{align}
F(\theta|x) &= \frac{\sum_{i=1}^{m}\,w_{F,i}\,a_{F,i}(\theta)\, V_{F,i}\,c_{F,i}}
{\sum_{i=1}^{m}\,w_{F,i}\,a_{F,i}(\theta)\,V_{F,i}}.
\label{eq:SAMposterior_Doubly}
\end{align}

\noindent {\bf (b)} The $m$ if-part set functions $a_{F,i}(\theta)$ of the fuzzy posterior approximator $F(\theta|x)$ are the
products of the likelihood approximator's if-part sets $a_{g,j}(\theta)$ and the prior approximator's
if-part sets $a_{h,j}(\theta)$:
\begin{align}
a_{F,i}(\theta) &= a_{g,j}(\theta)a_{h,k}(\theta).
\end{align}
for $i = m_2(j-1)+k$, $j = 1,\ldots,m_1$, and $k = 1,\ldots, m_2$.
The weights $w_{F_i}$, then-part set volumes $V_{F_i}$, and centroids $c_{F_i}$ also have the same likelihood-prior product form:
\begin{align}
w_{F_i} &= w_{g,j}w_{h,k} \\ 
V_{F_i} &= V_{g,j}V_{h,k} \\ 
c_{F_i} &= \frac{c_{g,j}c_{h,k}}{Q(x)}.
\end{align}

\vspace{.1in}

\noindent {\bf (ii)} Triply fuzzy posterior approximators and $n$-many fuzzy posterior approximators are SAMs with product rules.

Suppose an $m_1$-rule SAM fuzzy system $G(x|\theta)$ approximates (or represents) a likelihood $g(x|\theta)$, an $m_2$-rule SAM fuzzy system $H(\theta,\tau)$ approximates (or represents) a prior pdf $h(\theta|\tau)$ with $m_2$ rules, an  $m_3$-rule SAM fuzzy system $\Pi(\theta)$ approximates (or represents) a hyper-prior pdf $pi(\tau)$ with $m_3$ rules:
\begin{align}
G(x|\theta) &=
\frac{\sum_{j=1}^{m_1}\,w_{g,j}a_{g,j}(\theta)V_{g,j}c_{g,j}}{\sum_{i=1}^{m_1} w_{g,j}a_{g,j}(\theta) V_{g,j}} 
&=& \sum_{j=1}^{m_1} p_{g,j}(\theta) c_{g,j} \\
H(\theta,\tau) &=
\frac{\sum_{j=1}^{m_2}\,w_{h,j}a_{h,j}(\theta,\tau)V_{h,j}c_{h,j}}{\sum_{j=1}^{m_2} w_{h,j}a_{h,j}(\theta,\tau) V_{h,j}}
&=& \sum_{j=1}^{m_2} p_{h,j}(\theta,\tau) c_{h,j} \\
\Pi(\tau) &=
\frac{\sum_{j=1}^{m_3}\,w_{\pi,j}a_{\pi,j}(\tau)V_{h,j}c_{h,j}}{\sum_{j=1}^{m_3} w_{\pi,j}a_{\pi,j}(\tau) V_{\pi,j}}
&=& \sum_{j=1}^{m_3} p_{h,j}(\tau) c_{h,j} 
\end{align}
where 
\begin{align}
p_{g,j}(\theta) &= \frac{w_{g,j} a_{g,j}(\theta)V_{g,j}}{\sum_{i=1}^{m_1} w_{g,i}a_{g,i}(\theta) V_{g,i}}, \\
p_{h,j}(\theta,\tau) &= \frac{w_{h,j} a_{h,j}(\theta,\tau)V_{h,j}}{\sum_{i=1}^{m_2} w_{h,i} a_{h,i}(\theta,\tau) V_{h,i}}, \quad \text{and} \\
p_{\pi,j}(\tau) &= \frac{w_{\pi,j} a_{\pi,j}(\tau)V_{\pi,j}}{\sum_{i=1}^{m_3} w_{\pi,i} a_{pi,i}(\tau) V_{h,i}}
\end{align}
are convex coefficients:
$\sum_{j=1}^{m_1} p_{g,j}(\theta) = 1$,  $\sum_{j=1}^{m_2} p_{h,j}(\theta,\tau) = 1$,  and $\sum_{j=1}^{m_2} p_{\pi,j}(\theta) = 1$. Then (a) and (b) hold: 

\noindent {\bf (a)} The fuzzy posterior approximator $F(\theta,\tau|x)$ is a SAM system with $m = m_1 m_2 m_3$ rules:
\begin{align}
F(\theta,\tau|x) &= \frac{\sum_{i=1}^{m}\,w_{F,i}\,a_{F,i}(\theta)\, V_{F,i}\,c_{F,i}}
{\sum_{i=1}^{m}\,w_{F,i}\,a_{F,i}(\theta)\,V_{F,i}}.
\label{eq:SAMposterior_Triply}
\end{align}

\noindent {\bf (b)} The $m$ if-part set nctions $a_{F,i}(\theta,\tau)$ of the fuzzy posterior approximator $F(\theta,\tau|x)$ are the
products of the likelihood approximator's if-part sets $a_{g,j}(\theta)$, the prior approximator's
if-part sets $a_{h,j}(\theta,\tau)$, and the hyper-prior approximators's if-part sets  $a_{\pi,j}(\tau)$:
\begin{align}
a_{F,i}(\theta,\tau) &= a_{g,j}(\theta)a_{h,k}(\theta,\tau) a_{\pi,l}(\tau)
\end{align}
for $i = l+m_3(k-1)+m_2 m_3(j-1)$, $j = 1,\ldots,m_1$, $k = 1,\ldots, m_2$, and $l = 1,\ldots,m_3$.
The weights $w_{F_i}$, then-part set volumes $V_{F_i}$, and centroids $c_{F_i}$ also have the same likelihood-prior-hyper-prior product form:
\begin{align}
w_{F_i} &= w_{g,j}w_{h,k} w_{\pi,l}\\ 
V_{F_i} &= V_{g,j}V_{h,k}V_{\pi,l} \\ 
c_{F_i} &= \frac{c_{g,j}c_{h,k}c_{\pi,l}}{Q(x)}
\end{align}
where $Q(x) = \int_{\mathcal{D}}  G(x| \theta) H(\theta, \tau)  \Pi(\tau)\,d\tau d\theta$.

{This implies that the $n$-many fuzzy posterior approximators are also SAMs with product rules.}
\end{thm}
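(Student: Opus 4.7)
The plan is to prove part (i) by direct algebraic expansion and then deduce part (ii) by iteration. First I would write each SAM in its ratio form $G(x|\theta) = N_g(x,\theta)/D_g(\theta)$ and $H(\theta) = N_h(\theta)/D_h(\theta)$, where $N_g = \sum_j w_{g,j} a_{g,j}(\theta) V_{g,j} c_{g,j}$ and $D_g = \sum_j w_{g,j} a_{g,j}(\theta) V_{g,j}$ (and similarly for $H$). Multiplying the sums distributively gives
\begin{align}
G(x|\theta) H(\theta) = \frac{\sum_{j=1}^{m_1}\sum_{k=1}^{m_2} (w_{g,j} w_{h,k})(a_{g,j}(\theta) a_{h,k}(\theta))(V_{g,j} V_{h,k})(c_{g,j} c_{h,k})}{\sum_{j=1}^{m_1}\sum_{k=1}^{m_2} (w_{g,j} w_{h,k})(a_{g,j}(\theta) a_{h,k}(\theta))(V_{g,j} V_{h,k})},
\end{align}
which already exhibits product-rule SAM structure with $m = m_1 m_2$ rules indexed by $i = m_2(j-1)+k$.

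Next, I would invoke Bayes's theorem in the form $F(\theta|x) = G(x|\theta) H(\theta)/Q(x)$ with $Q(x) = \int_{\mathcal{D}} G(x|u)H(u)\,du$. Dividing the expanded numerator by the constant $Q(x)$ only rescales the centroids, so setting $w_{F,i}=w_{g,j}w_{h,k}$, $V_{F,i}=V_{g,j}V_{h,k}$, $a_{F,i}(\theta)=a_{g,j}(\theta)a_{h,k}(\theta)$, and $c_{F,i}=c_{g,j}c_{h,k}/Q(x)$ reproduces the SAM ratio claimed in (\ref{eq:SAMposterior_Doubly}). A short verification that $\sum_i w_{F,i}a_{F,i}(\theta)V_{F,i} = D_g(\theta) D_h(\theta)$ confirms that the new SAM has the correct denominator and hence gives exactly $F(\theta|x)$.

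For part (ii), the triply fuzzy case follows by applying the same product-expansion twice. I would first combine $H(\theta,\tau)$ with $\Pi(\tau)$ using the part-(i) argument (with the two-variable state space $(\theta,\tau)$ replacing $\theta$); this yields an $m_2 m_3$-rule SAM approximator $P(\theta,\tau)$ of the joint prior. Then a second application of part (i) combines $G(x|\theta)$ with $P(\theta,\tau)$ to produce an $m_1 m_2 m_3$-rule posterior SAM whose if-part sets, weights, volumes, and centroids are the triple products stated in the theorem. Induction on $n$ immediately extends the construction to $n$-many fuzzy approximators, giving $\prod_{\ell=1}^n m_\ell$ rules whose parameters are $n$-fold products.

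The main obstacle is not conceptual but notational: one must track indices carefully through the double (and triple) sum and verify that normalizing by $Q(x)$ preserves the convex-sum identity $\sum_i p_{F,i}(\theta)=1$. The rule-count growth from $m_1+m_2$ to $m_1 m_2$ (and from $\sum_\ell m_\ell$ to $\prod_\ell m_\ell$) is a direct consequence of this product-rule form, and it is this multiplicative blow-up that motivates the semi-conjugacy results of \Sec\ref{sec:semi-conj}.
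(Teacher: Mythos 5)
Your proposal is correct and follows essentially the same route as the paper's proof: both expand the product of the two (or three) SAM convex sums term by term, identify the product if-part sets, weights, volumes, and centroids, and observe that dividing by the Bayes factor $Q(x)$ merely rescales the centroids while leaving the SAM ratio structure intact. Your only organizational difference is handling the triply fuzzy case by two successive applications of the doubly fuzzy product expansion rather than one direct triple expansion, which is the same computation arranged differently and extends to $n$ levels by the same induction.
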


\begin{proof} {\bf (i) Doubly fuzzy case.}\\
The fuzzy system $F(\theta|x)$ has the form
\begin{align}
F(\theta|x) 
&= \frac{H(\theta)G(x|\theta)}{\int_{\cal D} H(t)G(x|t)\,dt}\\
&= \frac{1}{Q(x)} 
\left(\sum_{j=1}^{m_1} \, p_{g,j}(\theta)\, c_{g,j}\right)\left(\sum_{j=1}^{m_2}\, p_{h,j}(\theta)\, c_{h,j}\right)
\\
&= 
\sum_{j=1}^{m_1}\sum_{k=1}^{m_2}\,p_{g,j}(\theta)p_{h,k}(\theta)\, \frac{c_{g,j}\,c_{h,k}}{Q(x)}
\label{eq:convex-sumF1}
\end{align}
\begin{align}
&=
\sum_{j=1}^{m_1}\sum_{k=1}^{m_2}\,
\frac{w_{g,j}a_{g,j}(\theta)V_{g,j}}{\displaystyle \sum_{i=1}^{m_1} w_{g,i}a_{g,i}(\theta) V_{g,i}} 
\frac{w_{h,k}a_{h,k}(\theta)V_{h,k}}{\displaystyle \sum_{i=1}^{m_2} w_{h,i}a_{h,i}(\theta) V_{h,i}}
\, \frac{c_{g,j}\,c_{h,k}}{Q(x)}
\\
&=  
\frac{\displaystyle\sum_{j=1}^{m_1}\sum_{k=1}^{m_2}\,w_{g,j}\,w_{h,k}\,a_{g,j}(\theta)a_{h,k}(\theta)\,V_{g,j}\,V_{h,k}\, \displaystyle \frac{c_{g,j}\,c_{h,k}}{Q(x)}}
{\displaystyle \sum_{j=1}^{m_1}\sum_{k=1}^{m_2}\,w_{g,j}\,w_{h,k}\,a_{g,j}(\theta)a_{h,k}(\theta)\,V_{g,j}\,V_{h,k}}
\\
&= \frac{\displaystyle \sum_{i=1}^{m}\,w_{F,i}\,a_{F,i}(\theta)\, V_{F,i}\,c_{F,i}}
{\displaystyle \sum_{i=1}^{m}\,w_{F,i}\,a_{F,i}(\theta)\,V_{F,i}} \\
F(\theta|x) &= \sum_{i=1}^{m}\,p_{F,i}(\theta)\,c_{F,i}
\label{eq:convex-sumF2}
\end{align}
where $p_{F,i}(\theta)$ are the convex coefficients defined as
\begin{equation}
p_{F,i}(\theta)=\frac{ w_{F,i}\,a_{F,i}(\theta)\, V_{F,i} }{\sum_{i=1}^{m}\,w_{F,i}\,a_{F,i}(\theta)\,V_{F,i}} \;.
\end{equation}
\end{proof}

\begin{proof} {\bf (ii) Triply fuzzy case.}\\
The fuzzy system $F(\theta,\tau|x)$ has the form:

\begin{align}
\lefteqn{F(\theta,\tau|x) ~=~ \frac{G(x|\theta)H(\theta,\tau)\Pi(\tau)}{\int_{\cal D_{\theta}\times D_{\tau}} G(x|t)H(t,s)\Pi(s)\,dt\,ds}}\\
&= \frac{1}{Q(x)} \left(\sum_{j=1}^{m_1} \, p_{g,j}(\theta)\, c_{g,j}\right) \left(\sum_{j=1}^{m_2}\, p_{h,j}(\theta,\tau)\, c_{h,j}\right) \left(\sum_{j=1}^{m_3}\, p_{\pi,j}(\theta)\, c_{\pi,j}\right) \\
&= \sum_{j=1}^{m_1}\sum_{k=1}^{m_2}\,p_{g,j}(\theta)p_{h,k}(\theta,\tau) p_{\pi,l}(\tau)\, \frac{c_{g,j}\,c_{h,k}c_{\pi,l}}{Q(x)} \label{eq:convex-sumF3} \\
&= \sum_{j=1}^{m_1}\sum_{k=1}^{m_2}\sum_{l=1}^{m_3}\, \frac{w_{g,j}a_{g,j}(\theta)V_{g,j}}{\displaystyle \sum_{i=1}^{m_1} w_{g,i}a_{g,i}(\theta) V_{g,i}} \frac{w_{h,k}a_{h,k}(\theta,\tau)V_{h,k}}{\displaystyle \sum_{i=1}^{m_2} w_{h,i}a_{h,i}(\theta,\tau) V_{h,i}} \times \nonumber \\ 
& \quad \frac{w_{\pi,l}a_{\pi,l}(\tau)V_{\pi,k}}{\displaystyle \sum_{i=1}^{m_3} w_{\pi,i}a_{\pi,i}(\tau) V_{\pi,i}} \, \frac{c_{g,j}\,c_{h,k}\,c_{\pi,l}}{Q(x)} \\
&=  \frac{\displaystyle\sum_{j=1}^{m_1}\sum_{k=1}^{m_2}\sum_{l=1}^{m_3}\,
\begin{array}{ll}
w_{g,j}w_{h,k}
w_{\pi,k}\,a_{g,j}(\theta) a_{h,k}(\theta,\tau) a_{\pi,l}(\tau) 
V_{g,j} V_{h,k} V_{\pi,l}\, \displaystyle \frac{c_{g,j}\,c_{h,k}\,c_{\pi,l}}{Q(x)}
\end{array}}
{\displaystyle \sum_{j=1}^{m_1}\sum_{k=1}^{m_2}\sum_{l=1}^{m_3}
\begin{array}{ll}
w_{g,j}w_{h,k}
w_{\pi,k}\,a_{g,j}(\theta) a_{h,k}(\theta,\tau) a_{\pi,l}(\tau)
V_{g,j} V_{h,k} V_{\pi,l}
\end{array}}~~~~~
\end{align}
Therefore $F(\theta,\tau|x)$ has the SAM convex structure
\begin{align}
F(\theta,\tau|x) = \frac{\displaystyle \sum_{i}^{m}\,w_{F,i}\,a_{F,i}(\theta)\, V_{F,i}\,c_{F,i}} {\displaystyle \sum_{i}^{m}\,w_{F,i}\,a_{F,i}(\theta)\,V_{F,i}} = \sum_{i} p_{F,i}(\theta) c_{F,i} \;.
\end{align}
The index $i$ is shorthand for the triple summation indices $(j,k,l)$. The parameters for the centroids $c_{F,i}$ and the convex coefficients $p_{F,i}$ are:
\begin{align}
w_{F,i} &= w_{g,j}w_{h,k} w_{\pi,k}\\
a_{F,i} &= a_{g,j}(\theta) a_{h,k}(\theta,\tau) a_{\pi,l}(\tau)\\
V_{F,i} &= V_{g,j} V_{h,k} V_{\pi,l} \\
\text{and} \quad c_{F,i} &= \frac{c_{g,j}\,c_{h,k}\,c_{\pi,l}}{Q(x)}\;.
\end{align}
\end{proof}

\vspace{8pt}

\begin{cor}\label{cor:2-ruleRep-g} 

Suppose a 2-rule fuzzy system $G(x|\theta)$ represents a likelihood $g(x|\theta)$ and an $m$-rule system $H(\theta)$ approximates the prior pdf $h(\theta)$.  Then the fuzzy-based posterior (or ``updated'' system) $F(\theta|x)$ is a SAM fuzzy system with $2m$ rules.
\end{cor}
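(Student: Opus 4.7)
The plan is to obtain this corollary as an immediate specialization of Theorem~\ref{thm:fuzzypost-prodrules}(i). That theorem shows that when an $m_1$-rule SAM $G(x|\theta)$ and an $m_2$-rule SAM $H(\theta)$ are combined via Bayes' rule, the fuzzy-based posterior $F(\theta|x)$ inherits the convex SAM structure with exactly $m_1 m_2$ product rules, where the $i$-th posterior if-part set function factors as $a_{F,i}(\theta) = a_{g,j}(\theta)\, a_{h,k}(\theta)$ with $i = m_2(j-1)+k$.

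First I would invoke the Watkins Representation Theorem to justify that the two-rule system $G(x|\theta)$ genuinely \emph{represents} $g(x|\theta)$ (rather than merely approximating it): since $g(x|\theta)$ is assumed bounded with a known closed form, the Watkins construction yields a SAM with $m_1=2$ rules whose if-part sets are $a_{g,1}(x|\theta) = (\sup g - g(x|\theta))/(\sup g - \inf g)$ and $a_{g,2}(x|\theta) = 1 - a_{g,1}(x|\theta)$, with equal then-part volumes and centroids $c_{g,1} = \inf g$ and $c_{g,2} = \sup g$. This supplies the $m_1 = 2$ input to the product-rule machinery.

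Next I would simply substitute $m_1 = 2$ and $m_2 = m$ into the conclusion of Theorem~\ref{thm:fuzzypost-prodrules}(i). The theorem gives $F(\theta|x) = \sum_{i=1}^{2m} p_{F,i}(\theta)\,c_{F,i}$, a convex-sum SAM with $2m$ rules, whose parameters are the products $w_{F,i} = w_{g,j}w_{h,k}$, $V_{F,i} = V_{g,j}V_{h,k}$, $a_{F,i}(\theta) = a_{g,j}(\theta)a_{h,k}(\theta)$, and $c_{F,i} = c_{g,j}c_{h,k}/Q(x)$ for $j \in \{1,2\}$ and $k \in \{1,\ldots,m\}$. This establishes the claim.

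There is no real obstacle here; the corollary is essentially a substitution of indices into the parent theorem. The only subtlety worth flagging explicitly is that the normalizing Bayes factor $Q(x) = \int_{\mathcal{D}} G(x|t)H(t)\,dt$ is assumed nonzero (which holds under the standing hypothesis of Theorem~\ref{thm:BAT} that $HG \neq 0$ almost everywhere), ensuring that the posterior centroids $c_{F,i}$ are well-defined and that $F(\theta|x)$ is a proper SAM fuzzy system. The representation-versus-approximation distinction is also worth recording: because Watkins gives an exact two-rule representation of $g$, the approximation error in the posterior arises entirely from the $m$-rule prior approximator $H(\theta)$.
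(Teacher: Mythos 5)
Your proof is correct and follows essentially the same route as the paper: invoke the Watkins two-rule representation for $G(x|\theta)$ and then specialize Theorem~\ref{thm:fuzzypost-prodrules}(i) with $m_1=2$ and $m_2=m$ to get a $2m$-rule SAM posterior (the paper merely writes out the resulting double sum explicitly, using $a_{g,2}=1-a_{g,1}$). Your labeling of the two Watkins rules swaps the roles of $c_{g,1}$ and $c_{g,2}$ relative to the paper's proof, but the pairing of set functions with centroids is consistent, so this is immaterial.
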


\vspace{.2in}

\begin{proof}
Suppose a 2-rule fuzzy system $G(x|\theta)$ represents a likelihood $g(x|\theta)$:
\begin{align}
G(x|\theta) &= \sum_{j=1}^2 p_{g,j}(\theta)c_{g,j} ~=~ \sum_{k=1}^2 a_{g,j}(\theta)c_{g,j}
\label{eq:2-ruleG}
\end{align}
where the if-part set functions have the form (from the Watkins Representation Theorem)
\begin{align}
a_{g,1}(x|\theta) &= \frac{\displaystyle g(x|\theta)-\inf g(x|\theta)}{\displaystyle\sup g(x|\theta) - \inf g(x|\theta)}\\
a_{g,2}(x|\theta) &= a_{g,1}^c(\theta) ~=~ 1-a_{g,1}(x|\theta) \\
&= \frac{\displaystyle \sup g(x|\theta) - g(x|\theta)}{\displaystyle \sup g(x|\theta) - \inf g(x|\theta)}
\label{eq:2-ruleG_ag2}
\end{align}
and the centroids are $c_{g,1} = \sup g$ and  $c_{g,2} = \inf g$.  And suppose that an $m$-rule fuzzy system $H(\theta)$ with equal weights $w_i = \cdots = w_m$ and volumes $V_i = \cdots = V_m$ approximates (or represents) the prior $h(\theta)$. Then (\ref{eq:SAMposterior_Doubly}) becomes
\begin{align}
F(\theta|x) 
&=
\frac{\displaystyle \sum_{j=1}^{m}\sum_{k=1}^{2}\,a_{g,k}(x|\theta)\, a_{h,j}(\theta)\, 
\displaystyle \frac{c_{g,k}\,c_{h,j}}{Q(x)}}
{\displaystyle \sum_{j=1}^{m}\sum_{k=1}^{2}\,a_{g,k}(x|\theta)\, a_{h,j}(\theta)}
\\
&= 
\frac{\begin{array}{l} \displaystyle \sum_{j=1}^{m}\, a_{g,1}(x|\theta)a_{h,j}(\theta) \, 
\displaystyle \frac{c_{g,1}c_{h,j}}{Q(x)}  
+ a_{g,2}(x|\theta)a_{h,j}(\theta) \, \displaystyle \frac{c_{g,2}c_{h,j}}{Q(x)} \end{array}}
{\displaystyle \sum_{j=1}^{m}\,a_{g,1}(x|\theta)\, a_{h,j}(\theta) + a_{g,2}(x|\theta)\, a_{h,j}(\theta)}
\end{align}
\begin{align}
&= \frac{\begin{array}{l} \displaystyle \sum_{j=1}^{m}\, a_{g,1}(x|\theta)a_{h,j}(\theta) \, 
\displaystyle \frac{c_{g,1}c_{h,j}}{Q(x)} 
+ (1-a_{g,1}(x|\theta))a_{h,j}(\theta) \,
 \displaystyle \frac{c_{g,2}c_{h,j}}{Q(x)}\end{array}}
{\displaystyle \sum_{j=1}^{m}\,a_{g,1}(x|\theta)\, a_{h,j}(\theta) + (1-a_{g,1}(x|\theta))\, a_{h,j}(\theta)} \hspace{.2in}
\end{align}
\end{proof}

\vspace{.2in}

The above results imply that the number $m$ of rules of a fuzzy system $F(\theta|x)$ after $n$ stages will be $m_1 m_2^n = 2^n m$ rules. So the iterative fuzzy posterior approximator will in general suffer from exponential rule explosion.

At least one practical special case avoids this exponential rule explosion and produces
only a linear or quadratic growth in fuzzy-posterior rules in iterative Bayesian inference. Suppose that we can keep track of past data involved in the Bayesian inference and that $g(x_1,\ldots,x_n|\theta) = g(\bar{x}_n|\theta)$.  Then we can compute the likelihood $g(\bar{x}_{n-1}|\theta)$ from $g(\bar{x}_n|\theta)$ for any new data $x_n$.
Then we can update the original prior $H(\theta)$  
and keep the number of rules at $2m$ (or $m^2$) if the fuzzy system uses two rules (or $m$ rules).

\vspace{8pt}

\begin{cor}\label{cor:2-ruleRep-hg}{\bf Posterior representation with fuzzy representations of $h(\theta)$ and $g(x|\theta)$.} \\
Suppose a 2-rule fuzzy system $G(x|\theta)$ represents a likelihood function $g(x|\theta)$ and a $2$-rule system $H(\theta)$ represents the prior $h(\theta)$.  Then the fuzzy-based posterior $F(\theta|x)$ is a SAM fuzzy system with 4 $(2\times 2)$ rules.
\end{cor}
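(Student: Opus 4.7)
The plan is to apply Theorem~\ref{thm:fuzzypost-prodrules}(i) directly in the specialized setting $m_1 = m_2 = 2$. The Watkins Representation Theorem justifies the hypothesis that a bounded likelihood $g(x|\theta)$ and bounded prior $h(\theta)$ each admit an \emph{exact} two-rule SAM representation, so the assumptions of the corollary are not vacuous. Once the two input systems are written in SAM form, the conclusion is essentially arithmetic: the posterior product gives $m_1 m_2 = 4$ rules.

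First I would write $G(x|\theta)$ using the two Watkins if-part sets $a_{g,1}(x|\theta)$ and $a_{g,2}(x|\theta) = 1 - a_{g,1}(x|\theta)$ with centroids $c_{g,1} = \sup g$ and $c_{g,2} = \inf g$, exactly as in the proof of Corollary~\ref{cor:2-ruleRep-g}, and analogously write $H(\theta)$ with if-part sets $a_{h,1}(\theta)$, $a_{h,2}(\theta) = 1 - a_{h,1}(\theta)$ and centroids $c_{h,1} = \sup h$, $c_{h,2} = \inf h$. Next I would invoke part (i)(b) of Theorem~\ref{thm:fuzzypost-prodrules} to read off the four posterior rules indexed by $i = 2(j-1)+k$ with $j,k \in \{1,2\}$: each posterior rule has if-part set $a_{F,i}(\theta) = a_{g,j}(x|\theta)\, a_{h,k}(\theta)$, centroid $c_{F,i} = c_{g,j} c_{h,k}/Q(x)$, volume $V_{F,i} = V_{g,j} V_{h,k}$, and weight $w_{F,i} = w_{g,j} w_{h,k}$. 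Substituting these back into the SAM convex-sum formula (\ref{eq:SAMposterior_Doubly}) exhibits $F(\theta|x)$ as a SAM with exactly four rules.

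Finally, because $G$ and $H$ \emph{represent} (rather than merely approximate) $g$ and $h$, the normalizer $Q(x) = \int_{\mathcal{D}} G(x|\theta) H(\theta)\, d\theta$ equals the true Bayes factor $q(x)$, so the 4-rule SAM is an exact representation of the posterior $f(\theta|x) = g(x|\theta) h(\theta)/q(x)$, not just a uniform approximant. There is no serious obstacle: the only care needed is to verify that the boundedness hypotheses of the Watkins Representation Theorem apply to the given $g$ and $h$ and that $Q(x) > 0$ so the four centroids $c_{F,i}$ are well defined; both are routine given the assumptions already in force for doubly fuzzy Bayesian inference.
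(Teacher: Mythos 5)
Your proposal is correct and follows essentially the same route as the paper: both write $G$ and $H$ in their two-rule Watkins forms, apply the product-rule structure of Theorem~\ref{thm:fuzzypost-prodrules}(i) with $m_1=m_2=2$ to obtain the four rules of (\ref{eq:SAMposterior_Doubly}), and observe that exact representation forces $Q(x)=q(x)$ so the four-rule SAM represents the posterior exactly. The only cosmetic difference is that the paper also notes the denominator collapses because the Watkins if-part sets satisfy $\sum_j a_{g,j}=\sum_k a_{h,k}=1$, a simplification your argument does not need for the rule count.
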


\vspace{0.1in}

\begin{proof} 
Suppose a 2-rule fuzzy system $G(x|\theta)$ represents a likelihood $g(x|\theta)$ as in 
(\ref{eq:2-ruleG})-(\ref{eq:2-ruleG_ag2}).  The 2-rule fuzzy system $H(\theta)$ likewise represents the prior pdf $h(\theta)$:
\begin{align}
H(\theta) &= \sum_{k=1}^2 p_{h,k}(\theta)c_{h,k} ~=~ \sum_{k=1}^2 a_{h,k}(\theta)c_{h,k}.
\end{align}
The Watkins Representation Theorem implies that the if-part set functions have the form

\begin{align}
a_{h,1}(\theta) &= \frac{\displaystyle h(\theta)-\inf h(\theta)}{\displaystyle \sup h(\theta) - \inf h(\theta)}\\
a_{h,2}(\theta) &= a_{h,1}^c(\theta) ~=~ 1-a_{h,1}(\theta) \\
&= \frac{\displaystyle \sup h(\theta) - h(\theta)}{\displaystyle \sup h(\theta) - \inf h(\theta)}
\end{align}
with centroids $c_{h,1} = \sup h$ and  $c_{h,2} = \inf h$.  Then the SAM posterior $F(\theta|x)$ in (\ref{eq:SAMposterior_Doubly}) represents $f(\theta|x)$ with 4 rules: 
\begin{align}
F(\theta|x) 
&=
\frac{ \sum_{j=1}^{2}\sum_{k=1}^{2}\,a_{g,j}(x|\theta)\, a_{h,k}(\theta)\, 
\displaystyle \frac{c_{g,j}\,c_{h,k}}{Q(x)}}
{\sum_{j=1}^{2}\sum_{k=1}^{2}\,a_{g,j}(x|\theta)\, a_{h,k}(\theta)}
\label{eq:F(theta|x)_2-ruleg,h} \hspace{.2in} \\
&= 
\sum_{j=1}^{2}\sum_{k=1}^{2}\,a_{g,j}(x|\theta)\, a_{h,k}(\theta)\, 
\displaystyle \frac{c_{g,j}\,c_{h,k}}{q(x)} \\
&= \sum_{i=1}^{4}\,a_{F,i}(\theta)\,c_{F,i}
\end{align}
because $\sum a_{g,j}(x|\theta) = \sum a_{h,k}(\theta) = 1$ and $Q(x) = q(x)$ in (\ref{eq:F(theta|x)_2-ruleg,h}).
\end{proof}

\vspace{.1in}

Figure \ref{fig:doubleSAMbeta} shows the if-part sets $a_{h,k}(\theta)$ of the 2-rule SAM $H(\theta)$ that represents the beta prior $h(\theta) \sim \beta(9,9)$ and the if-part sets $a_{g,j}(\theta)$ of the 2-rule SAM $G(x|\theta)$ that represents the binomial likelihood $g(20|\theta) \sim bin(20,80)$. The resulting SAM posterior $F(\theta|20)$ that represents $f(\theta|20) \sim \beta(29,69)$ has four rules with if-part sets $a_{F,i}(\theta) = a_{g,j}(\theta)a_{h,k}(\theta)$. The next theorem gives the main result on the conjugacy structure of doubly and triply fuzzy systems.

\begin{figure}[ht!]
\centerline{
\includegraphics[width=\textwidth]{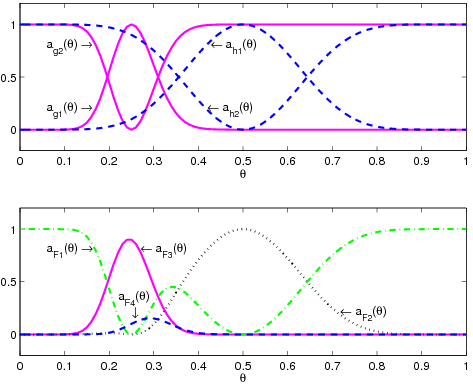}
}
\caption[Doubly fuzzy posterior representation]{Doubly fuzzy posterior representation. Top: Two if-part sets $a_{g,j}(\theta)$ of the two-rule SAM likelihood representation $G(x|\theta) = g(20|\theta) \sim bin(20,80)$ and two if-part sets $a_{h,k}(\theta)$ of the 2-rule SAM prior representation $H(x|\theta) = h(\theta) \sim \beta(9,9)$. Bottom: Four if-part sets $a_{F,i}(\theta) = a_{g,j}(\theta)a_{h,k}(\theta)$ of the 4-rule SAM posterior representation $F(\theta|x) = f(\theta|x)$.}
\label{fig:doubleSAMbeta}
\end{figure}

\vspace{0.2in}

\begin{thm}\label{thm:semi-conj}{\bf Semi-Conjugacy}\\
(i) The if-part sets of a doubly fuzzy posterior approximator are conjugate to the
if-part sets of the fuzzy prior approximator.
The product fuzzy if-part set functions $a_{F,i}(\theta)$ in Theorem \ref{thm:fuzzypost-prodrules}.i(b) have the same functional form as the if-part prior set functions $a_{h,k}$ if $a_{h,k}$ is conjugate to the if-part likelihood set function $a_{g,j}$. 

\noindent(ii) The if-part sets of a triply fuzzy posterior approximator are conjugate to the if-part sets of the fuzzy prior approximator. The product fuzzy if-part set functions $a_{F,i}(\theta)$ in Theorem \ref{thm:fuzzypost-prodrules}.ii(b) have the same functional form as the if-part prior set functions $a_{h,k}$ if $a_{h,k}$ is conjugate to the if-part likelihood set function $a_{g,j}$ and if-part likelihood set function $a_{\pi,l}$. 
\end{thm}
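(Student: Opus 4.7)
The plan is to reduce the theorem to an algebraic claim about pointwise products of parametrized set functions. First, I would invoke Theorem \ref{thm:fuzzypost-prodrules}, which already does the heavy lifting: it shows that in the doubly fuzzy case the $i$-th posterior if-part set factors as $a_{F,i}(\theta) = a_{g,j}(\theta)\,a_{h,k}(\theta)$ (for $i = m_2(j-1)+k$), and in the triply fuzzy case as $a_{F,i}(\theta,\tau) = a_{g,j}(\theta)\,a_{h,k}(\theta,\tau)\,a_{\pi,l}(\tau)$. So the claim reduces entirely to a statement about how the parametric \emph{form} of $a_{h,k}$ is preserved under multiplication by $a_{g,j}$ (and, in the triply fuzzy case, by $a_{\pi,l}$ as well).

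Next I would formalize what ``conjugate'' means at the level of if-part sets, in analogy with Bayesian conjugacy of pdfs. I would define a parametric family of set functions $\mathcal{A} = \{a(\cdot;\boldsymbol{\lambda}) : \boldsymbol{\lambda} \in \Lambda\}$ to be closed under a second family $\mathcal{B}$ if for every $a(\cdot;\boldsymbol{\lambda}) \in \mathcal{A}$ and every $b(\cdot;\boldsymbol{\mu}) \in \mathcal{B}$ there exists a $\boldsymbol{\lambda}'=\boldsymbol{\lambda}'(\boldsymbol{\lambda},\boldsymbol{\mu})\in\Lambda$ and a scalar $\kappa>0$ with $a(\theta;\boldsymbol{\lambda})\,b(\theta;\boldsymbol{\mu}) = \kappa\,a(\theta;\boldsymbol{\lambda}')$ for almost all $\theta$. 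This is the direct analogue of the Bayesian definition: a prior family is conjugate to a likelihood family if the product $h\cdot g$ stays in the prior family up to normalization. Under this definition, the hypothesis ``$a_{h,k}$ is conjugate to $a_{g,j}$'' immediately yields $a_{F,i}(\theta) = \kappa\,a_{h,k'}(\theta)$ for some updated parameter vector, which is exactly the claim in part (i). The normalization constant $\kappa$ can be absorbed into the then-part volume $V_{F,i}$ of the SAM posterior from Theorem \ref{thm:fuzzypost-prodrules} without changing the SAM's functional output, so conjugacy of if-part sets is stable under the posterior update.

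For part (ii), I would apply the argument twice. Assume $a_{h,k}(\theta,\tau)$ is conjugate in $\tau$ to $a_{\pi,l}(\tau)$ and conjugate in $\theta$ to $a_{g,j}(\theta)$. Then the partial product $a_{h,k}(\theta,\tau)\,a_{\pi,l}(\tau)$ is proportional to $a_{h,k'}(\theta,\tau)$, and a subsequent multiplication by $a_{g,j}(\theta)$ returns an element proportional to $a_{h,k''}(\theta,\tau)$. Since conjugacy is a product-closure property, the two applications compose cleanly. The triply fuzzy case is therefore a corollary of the doubly fuzzy case applied twice. I would then note that Corollaries \ref{cor:sconj-Gauss}--\ref{cor:sconj-Lapl} (together with Figure \ref{fig:productsets_conjugacy}) supply the concrete instances — Gaussian, beta, gamma, and Laplace if-part sets — where the product closure can be checked by direct calculation (exponents add for Gaussian, beta, and gamma sets; Laplace exponents add piecewise in $\theta$).

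The main obstacle is not the algebra but the \emph{definition}: the if-part sets $a_{g,j}$, $a_{h,k}$, $a_{\pi,l}$ are bounded membership functions rather than pdfs, so I must carefully justify that the Bayesian conjugacy analogy transfers. Two subtleties need explicit care. First, several of the families in the corollaries (beta, gamma, Laplace) are only \emph{semi}-conjugate because their supports need not coincide; thus the product $a_{g,j}\cdot a_{h,k}$ lies in the same family only on the intersection of supports, and I must spell out that outside the intersection the product vanishes (consistent with the family if one allows zero as a degenerate member). Second, because SAM outputs are invariant under common positive rescalings of $a_{F,i}$ and $V_{F,i}$, the multiplicative constant $\kappa$ arising from conjugacy does not affect $F(\theta|x)$ — this observation, which I would state as a short lemma, is what lets ``conjugacy up to normalization'' translate into exact preservation of the SAM structure.
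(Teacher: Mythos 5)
Your proposal is correct and follows essentially the same route as the paper: the paper's proof simply observes that the product $a_{F,i} = a_{g,j}\,a_{h,k}$ of two conjugate set functions $f(\alpha_1,\ldots,\alpha_n;\theta)$ and $f(\beta_1,\ldots,\beta_n;\theta)$ again has the form $f(\gamma_1,\ldots,\gamma_n;\theta)$ with $\gamma_l = g_l(\alpha,\beta)$ independent of $\theta$, and defers all concrete verification to Corollaries \ref{cor:sconj-Gauss}--\ref{cor:sconj-Lapl}. The only bookkeeping difference is that you absorb the multiplicative constant $\kappa$ into the then-part volume $V_{F,i}$, whereas the paper builds a scale parameter $\nu$ directly into each set-function family (e.g. $G(m,d,\nu;\theta)=\nu e^{-(\theta-m)^2/d^2}$); both conventions leave the SAM output unchanged.
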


\vspace{.2in}

\begin{proof}
The product $a_{F,i}(\theta)= a_{g,j}(\theta) a_{h,k}(\theta)$ of two conjugate functions $a_{g,j}$ and $a_{h,k}$ will still have the same functional form as $a_{g,j}(\theta)$ and $a_{h,k}(\theta)$.  Then the $n$ parameters $\alpha_1,\ldots, \alpha_n$ define the if-part likelihood set function: $a_{g,j}(\theta) = f(\alpha_1,\ldots,\alpha_n;\theta)$. The $n$ parameters $\beta_1,\ldots,\beta_n$ likewise define the if-part prior set function $a_{h,k}(\theta)$ with the same functional form: $a_{h,k}(\theta) = f(\beta_1,\ldots,\beta_n;\theta)$.  {Then $a_{F,i}(\theta)$ also has the same functional form $f$ given the $n$ parameters $\gamma_1,\ldots,\gamma_n$: $a_{F,i}(\theta) = f(\gamma_1,\ldots,\gamma_n;\theta)$ where $\gamma_l = g_l(\alpha_1,\ldots,\alpha_n,\beta_1,\ldots,\beta_n)$ for $l = 1\ldots,n$ for some functions $g_1,\ldots,g_n$ that do not depend on $\theta$.}  
\end{proof}

\vspace{0.2in}

\begin{cor}{\bf Conjugacy of Gaussian if-part sets.} \label{cor:sconj-Gauss}\\
{\bf (i)} Doubly fuzzy case.

Suppose that the SAM-based prior $H(\theta)$ uses Gaussian if-part sets 
$a_{h,k}(\theta) = G(m_{h,k},d_{h,k},\nu_{h,k};\theta)$ 
and the SAM-based likelihood $G(x|\theta)$ also uses Gaussian if-part sets 
$a_{g,j}(\theta) = G(m_{g,j},d_{g,j},\nu_{g,j};\theta)$ where
\begin{align}
G(m,d,\nu;\theta) = \nu e^{-(\theta-m)^2/d^2}
\label{eq:gaussiansets}
\end{align}
for some positive constant $\nu > 0$.
Then $F(\theta|x)$ in (\ref{eq:SAMposterior_Doubly}) will have set functions $a_{F,i}(\theta)$ that are also Gaussian:
\begin{align}
a_{F,i}(\theta) = G(m_{F,i},d_{F,i},\nu_{F,i};\theta) \;.
\end{align}

\vspace{.1in}

\noindent {\bf (ii)} Triply fuzzy case.

Suppose that the SAM-based prior $H(\theta,\tau)$ uses factorable (product) Gaussian if-part sets 
\begin{equation}
a_{h\theta k}(\theta,\tau) = G(m_{h\theta k},d_{h\theta k},\nu_{h\theta k};\theta) G(m_{h\tau k},d_{h\tau k},\nu_{h\tau k};\tau)\;,\end{equation}
the SAM-based likelihood $G(x|\theta)$  uses Gaussian if-part sets \begin{equation}
a_{g\theta j}(\theta) = G(m_{g\theta j},d_{g\theta j},\nu_{g\theta j};\theta)\;,
\end{equation}
and the SAM-based hyper-prior $\Pi(\tau)$ also uses Gaussian if-part sets \begin{equation}a_{h\tau l}(\tau) = G(m_{h\tau l},d_{h\tau l},\nu_{h\tau l};\tau)\;.\end{equation}
Then $F(\theta,\tau|x)$ in (\ref{eq:SAMposterior_Triply}) will have set functions $a_{F,i}(\theta,\tau)$ that are products of two Gaussian sets:
\begin{align}
a_{F,i}(\theta,\tau) &= G(m_{F\theta i},d_{F\theta i},\nu_{F\theta i};\theta) G(m_{F\tau i},d_{F\tau i},\nu_{F\tau i};\tau)
\end{align}
\end{cor}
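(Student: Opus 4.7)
The proof is essentially an algebraic verification that a product of Gaussian bump functions is again a Gaussian bump function, obtained by completing the square. I would organize the argument in two stages mirroring the two parts of the corollary.

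For part (i), I would first write out the product of if-part set functions appearing in Theorem~\ref{thm:fuzzypost-prodrules}.i(b) explicitly as
\begin{equation*}
a_{F,i}(\theta) \;=\; a_{g,j}(\theta)\, a_{h,k}(\theta) \;=\; \nu_{g,j}\nu_{h,k}\,\exp\!\left[-\frac{(\theta-m_{g,j})^2}{d_{g,j}^2}-\frac{(\theta-m_{h,k})^2}{d_{h,k}^2}\right].
\end{equation*}
The core step is then to collect the quadratic, linear, and constant terms in $\theta$ inside the exponent and complete the square. Setting $d_{F,i}^{-2}=d_{g,j}^{-2}+d_{h,k}^{-2}$ and $m_{F,i}=d_{F,i}^2\bigl(m_{g,j}/d_{g,j}^2+m_{h,k}/d_{h,k}^2\bigr)$ absorbs the quadratic and linear parts into a single term $-(\theta-m_{F,i})^2/d_{F,i}^2$, and the leftover $\theta$-independent constant combines with $\nu_{g,j}\nu_{h,k}$ to give the new amplitude
\begin{equation*}
\nu_{F,i} \;=\; \nu_{g,j}\nu_{h,k}\,\exp\!\left[-\frac{(m_{g,j}-m_{h,k})^2}{d_{g,j}^2+d_{h,k}^2}\right].
\end{equation*}
This exhibits $a_{F,i}$ in the required Gaussian form $G(m_{F,i},d_{F,i},\nu_{F,i};\theta)$, which is just the Gaussian-conjugacy instance of Theorem~\ref{thm:semi-conj}.i.

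For part (ii), I would exploit the factorable structure of the hyperprior if-part set $a_{h\theta k}(\theta,\tau)$. Writing the triply fuzzy product from Theorem~\ref{thm:fuzzypost-prodrules}.ii(b) as
\begin{equation*}
a_{F,i}(\theta,\tau) \;=\; \bigl[a_{g\theta j}(\theta)\, G(m_{h\theta k},d_{h\theta k},\nu_{h\theta k};\theta)\bigr]\cdot \bigl[G(m_{h\tau k},d_{h\tau k},\nu_{h\tau k};\tau)\, a_{h\tau l}(\tau)\bigr],
\end{equation*}
each bracket is a product of two Gaussians in a single variable ($\theta$ or $\tau$) of exactly the form handled in part (i). Applying the completing-the-square identity from (i) separately to each bracket produces a single Gaussian in $\theta$ with parameters $(m_{F\theta i},d_{F\theta i},\nu_{F\theta i})$ and a single Gaussian in $\tau$ with parameters $(m_{F\tau i},d_{F\tau i},\nu_{F\tau i})$, which is the claimed product form.

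No serious obstacle arises: the proof is purely computational and rests entirely on the fact that Gaussians form a multiplicatively closed family up to a normalizing constant. The only minor bookkeeping point is to check that the amplitude prefactor $\nu_{F,i}$ absorbs the residual quadratic constant produced by completing the square; this is where the formula for the new amplitude (with its $\exp\bigl[-(m_{g,j}-m_{h,k})^2/(d_{g,j}^2+d_{h,k}^2)\bigr]$ factor) enters. Since the corollary only asserts that $a_{F,i}$ is Gaussian and does not demand that $\nu_{F,i}$ remain bounded by $1$, no additional constraint on the input parameters is needed.
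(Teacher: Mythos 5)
Your proposal is correct and follows essentially the same route as the paper: write $a_{F,i}$ as the explicit product of the Gaussian set functions, complete the square in the exponent to obtain the new location $m_{F,i}$, dispersion $d_{F,i}$, and amplitude $\nu_{F,i}$ (your formulas agree exactly with the paper's), and in the triply fuzzy case apply the same one-variable identity separately to the $\theta$-factors and the $\tau$-factors. The only difference is that you spell out the completing-the-square bookkeeping that the paper leaves implicit.
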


\begin{proof}
Gaussian if-part sets are self-conjugate because of their exponential structure.

\noindent{\bf (i)} Doubly fuzzy case.

\begin{align}
a_{F,i}(\theta) &= a_{g,j}(\theta)\, a_{h,k}(\theta) \\
&= \nu_{F,i} e^{-(\theta-m_{F,i})^2/d_{F,i}^2} \\
&= G(m_{F,i},d_{F,i},\nu_{F,i};\theta)
\end{align}
where
\begin{align}
m_{F,i} &= \frac{d_{g,j}^2m_{h,k}+d_{h,k}^2m_{g,j}}{d_{g,j}^2+d_{h,k}^2}\\
d_{F,i}^2 &= \frac{d_{g,j}^2d_{h,k}^2}{d_{g,j}^2+d_{h,k}^2}\\
\nu_{F,i} &= \nu_{h,k}\nu_{g,j}\exp\{-\frac{(m_{h,k}-m_{g,j})^2}{d_{g,j}^2+d_{h,k}^2}\}. 
\end{align}
for $j=1,\ldots,m_1$, $k=1,\ldots,m_2$, and $i = m_2(j-1)+k$.

\vspace{.1in}

\noindent {\bf (ii)} Triply fuzzy case.
\begin{align}
a_{F,i}(\theta,\tau) &= a_{g,j}(\theta)a_{h,k}(\theta,\tau) a_{\pi,l}(\tau) \\
&= 
\nu_{F\theta i} e^{-(\theta-m_{F\theta i})^2/d_{F\theta i}^2}  
\nu_{F\tau i} e^{-(\tau-m_{F\tau i})^2/d_{F\tau i}^2}\\
&= 
G(m_{F\theta i},d_{F\theta i},\nu_{F\theta i};\theta) 
G(m_{F\tau i},d_{F\tau i},\nu_{F\tau i};\tau)
\end{align}
where
\begin{align}
m_{F\theta i} &= \frac{d_{g\theta j}^2m_{h\theta k}+d_{h\theta k}^2m_{g\theta j}}{d_{g\theta j}^2+d_{h\theta k}^2}
\\
d_{F\theta i}^2 &= \frac{d_{g\theta j}^2d_{h\theta k}^2}{d_{g\theta j}^2+d_{h\theta k}^2}
\\
\nu_{F\theta i} &= 
\nu_{h\theta k}\nu_{g\theta j}\exp\{-\frac{(m_{h\theta k}-m_{g\theta j})^2}{d_{g\theta j}^2+d_{h\theta k}^2}\}
\\
m_{F\tau i} &= \frac{d_{\pi\tau l}^2m_{h\tau k}+d_{h\tau k}^2m_{\pi\tau l}}{d_{\pi\tau l}^2+d_{h\tau k}^2}
\\
d_{F\tau i}^2 &= \frac{d_{\pi\tau l}^2d_{h\tau k}^2}{d_{\pi\tau l}^2+d_{h\tau k}^2}\\
\nu_{F\tau i} &= 
\nu_{h\tau k}\nu_{\pi\tau l}\exp\{-\frac{(m_{h\tau k}-m_{\pi\tau l})^2}{d_{\pi\tau l}^2+d_{h\tau k}^2}\}.
\end{align}
for $j=1,\ldots,m_1$, $k=1,\ldots,m_2$, $l=1,\ldots,m_3$, and $i = l+m_3(k-1) + m_2m_3(j-1)$.
\end{proof}


\vspace{.1in}

Corollary \ref{cor:sconj-Gauss} also shows that if the fuzzy approximator $H(\theta,\tau)$ uses product if part set functions $a_h(\theta,\tau) = a_{h\theta}(\theta) a_{h\tau}(\tau)$  then the fuzzy posterior $F(\theta,\tau|x)$ also has product if-part sets $a_F(\theta,\tau) = a_{F\theta}(\theta) a_{F\tau}(\tau)$. This holds for higher dimension fuzzy approximators for Bayesian inference. Thus the corollaries below only state the results for doubly fuzzy cases.

\vspace{.2in}

\begin{cor}{\bf Semi-conjugacy of beta if-part sets.}\label{cor:sconj-beta}\\
Suppose that the SAM-based prior $H(\theta)$ uses beta (or binomial) if-part sets 
$a_{h,k}(\theta) =$
$B(m_{h,k},d_{h,k},\alpha_{h,k},\beta_{h,k},\nu_{h,k};\theta)$ and the SAM-based likelihood $G(x|\theta)$ also uses beta (or binomial) if-part sets 
$a_{g,j}(\theta) = B(m_{g,j},d_{g,j},\alpha_{g,j},\beta_{g,j},\nu_{g,j};\theta)$ where
\begin{align}
B(m_{},d_{},\alpha_{},\beta_{},\nu;\theta) = \nu \Big(\frac{\theta-m_{}}{d_{}}\Big)^{\alpha_{}}
\Big(1 - (\frac{\theta-m_{}}{d_{}})\Big)^{\beta_{}}
\label{eq:betasets}
\end{align}
if $0 < \frac{\theta-m}{d} < 1$ and for some constant $\nu > 0$.
Then the posterior $F(\theta|x)$ in (\ref{eq:SAMposterior_Doubly}) will have if-part set functions $a_{F,i}(\theta)$ of semi-beta form:
\begin{align}
a_{F,i}(\theta) = \nu_{F,i} \Big(\frac{\theta-m_{h,k}}{d_{h,k}}\Big)^{\alpha_{h,k}+\alpha_{g,j}\lambda_{jk}(\theta)}\, \Big(1 - (\frac{\theta-m_{h,k}}{d_{h,k}})\Big)^{\beta_{h,k}+\beta_{g,j}\gamma_{jk}(\theta)}
\end{align}
\end{cor}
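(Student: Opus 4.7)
The plan is to proceed in direct analogy with the Gaussian case in Corollary \ref{cor:sconj-Gauss}, but to absorb the mismatch between the two beta supports by using \emph{multiplicative} exponent-adjusters rather than by merging parameters algebraically. I would first expand the product $a_{F,i}(\theta) = a_{g,j}(\theta)\,a_{h,k}(\theta)$ using (\ref{eq:betasets}) to obtain
\begin{equation*}
a_{F,i}(\theta) \;=\; \nu_{g,j}\nu_{h,k}\,\Bigl(\tfrac{\theta-m_{g,j}}{d_{g,j}}\Bigr)^{\alpha_{g,j}}\Bigl(1-\tfrac{\theta-m_{g,j}}{d_{g,j}}\Bigr)^{\beta_{g,j}}\Bigl(\tfrac{\theta-m_{h,k}}{d_{h,k}}\Bigr)^{\alpha_{h,k}}\Bigl(1-\tfrac{\theta-m_{h,k}}{d_{h,k}}\Bigr)^{\beta_{h,k}}.
\end{equation*}
The issue that prevents an exact beta output (unlike the Gaussian case) is that the two normalized arguments $(\theta-m_{g,j})/d_{g,j}$ and $(\theta-m_{h,k})/d_{h,k}$ differ, so their powers do not collapse into a single power under simple addition of exponents. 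This is precisely where conjugacy degrades to \emph{semi}-conjugacy.

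The key step is to re-express the likelihood-side factors in terms of the prior-side argument. On the overlap of the two supports (which is non-empty since otherwise $a_{F,i}\equiv 0$ and the claim is trivial), I would write
\begin{equation*}
\Bigl(\tfrac{\theta-m_{g,j}}{d_{g,j}}\Bigr)^{\alpha_{g,j}} \;=\; \Bigl(\tfrac{\theta-m_{h,k}}{d_{h,k}}\Bigr)^{\alpha_{g,j}\lambda_{jk}(\theta)},
\qquad
\Bigl(1-\tfrac{\theta-m_{g,j}}{d_{g,j}}\Bigr)^{\beta_{g,j}} \;=\; \Bigl(1-\tfrac{\theta-m_{h,k}}{d_{h,k}}\Bigr)^{\beta_{g,j}\gamma_{jk}(\theta)},
\end{equation*}
and solve for the exponent-adjusters by taking logarithms. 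This forces
\begin{equation*}
\lambda_{jk}(\theta) \;=\; \frac{\ln\bigl((\theta-m_{g,j})/d_{g,j}\bigr)}{\ln\bigl((\theta-m_{h,k})/d_{h,k}\bigr)}, \qquad \gamma_{jk}(\theta) \;=\; \frac{\ln\bigl(1-(\theta-m_{g,j})/d_{g,j}\bigr)}{\ln\bigl(1-(\theta-m_{h,k})/d_{h,k}\bigr)},
\end{equation*}
both well-defined on the interior of the overlap. Substituting back and combining like bases then gives
\begin{equation*}
a_{F,i}(\theta) \;=\; \nu_{F,i}\,\Bigl(\tfrac{\theta-m_{h,k}}{d_{h,k}}\Bigr)^{\alpha_{h,k}+\alpha_{g,j}\lambda_{jk}(\theta)}\Bigl(1-\tfrac{\theta-m_{h,k}}{d_{h,k}}\Bigr)^{\beta_{h,k}+\beta_{g,j}\gamma_{jk}(\theta)}
\end{equation*}
with the single constant $\nu_{F,i} = \nu_{g,j}\nu_{h,k}$, matching the stated form exactly. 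Identical bookkeeping can be run with the roles of $g$ and $h$ swapped, so either support may serve as the anchor.

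The main obstacle, and the reason this is only \emph{semi}-conjugacy, is that $\lambda_{jk}$ and $\gamma_{jk}$ carry a residual $\theta$-dependence that cannot be removed unless $(m_{g,j},d_{g,j})=(m_{h,k},d_{h,k})$, in which case $\lambda_{jk}\equiv\gamma_{jk}\equiv 1$ and the result collapses to a true beta with summed shape parameters. I would close by noting the boundary behaviour: the denominators in $\lambda_{jk}$ and $\gamma_{jk}$ vanish where $a_{h,k}$ touches $0$ or $1$, but at exactly those points $a_{F,i}$ is forced to $0$ by the prior factor, so the representation extends continuously to the closed overlap by the usual convention $0^{\,c}=0$ for $c>0$. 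The analogous argument for gamma (Corollary \ref{cor:sconj-gamma}) and Laplace (Corollary \ref{cor:sconj-Lapl}) if-part sets will follow the same template and can be cited as the prototype.
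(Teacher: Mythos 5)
Your proposal is correct and follows essentially the same route as the paper: expand the product of the two beta set functions, then rewrite the likelihood-side factors with the prior-side base by introducing $\theta$-dependent exponent adjusters, which are exactly the paper's $\lambda_{jk}(\theta)=\log_{(\frac{\theta-m_{h,k}}{d_{h,k}})}\bigl(\frac{\theta-m_{g,j}}{d_{g,j}}\bigr)$ and $\gamma_{jk}(\theta)=\log_{(1-\frac{\theta-m_{h,k}}{d_{h,k}})}\bigl(1-\frac{\theta-m_{g,j}}{d_{g,j}}\bigr)$ written via the change-of-base formula. Your added remarks on the overlap of supports, the collapse to true conjugacy when $(m_{g,j},d_{g,j})=(m_{h,k},d_{h,k})$, and the boundary behaviour are consistent with the paper's treatment.
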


\begin{proof}
\begin{align}
\lefteqn{a_{F,i}(\theta) 
~=~ a_{g,j}(\theta)\, a_{h,k}(\theta)} \\
&= \nu_{F,i} \Big(\frac{\theta-m_{h,k}}{d_{h,k}}\Big)^{\alpha_{h,k}}
\Big(1 - (\frac{\theta-m_{h,k}}{d_{h,k}})\Big)^{\beta_{h,k}}
\Big(\frac{\theta-m_{g,j}}{d_{g,j}}\Big)^{\alpha_{g,j}} 
\Big(1 - (\frac{\theta-m_{g,j}}{d_{g,j}})\Big)^{\beta_{g,j}} \\
&= \nu_{F,i} \Big(\frac{\theta-m_{h,k}}{d_{h,k}}\Big)^{\alpha_{h,k}+\alpha_{g,j}\lambda_{jk}(\theta)}\,
\Big(1 - (\frac{\theta-m_{h,k}}{d_{h,k}})\Big)^{\beta_{h,k}+\beta_{g,j}\gamma_{jk}(\theta)}
\label{eq:fuzzybeta-binomial1}
\end{align}
if $0 < \frac{\theta-m_{h,k}}{d_{h,k}} < 1$ and $0 < \frac{\theta-m_{g,j}}{d_{g,j}} < 1$ or if $\theta \in (m_{h,k}, m_{h,k} + d_{h,k} ) \cap (m_{g,j}, m_{g,j}+d_{g,j} )$ where
\begin{align}
\lambda_{jk}(\theta) &= 
\log_{(\frac{\theta-m_{h,k}}{d_{h,k}})}(\frac{\theta-m_{g,j}}{d_{g,j}}) \\
\gamma_{jk}(\theta) &=
\log_{(1-\frac{\theta-m_{h,k}}{d_{h,k}})}(1-\frac{\theta-m_{g,j}}{d_{g,j}}).
\end{align}
\end{proof}

\vspace{.1in}

A special case occurs if $m_{h,k} = m_{g,j}$ and $d_{h,k} = d_{g,j}$.  Then $a_{F,i}$ has the beta conjugate form:
\begin{align}
a_{F,i}(\theta) &= 
\nu_{F,i} \Big(\frac{\theta-m_{h,k}}{d_{h,k}}\Big)^{\alpha_{F,i}}
\Big(1 - (\frac{\theta-m_{h,k}}{d_{h,k}})\Big)^{\beta_{F,i}} \\
&=  B(m_{h,k},d_{h,k},\alpha_{F,i},\beta_{F,i},\nu_{F,i};\theta)
\end{align}
if $0 < \frac{\theta-m_{h,k}}{d_{h,k}} < 1$.  Here $\alpha_{F,i} = \alpha_{h,k}+\alpha_{g,j}$, 
$\beta_{F,i} = \beta_{h,k}+\beta_{g,j}$, and $\nu_{F,i} = \nu_{h,k}\nu_{g,j}$.

The if-part fuzzy sets of the posterior approximation in (\ref{eq:fuzzybeta-binomial1}) have beta-like form but with exponents that also depend on $\theta$.  Suppose we repeat the  updating of the prior-posterior. Then the final posterior will still have the beta-like if-part sets of the form
\begin{align}
a_{F,s}(\theta)
&= \nu_{F,s} \Big(\frac{\theta-m_{h,k}}{d_{h,k}}\Big)^{\alpha_{h,k}+\sum_i \alpha_{g,i}\lambda_{ik}(\theta)}
\Big(1 - (\frac{\theta-m_{h,k}}{d_{h,k}})\Big)^{\beta_{h,k}+\sum_i\beta_{g,i}\gamma_{ik}(\theta)}
\end{align}
for $\theta \in D = \cap_i (m_{g,i}, m_{g,i} + d_{g,i}) \cap (m_{h,k}, m_{h,k} + d_{h,k} )$.

\vspace{10pt}

\begin{cor}{\bf Semi-conjugacy of gamma if-part sets.}\label{cor:sconj-Gamma}\\
Suppose that the SAM-based prior $H(\theta)$ uses gamma (or Poisson) if-part sets 
$a_{h,k}(\theta) = G(m_{h,k},d_{h,k},\alpha_{h,k},\beta_{h,k},\nu_{h,k};\theta)$ and the SAM-based likelihood $G(x|\theta)$ also uses gamma (or Poisson) if-part sets 
$a_{g,j}(\theta) = G(m_{g,j},d_{g,j},\alpha_{g,j},\beta_{g,j},\nu_{g,j};\theta)$ where
\begin{align}
G(m_{},d_{},\alpha_{},\beta_{},\nu_{};\theta) = \nu \Big(\frac{\theta-m_{}}{d_{}}\Big)^{\alpha_{}}
e^{-(\frac{\theta-m_{}}{d_{}})/\beta_{}}
\label{eq:gammasets}
\end{align}
if $ \frac{\theta-m_{}}{d_{}} > 0$ (or if $\theta > m_{}$) for some constant $\nu > 0$. Then the posterior $F(\theta|x)$ in (\ref{eq:SAMposterior_Doubly}) will have set functions $a_{F,i}(\theta)$ of semi-gamma form
\begin{multline}
a_{F,i}(\theta) = \nu_{F,i}  
\Big(\frac{\theta-m_{h,k}}{d_{h,k}}\Big)^{\alpha_{h,k}+\alpha_{g,j}\log_{(\frac{\theta-m_{h,k}}{d_{h,k}})}(\frac{\theta-m_{g,j}}{d_{g,j}})} \times \\
e^{-(\theta-\frac{\beta_{g,j}d_{g,j} m_{h,k}  + \beta_{h,k}d_{h,k} m_{g,j}} {\beta_{g,j}d_{g,j}+\beta_{h,k} d_{h,k}} )/ \frac{\beta_{g,j}\beta_{h,k} d_{g,j}d_{h,k} } {\beta_{g,j}d_{g,j}+\beta_{h,k} d_{h,k}}}
\end{multline}
\end{cor}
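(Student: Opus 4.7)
The plan is to mirror the structure of the proofs of Corollaries~\ref{cor:sconj-Gauss} and~\ref{cor:sconj-beta}: form the product $a_{F,i}(\theta) = a_{g,j}(\theta)\,a_{h,k}(\theta)$ explicitly, split it into a power-law factor and an exponential factor, and then manipulate each factor separately so that the whole product has the advertised semi-gamma shape with base point $m_{h,k}$ and scale $d_{h,k}$.

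First I would write out
\begin{align}
a_{F,i}(\theta) &= \nu_{h,k}\nu_{g,j}\,\Bigl(\tfrac{\theta-m_{h,k}}{d_{h,k}}\Bigr)^{\alpha_{h,k}}\Bigl(\tfrac{\theta-m_{g,j}}{d_{g,j}}\Bigr)^{\alpha_{g,j}}\,\exp\!\Bigl\{-\tfrac{1}{\beta_{h,k}}\tfrac{\theta-m_{h,k}}{d_{h,k}}-\tfrac{1}{\beta_{g,j}}\tfrac{\theta-m_{g,j}}{d_{g,j}}\Bigr\}
\end{align}
on the intersection of the two supports $\{\theta>m_{h,k}\}\cap\{\theta>m_{g,j}\}$. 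For the power-law factor I would invoke the change-of-base identity $u^{a} = v^{a\log_v u}$ with $u = (\theta-m_{g,j})/d_{g,j}$ and $v = (\theta-m_{h,k})/d_{h,k}$, which collapses the two distinct bases into the single base $(\theta-m_{h,k})/d_{h,k}$ and produces the combined exponent $\alpha_{h,k}+\alpha_{g,j}\log_{((\theta-m_{h,k})/d_{h,k})}((\theta-m_{g,j})/d_{g,j})$ that appears in the statement. This is the same trick used in Corollary~\ref{cor:sconj-beta}, so the algebra is essentially a transcription with $\beta$-exponents replaced by gamma exponentials.

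The main computation is the exponential factor. Expanding and collecting like terms in $\theta$ gives
\begin{align}
-\tfrac{1}{\beta_{h,k}d_{h,k}}(\theta-m_{h,k})-\tfrac{1}{\beta_{g,j}d_{g,j}}(\theta-m_{g,j}) &= -\tfrac{\beta_{g,j}d_{g,j}+\beta_{h,k}d_{h,k}}{\beta_{h,k}\beta_{g,j}d_{h,k}d_{g,j}}\,\theta + C_{jk},
\end{align}
where $C_{jk}$ is a $\theta$-independent constant. Completing the linear form so that it reads $-(\theta-m_{F,i})/\beta_{F,i}$ identifies
\begin{align}
\beta_{F,i} &= \tfrac{\beta_{g,j}\beta_{h,k}d_{g,j}d_{h,k}}{\beta_{g,j}d_{g,j}+\beta_{h,k}d_{h,k}}, &
m_{F,i} &= \tfrac{\beta_{g,j}d_{g,j}m_{h,k}+\beta_{h,k}d_{h,k}m_{g,j}}{\beta_{g,j}d_{g,j}+\beta_{h,k}d_{h,k}},
\end{align}
which are precisely the location and scale parameters appearing in the corollary.

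The final step is bookkeeping: the residual $\theta$-independent constant from completing the form, together with $\nu_{h,k}\nu_{g,j}$, is absorbed into the new normalizer $\nu_{F,i}$, yielding the claimed expression. There is no real obstacle here; the only subtlety is tracking the domain, since the identity $x^a=v^{a\log_v x}$ requires both bases to be positive, which holds exactly on the intersection of the two supports. Combining the three factors completes the proof, and a special case check (taking $m_{h,k}=m_{g,j}$ and $d_{h,k}=d_{g,j}$) recovers a genuine gamma set function with $\alpha_{F,i}=\alpha_{h,k}+\alpha_{g,j}$ and $1/\beta_{F,i}=1/\beta_{h,k}+1/\beta_{g,j}$, which is reassuring and parallels the degenerate case noted after Corollary~\ref{cor:sconj-beta}.
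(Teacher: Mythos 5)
Your proposal is correct and follows essentially the same route as the paper's proof: form the product $a_{F,i}=a_{g,j}a_{h,k}$, collapse the two power-law bases via the change-of-base identity (exactly as in the beta case), and combine the two exponential rates into the single linear form $-(\theta-m_{F,i})/\beta_{F,i}$ with the stated weighted-average location and harmonic-type scale, valid on $\theta>\max\{m_{h,k},m_{g,j}\}$. The only cosmetic difference is that the completion of the linear form is exact, so the residual constant you mention vanishes and $\nu_{F,i}=\nu_{h,k}\nu_{g,j}$.
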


\begin{proof}
\begin{align}
\lefteqn{a_{F,i}(\theta) ~=~ a_{g,j}(\theta)\, a_{h,k}(\theta)} \\
&= \nu_{F,i} 
\Big(\frac{\theta-m_{h,k}}{d_{h,k}}\Big)^{\alpha_{h,k}}\,
e^{-(\frac{\theta-m_{h,k}}{d_{h,k}})/\beta_{h,k}} 
\Big(\frac{\theta-m_{g,j}}{d_{g,j}}\Big)^{\alpha_{g,j}}
e^{-(\frac{\theta-m_{g,j}}{d_{g,j}})/\beta_{g,j}}~~~
\\
&= \nu_{F,i}  
\Big(\frac{\theta-m_{h,k}}{d_{h,k}}\Big)^{\alpha_{h,k}+\alpha_{g,j}\log_{(\frac{\theta-m_{h,k}}{d_{h,k}})}(\frac{\theta-m_{g,j}}{d_{g,j}})}
e^{-(\frac{\theta-m_{h,k}}{d_{h,k}})/\beta_{h,k} -(\frac{\theta-m_{g,j}}{d_{g,j}})/\beta_{g,j}}
\\
&= \nu_{F,i}  
\Big(\frac{\theta-m_{h,k}}{d_{h,k}}\Big)^{\alpha_{h,k}+\alpha_{g,j}\log_{(\frac{\theta-m_{h,k}}{d_{h,k}})}(\frac{\theta-m_{g,j}}{d_{g,j}})} 
\times \nonumber \\ & 
e^{-(\theta-\frac{\beta_{g,j}d_{g,j} m_{h,k}  + \beta_{h,k}d_{h,k} m_{g,j}} {\beta_{g,j}d_{g,j}+\beta_{h,k} d_{h,k}} )/ \frac{\beta_{g,j}\beta_{h,k} d_{g,j}d_{h,k} } {\beta_{g,j}d_{g,j}+\beta_{h,k} d_{h,k}}}~~~~~~
\label{eq:fuzzygamma-poisson1}
\end{align}
if $\theta > m_{h,k}$ and $\theta > m_{g,j}$ (or $\theta > \max\{m_{h,k},m_{g,j}\}$). 
\end{proof}

\vspace{.1in}

A special case occurs if $m_{h,k} = m_{g,j}$ and $d_{h,k} = d_{g,j}$. Then $a_{F,i}$ has gamma form
\begin{align}
a_{F,i}(\theta) &= \nu_{F,i} \Big(\frac{\theta-m_{h,k}}{d_{h,k}}\Big)^{\alpha_{F,i}} e^{-(\frac{\theta-m_{h,k}}{d_{h,k}})/\beta_{F,i}} \nonumber \\
&= G(m_{h,k},d_{h,k},\alpha_{F,i},\beta_{F,i},\nu_{F,i};\theta)
\end{align}
if $\theta > m_{h,k}$. 
Here $\alpha_{F,i} = \alpha_{h,k}+\alpha_{g,j}$, 
$\beta_{F,i} = \frac{\beta_{g,j}\beta_{h,k}}{\beta_{g,j}+\beta_{h,k}}$, and $\nu_{F,i} = \nu_{h,k}\nu_{g,j}$.

\vspace{8pt}

\begin{cor}{\bf Semi-conjugacy of Laplace if-part sets.} \label{cor:sconj-Lapl} \\
Suppose that the SAM-based prior $H(\theta)$  uses Laplace if-part sets $a_{h,k}(\theta) = L(m_{h,k},d_{h,k};\theta)$ and the SAM-based likelihood $G(x|\theta)$ also uses Laplace if-part sets $a_{g,j}(\theta) = L(m_{g,j},d_{g,j};\theta)$ where
\begin{align}
L(m_{},d_{};\theta) &= e^{-|\frac{\theta-m_{}}{d_{}}|}.
\label{eq:laplacesets}
\end{align}
Then $F(\theta|x)$ in (\ref{eq:SAMposterior_Doubly}) will have set functions $a_{F,i}(\theta)$ of the (semi-Laplace) form
\begin{align}
a_{F,i}(\theta) &= a_{g,j}(\theta)\, a_{h,k}(\theta) 
~=~ e^{-|\frac{\theta-m_{h,k}}{d_{h,k}}| - |\frac{\theta-m_{g,j}}{d_{g,j}}|}. 
\end{align}
\end{cor}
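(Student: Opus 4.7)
My plan is to reduce the claim to a direct one-line application of Theorem \ref{thm:fuzzypost-prodrules}.i(b), which already establishes that the if-part set functions of the doubly fuzzy posterior approximator factor as the pointwise product $a_{F,i}(\theta) = a_{g,j}(\theta)\, a_{h,k}(\theta)$ with $i = m_2(j-1) + k$. Once this structural fact is in hand, the corollary is purely a matter of substituting the Laplace set-function form (\ref{eq:laplacesets}) and using that $e^{u}e^{v} = e^{u+v}$.

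Concretely, I would first quote Theorem \ref{thm:fuzzypost-prodrules}.i(b) to obtain $a_{F,i}(\theta) = a_{g,j}(\theta)\, a_{h,k}(\theta)$. Then I would substitute $a_{g,j}(\theta) = L(m_{g,j},d_{g,j};\theta) = \exp\!\bigl(-|(\theta-m_{g,j})/d_{g,j}|\bigr)$ and $a_{h,k}(\theta) = L(m_{h,k},d_{h,k};\theta) = \exp\!\bigl(-|(\theta-m_{h,k})/d_{h,k}|\bigr)$, and combine the exponents to write
\begin{equation}
a_{F,i}(\theta) \;=\; \exp\!\left( -\left|\frac{\theta-m_{h,k}}{d_{h,k}}\right| \;-\; \left|\frac{\theta-m_{g,j}}{d_{g,j}}\right| \right),
\end{equation}
which is exactly the semi-Laplace form claimed in the corollary.

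There is no real obstacle: unlike the Gaussian case in Corollary \ref{cor:sconj-Gauss} where completing the square collapses the product back into a single Gaussian, the absolute-value terms in the Laplace exponent cannot be merged into a single term of the form $|(\theta - m_{F,i})/d_{F,i}|$ in general, because $|u| + |v|$ is only piecewise linear in $\theta$ with a break at each of $m_{h,k}$ and $m_{g,j}$. This is precisely why the result is labeled \emph{semi}-conjugacy rather than full conjugacy: the functional family (exponential of a sum of scaled absolute deviations) is preserved, but the two-parameter Laplace family is not closed under multiplication unless $m_{h,k} = m_{g,j}$, in which case one recovers a genuine Laplace set function with dispersion $d_{F,i} = d_{h,k}d_{g,j}/(d_{h,k}+d_{g,j})$. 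I would note this special case as a remark after the proof, paralleling the special cases flagged in Corollaries \ref{cor:sconj-beta} and \ref{cor:sconj-Gamma}.
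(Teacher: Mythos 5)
Your proposal is correct and matches the paper's own (essentially one-line) argument: the paper likewise treats the corollary as an immediate consequence of the product rule $a_{F,i}(\theta) = a_{g,j}(\theta)\,a_{h,k}(\theta)$ from Theorem \ref{thm:fuzzypost-prodrules}.i(b) followed by combining the exponents. Your closing remark is in fact slightly sharper than the paper's, which only records the special case $m_{h,k}=m_{g,j}$ \emph{and} $d_{h,k}=d_{g,j}$ (yielding dispersion $d_{h,k}/2$), whereas your harmonic-mean dispersion $d_{h,k}d_{g,j}/(d_{h,k}+d_{g,j})$ covers equal locations with unequal dispersions and reduces to the paper's case when $d_{h,k}=d_{g,j}$.
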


\vspace{.1in}

A special case occurs if $m_{h,k} = m_{g,j}$ and $d_{h,k} = d_{g,j}$. Then $a_{F,i}$ is of Laplace form
\begin{align}
a_{F,i}(\theta) 
&= e^{-|\frac{\theta-m_{h,k}}{d_{h,k}/2}|}.
\end{align}

\begin{figure}[ht!]
\centerline{\includegraphics[width=0.9\textwidth]{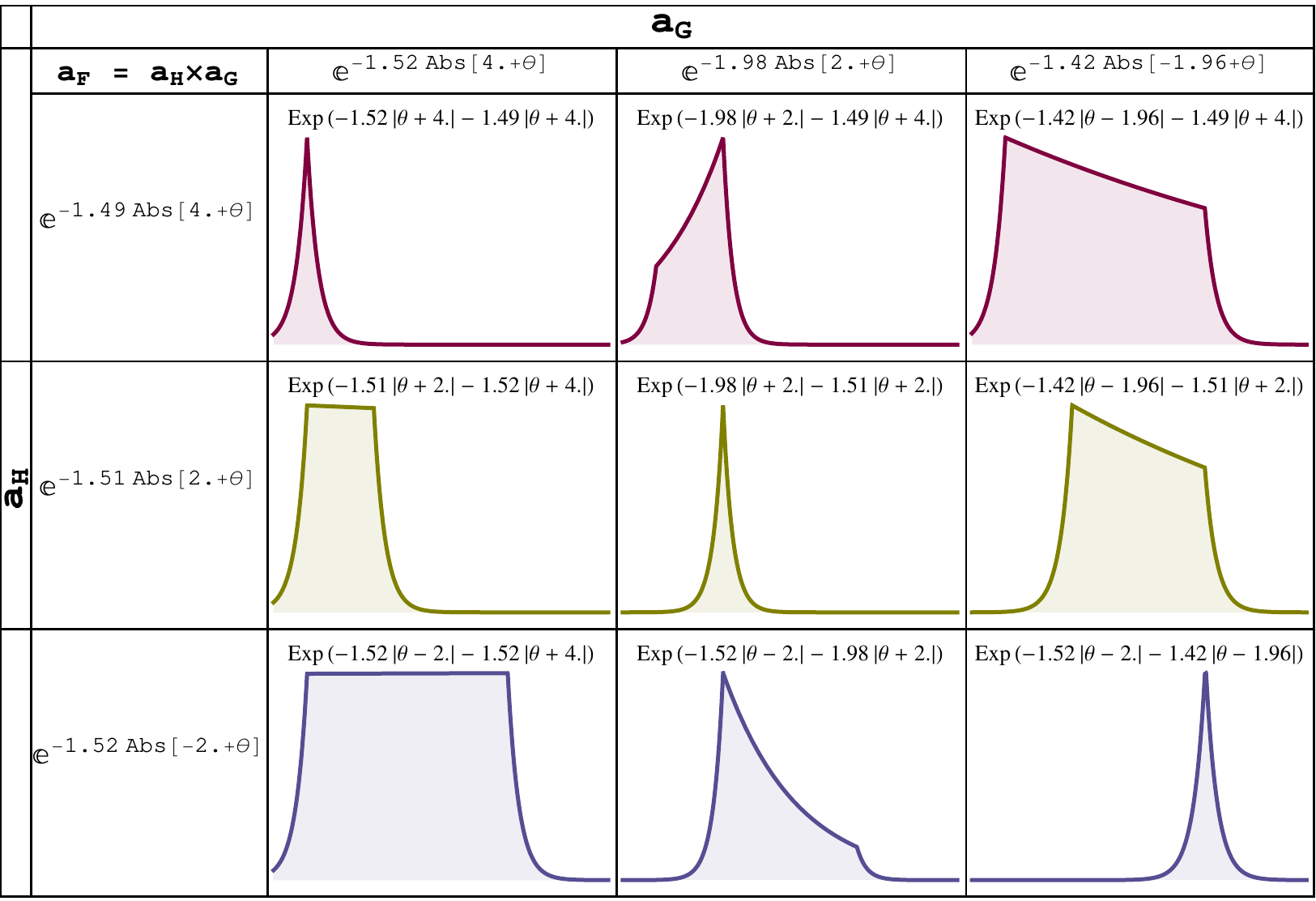}}
\caption[Semi-conjugacy for Laplace set functions]{
	Laplace semi-conjugacy: Plots show examples of semi-conjugate Laplace set functions for the doubly fuzzy posterior approximator $F(\theta|x) \propto H(\theta) G(x|\theta)$. The approximator uses five Laplace set functions $a_H$ for the prior approximator $H(\theta)$ and five Laplace set functions $a_G$ for the likelihood approximator $G(x|\theta)$. Thus $F(\theta|x)$ is a weighted sum of $25$ Laplacian semi-conjugate set functions of the form $a_{F,i}(\theta) = \exp \left( {-|\frac{\theta-m_{h,k}}{d_{h,k}}| - |\frac{\theta-m_{g,j}}{d_{g,j}}|} \right)$. The plots show that the Laplace semi-conjugate function can have a variety of shapes depending on the location and dispersion parameters of the prior and likelihood set functions.
}
\label{fig:Lapl-SemiConj}
\end{figure}

Such semi-conjugacy differs from outright conjugacy in a crucial respect:  The parameters of semi-conjugate if-part sets increase with each iteration or Bayesian update. The conjugate Gaussian sets in Corollary \ref{cor:sconj-Gauss} avoid this parameter growth while the semi-conjugate beta, gamma, and Laplace sets in Corollaries \ref{cor:sconj-beta}---\ref{cor:sconj-Lapl} incur it.  The latter if-part sets do not depend on a fixed number of parameters such as centers and widths as in the Gaussian case.  Only the set functions with the same centers $m_j$ and widths $d_j$ (in the special cases) will result in set functions for posterior approximation with the same fixed number of parameters.  Coping with this form of ``parameter explosion'' remains an open area of research in the use of fuzzy systems in iterative Bayesian inference.

\section{Conclusion}
We have shown that additive fuzzy systems can uniformly approximate a Bayesian posterior even in the hierarchical case when the prior pdf depends on its own uncertain parameter with its own hyperprior.  This gives a triply fuzzy uniform function approximation.  That hyperprior can in turn have its own hyperprior.  The result will be a quadruply fuzzy uniform function approximation and so on.  This new theorem substantially extends the choice of priors and hyperpriors from well-known closed-form pdfs that obey conjugacy relations to arbitrary rule-based priors that depend on user knowledge or sample data.  An open research problem is whether semi-conjugate rules or other techniques can reduce the exponential rule explosion that both doubly and triply fuzzy Bayesian systems face in general Bayesian iterative inference.
 
\clearpage

\chapter{Conclusion and  Future Directions}\label{ch:finale} 

\section{Conclusion}

The prevalence of high-speed ubiquitous computing devices has led to a phenomenal growth in the amount data available for algorithmic manipulation. One side effect of this growth is the commensurate growth in the amount of corrupted or incomplete data. Increased diversity in source and structure of recorded data also means there is a higher risk of applying the wrong statistical model for analysis. There are costs associated with the inefficient use of corrupted data and the improper use of statistical models. These costs will only grow with our increasing reliance on algorithms in daily life. And yet the systematic study of data and model corruption lags behind other statistical signal processing research efforts.

This dissertation addresses both problems: the statistical analysis of corrupted or incomplete data and the effects of model misspecification. 

The first part of the dissertation showed how to improve the speed of a general algorithm (the EM algorithm) for analyzing corrupted data. The speed improvement relies on a theoretical result (Theorem \ref{thm:NEM}) about noise-benefits in EM algorithms. This dissertation demonstrated how this result affects three important statistical applications: data-clustering, sequential data analysis with HMMs, and supervised neural network training.

The second part of the dissertation showed that the use of uniform approximators can limit the sensitivity of Bayesian inference schemes to model misspecification. This dissertation presented theorems showing that Bayes theorem on uniform approximators for model functions (prior pdfs and likelihood functions) produce uniform approximations for the posterior pdf. So Bayes theorem preserves the approximation quality of approximate data models. The dissertation also demonstrated an efficient data-driven fuzzy approximation scheme for uniformly approximating these statistical models functions. The approximation method is particularly interesting because it can \emph{exactly reproduce} known traditional closed-form statistical models. 

The next sections discuss future extensions to the results in this dissertation.

\section{Future Work}
The EM algorithm has many diverse applications. Noisy expectation maximization may help reduce the training time of time-intensive EM applications. Table ~\ref{tab:EM-Apps} lists a few areas to explore. The subsections below describe some of these application areas in detail.

\begin{table}[htb]
\begin{center}
\scalebox{0.825}{
\begin{tabular}{|c|p{2in}|c|}
			\hline
			\textbf{Application} & \textbf{Prior EM Work} & \textbf{NEM Extension} \\
			\hline \hline
			 Big Data Clustering &  Celeux \& Govaert, Hofmann~\parencite{macqueen1967, celeux-govaert1992, hofmann1999}. &  Chapter \ref{ch:NEM-CNBT}, \parencite{osoba-kosko2013}\\
			\hline
			 Feedforward NN Training &  Ng \& McLachlan~\parencite*{ng-mclachlan2004}, Cook \& Robinson~\parencite*{cook-robinson1995}. &  Chapter \ref{ch:NEM-BP} \parencite{audhkhasi-osoba-kosko-BP2013} \\
			\hline
			 HMM Training &  Baum \&  Welch~\parencite{baum-et-al1970,baum-petrie-soules-weiss1970,welch2003}. &  Chapter \ref{ch:NEM-HMM} \parencite{audhkhasi-osoba-kosko-HMM2013} \\
			\hline
			 Deep Neural Network Training &  None. & Extension based on Chapter \ref{ch:NEM-BP}\\ 
			 \hline
			 Genomics: Motif Identification &  Lawrence \& Reilly~\parencite*{lawrence-reilly1990}, Bailey \& Elkan~\parencite*{bailey-elkan1994}. & Proposed below \\
			\hline
			 PET/SPECT &  Vardi \& Shepp\parencite*{shepp-vardi1982}. & Proposed below \\
			\hline
			 MRI Segmentation &  Zhang et al.\parencite*{zhang-brady-smith2001}. & Proposed below \\
			\hline
		\end{tabular}
}
\end{center}
\caption{EM algorithm applications with possible NEM extensions}
\label{tab:EM-Apps}
\end{table}

\subsection{NEM for Deep Learning}\label{subsec:deepNN}

Deep learning~\parencite{hinton2012deep} refers to the use ``deep'' stacks of restricted Boltzmann machines (RBMs)~\parencite{smolensky1986information, hinton2006fast, hinton2012deep} to perform machine learning tasks. An RBM is a stochastic neural network with a bipartite graph structure and an associated Gibbs energy function $E(\mathbf{x,h}|\Theta)$ for the whole ensemble of neurons. RBMs are simple bidirectional associative memories (BAMs)~\parencite{kosko-nnfs} with synchronous neuron-update. The Gibbs energy functions are just the BAM energy or Lyapunov functions~\parencite{kosko1988bidirectional, kosko1987adaptive, kosko-nnfs} for the network. RBMs derive their convergence properties from the general BAM convergence theorem~\parencite{kosko1988bidirectional, kosko1987adaptive, kosko-nnfs}. Thus the convergence of any deep learning algorithm is a direct consequence of the general BAM convergence theorem.

Chapter \ref{ch:NEM-BP} showed that backpropagation training of feedforward neural networks is a GEM procedure. The GEM training formalism extends to the training of RBMs~\parencite{audhkhasi-osoba-kosko-BP2013}. Thus we can apply NEM noise--benefit to speed up RBM training and deep learning.

An RBM has a visible layer with $I$ neurons and a hidden layer with $J$ neurons. Let $x_i$ and $h_j$ denote the values of the $i^{th}$ visible and $j^{th}$ hidden neuron. 
\begin{figure}[ht!]
	\centering
	\begin{tikzpicture}[shorten >=1pt,->,draw=black!50, node distance=\layersep]
		\tikzstyle{every pin edge}=[<-,shorten <=2pt]
		\tikzstyle{neuron}=[circle,fill=black!25,minimum size=18pt,inner sep=0pt]
		\tikzstyle{input neuron}=[neuron, fill=blue!50];
		\tikzstyle{hidden neuron}=[neuron, fill=red!50];
		\tikzstyle{annot} = [text width=4em, text centered]
		\foreach \name / \y in {1,...,5}
			\node[input neuron] (I-\name) at (-\y-0.4,0) {};
		\foreach \name / \y in {2,...,5}
				\node[hidden neuron] (H-\name) at (-\y cm, \layersep) {};
		
		\node[inner sep=0,minimum size=0] (phantom) at (-3.5 cm, 1.75cm) {}; 
		\node[inner sep=0,minimum size=0,left of=phantom] (Wup)  {$\mathbf{W}$}; 
		\node[inner sep=0,minimum size=0,right of=phantom](Wdown) {$\mathbf{W}^{T}$}; 
		\draw [->, line width=0.2cm] (-0.5,2.8) -- (-0.5,0.8);
		\draw [->, line width=0.2cm] (-6,0.8) -- (-6,2.8);
		
		\foreach \source in {1,...,5}
			\foreach \dest in {2,...,5}
				\draw[-, very thick] (I-\source) edge (H-\dest);
				
		\node[annot,left of=H-5, node distance=1.8cm] (hl) {Hidden Layer, $\mathbf{h}$};
		\node[annot,below of=hl] {Visible layer, $\mathbf{x}$};
	\end{tikzpicture}
	\caption[A restricted Boltzmann machine]{A restricted Boltzmann machine (RBM) with visible-to-hidden connection matrix $\mathbf{W}$. The network is \emph{restricted} because there are no connections between neurons in the same layer. The RBM is a BAM with forward connection matrix $\mathbf{W}$ and backward connection matrix $\mathbf{W}^T$. }
\end{figure}
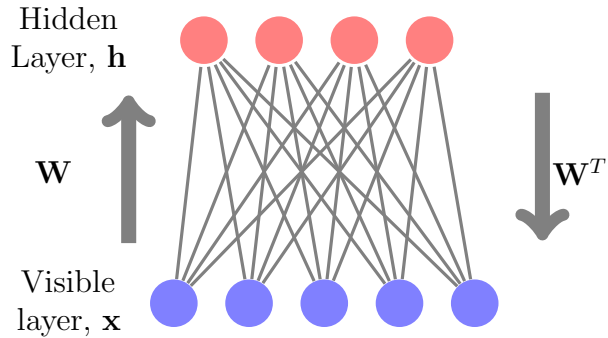

\begin{figure}[ht!]
	\centering
	\begin{tikzpicture}[shorten >=1pt,->,draw=black!50, node distance=\layersep]
		\tikzstyle{every pin edge}=[<-,shorten <=2pt]
		\tikzstyle{neuron}=[circle,fill=black!25,minimum size=18pt,inner sep=0pt]
		\tikzstyle{input neuron}=[neuron, fill=blue!50];
		\tikzstyle{hidden neuron}=[neuron, fill=red!50];
		\tikzstyle{output neuron}=[neuron, fill=green!60];
		\tikzstyle{annot} = [text width=4em, text centered]
		\foreach \name / \y in {1,...,5}
			\node[input neuron] (I-\name) at (-\y-0.4,0) {};
		\foreach \name / \y in {2,...,5}
				\node[hidden neuron] (H-\name) at (-\y cm, \layersepp) {};
		\foreach \name / \y in {2,...,5}
				\node[hidden neuron] (H2-\name) at (-\y cm, \layersep-0.2cm) {};
		\foreach \name / \y in {2,...,5}
				\node[output neuron] (O-\name) at (-\y cm, \layersep+\layersepp-0.3cm) {};
		
		\foreach \source in {1,...,5}
			\foreach \dest in {2,...,5}
				\draw[-, very thick] (I-\source) edge (H-\dest);

		\foreach \source in {2,...,5}
			\foreach \dest in {2,...,5}
				\draw[-, very thick] (H2-\source) edge (O-\dest);
				
		\node[inner sep=0,minimum size=0] (phantom) at (-3.5 cm, 1.75cm) {}; 
		\node[annot, above of=phantom, node distance=0.75cm](dots) {$\mathbf{\vdots}$};
		\node[annot, right of=dots, node distance=1cm] {$\mathbf{\vdots}$};
		\node[annot, left of=dots, node distance=1cm] (hs) {$\mathbf{\vdots}$};
				
		\node[annot,left of=hs, node distance=2.2cm] (hl) {Hidden layers};
		\node[annot,left of=I-5, node distance=1.4cm] {Visible layer};
		\node[annot,left of=O-5, node distance=1.8cm] {Output layer};
		
	\end{tikzpicture}
	\caption[A deep neural network]{A Deep Neural Network consists of a stack of restricted Boltzmann machines (RBMs) or bidirectional associative memories (BAMs).}
\end{figure}

Let $E(\mathbf{x,h}|\Theta)$ be the energy function for the network. Then the joint probability density function of $\mathbf{x}$ and $\mathbf{h}$ is the Gibbs distribution:
\begin{align}
	p(\mathbf{x},\mathbf{h}|\Theta) &= \frac{\exp\Big{(}-E(\mathbf{x},\mathbf{h}|\Theta)\Big{)}}{Z(\Theta)} \label{eq:Gibbs-Dist}\\
	\text{where } Z(\Theta) &= \sum_{\mathbf{x}}\sum_{\mathbf{h}} \exp(-E(\mathbf{x},\mathbf{h}|\Theta)) \;. \label{eq:Part-Func}
\end{align}
The Gibbs energy function $E(\mathbf{v}, \mathbf{h}| \Theta)$ depends on the type of RBM. A Bernoulli(visible)-Bernoulli(hidden) RBM has logistic conditional PDFs at the hidden and visible layers and has the following Gibbbs energy:
\begin{align}
	E(\mathbf{x},\mathbf{h}|\Theta) = -\sum_{i=1}^I\sum_{j=1}^J w_{ij} x_i h_j  - \sum_{i=1}^I b_i x_i - \sum_{j=1}^J a_j h_j \label{eq:Energy-LL}
\end{align}
where $w_{ij}$ is the weight of the connection between the $i^{th}$ visible and $j^{th}$ hidden neuron, $b_i$ is the bias for the $i^{th}$ visible neuron, and $a_j$ is the bias for the $j^{th}$ hidden neuron. A Gaussian(visible)-Bernoulli(hidden) RBM has Gaussian conditional PDFs at the visible layer, logistic conditional PDFs at the hidden layer, and the energy function~\parencite{hinton2006reducing, hinton2006fast}
\begin{align}
	E(\mathbf{x},\mathbf{h}|\Theta) &= -\sum_{i=1}^I\sum_{j=1}^J w_{ij} x_i h_j  + \frac{1}{2} \sum_{i=1}^I (x_i - b_i)^2  - \sum_{j=1}^J a_j h_j \;.
	\label{eq:Energy-NL}
\end{align}

Constrastive divergence (CD)~\parencite{hinton2006fast} is an \emph{approximate} gradient ascent algorithm for finding ML parameters for the RBMs. This ML gradient ascent approach means that CD is an instance of generalized EM~\parencite{audhkhasi-osoba-kosko-BP2013} -- just like backpropagation is an instance of GEM. So we can derive NEM conditions for RBM training too.

The NEM theorem implies that noise benefits occur on average if
\begin{align}
	\mathbb{E}_{\mathbf{x},\mathbf{h},\mathbf{n}|\Theta^*}& \Big{\{} -E(\mathbf{x}+\mathbf{n},\mathbf{h}|\Theta) + E(\mathbf{x},\mathbf{h}|\Theta) \Big{\}} \geq 0 \label{eq:NEM-EnergyDiff} \;.
\end{align}
This condition depends on the activation function in the visible and hidden layers. The next two theorems specify the NEM conditions for two common RBM configurations: the Bernoulli-Bernoulli and the Gaussian-Bernoulli RBM.
\pagebreak

\begin{thm}{\bf{[Forbidden Hyperplane Noise--Benefit Condition]:}} \label{thm:hyp-ndnn}\\
	The NEM positivity condition holds for Bernoulli-Bernoulli RBM training if
	\begin{align}
		\mathbb{E}_{\mathbf{x},\mathbf{h},\mathbf{n}|\Theta^*}& \Big{\{} \mathbf{n}^T (\mathbf{W}\mathbf{h} + \mathbf{b}) \Big{\}} \geq 0 \;. \label{eq:NEM-LLRBM}
	\end{align}
\end{thm}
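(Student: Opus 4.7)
The plan is to reduce the general NEM positivity condition of Theorem~\ref{thm:NEM} to the inequality~(\ref{eq:NEM-LLRBM}) by direct computation using the Gibbs form of the RBM joint density. First I would invoke the restatement given in equation~(\ref{eq:NEM-EnergyDiff}): an EM noise benefit occurs on average when $\E_{\mathbf{x},\mathbf{h},\mathbf{n}|\Theta^*}[-E(\mathbf{x}+\mathbf{n},\mathbf{h}|\Theta) + E(\mathbf{x},\mathbf{h}|\Theta)] \geq 0$. This reformulation follows from substituting $p(\mathbf{x},\mathbf{h}|\Theta) = \exp(-E(\mathbf{x},\mathbf{h}|\Theta))/Z(\Theta)$ from~(\ref{eq:Gibbs-Dist}) into the log-ratio of the NEM condition~(\ref{eq:Goal}), since the partition function $Z(\Theta)$ depends only on $\Theta$ and thus cancels in the ratio $p(\mathbf{x}+\mathbf{n},\mathbf{h}|\Theta)/p(\mathbf{x},\mathbf{h}|\Theta)$.

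Next I would substitute the Bernoulli-Bernoulli energy function~(\ref{eq:Energy-LL}) and compute the energy difference. The key observation is that this energy is \emph{linear} in the visible activations $\mathbf{x}$. Writing the difference term by term gives
\begin{align}
E(\mathbf{x},\mathbf{h}|\Theta) - E(\mathbf{x}+\mathbf{n},\mathbf{h}|\Theta)
&= \sum_{i,j} w_{ij}\,n_i\, h_j + \sum_i b_i\, n_i \\
&= \mathbf{n}^T \mathbf{W} \mathbf{h} + \mathbf{n}^T \mathbf{b} \;,
\end{align}
because the bilinear-in-$(\mathbf{x},\mathbf{h})$ term produces a clean cross-term in $\mathbf{n}$ and $\mathbf{h}$, the visible-bias term contributes $\mathbf{n}^T \mathbf{b}$, and the hidden-bias term (which has no $\mathbf{x}$ dependence) cancels outright.

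Taking the expectation of this quantity with respect to the joint distribution of $(\mathbf{x},\mathbf{h},\mathbf{n})$ under $\Theta^*$ then yields precisely
\begin{equation}
\E_{\mathbf{x},\mathbf{h},\mathbf{n}|\Theta^*}\left\{\mathbf{n}^T(\mathbf{W}\mathbf{h} + \mathbf{b})\right\} \geq 0 \;,
\end{equation}
which is the claimed condition~(\ref{eq:NEM-LLRBM}). Because the expectand is a linear functional of $\mathbf{n}$ with data-dependent slope $\mathbf{W}\mathbf{h}+\mathbf{b}$, the admissible noise samples lie in a half-space whose separating hyperplane has normal $\mathbf{W}\mathbf{h}+\mathbf{b}$—hence the name ``forbidden hyperplane'' condition, in parallel with Theorem~\ref{thm:hyp-ndnn-bp}.

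The computation itself is routine once the energy-linearity in $\mathbf{x}$ is noted, so the main obstacle is not algebraic but conceptual: I must justify that Theorem~\ref{thm:NEM} applies to the RBM setting even though CD is only an \emph{approximate} gradient ascent on the RBM log-likelihood and the RBM has a two-field hidden-visible neuron structure rather than the standard EM latent-variable structure. I would address this by appealing to the GEM reinterpretation of CD (analogous to the GEM subsumption of backpropagation in Theorem~\ref{thm:bp_gem_equiv}) and by identifying the RBM hidden layer $\mathbf{h}$ with the latent variable $Z$ in Theorem~\ref{thm:NEM}, with $\mathbf{x}$ playing the role of the observed data $Y$. A secondary technical point is that $\mathbf{x}$ is binary-valued while the injected noise $\mathbf{n}$ is continuous; this requires interpreting $\mathbf{x}+\mathbf{n}$ as a perturbation in the energy's effective input rather than as a Bernoulli sample, which is consistent with the noise-injection geometry already used for backpropagation in Chapter~\ref{ch:NEM-BP}.
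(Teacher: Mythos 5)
Your proposal is correct and follows essentially the same route as the paper's proof: substitute the Bernoulli--Bernoulli energy function~(\ref{eq:Energy-LL}) into the energy-difference form~(\ref{eq:NEM-EnergyDiff}) of the NEM condition, observe that linearity in $\mathbf{x}$ makes the quadratic and hidden-bias terms cancel, and collect the remainder into $\mathbf{n}^T(\mathbf{W}\mathbf{h}+\mathbf{b})$. Your additional remarks---the cancellation of the partition function in the log-ratio and the GEM reinterpretation of contrastive divergence---are justifications the paper supplies in the surrounding discussion rather than inside the proof, so they elaborate rather than alter the argument.
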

\begin{proof}
	The noise--benefit for a Bernoulli(visible)-Bernoulli(hidden) RBM results if we apply the energy function from (\ref{eq:Energy-LL}) to the expectation in (\ref{eq:NEM-EnergyDiff}) to get
	\begin{equation}
		\mathbb{E}_{\mathbf{x},\mathbf{h},\mathbf{n}|\Theta^*} \Big{\{} \sum_{i=1}^I\sum_{j=1}^J w_{ij}n_{i}h_j + \sum_{i=1}^I n_{i} b_i \Big{\}} \geq 0 \;.
	\end{equation}
	The term in brackets is equivalent to
	\begin{align}
		\sum_{i=1}^I\sum_{j=1}^J w_{ij}n_{i}h_j + \sum_{i=1}^I n_{i} b_i &= \mathbf{n}^T (\mathbf{W}\mathbf{h} + \mathbf{b}) \;.
	\end{align}
	So the noise--benefit sufficient condition becomes
	\begin{align}
	\mathbb{E}_{\mathbf{x},\mathbf{h},\mathbf{n}|\Theta^*}& \Big{\{} \mathbf{n}^T (\mathbf{W}\mathbf{h} + \mathbf{b}) \Big{\}} \geq 0 \;.
	\end{align}
\end{proof}

The above sufficient condition stated separation condition that guarantees a noise--benefit in the Bernoulli-Bernoulli RBM. The next theorem states a spherical separation condition that guarantees a noise--benefit in the Gaussian-Bernoulli RBM.

\begin{thm}{\bf{[Forbidden Sphere Noise--Benefit Condition]:}} \label{thm:sph-ndnn}\\
The NEM positivity condition holds for Gaussian-Bernoulli RBM training if
\begin{align}
	\mathbb{E}_{\mathbf{x},\mathbf{h},\mathbf{n}|\Theta^*}& \Big{\{} \frac{1}{2}\|\mathbf{n}\|^2 - \mathbf{n}^T (\mathbf{W}\mathbf{h} + \mathbf{b} - \mathbf{x})\Big{\}} \leq 0 \;.  \label{eq:NEM-NLRBM}
\end{align}
\end{thm}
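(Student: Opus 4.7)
The plan is to mirror the strategy of Theorem~\ref{thm:hyp-ndnn} but with the Gaussian-Bernoulli energy function~(\ref{eq:Energy-NL}) in place of the Bernoulli-Bernoulli one. The starting point is the general noise--benefit sufficient condition~(\ref{eq:NEM-EnergyDiff}), which was itself obtained from the NEM positivity condition by noting that the partition function $Z(\Theta)$ in the Gibbs form~(\ref{eq:Gibbs-Dist}) of $p(\mathbf{x},\mathbf{h}|\Theta)$ cancels from the log-likelihood ratio $\ln[p(\mathbf{x}+\mathbf{n},\mathbf{h}|\Theta)/p(\mathbf{x},\mathbf{h}|\Theta)]$. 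So the entire proof reduces to an algebraic evaluation of the energy difference $E(\mathbf{x}+\mathbf{n},\mathbf{h}|\Theta) - E(\mathbf{x},\mathbf{h}|\Theta)$ under the Gaussian-Bernoulli energy.

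First I would substitute $\mathbf{x}\mapsto\mathbf{x}+\mathbf{n}$ into~(\ref{eq:Energy-NL}) and subtract. The hidden-bias term $-\sum_j a_j h_j$ has no dependence on the visible units and so drops out immediately. The interaction term $-\sum_{i,j} w_{ij} x_i h_j$ contributes the linear piece $-\sum_{i,j} w_{ij} n_i h_j = -\mathbf{n}^{T}\mathbf{W}\mathbf{h}$. The quadratic Gaussian term $\frac{1}{2}\sum_i (x_i-b_i)^2$ contributes the key expansion $\frac{1}{2}\sum_i\bigl[(x_i+n_i-b_i)^2 - (x_i-b_i)^2\bigr] = \frac{1}{2}\|\mathbf{n}\|^2 + \mathbf{n}^{T}(\mathbf{x}-\mathbf{b})$. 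Combining these pieces gives
\begin{equation}
E(\mathbf{x}+\mathbf{n},\mathbf{h}|\Theta) - E(\mathbf{x},\mathbf{h}|\Theta) \;=\; \tfrac{1}{2}\|\mathbf{n}\|^2 \,-\, \mathbf{n}^{T}(\mathbf{W}\mathbf{h}+\mathbf{b}-\mathbf{x}).
\end{equation}

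Next I would plug this into condition~(\ref{eq:NEM-EnergyDiff}), which requires the expectation of the \emph{negative} of the above energy difference to be nonnegative. Flipping the sign (equivalently, multiplying the inequality by $-1$) converts $\geq 0$ into $\leq 0$ and yields exactly the forbidden-sphere condition~(\ref{eq:NEM-NLRBM}). A brief concluding remark would note the geometric interpretation: completing the square shows $\tfrac{1}{2}\|\mathbf{n}\|^2 - \mathbf{n}^{T}\mathbf{c} \leq 0$ iff $\|\mathbf{n}-\mathbf{c}\|^2 \leq \|\mathbf{c}\|^2$ with $\mathbf{c}=\mathbf{W}\mathbf{h}+\mathbf{b}-\mathbf{x}$, so admissible noise lies inside a ball centered at $\mathbf{c}$ with radius $\|\mathbf{c}\|$---the same forbidden-sphere geometry as in Theorem~\ref{thm:sph-ndnn-bp} for Gaussian-output backpropagation, with the reconstruction target now playing the role of the target output.

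There is no real technical obstacle here; the result is essentially a one-line consequence of the Gaussian-Bernoulli energy having a single quadratic (visible-unit) term. The only bookkeeping care needed is to track the sign flip between the NEM condition in log-likelihood form and the corresponding energy-difference form, and to recognize that the partition function drops out so that no intractable $Z(\Theta)$ ever appears in the sufficient condition. This cleanness is what makes the sufficient condition useful in practice for injecting noise during contrastive-divergence-style training of Gaussian-Bernoulli RBMs in deep networks.
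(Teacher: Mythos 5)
Your proposal is correct and follows essentially the same route as the paper: substitute the Gaussian-Bernoulli energy~(\ref{eq:Energy-NL}) into the energy-difference form~(\ref{eq:NEM-EnergyDiff}) of the NEM condition, note that the hidden-bias term cancels, expand the quadratic visible term to get $\tfrac{1}{2}\|\mathbf{n}\|^2 + \mathbf{n}^{T}(\mathbf{x}-\mathbf{b})$, and flip the sign of the inequality. The added remark on completing the square to exhibit the sphere $\|\mathbf{n}-\mathbf{c}\|\leq\|\mathbf{c}\|$ with $\mathbf{c}=\mathbf{W}\mathbf{h}+\mathbf{b}-\mathbf{x}$ is a correct and welcome gloss on the theorem's name, though the paper's proof itself stops at the algebraic identity.
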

\begin{proof}
Putting the energy function in (\ref{eq:Energy-NL}) into (\ref{eq:NEM-EnergyDiff}) gives the noise--benefit condition for a Gaussian(visible)-Bernoulli(hidden):
\begin{equation}
	\mathbb{E}_{\mathbf{x},\mathbf{h},\mathbf{n}|\Theta^*} \Big{\{} 
	\sum_{i=1}^I\sum_{j=1}^J w_{ij}n_{i}h_j 
	+ \sum_{i=1}^I n_{i} b_i
	-\frac{1}{2} \sum_{i=1}^I n_i^2 
	- \sum_{i=1}^I n_i x_i
	\Big{\}} \geq 0 \;.
\end{equation}
The term in brackets equals the following matrix expression:
\begin{multline}
	\sum_{i=1}^I\sum_{j=1}^J w_{ij}n_{i}h_j 
	+ \sum_{i=1}^I n_{i} b_i
	-\frac{1}{2} \sum_{i=1}^I n_i^2 
	- \sum_{i=1}^I n_i x_i 
	= \mathbf{n}^T (\mathbf{W}\mathbf{h} + \mathbf{b} - \mathbf{x}) - \frac{1}{2}\|\mathbf{n}\|^2 \;.
\end{multline}
So the noise--benefit sufficient condition becomes
\begin{align}
	\mathbb{E}_{\mathbf{x},\mathbf{h},\mathbf{n}|\Theta^*}& \Big{\{}\frac{1}{2}\|\mathbf{n}\|^2 - \mathbf{n}^T (\mathbf{W}\mathbf{h} + \mathbf{b} - \mathbf{x}) \Big{\}} \leq 0 \;. 
\end{align}
\end{proof}

These noise-benefit conditions predict speed improvements for deep learning via the careful application of noise injection.

\subsection{NEM for Genomics: DNA Motif Identification}\label{subsec:genomics}

An organism's genome contains all the instructions necessary to manufacture the proteins its cells need to function. The information resides in the coding DNA sequences (or genes) within the genome. Transcription is the nuclear process by which the cell copies protein-code from a gene to a strand of messenger-RNA (mRNA). mRNA takes this code outside the nucleus to ribosomes where protein-synthesis occurs.

The human genome contains between $30000$ and $40000$ of such protein-coding genes. Most advanced cells need only a small portion of these genes to be active at any given time. So eukaryotic cells have adapted by keeping most genes switched off unless they receive a signal for expression~\parencite{brown-brown2004}. One of the functions of non-coding (or upstream) DNA sequences is to control the initiation and rate of gene expression. The cell transcribes (or turns on) a specific gene when transcription factors (promoters and enhancers\footnote{There are also \emph{negative} transcription factors that inhibit specific gene expression.}) bind to that gene's associated binding sites located upstream in the genome.

The basic question is~\parencite{bailey-elkan1994,lawrence-reilly1990,robison-mcguire-church1998}: how do we identify transcription factor binding sequences for specific genes? The problem is more complicated because the recognition of a binding site DNA sequence is often fuzzy; inexact promoter sequence matches can activate gene expression. Sabatti and Lange \parencite{sabatti-lange2002} give an example of $11$ experimentally-verified admissible variations of a single promoter sequence (the CRP binding site) that activates a specific gene in \textit{E. coli}. These fuzzy sequence patterns compose a single \emph{Motif}\parencite{zambelli-pesole-giulio2013} for CRP binding. Motif search is a fertile area for EM methods \parencite{stormo-hartzell1989,bailey-elkan1994,lawrence-reilly1990} because of the problem's missing information structure. The motif positions are missing and the exact motif sequence is not precise. The idea of a motif also extends from DNA sequences for genes to amino acid sequences for proteins.

Bailey and Elkan~\parencite{bailey-elkan1994, bailey-elkan1995a,bailey-elkan1995b,bailey2006} developed a mixture model EM for motif identification which they called \emph{Multiple EM for Motif Elicitation} (MEME). Likelihood models on DNA sequences assume sequence entries are independent discrete random variables that take values from the $4$ possible nucleotides\footnote{
	A similar likelihood model applies for motif identification in biopolymer sequences like proteins. Proteins just use a symbol lexicon of amino acids instead of nucleotides.
}: $\{$Adenine (A), Thymine (T), Guanine (G), Cytosine (C)$\}$. MEME models the data as a mixture of two different discrete populations~\parencite{bailey-elkan1994}: motif sequences and background (non-motif) sequences. An EM algorithm learns this mixture model which then informs the unsupervised sequence classification. Lawrence and Reilly~\parencite{lawrence-reilly1990} had introduced the first explicit EM model. But their EM algorithm does a supervised position search for fixed-length motifs in sequences that contain a single of the motif. Sabatti and Lange~\parencite{sabatti-lange2002} also developed an MM generalization of Lawrence and Reilly's method that allows motifs to have variable length sequences.

These approaches have had some success. But they work slowly on the huge human genome data set. And they miss some binding sites. Perturbing these likelihood-based estimation methods with noise can alleviate some of these problems. The success of a randomized Gibbs' sampling~\parencite{geman-geman1984,tanner1996} version of Lawrence and Reilly's supervised method~\parencite{lawrence-et-al1993} strongly suggests that the principled use of noise injection (e.g. NEM) should improve EM-based approaches to motif identification.

\subsection{NEM for PET \& SPECT}\label{subsec:PET-NEM}
We can write a NEM condition for PET reconstruction via EM. The $Q$-function for the PET-EM model from \Sec\ref{subsec:PET-EM} was:
\begin{align}
	Q(\Lambda | \Lambda(t)) &= \sum_j \sum_i -p_{ij} \lambda_i + \E_{\mathbf{Z}| \mathbf{Y,} \Lambda(t)}[Z_{ij}] \ln (p_{ij} \lambda_i) - \E_{\mathbf{Z}| \mathbf{Y,} \Lambda(t)}[\ln(z_{ij}!)].
\end{align}
The noise--benefit condition for general exponential-family EM models is:
\begin{align}
	\E_{Z,N|\Lambda_*}\left[ \ln \left( \frac{f(\mathbf{Z} + \mathbf{N}| \Lambda) }{f(\mathbf{Z}| \Lambda) } \right) \right] \geq 0
\end{align}
The likelihood function for a Poisson model is:
\begin{align}
	\ln f(z| \lambda_z)  = -\lambda_z + z \ln(\lambda_z) - \ln(z!)
\end{align}
The log-ratio in the expectation reduces to the following for each data sample $z$:
\begin{equation}
	\ln \left( \frac{f(z + N| \lambda_z) }{f(z| \lambda_z) } \right) = N \ln \lambda_z  + \ln(z!) - \ln[(z+N)!].
\end{equation}
This simplifies to the following condition:
\begin{align}
	N  \geq \frac{1}{\ln \lambda_z} \ln \frac{\Gamma(z+N+1)}{\Gamma(z+1)}. \label{eq:DomCondn}
\end{align}
The noise--benefit holds if the inequality (\ref{eq:DomCondn}) holds only on average. Thus
\begin{equation}
	\E_{N} [N]  \geq \frac{1}{\ln(\lambda_z)} \E_N \left[ \ln \left( \frac{\Gamma(z+N+1)}{\Gamma(z+1)} \right) \right]. \label{eq:GammaRatio}
\end{equation}
The noise condition for the PET model replaces $\lambda_z$ in (\ref{eq:GammaRatio}) with $p_{ij}\lambda_i$ in (\ref{eq:Q-PET}).
\begin{equation}
	\E_{N} [N]  \geq \frac{1}{\ln(p_{ij}\lambda_i)} \E_N \left[ \ln \left( \frac{\Gamma(z+N+1)}{\Gamma(z+1)} \right) \right]. \label{eq:PET-GammaRatio}
\end{equation}

This improvement applies to the PET using EM. Th PET-EM approach was state-of-the-art almost $30$ years ago when Shepp and Vardi~\parencite*{shepp-vardi1982} introduced the idea of ML reconstruction for emission tomography. There have been recent advances in statistical reconstruction for tomography. These advances include the use of MAP estimation, the introduction of regularization, the use of  Gibbs priors, and hybrid methods that take frequency-domain details into account. The NEM theorem and its generalizations may accommodate some of these improvements.

\subsection{NEM for Magnetic Resonance Image Segmentation}\label{subsec:MRI-NEM}
The Expectation Maximization (EM) algorithm figures prominently in the segmentation of magnetic resonance (MR) images of brain and cardiac tissue \parencite{kapur-et-al-1998},\parencite{yzhang-et-al-segmentation-2001},\parencite{lorenzo-et-al-2003},\parencite{diplaros-et-al-2007}. For example, a brain image segmentation problem takes a MR image of the brain and classifies voxels or parts of voxels as grey matter, white matter, or cerebro-spinal fluid. 

There have been recent efforts~\parencite{hu-nayak2008,hu-nayak-goran2011,joshi-hu-et-al2013} to develop robust automated methods for differentiating and measuring differences between water and fat contents in a MR image. MR is optimal for this task because MR can differentiate between fat and water-saturated tissue unlike most other medical imaging modalities. But Water-Fat segmentation is difficult even with MR sensing. This is because subcutaneous and visceral (within organs) adipose tissue have different distribution characteristics. Current water/fat segmentation uses post-processing methods based on semi-automated image registration techniques~\parencite{joshi-hu-et-al2013} or human annotation. These work well for identifying large adipose masses. But they are not well-suited for identifying for the smaller adipose masses that are characteristic of visceral fat. Accurate and automatic water/fat MRI segmentation would be an important tool for assessing patients' risk of type II diabetes, heart disease, and other obesity-related ailments.

A simple EM segmentation application would use of EM to fit the pixel intensity levels to different tissue classes \parencite{kapur-et-al-1996}, \parencite{huang-et-al-2006} under a Gaussian mixture model (GMM) assumption. Pixel intensity classification model does not take spatial information into account. This limits the model to well-defined, low-noise applications \parencite{yzhang-et-al-segmentation-2001}. 

A more general approach is to apply a spatially-aware image segmentation method based on hidden Markov Random Field model (HMRF).The HMRF model is an non-causal generalization of the hidden Markov model. HMMs usually apply to signals with localized dependence in a time-like dimension. While HMRFs model signals with localized dependence in its spatial dimensions. The traditional Baum-Welch algorithm cannot train HMRF models because of the non-causal nature of signals involved. But there are mean-field EM algorithms that can train HMRFs.

There is prior work showing EM-based image segmentation \parencite{zhang1992},\parencite{zhang1993} on the HMRF model. There is also prior work showing how the HMRF model applies to image segmentation for MR images \parencite{kapur-et-al-1998},\parencite{yzhang-et-al-segmentation-2001},\parencite{diplaros-et-al-2007}. The EM algorithm on HMRF models is computationally expensive and can be slow. Thus a speed-up in the algorithm would be useful.

We propose to generalize the Noisy Expectation Maximization result to such mean-field algorithms and apply their noisy variants to HMRF training. The goal would be to show faster convergence of the HMRF-NEM algorithm or to show that NEM improves classification/segmentation results for a given number of EM iterations.

\subsection{Rule Explosion in Approximate Bayesian Inference}
The Bayesian approximation results apply to generic uniform approximations. Chapters \Sec\ref{ch:BAT} and \Sec\ref{ch:ExBAT} examined the behavior of fuzzy approximators in Bayesian applications. Neural network pdf approximators may have interesting properties. Other uniform approximators methods may also be worth exploring.

The main problem with the Bayesian approximation results is approximator explosion in the iterative Bayesian setting. This manifests as rule explosion when we use fuzzy function approximators. Semi-conjugacy (\Sec\ref{sec:semi-conj}) is a restricted approach to controlling rule explosion problem. Other approximation methods may lead to more robust strategies for reducing rule explosion in iterative Bayesian inference. 

Further work could also combine the idea of a noise-benefit with iterative Bayesian inference. A noise--benefit may speed up the convergence of Gibbs samplers.

\backmatter

\addcontentsline{toc}{chapter}{Bibliography}
\printbibliography
\end{document}